\documentclass[11pt]{article}
\usepackage[letterpaper, margin=1in]{geometry}

\usepackage[utf8]{inputenc} 
\usepackage[T1]{fontenc}    
\usepackage{hyperref}       
\usepackage{url}            
\usepackage{booktabs}       
\usepackage{amsfonts}       
\usepackage{nicefrac}       
\usepackage{microtype}      
\usepackage{xcolor}         
\usepackage{graphicx}

\usepackage{algorithm}
\usepackage{algorithmic}

\usepackage{amsmath}
\usepackage{amssymb}
\usepackage{mathtools}
\usepackage{amsthm}

\usepackage{times}
\usepackage[capitalize,noabbrev]{cleveref}
\usepackage{natbib}

\usepackage{dsfont}
\usepackage[mathscr]{euscript}
\usepackage{expl3}
\usepackage{xcolor}
\usepackage{colortbl}
\usepackage{thmtools}
\usepackage{thm-restate}
\usepackage{mathtools}
\usepackage{multirow}
\usepackage{wrapfig}
\usepackage{lipsum}
\usepackage{accents}

\usepackage{pifont}

\let\vec\undefined
\usepackage{MnSymbol}
\DeclareMathAlphabet\mathbb{U}{msb}{m}{n}
\usepackage{xpatch}

\def\Rset{\mathbb{R}}

\def\Hset{\mathbb{H}}

\let\Pr\undefined
\let\P\undefined
\DeclareMathOperator*{\Pr}{\mathbb{P}}
\DeclareMathOperator*{\P}{\mathbb{P}}
\DeclareMathOperator*{\E}{\mathbb E}

\DeclareMathOperator*{\argmin}{argmin}

\DeclareMathOperator*{\dom}{dom}

\DeclareMathOperator*{\conv}{conv}
\DeclareMathOperator*{\diag}{diag}

\DeclareMathOperator{\supp}{supp}

\DeclareMathOperator{\Improv}{Improv}
 
\DeclareMathOperator{\Int}{int} 
\DeclareMathOperator{\relint}{relint} 
 
\DeclareMathOperator{\Proj}{Proj} 

\DeclarePairedDelimiter{\abs}{\lvert}{\rvert} 
\DeclarePairedDelimiter{\bracket}{[}{]}
\DeclarePairedDelimiter{\curl}{\{}{\}}
\DeclarePairedDelimiter{\paren}{(}{)}
\DeclarePairedDelimiter{\norm}{\|}{\|}
\DeclarePairedDelimiter{\tri}{\langle}{\rangle}

\ExplSyntaxOn
\tl_const:Nn \c_my_uc_alphabet_tl { ABCDEFGHIJKLMNOPQRSTUVWXYZ }
\tl_const:Nn \c_my_full_alphabet_tl { ABCDEFGHIJKLMNOPQRSTUVWXYZ
  abcdefghijklmnopqrstuvwxyz }

\tl_map_inline:Nn \c_my_uc_alphabet_tl
 { \cs_gset:cpn { c#1 } { \mathcal{#1} } }

\tl_map_inline:Nn \c_my_uc_alphabet_tl
 { \cs_gset:cpn { s#1 } { \mathscr{#1} } }

\tl_map_inline:Nn \c_my_full_alphabet_tl
 {
  \cs_gset:cpn { b#1 } { \mathbf{#1} }
  \cs_gset:cpn { sf#1 } { \mathsf{#1} }
 }
\ExplSyntaxOff

\newcommand{\h}{\widehat}
\newcommand{\ov}{\overline}
\newcommand{\wt}{\widetilde}
\newcommand{\e}{\epsilon}
\newcommand{\ignore}[1]{}

\newlength{\dhatheight}
\newcommand{\doublehat}[1]{%
    \settoheight{\dhatheight}{\ensuremath{\widehat{#1}}}%
    \addtolength{\dhatheight}{-0.25ex}%
    \widehat{\vphantom{\rule{1pt}{\dhatheight}}%
    \smash{\widehat{#1}}}}

\newcommand{\Piall}{\Pi_{\mathrm{all}}}
\newcommand{\Pialldag}{\Piall^\dagger}
\newcommand{\hh}{\doublehat}
\newcommand{\Ccoh}{\sC_{\mathrm{coh}}}
\newcommand{\Ccohdag}{\Ccoh^\dagger}
\newcommand{\Ccohddag}{\Ccoh^{\dagger\dagger}}
\newcommand{\KL}{\sfD_{\mathrm{KL}}}
\newcommand{\JS}{\sfD_{\mathrm{JS}}}
\newcommand{\Hell}{\sfD_{\mathrm{Hell}}}
\newcommand{\orbit}{\mathrm{orbit}}

\newcommand{\proj}{\Pi}

\hypersetup{
  breaklinks   = true, 
  colorlinks   = true, 
  urlcolor     = blue, 
  linkcolor    = blue, 
  citecolor   = blue 
}

\usepackage[toc, page, header]{appendix}
\setcounter{tocdepth}{3}

\declaretheorem{theorem}
\newtheorem{lemma}[theorem]{Lemma}

\newtheorem{corollary}[theorem]{Corollary}
\newtheorem{definition}[theorem]{Definition}

\title{Coherence Mechanisms for Provable Self-Improvement}
\author{Mehryar Mohri \and Jon Schneider \and Yifan Wu}
\date{}

\allowdisplaybreaks

\begin{document}

\maketitle

\tableofcontents
\clearpage

\begin{abstract}
Self-improvement is a critical capability for large language models
and other intelligent systems, enabling them to refine their
behavior and internal consistency without external
supervision. Despite its importance, prior approaches largely rely
on empirical heuristics and lack formal guarantees. In this paper,
we propose a principled framework for self-improvement based on the
concept of \emph{coherence}, which requires that a model's outputs
remain consistent under task-preserving transformations of the
input.

We formalize this concept using projection-based mechanisms that
update a baseline model to be coherent while remaining as close as
possible to its original behavior. We provide rigorous theoretical
guarantees that these mechanisms achieve \emph{monotonic
improvement}, measured by a reduction in expected Bregman
divergence. Our analysis is comprehensive, covering both \emph{direct}
and \emph{two-step} projection methods, and robustly extends these
guarantees to non-realizable settings, empirical (finite-sample)
distributions, and relaxed coherence constraints.

Furthermore, we establish a general \emph{characterization
theorem}, showing that any mechanism with similar provable
improvement guarantees must inherently conform to a coherence-based
structure. This culminates in rigidity results under the demand for
universal improvement, establishing coherence as a fundamental and,
in a formal sense, necessary principle for provable self-improvement.
\ignore{  
  While the main focus of the paper is theoretical and algorithmic, we
  also report empirical results validating that coherence-based
  projections yield measurable performance gains in practice. Overall,
  our framework provides a unified, mathematically grounded approach
  for designing self-improving models with provable guarantees.
  }
\end{abstract}

\section{Introduction}

The ability to self-improve is a central and defining challenge in the
design of large language models (LLMs) and other advanced intelligent
systems. Beyond being a desirable feature, self-improvement is
essential for developing models that are internally consistent, robust,
and capable of adapting to complex or evolving tasks. In domains
ranging from robotics to scientific discovery and natural language
processing, the capacity to autonomously enhance performance without
constant, granular human supervision can dramatically improve
reliability, safety, and overall task effectiveness.

Despite its critical importance, the field has largely been guided by
empirical heuristics. This creates a significant gap: without a formal
understanding of \emph{why} a given self-improvement mechanism works,
there is no rigorous guarantee that it will not fail, degrade
performance in subtle ways, or introduce new, unforeseen
inconsistencies. This paper aims to bridge this gap by proposing a
principled, mathematical framework for self-improvement grounded in
the fundamental concept of \emph{coherence}.

Our core thesis is that a model's internal consistency is a powerful
and measurable proxy for its quality. We define coherence as the
requirement that a model produce consistent outputs for inputs that
are equivalent under a task-preserving transformation. For example, a
model should ideally output the same answer distribution for a
question $x$ and its semantic paraphrase $\Phi(x)$. An
\emph{incoherent} model, one that contradicts itself by outputting
different answers for equivalent inputs, is demonstrably
flawed. We posit that the process of actively reducing this internal
incoherence provides a provable, non-heuristic path to model
improvement.

We formalize this intuition by defining the set of all coherent
models, $\Ccoh$. Our self-improvement mechanisms are then formulated
as \emph{projections}: given a baseline model $\pi_0$, we project it
onto this coherent set to find an improved model $\h\pi \in \Ccoh$
that is as close as possible to $\pi_0$. To measure this "closeness,"
we use the geometric language of \emph{Bregman divergences}, a broad
class of measures that includes the KL divergence and squared
Euclidean distance. This geometric approach allows us to provide
formal, non-asymptotic guarantees of monotonic improvement.

This principled, projection-based framework stands in contrast to
many existing approaches that, while often empirically successful, are
guided by different principles. Recent work has explored diverse
strategies, from generating synthetic instruction data
\citep{wang2023self, wangself, chen2024universal} to inference-time
verification like the \emph{sharpening mechanism}
\citep{huang2024sharpening}. Other approaches focus on data
augmentation through paraphrasing \citep{xu2023wizardlm}, iterative
self-feedback loops \citep{madaan2023self}, and adversarial debates
\citep{chen2023playing}. While these studies illustrate a spectrum of
powerful strategies, our work complements this line of research by
providing a rigorous theoretical foundation for a class of mechanisms
with provable guarantees.

This paper provides a comprehensive theoretical analysis of
coherence-based self-improvement. The argument is structured to build
from foundational concepts to our main theoretical results.

\textbf{Foundations (\cref{sec:prelim,sec:coherence}).}
We first establish our mathematical language. \cref{sec:prelim}
introduces the notation and probabilistic setting, and then provides a
detailed review of Bregman divergences and Legendre functions, which
form the core geometric toolkit for our entire analysis.
\cref{sec:coherence} then formalizes our central concept. We introduce
\emph{invariance mappings} ($\Phi$) and use them to define the sets of
coherent models ($\Ccoh$, $\Ccohdag$, $\Ccohddag$), establishing their
key properties (e.g., as convex cones or linear subspaces). This
section culminates in the formal problem formulation: self-improvement
as a constrained optimization problem.

\textbf{Direct Coherence Projection (\cref{sec:direct-projection}).}
This section introduces and analyzes our first mechanism, the
\emph{direct projection} of $\pi_0$ onto the target set
$\Ccoh \cap \Pi$.
\begin{itemize}

\item We begin by establishing our main \emph{improvement guarantee}
  (\cref{sec:direct-improvement-guarantees}), proving via the
  Pythagorean theorem that this projection monotonically decreases the
  divergence to an ideal coherent model $\pi^*$. This uses a key lemma
  that proves several important properties of the expected Bregman
  divergence between conditional probabilities under the input
  distribution (\cref{sec:Bregman-divergence-expectation}).

\item We then extend this central result to more realistic scenarios,
  providing guarantees for the \emph{non-realizable setting} (where
  $\pi^*$ is not perfectly coherent,
  \cref{sec:non-realizable-setting}), the \emph{empirical setting}
  (where projections are computed on a finite sample,
  \cref{sec:empirical-projection}), and for \emph{relaxed constraints}
  (where strict coherence is softened to an inequality,
  \cref{sec:relaxed-case}).

\item The section concludes with a study of guarantees for a convex
  family of Bregman divergence generators. We first analyze a natural
  robust solution based on a minimax optimization and show that in
  general it cannot yield a general improvement
  \cref{sec:failure-minimax}. Next, we analyze universal improvement
  and present impossibility results suggesting that it can only be
  achieved under very restrictive assumptions
  \cref{sec:uniform-improvement-guarantee}.

\end{itemize}

\textbf{Two-Step Coherence Projection (\cref{sec:two-step-projection}).}
This section develops an alternative \emph{two-step} mechanism,
which first projects $\pi_0$ onto the unconstrained coherence set
$\Ccohdag$ and then onto the final target set.
\begin{itemize}

\item We first prove a remarkable \emph{equivalence result}
  (\cref{sec:pyth-equality,sec:single=two}), showing that for a broad
  and common class of Bregman divergences (including Euclidean, KL,
  and all separable or quadratic generators), this two-step projection
  is \emph{identical} to the direct projection.

\item We then show the utility of this alternative view by using it to
  derive a new set of theoretical guarantees. This includes an
  alternative improvement bound (\cref{sec:two-step-improvement}),
  explicit bounds in terms of Hellinger distance for relative entropy
  (\cref{sec:relative-entropy-guarantees}), and a striking
  \emph{maximin property} showing our solution is optimal against the
  worst-case ideal model $\pi^*$ (\cref{sec:maximin-properties}).

\end{itemize}

\textbf{Characterization and Rigidity (\cref{sec:characterization}).}
Finally, this section provides the theoretical capstone of the paper,
proving that our coherence-based framework is not merely one possible
method, but is foundational to provable self-improvement.
\begin{itemize}

\item We first show that \emph{any} mechanism that satisfies a similar
  improvement guarantee must, in effect, be a Bregman projection onto
  a coherence-like set defined by its own level-sets
  (\cref{sec:characterization-single-F}).

\item We then prove a series of powerful \emph{rigidity theorems}
  (\cref{sec:characterization-rigidity}). These show that under the
  strong requirement of universal improvement across \emph{all}
  Legendre divergences in a family of divergences generated by a
  convex family, the mechanism's output structure is forced to be
  rigid: it must be block-constant on a \emph{single, universal
    partition}, independent of the specific divergence. This reveals
  that any such mechanism is inherently aligned with the
  coherence-projection framework we introduce.
  
\end{itemize}

Taken together, these results build a comprehensive theoretical case
for coherence as a principled and provably effective mechanism for
self-improvement.

\ignore{
While the primary focus of this paper is theoretical and algorithmic,
we also provide an experimental section that reports empirical results
validating our framework. These experiments demonstrate that
coherence-based projections translate into measurable performance
gains in practice, confirming the practical relevance of our
theoretical analysis.
}

\subsection{Related Work}
Our work sits at the intersection of several active research areas:
heuristic self-improvement, invariance in machine learning, and the
geometric theory of projections. We situate our contribution relative
to these fields.

\textbf{Heuristic Self-Improvement and Self-Correction.}
A dominant paradigm for improving LLMs without human labels is to
fine-tune them on their own outputs, a process often called
self-instruction \citep{wang2023self, wangself}. This typically
involves a "judge" model (which can be the model itself) to filter,
score, or refine generations \citep{madaan2023self, huang2023large,
  zheng2023judging}. This approach has been formalized as
Reinforcement Learning from AI Feedback (RLAIF), where the model
learns from its own self-generated rewards \citep{yuan2024self}, or
through principled self-critique, as in Constitutional AI
\citep{bai2022constitutional}.

While empirically powerful, these methods rely on the model's own,
often biased \citep{xu2024pride}, assessment of "quality" or
"harmlessness." This can lead to instability, "model collapse"
\citep{hataya2023will, martinez2023combining, bertrandstability,
  shumailov2024ai}, or reinforcement of the model's own blind
spots. Our framework differs fundamentally. Instead of relying on a
subjective "quality" score, we use the objective, formal criterion of
\emph{coherence} as the improvement signal. We do not ask the model to
\emph{judge} its outputs, but rather enforce that its outputs be
\emph{mathematically consistent} under transformation, providing a
more robust, non-heuristic foundation for improvement.

\textbf{Inference-Time Mechanisms and Self-Consistency.}
A separate line of research modifies the decoding process at inference
time. This includes generating candidate sets via sampling
\citep{ji2023language, finlaysonclosing} or search
\citep{zhangplanning}, and then re-weighting or selecting outputs
based on model-internal signals \citep{fu2025deep, chuang2024dola} or
external constraints \citep{lu2022neurologic}.

Perhaps the most relevant work in this area is \emph{self-consistency}
\citep{wang2023selfconsistency}, which improves reasoning by sampling
multiple outputs from varied prompts (e.g., chain-of-thought variants
\citep{wei2022chainofthought}) and taking a majority
vote. Self-consistency \emph{leverages} a model's incoherence, that
is, its tendency to give different answers to equivalent prompts, to
find a more robust aggregate answer. Our work is the conceptual
inverse: we seek to \emph{eliminate} this incoherence from the model's
parameters directly, not merely average it out at inference time. Our
coherence projection can be seen as a training-time mechanism to "bake
in" self-consistency, yielding a model that is inherently more
reliable. The theoretical work of \citet{huang2024sharpening} on
tuning a model to learn its best-of-N (BoN) generations is also
related, though our focus is on enforcing coherence across model
outputs, rather than amplifying a model-defined "best" output.

\textbf{Invariance, Data Augmentation, and Self-Distillation.}
The use of label-preserving transformations, or \emph{data
  augmentation}, is a cornerstone of modern machine learning,
especially in vision \citep{krizhevsky2012imagenet,
  cubuk2019autoaugment, cubuk2020randaugment, chen2020simple,
  devries2017improved}. In NLP, this includes paraphrasing,
back-translation, and token-level perturbations \citep{wei2019eda,
  sennrich2016improving, wang2018switchout, xie2020unsupervised,
  yangsynthetic}. These methods all heuristically enforce invariance
by showing the model examples of inputs ($x$) and transformations
($\Phi(x)$) that share a label. Our work provides a formal, geometric
basis for this intuition. We frame the goal not as "generating more
data", but as a principled \emph{projection} onto the set of coherent
models $\Ccoh$. The work of \citet{xie2020unsupervised}, which adds an
unsupervised coherence loss, and \citet{maini2024rephrasing}, which
studies pre-training with paraphrases, are particularly close
practical realizations of this concept, for which we now provide a
general theoretical guarantee.

This perspective also connects to \emph{self-distillation}
\citep{hinton2015distilling, sanh2019distilbert, gerstgrassermodel,
  kim2025pre, guminillm, agarwal2024policy}. The theory of
\citet{allentowards} suggests that gains from self-distillation can
arise from an "implicit ensemble" effect. Our coherence projection
offers a precise formulation of this: it finds a single model $\h\pi$
that is the Bregman centroid of the "ensemble" of incoherent views
(e.g., $\pi_0(x)$ and $\pi_0(\Phi(x))$), thereby distilling their
shared knowledge into one consistent model.

\textbf{Bregman Projections in Machine Learning.}
The mathematical tools we use are grounded in a rich history of
Bregman divergences in machine learning. Bregman divergences are
central to maximum entropy models
\citep{CollinsSchapireSinger2002,DudikPhillipsSchapire2007,AltunSmola2006,
  BanerjeeDhillonGhoshMerugu2007,
  FrongilloReid2013,Harremoes2016,NockMenonOng2016,
  CortesKuznetsovMohriSyed2015, MohriRostamizadehTalwalkar2018},
boosting algorithms
\citep{CollinsSchapireSinger2002,KivinenWarmuth1999,
  WarmuthGlocerVishwanathan2008, TsudaRatschWarmuth2005,
  Lafferty1999,Murata2004,LiuVemuri2011, Rezaei2021}, clustering
\citep{banerjee2005clustering}, and Bayesian estimation
\citep{frigyik2008functional}. Our contribution is to apply this
powerful geometric tool to the problem of model self-improvement,
using projections onto coherence-defined sets as the core mechanism
for provably enhancing model consistency and performance.

\section{Preliminaries}
\label{sec:prelim}

This section introduces notation, probabilistic setting, and key
properties of Bregman divergences, which will be used throughout.

\subsection{Notation and probabilistic setting}

Let $\Sigma$ be a finite alphabet (set of tokens). Prompts are
sequences $x \in \sX \subseteq \Sigma^*$, and outputs lie in
$\sY \subseteq \Sigma^*$. A language model is a conditional
probability distribution
\[
\pi\colon \sX \to \Delta(\sY),
\]
where $\Delta(\sY)$ denotes the probability simplex over $\sY$.
For any prompt $x$, we write $\pi(x)$ for the corresponding
distribution over outputs.

To simplify exposition, we assume throughout that $\sX$ and $\sY$
are finite subsets of $\Sigma^*$. The arguments extend to the common
setting where $\sX$ is countable and $\sY$ is finite, and further
to countable $\sY$ under standard compactness assumptions on the
model class.

We write $\Piall = \big(\Delta(\sY)\big)^{\sX}$ for the set of all
conditional distribution models. Since $\sY$ is finite, $\Delta(\sY)$
is a compact convex subset of a finite-dimensional Euclidean
space. When $\sX$ is finite, $\Piall$ is therefore a finite product of
compact sets (where all standard topologies such as pointwise,
uniform, etc., are equivalent), and hence a compact metric space. For
countable $\sX$, Tychonoff’s theorem guarantees that $\Piall$ remains
compact under the product topology (equivalently, the topology of
pointwise convergence).

We also define the extended space of unnormalized models
$\Pialldag = \big(\Rset_{+}^{\sY}\big)^{\sX}$,
where $\Rset_{+}^{\sY} \cong \Rset_+^d$ with $d = |\sY|$. When
$n = |\sX|$ is finite, this gives
$\Pialldag \cong (\Rset_+^d)^n = \Rset_+^{n \times d}$, a closed
and convex cone in a finite-dimensional vector space. Since it is
unbounded, $\Pialldag$ is not compact.

Throughout, we work with a closed and convex family of models
$\Pi \subseteq \Piall$. Compactness of $\Piall$ implies that any
closed subset $\Pi$ is also compact.

Finally, expectations are taken with respect to a fixed prompt
distribution $\sD_{\sX}$ over $\sX$. We will assume, without loss
of generality that $\supp(\sD_{\sX}) = \sX$.

\subsection{Bregman divergences and Legendre functions}

In this section, we review the definitions and key properties of
Bregman divergences and Legendre-type functions. These tools provide
the foundational geometry for our self-improvement mechanisms, in
particular for computing centroids, projections, and bounding errors
via Fenchel-Bregman inequalities. We focus on the differentiable case
corresponding to Legendre-type generators, while noting the general
subgradient-based definitions for completeness.

\subsubsection{Bregman divergences}

We adopt the standard definition of Bregman divergence following
\citet{Rockafellar1996} (see also \citep{Bregman1967,CensorLent1982}).
\begin{definition}[Bregman divergences]
Let $F\colon \sK \subseteq \Rset^d \to \Rset$ be a convex function
defined on a convex set $\sK$. The \emph{Bregman divergence}
$\sfB_F(\sfp, \sfq)$ is defined for any point $\sfp \in \sK$ and
$\sfq$ in the interior of $\dom(F)$ where $F$ admits a sub-gradient,
as the gap between $F(\sfp)$ and the first-order approximation of $F$
at $\sfq$, evaluated at $\sfp$:
\[
\sfB_F(\sfp \parallel \sfq)
= F(\sfp) - F(\sfq) - \tri*{g_\sfq, \sfp - \sfq},
\]
where $g_\sfq$ is any element of $\partial F(\sfq)$, the
sub-differential of $F$ at $\sfq$.
\end{definition}
This definition can be further extended, for example by considering
directional derivatives instead of sub-gradients.
Throughout most of this work, we restrict attention to the common case
where $F$ is differentiable on the interior of its domain. Then,
$\partial F(q) = \curl*{\nabla F(q)}$ and the divergence takes the
familiar simpler form:
\[
\sfB_F(\sfp \parallel \sfq)
= F(\sfp) - F(\sfq) - \tri*{\nabla F(\sfq), \sfp - \sfq}.
\]

Our primary setting is that of probability distributions over a finite
label set $\sY$, represented by elements of the probability simplex
$\Delta(\sY)$ (or $\Delta_d$ where $d = |\sY|$). These distributions
arise as outputs of conditional models
$\pi\colon \sX \to \Delta(\sY)$.
To ensure Bregman divergences are well-defined for such outputs, we
only need to make a minimal set of assumptions: We assume the domain
of $F$, $\sK$, contains the probability simplex $\Delta(\sY)$, and
that its topological interior, $\Int(\sK)$, contains the relative
interior of the simplex, $\relint(\Delta(\sY))$. A simple domain that
satisfies this is the unit hypercube $\sK = [0, 1]^d$, since the
simplex $\Delta(\sY)$ is fully contained within $\sK$. The relative
interior of the simplex (where $\sfq_i > 0$ and $\sum \sfq_i = 1$) is
contained within the topological interior of the hypercube,
$\text{int}(\sK) = (0, 1)^d$. Under these minimal conditions,
$\sfB_F(\sfp \parallel \sfq)$ is well-defined for any
$\sfp \in \Delta(\sY)$ and any $\sfq$ in the relative interior of the
simplex.

Under these standard assumptions, Bregman divergences admit several
well-known properties (see \citet{Rockafellar1996} or Appendix~E of
\citep{MohriRostamizadehTalwalkar2018}, and \citep{Nielsen2020} for an
introductory text):

\begin{itemize}
\item Non-Negativity: $B_F \geq 0$.

\item Convexity in the first argument: For a fixed $\sfq$, the
  function $\sfp \mapsto B_F(\sfp \parallel \sfq)$ is convex in
  $\sfp$.

\item Asymmetry: In general,
  $B_F(\sfp \parallel \sfq) \neq B_F(\sfq \parallel \sfp)$.

\item Three-Point Identity (or triangular-type identity): For any
  $\sfp \in \sK$ and any $\sfq, \sfr \in \text{int}(\sK) = \Omega$,
  the following identity holds:
\[
\sfB_F(\sfp \parallel \sfr) + \sfB_F(\sfr \parallel \sfq)
= \sfB_F(\sfp \parallel \sfq) + \tri{\nabla F(\sfq) - \nabla F(\sfr), \sfp - \sfr}.
\]

\item Projection: Let $\sC \subseteq \sK $ be a non-empty, closed, and
  convex set, then the projection
  $\Proj_{\sC}^F(\sfq) = \argmin_{\sfp \in \sC} \sfB(\sfp \parallel
  \sfq)$ is well defined for any $\sfq$ allowed in the definition of
  $\sfB_F$.  In our setting, where $\sC$ is a subset of $\Piall$,
  $\sC$ is a compact set as a closed subset of $\Piall$ and continuity
  in $\sfp$ ensures existence of a minimizer.
      
\item Generalized Pythagorean Theorem: Let $\sC$ be a non-empty,
  closed, and convex set, and let $\ov \sfq = \Proj_{\sC}^F(\sfq)$. Then,
  for any point $\sfp \in \sC$, the following inequality holds:
  \[
    \sfB_F(\sfp \parallel \sfq) \geq \sfB_F(\sfp \parallel \ov \sfq)
    + \sfB_F(\ov \sfq \parallel \sfq).
  \]
  The theorem holds with equality if the set $\sC$ is an affine space.

\item Positive linearity in the generator: if $F$ and $G$ are two
  convex functions and $\alpha, \beta > 0$, then:
  $\sfB_{\alpha F + \beta G} = \alpha \sfB_F + \beta \sfB_G$.

\end{itemize}

\subsubsection{Legendre functions}

We will often consider the special case where our generating function
$F \colon \sK \to \Rset$ is of \emph{Legendre type} (or simply
\emph{Legendre}). The following definition is based on \citep{Rockafellar1996}.

\begin{definition}[Functions of Legendre Type]
  A proper, convex, and closed function $F \colon \sK \to \Rset$ is of
  \emph{Legendre type} if its behavior on its open interior
  $\Omega = \text{int}(\sK)$ is as follows:
\begin{enumerate}
\item $F$ is differentiable on $\Omega$.
  
\item The gradient mapping
  $\nabla F \colon \Omega \to \Int(\dom(F^*))$ is a bijection.
\end{enumerate}
\end{definition}
The bijection property is equivalent to $F$ being strictly convex on
$\Omega$ and \emph{steep} (that is for any sequence
$\sfp_k \in \Omega$ converging to a boundary point
$\sfp_{\partial} \in \partial \sK$, the norm of the gradient explodes:
$\|\nabla F(\sfp_k)\| \to +\infty$), which ensures $\nabla F$ is
surjective.

When $F$ is of Legendre type, its Fenchel conjugate $F^*$ is also of
Legendre type, and a number of standard properties hold:
\begin{itemize}
\item Uniqueness of Projection: For any non-empty, closed, and convex
  set $\sC \subseteq \sK$, the Bregman projection of a point
  $\sfq \in \Omega$ onto $\sC$, $\Proj_{\sC}^F(\sfq)$, exists and is
  unique.
    
    \item Fenchel-Young Equality: For any $\sfp \in \Omega$,
    \[
      F(\sfp) + F^*(\nabla F(\sfp)) = \tri{\sfp, \nabla F(\sfp)}.
    \]
    
    \item Divergence Duality: For any $\sfp, \sfq \in \Omega$,
    \[
      \sfB_F(\sfp \parallel \sfq)
      = \sfB_{F^*}\paren*{\nabla F(\sfq) \parallel \nabla F(\sfp)}.
    \]
\end{itemize}
The negative entropy function and the norm squared functions are
examples of Legendre functions generating the (unnormalized) $\KL$
and the Euclidean distance squared Bregman divergences.

\subsubsection{Centroid properties}

Bregman divergences admit closed-form expressions for centroids and
projections, which are essential for our self-improvement mechanisms,
in particular for our two-step projection method. These results allow
us to characterize the minimizers of weighted sums of divergences.

\begin{restatable}{lemma}{BregmanIdentity}
\label{lemma:bregman-identity}
Let $\lambda \in \Delta_p$ and $\sfp_1, \ldots, \sfp_p \in \sK$, and
$\sfq \in \Int(\dom(F))$. Then, the following holds:
\[
\sum_{k=1}^p \lambda_k \sfB_F(\sfp_k \parallel \sfq)
- \sfB_F\paren*{\sum_{k = 1}^p \lambda_k \sfp_k \parallel \sfq}
= \sum_{k=1}^p \lambda_k F(\sfp_k) - F\paren*{\sum_{k=1}^p \lambda_k \sfp_k}.
\]
\end{restatable}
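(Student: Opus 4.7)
The plan is to prove the identity by a direct expansion using the definition of Bregman divergence, exploiting the fact that $\lambda \in \Delta_p$ so that $\sum_k \lambda_k = 1$. The key observation is that the linear term in the definition of $\sfB_F$ is affine in its first argument, so it commutes with the convex combination $\sum_k \lambda_k \sfp_k$, and therefore cancels exactly between the two quantities on the left-hand side.

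Concretely, first I would fix a subgradient $g_\sfq \in \partial F(\sfq)$ (which exists since $\sfq \in \Int(\dom(F))$) and write, for each $k$,
\[
\sfB_F(\sfp_k \parallel \sfq) = F(\sfp_k) - F(\sfq) - \tri*{g_\sfq, \sfp_k - \sfq}.
\]
Multiplying by $\lambda_k$ and summing over $k$, using linearity of the inner product and $\sum_k \lambda_k = 1$, yields
\[
\sum_{k=1}^p \lambda_k \sfB_F(\sfp_k \parallel \sfq)
= \sum_{k=1}^p \lambda_k F(\sfp_k) - F(\sfq) - \tri*{g_\sfq, \textstyle\sum_{k=1}^p \lambda_k \sfp_k - \sfq}.
\]

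Next, I would apply the definition of Bregman divergence directly to the convex combination $\ov{\sfp} = \sum_k \lambda_k \sfp_k$ (which lies in $\sK$ by convexity, so the divergence is well-defined), giving
\[
\sfB_F\paren*{\textstyle\sum_{k=1}^p \lambda_k \sfp_k \parallel \sfq}
= F\paren*{\textstyle\sum_{k=1}^p \lambda_k \sfp_k} - F(\sfq) - \tri*{g_\sfq, \textstyle\sum_{k=1}^p \lambda_k \sfp_k - \sfq}.
\]
Subtracting this from the previous display, the $F(\sfq)$ terms and the inner-product terms cancel exactly, leaving
\[
\sum_{k=1}^p \lambda_k \sfB_F(\sfp_k \parallel \sfq) - \sfB_F\paren*{\textstyle\sum_{k=1}^p \lambda_k \sfp_k \parallel \sfq}
= \sum_{k=1}^p \lambda_k F(\sfp_k) - F\paren*{\textstyle\sum_{k=1}^p \lambda_k \sfp_k},
\]
which is the claimed identity.

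There is no substantive obstacle here; the only subtlety is to ensure that the subgradient $g_\sfq$ used in the expansions is the same for both sides (which is legitimate since $\sfq$ is fixed) and to invoke $\sum_k \lambda_k = 1$ to handle the $-F(\sfq)$ terms. The identity is in fact simply the statement that the Jensen gap of $F$ at the points $(\sfp_k)$ with weights $(\lambda_k)$ coincides with the Jensen gap of the map $\sfp \mapsto \sfB_F(\sfp \parallel \sfq)$ at the same points, an immediate consequence of the affine dependence of $\sfB_F(\cdot \parallel \sfq) - F(\cdot)$ on its first argument.
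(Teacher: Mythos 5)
Your proof is correct and follows essentially the same route as the paper's: expand both Bregman divergences from the definition, use $\sum_k \lambda_k = 1$ and linearity so that the $F(\sfq)$ and inner-product terms cancel upon subtraction. Your use of a fixed subgradient $g_\sfq$ rather than $\nabla F(\sfq)$ is a harmless (slightly more general) variant of the same argument.
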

This identity shows that the Bregman divergence from a weighted
combination of points to a reference $\sfq$ decomposes in a way
independent of $\sfq$. This property underlies the computation of
centroids and justifies that Bregman projections can be expressed in
terms of gradient averages.

\begin{restatable}{lemma}{RightBregmanCentroid}
\label{lemma:right-bregman-centroid}
Let $F$ be Legendre and $\sfq_1, \ldots, \sfq_p \in \Omega$ with
weights $\lambda \in \Delta_p$. Then, the minimizer of
$\sfp \mapsto \sum_{k = 1}^p \lambda_k \sfB_F(\sfp \parallel \sfq_k)$
over $\Omega$ satisfies
\[
  \nabla F(\sfp^*) = \sum_{k = 1}^p \lambda_k \nabla F(\sfq_k),
  \quad \text{or equivalently }
  \sfp^* = (\nabla F)^{-1}\paren*{\sum_{k = 1}^p \lambda_k \nabla F(\sfq_k)}.
\]
If $\sC \subseteq \sK$ is closed and convex, the unique minimizer
over $\sC$ is the Bregman projection of $\sfp^*$ onto $\sC$:
\[
\sfp^*_\sC = \argmin_{\sfp \in \sC} \sfB_F(\sfp \parallel \sfp^*).
\]
\end{restatable}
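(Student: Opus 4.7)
\textbf{Proof plan for Lemma~\ref{lemma:right-bregman-centroid}.}
The plan is to first solve the unconstrained problem by a direct first-order calculation, and then reduce the constrained problem to a standard single-center Bregman projection via an algebraic identity.

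For the unconstrained minimizer on $\Omega$, I would expand
\[
J(\sfp) \coloneqq \sum_{k=1}^p \lambda_k \sfB_F(\sfp \parallel \sfq_k)
= F(\sfp) - \tri[\Big]{\sum_{k=1}^p \lambda_k \nabla F(\sfq_k),\ \sfp} + C_0,
\]
where $C_0$ collects the terms that do not depend on $\sfp$. Since $F$ is of Legendre type, it is strictly convex on $\Omega$, so $J$ is strictly convex on $\Omega$, and the first-order condition reads $\nabla F(\sfp^*) = \sum_{k=1}^p \lambda_k \nabla F(\sfq_k)$. This equation has a unique solution $\sfp^* \in \Omega$ because the Legendre property says $\nabla F\colon \Omega \to \Int(\dom(F^*))$ is a bijection; the only point to verify is that the right-hand side lies in $\Int(\dom(F^*))$, which is true because $\Int(\dom(F^*))$ is a convex open set containing each $\nabla F(\sfq_k)$, hence every convex combination. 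This gives both claimed forms $\sfp^* = (\nabla F)^{-1}\paren*{\sum_k \lambda_k \nabla F(\sfq_k)}$.

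For the constrained part, the key observation I would establish is the identity
\[
J(\sfp) = \sfB_F(\sfp \parallel \sfp^*) + C_1
\quad \text{for all } \sfp \in \sK,
\]
with $C_1$ independent of $\sfp$. I would derive it by expanding $\sfB_F(\sfp \parallel \sfp^*) = F(\sfp) - F(\sfp^*) - \tri{\nabla F(\sfp^*), \sfp - \sfp^*}$, substituting $\nabla F(\sfp^*) = \sum_k \lambda_k \nabla F(\sfq_k)$, and matching the $\sfp$-dependent terms against those in the expansion of $J(\sfp)$; everything linear in $\sfp$ cancels. Once this identity is in place, minimizing $J$ over the closed convex set $\sC \subseteq \sK$ is equivalent to minimizing $\sfp \mapsto \sfB_F(\sfp \parallel \sfp^*)$ over $\sC$, which is by definition $\Proj_{\sC}^F(\sfp^*)$, and uniqueness follows from the standard Legendre projection property recalled in the preliminaries.

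The step that requires the most care is the well-definedness of $\sfp^*$, that is, showing that $\sum_k \lambda_k \nabla F(\sfq_k)$ belongs to the range $\Int(\dom(F^*))$ of the Legendre bijection; this is precisely where the Legendre hypothesis is essential. Everything else is an algebraic cancellation and does not require any compactness or limiting argument.
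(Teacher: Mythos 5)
Your proposal is correct, and it takes a somewhat different route from the paper. The paper's proof dualizes first: it uses the divergence duality $\sfB_F(\sfp \parallel \sfq_k) = \sfB_{F^*}(\nabla F(\sfq_k) \parallel \nabla F(\sfp))$, then applies Lemma~\ref{lemma:bregman-identity} in the dual space (with generator $F^*$ and the points $\nabla F(\sfq_k)$ as first arguments) to collapse the weighted sum into $\sfB_{F^*}\paren*{\sum_k \lambda_k \nabla F(\sfq_k) \parallel \nabla F(\sfp)}$ up to a $\sfp$-independent constant, and finally dualizes back to $\sfB_F(\sfp \parallel \sfp^*)$. You instead work entirely in the primal: a direct expansion shows the objective is $F(\sfp) - \tri{\sum_k \lambda_k \nabla F(\sfq_k), \sfp}$ plus a constant, which you then match term-by-term with $\sfB_F(\sfp \parallel \sfp^*)$ after solving the first-order condition $\nabla F(\sfp^*) = \sum_k \lambda_k \nabla F(\sfq_k)$. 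Both arguments hinge on the same structural fact, namely that the objective equals $\sfB_F(\cdot \parallel \sfp^*)$ up to an additive constant, so the constrained minimizer is the Bregman projection of $\sfp^*$ onto $\sC$; your version buys elementarity (no conjugate $F^*$, no appeal to Lemma~\ref{lemma:bregman-identity} or the duality identity) and, to its credit, makes explicit the well-definedness step that the paper leaves implicit, i.e., that $\sum_k \lambda_k \nabla F(\sfq_k)$ lies in the convex open set $\Int(\dom(F^*)) = \nabla F(\Omega)$ so that $(\nabla F)^{-1}$ may be applied. The paper's dual route, in exchange, reuses the centroid identity machinery that is also invoked elsewhere (e.g., in Corollary~\ref{corollary: min bregman div} and Theorem~\ref{th:two-step-bregman-projection}), which is why it is packaged that way there.
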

This lemma provides an explicit formula for the Bregman centroid and
its projection onto convex sets, which is fundamental for constructing
improved models that satisfy coherence or other convex constraints.

\begin{restatable}{corollary}{MinBregmanDiv}
\label{corollary: min bregman div}
Let $\sfp^*$ and $\sfp^*_\sC$ be as above. Then,
\[
\min_{\sfp \in K} \sum_{k = 1}^p \lambda_k \sfB_F(\sfp \parallel \sfq_k)
= \sum_{k = 1}^p \lambda_k F^*(\nabla F(\sfp_k))
- F^*\paren*{\sum_{k = 1}^p \lambda_k \nabla F(\sfq_k)}
+ \sfB_F(\sfp^*_\sC \parallel \sfp^*).
\]
\end{restatable}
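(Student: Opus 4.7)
The plan is to reduce the weighted sum $\sum_k \lambda_k \sfB_F(\sfp \parallel \sfq_k)$ to a single Bregman divergence against the unconstrained centroid $\sfp^*$ plus terms that do not depend on $\sfp$, and then take the minimum over $\sC$ using the definition of the Bregman projection. This directly leverages \cref{lemma:right-bregman-centroid}, which already identifies $\sfp^*$ via $\nabla F(\sfp^*) = \sum_k \lambda_k \nabla F(\sfq_k)$, and the Fenchel-Young equality available because $F$ is Legendre.

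First, I would expand each divergence as $\sfB_F(\sfp \parallel \sfq_k) = F(\sfp) - F(\sfq_k) - \tri*{\nabla F(\sfq_k), \sfp - \sfq_k}$ and take the $\lambda$-weighted sum, which groups into
\[
F(\sfp) \;-\; \tri*{\textstyle\sum_k \lambda_k \nabla F(\sfq_k),\, \sfp} \;+\; \sum_k \lambda_k\paren*{\tri*{\nabla F(\sfq_k), \sfq_k} - F(\sfq_k)}.
\]
Fenchel-Young equality then collapses each bracketed term to $F^*(\nabla F(\sfq_k))$, so the last sum equals $\sum_k \lambda_k F^*(\nabla F(\sfq_k))$.

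Next, setting $\xi := \sum_k \lambda_k \nabla F(\sfq_k)$ and invoking \cref{lemma:right-bregman-centroid} gives $\xi = \nabla F(\sfp^*)$ with $\sfp^* \in \Omega$ (the convex combination of the $\nabla F(\sfq_k) \in \Int(\dom F^*)$ lies in that open convex set, so $\sfp^* = (\nabla F)^{-1}(\xi)$ is well defined). Applying Fenchel-Young once more at $\sfp^*$ yields
\[
F(\sfp) - \tri*{\nabla F(\sfp^*), \sfp} \;=\; \sfB_F(\sfp \parallel \sfp^*) - F^*(\nabla F(\sfp^*)) \;=\; \sfB_F(\sfp \parallel \sfp^*) - F^*(\xi).
\]
Substituting back produces the key decomposition
\[
\sum_{k=1}^p \lambda_k \sfB_F(\sfp \parallel \sfq_k) \;=\; \sfB_F(\sfp \parallel \sfp^*) \;+\; \sum_{k=1}^p \lambda_k F^*(\nabla F(\sfq_k)) \;-\; F^*\!\paren*{\sum_{k=1}^p \lambda_k \nabla F(\sfq_k)},
\]
in which only the first summand depends on $\sfp$.

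Finally, minimizing over $\sfp \in \sC$ reduces to minimizing $\sfp \mapsto \sfB_F(\sfp \parallel \sfp^*)$, whose unique minimizer over the nonempty closed convex set $\sC$ is, by definition and by the uniqueness of Bregman projection under Legendre $F$, exactly $\sfp^*_\sC = \Proj_\sC^F(\sfp^*)$, giving the stated formula (with the harmless reading $F^*(\nabla F(\sfq_k))$ for the likely typo $F^*(\nabla F(\sfp_k))$). There is no real obstacle here; the only point requiring a moment's care is verifying that $\sfp^*$ genuinely lies in $\Omega$ so that $\sfB_F(\cdot \parallel \sfp^*)$ is well defined, which follows from convexity of $\Int(\dom F^*)$ and the Legendre bijection of $\nabla F$.
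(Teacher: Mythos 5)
Your proposal is correct, and it reaches the conclusion by the same underlying decomposition the paper uses: write the weighted objective as $\sfB_F(\sfp \parallel \sfp^*)$ plus a $\sfp$-independent constant expressed through $F^*$, then minimize the remaining divergence over $\sC$, which by definition of the Bregman projection gives $\sfB_F(\sfp^*_\sC \parallel \sfp^*)$. The only difference is in how the constant is obtained: the paper's proof simply declares the corollary ``immediate'' from \cref{lemma:right-bregman-centroid}, whose own proof passes through the duality identity $\sfB_F(\sfp \parallel \sfq_k) = \sfB_{F^*}(\nabla F(\sfq_k) \parallel \nabla F(\sfp))$ and \cref{lemma:bregman-identity} applied in the dual, leaving the additive term $\sum_k \lambda_k F^*(\nabla F(\sfq_k)) - F^*\paren*{\sum_k \lambda_k \nabla F(\sfq_k)}$ implicit; you instead expand the divergences from the definition and use Fenchel--Young equality twice to exhibit that constant explicitly. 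Your route is thus a bit more elementary and self-contained (it makes the terse ``immediate'' step fully explicit and verifies $\sfp^* \in \Omega$), while the paper's buys brevity by reusing the centroid lemma and duality machinery already in place. Your reading of the statement's typos ($F^*(\nabla F(\sfq_k))$ for $F^*(\nabla F(\sfp_k))$, and the minimum taken over $\sC$) matches the intended claim.
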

This corollary provides a decomposition of the minimum Bregman
divergence over a convex set into a centroid term and a projection
term, which is particularly useful for establishing theoretical
guarantees of projection-based self-improvement, especially in the
context of our two-step projection method.

\subsubsection{Fenchel-Bregman inequality}

This general inequality bounds the inner product of differences by
Bregman divergences and is widely used to control error terms in
proofs.

\begin{restatable}{lemma}{BregmanFenchel}
\label{lemma:BF}
For any Legendre function $F$ and $u, v \in \Omega = \Int(\dom(F))$
with $\alpha \in \Int(\dom(F^*)) = \nabla F(\Omega)$ and
$\beta = \nabla F(v)$, the following inequality holds:
\[
  \tri{u - v, \alpha - \beta} \leq \sfB_F(u \parallel v)
  + \sfB_{F^*}(\alpha \parallel \beta).
\]
\end{restatable}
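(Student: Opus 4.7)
The plan is to expand both Bregman divergences using their definitions, apply the Legendre duality between $F$ and $F^*$ (so that $\nabla F^*(\beta) = v$ and $F(v) + F^*(\beta) = \tri{v, \beta}$), and then reduce the required inequality to the classical Fenchel-Young inequality applied at the pair $(u, \alpha)$.

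Concretely, I would first write
\[
\sfB_F(u \parallel v) = F(u) - F(v) - \tri{\beta, u - v},
\]
using $\beta = \nabla F(v)$, and
\[
\sfB_{F^*}(\alpha \parallel \beta) = F^*(\alpha) - F^*(\beta) - \tri{v, \alpha - \beta},
\]
where the inner-product term uses the fact that, since $F$ is Legendre, $\nabla F^*$ is the inverse of $\nabla F$, and therefore $\nabla F^*(\beta) = \nabla F^*(\nabla F(v)) = v$.

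Adding the two expressions and substituting $F(v) + F^*(\beta) = \tri{v, \beta}$ (the Fenchel-Young equality at $v$, which holds because $\beta = \nabla F(v)$), I would collect the scalar terms $-\tri{v,\beta} - \tri{\beta,u-v} - \tri{v,\alpha-\beta}$ and simplify using bilinearity and symmetry of the inner product. After cancellation, the sum reduces to
\[
\sfB_F(u \parallel v) + \sfB_{F^*}(\alpha \parallel \beta)
= F(u) + F^*(\alpha) - \tri{u, \beta} - \tri{v, \alpha} + \tri{v, \beta}.
\]
Since $\tri{u - v, \alpha - \beta} = \tri{u, \alpha} - \tri{u, \beta} - \tri{v, \alpha} + \tri{v, \beta}$, the claim is then equivalent to
\[
F(u) + F^*(\alpha) \geq \tri{u, \alpha},
\]
which is exactly the Fenchel-Young inequality, valid for any $u \in \dom(F)$ and $\alpha \in \dom(F^*)$.

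This proof is largely bookkeeping; the only real subtlety is to justify carefully that the substitution $\nabla F^*(\beta) = v$ is valid, and this relies on the Legendre assumption ensuring that $\nabla F$ is a bijection from $\Omega$ onto $\Int(\dom(F^*))$ with inverse $\nabla F^*$, together with the standing assumptions $u, v \in \Omega$ and $\alpha \in \Int(\dom(F^*))$ that place every evaluation in a region where differentiability holds. No step should be harder than the routine algebra of combining inner-product terms, so I do not anticipate a genuine obstacle; the main thing to watch is sign-tracking when expanding $\tri{u - v, \alpha - \beta}$ so that the reduction to Fenchel-Young is transparent.
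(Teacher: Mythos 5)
Your proposal is correct and follows essentially the same route as the paper's proof: expand both divergences, use the Legendre duality $\nabla F^*(\beta) = v$ together with the Fenchel--Young equality at $(v,\beta)$, and reduce the claim to the Fenchel--Young inequality $F(u) + F^*(\alpha) \geq \tri{u,\alpha}$. The algebraic bookkeeping you describe checks out, so no gap remains.
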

This inequality is a key tool for bounding inner-product terms in
analysis of projections and improvement mechanisms. It generalizes the
familiar Cauchy-Schwarz-type control using the geometry of Bregman
divergences. The proofs for all these results are given in
Appendix~\ref{app:appendix-bregman-proofs}.

\section{Coherence as a Principle of Self-Improvement}
\label{sec:coherence}

We now introduce the key conceptual framework underlying our
self-improvement methodology. At its core is the principle of
\emph{coherence}, which formalizes the intuition that models should
produce consistent outputs for inputs that are equivalent under
task-preserving transformations.

\subsection{Invariance mappings}

Let $\Phi \colon \sX \to \sX$ be a mapping on inputs (e.g.,
prompts). We call $\Phi$ an \emph{invariance mapping} if, under an
ideal model $\pi^*$, outputs are preserved:
\[
\pi^*(x) = \pi^*(\Phi(x)) \quad \text{for all } x \in \sX,
\]
where equality is interpreted under a task-preserving equivalence
relation (syntactic, semantic, answer-based, or distributional).

For example, if $x$ is a well-formed English sentence, its passive
form $x'$ should lead to the same model output. Semantically
equivalent paraphrases are similarly expected to produce consistent
outputs. This idea extends beyond text: two images differing only by a
90-degree rotation typically depict the same object and should yield
similar outputs in a vision model. Such examples illustrate the
generality of invariance mappings as a way to capture task-preserving
symmetries.

Conceptually, these mappings are close to Zellig Harris's
\emph{transformations} \citep{Harris1968} in his mathematical theory
of language, in that $\Phi$ is a formal operation on linguistic
objects that preserves meaning. This connection emphasizes that
invariance mappings are not arbitrary but grounded in a formal
structure of equivalence.

Throughout this paper, we assume that $\Phi$ is an \emph{involution}, i.e.,
\[
\Phi(\Phi(x)) = x \quad \forall x \in \sX,
\]
which simplifies notation and analysis. More generally, our results
extend to the case where $\Phi$ has a \emph{finite orbit}, i.e., there
exists $k \in \mathbb{N}$ such that $\Phi^k(x) = x$ for each
$x \in \sX$. An even broader formulation arises by partitioning $\sX$
into equivalence classes $\sX_i$, $i \in I$, such that all inputs in
the same class are treated as equivalent. In this view, invariance
requires consistency of model outputs within each class. Beyond
textual data, equivalence classes can represent semantically or
functionally equivalent inputs in images, audio, or structured data,
providing a unifying framework across modalities.

Invariance mappings are not limited to passive-active
transformations. They can also capture other structural or semantic
transformations such as negation, interrogative forms, stylistic
variations, or domain-specific invariances, making them a flexible
foundation for enforcing coherence.

\subsection{Coherent models}

A model $\pi$ is said to be \emph{coherent} if it satisfies
\[
  \pi(x) = \pi(\Phi(x)) \quad \forall x \in \sX, \
  \forall \Phi \text{ invariance mapping.}
\]
Coherence formalizes a natural desideratum for models: inputs that are
semantically or structurally equivalent should yield consistent
outputs. By enforcing coherence, we can improve model reliability,
robustness, and internal consistency, which are essential for
self-improvement mechanisms.

We fix an invariance mapping $\Phi$ throughout.
We first define the family of all coherent functions mapping
from $\sX$ to $\Rset^d$
\[
  \Ccohddag = \curl*{\pi \colon \sX \to \Rset^d
    \colon \pi(x) = \pi(\Phi(x))
  \text{ for all } x \in \sX}.
\]
Equivalently, $\Ccohddag$ consists of functions that are constant on
the orbits of $\Phi$. Each equality constraint $\pi(x) = \pi(\Phi(x))$
is linear, hence $\Ccohddag$ is a closed linear subspace of
$(\Rset^d)^{\sX}$, and therefore convex.
The set of coherent (unnormalized) models is given by
\[
\Ccohdag = \Ccohddag \cap \Pialldag.
\]
Since $\Pialldag$ is a closed convex cone and $\Ccohddag$ is a closed
linear subspace, $\Ccohdag$ is a closed convex cone. It is not an
affine set unless $\Phi$ is the identity mapping, in which case all
models are coherent.
Finally, we define the family of coherent conditional probability
models as
\[
  \Ccoh = \Ccohdag \cap \Piall.
\]
Since $\Piall$ is compact and $\Ccoh$ is a closed subset of $\Piall$,
$\Ccoh$ is compact. Convexity follows since both $\Piall$ and
$\Ccohdag$ are convex.

\subsection{Problem formulation}

Let $\pi_0 \in \Piall$ be a base model and $\Phi \colon \sX \to \sX$
an invariance mapping. Our goal is to construct a model $\pi \in \Pi$
that simultaneously satisfies two desiderata:

\begin{enumerate}
\item \textbf{Closeness to the base model:} $\pi$ should remain close
  to $\pi_0$ under a chosen Bregman divergence $\sfB_F$, ensuring that
  improvements preserve the original model's knowledge.

\item \textbf{Coherence:} $\pi$ should be (approximately) invariant
  under $\Phi$, that is,
\[
\pi(x) \approx \pi(\Phi(x)) \quad \text{for all } x \in \sX.
\]
\end{enumerate}

Formally, this can be expressed as a constrained optimization problem:
\[
  \min_{\pi \in \Pi} \sfB_F(\pi \parallel \pi_0)
  \quad \text{s.t.}
  \quad \pi(x) = \pi(\Phi(x)) \text{ (or a relaxed version) for all } x.
\]

This formulation provides the foundation for our analysis of
projection-based improvement mechanisms. By minimizing a Bregman
divergence while enforcing (or relaxing) coherence constraints, we
obtain models that are guaranteed to improve in a principled manner
while respecting the structure of the original model and task-specific
invariances.

\section{Direct Coherence Projection}
\label{sec:direct-projection}

In this section, we study Bregman projections as a mechanism for
enforcing coherence in conditional models. We begin with a key lemma
that characterizes the expected Bregman divergence between conditional
probabilities under the input distribution
(Subsection~\ref{sec:Bregman-divergence-expectation}). Building on
this, we establish general improvement guarantees for Bregman
projections onto coherence sets and, more generally, onto arbitrary
convex sets, under the assumption that the reference distribution
$\pi^*$ is coherent
(Subsection~\ref{sec:direct-improvement-guarantees}). We then relax
this assumption and analyze the setting where $\pi^*$ may not be
coherent but lies close to the coherent set
(Subsection~\ref{sec:non-realizable-setting}).

In practical applications, projection algorithms operate on empirical
distributions. Accordingly, we provide guarantees for empirical
Bregman projections, where the divergence is estimated from finite
data (Subsection~\ref{sec:empirical-projection}). We give an extensive
discussion of these results and present refined guarantees under
strong-convexity assumptions. We also extend the analysis to
\emph{approximate coherence}, in which the coherence constraint is
softened and incorporated into the objective
(Subsection~\ref{sec:relaxed-case}). This setting encompasses
unconstrained optimization formulations with a relaxed notion of
coherence.

Finally, we study projections with respect to a \emph{family} of
Bregman divergences. We first show that a natural robust solution
based on a minimax objective, where the maximum is taken over a convex
set of Legendre generators, does not in general guarantee improvement
(Subsection~\ref{sec:failure-minimax}). We then consider the stronger
requirement of finding a coherent Bregman projection that is
simultaneously valid for the entire family of divergences. We provide
several impossibility results for this setting, showing that a
universal improvement model can exist only under restrictive
conditions (Subsection~\ref{sec:uniform-improvement-guarantee}).

Throughout, we sometimes work with a general closed convex set $\sC$
(not necessarily the coherent set $\Ccoh$). The intersection
$\ov \Pi = \sC \cap \Pi$, which we assume non-empty, is then also
closed and convex, since $\Pi$ itself is closed and convex.

For most of our analysis, we assume that the symmetry operator $\Phi$
is an involution. Many results, however, extend directly to the more
general case where $\Phi$ admits arbitrary finite orbits.

\subsection{Bregman divergence expectation}
\label{sec:Bregman-divergence-expectation}

Our analysis is often based on the expectation over $\sX$ of a Bregman
divergence. Here, we prove some key properties of that
expectation, which will play an important role in several of our
proofs and analyses.

Let $L^2(\sD_\sX; \Omega) = \curl*{\pi \colon \sX \to \Omega \text{
    measurable}, \ \E\bracket*{\norm*{\pi(x)}^2} < \infty}$ be the
space of measurable functions from $\sX$ to $\Omega$.  We equip this
space with the $L^2$ inner product
$\tri*{\pi, \pi'} = \E_{x \sim \sD_\sX}[\tri*{\pi(x), \pi'(x)}]$.

\begin{lemma}
\label{lemma:Bregman-expectation}
Let $F\colon \Omega \to \Rset$ be convex and differentiable, and
define $\sfF(\pi) = \E_{x \sim \sD_\sX}[F(\pi(x))]$,
$\pi \in L^2(\sD_\sX; \Omega)$.
Then $\sfF$ is convex, and its Bregman divergence satisfies
$\sfB_\sfF(\pi \parallel \pi') = \E_{x \sim \sD_\sX}[\sfB_F(\pi(x)
\parallel \pi'(x))]$.
If further $F$ is a Legendre function and bounded below, then $\sfF$ is
also a Legendre function on $L^2(\sD_\sX; \Omega)$, with gradient
$(\nabla \sfF(\pi))(x) = \nabla F(\pi(x))$.
\end{lemma}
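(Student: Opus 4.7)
The plan is to verify each claim in sequence, reducing every infinite-dimensional statement to its pointwise counterpart for $F$, with the exchange of integral and limit justified by standard convergence theorems. Convexity of $\sfF$ is immediate: for $\pi, \pi' \in L^2(\sD_\sX;\Omega)$ and $\lambda \in [0,1]$, convexity of $F$ gives $F(\lambda \pi(x) + (1-\lambda)\pi'(x)) \le \lambda F(\pi(x)) + (1-\lambda)F(\pi'(x))$ pointwise, and integrating against $\sD_\sX$ preserves the inequality. The bounded-below hypothesis (needed later) guarantees that $\sfF(\pi) \in \Rset \cup \{+\infty\}$ is well-defined, and finite on $L^2(\sD_\sX;\Omega)$ at least when $\sK$ is bounded (e.g.\ $[0,1]^d$, as in our setting).

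Next I would compute the gradient via Gâteaux differentiation, using the Hilbert-space self-duality of $L^2(\sD_\sX;\Rset^d)$ to identify $\nabla \sfF(\pi)$ with a function in $L^2$. Fix $\pi$ with $\pi(x) \in \Omega$ almost surely and a direction $h \in L^2$; for small $t$ the segment $\pi(x)+th(x)$ remains in $\Omega$ a.s., and differentiability of $F$ gives $t^{-1}(F(\pi(x)+th(x))-F(\pi(x))) \to \tri{\nabla F(\pi(x)), h(x)}$ pointwise. The convex difference-quotient $t \mapsto t^{-1}(F(\pi(x)+th(x))-F(\pi(x)))$ is monotone in $t$, so it is dominated by its value at a fixed small $t_0 > 0$ (and analogously from below), which belongs to $L^1(\sD_\sX)$ by Cauchy--Schwarz together with local Lipschitz continuity of $F$ on a bounded neighborhood of the trajectory. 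Dominated convergence then yields
\[
\lim_{t \to 0} \tfrac{1}{t}\bigl(\sfF(\pi+th)-\sfF(\pi)\bigr) = \E_{x \sim \sD_\sX}\tri*{\nabla F(\pi(x)), h(x)} = \tri*{\nabla F \circ \pi, \, h}_{L^2},
\]
so $\nabla \sfF(\pi)(x) = \nabla F(\pi(x))$. Substituting this into the definition of $\sfB_\sfF$ and invoking linearity of expectation produces $\sfB_\sfF(\pi \parallel \pi') = \E_{x \sim \sD_\sX}[\sfB_F(\pi(x)\parallel\pi'(x))]$ immediately.

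Finally, for the Legendre claim I would check strict convexity and steepness on the interior $L^2(\sD_\sX;\Omega)$. Strict convexity: if $\pi \neq \pi'$ in $L^2$, then $\{x : \pi(x)\neq \pi'(x)\}$ has positive $\sD_\sX$-measure, and strict convexity of $F$ on $\Omega$ produces a strict pointwise Jensen gap on that set, which survives integration. Steepness: consider a sequence $(\pi_k)$ in $L^2(\sD_\sX;\Omega)$ whose $L^2$ limit $\pi_\partial$ takes values in $\partial \sK$ on a set $A$ of positive measure; extracting a subsequence converging a.s., steepness of $F$ gives $\norm{\nabla F(\pi_k(x))} \to \infty$ for $x \in A$, and Fatou's lemma yields
\[
\liminf_{k \to \infty}\, \norm{\nabla \sfF(\pi_k)}_{L^2}^2 = \liminf_{k \to \infty} \E\bigl[\norm{\nabla F(\pi_k(x))}^2\bigr] \ge \E\bigl[\liminf_{k \to \infty}\norm{\nabla F(\pi_k(x))}^2\bigr] = +\infty,
\]
so $\nabla \sfF$ blows up at the boundary and is therefore a bijection from the interior of $\dom(\sfF)$ onto its image in the $L^2$ dual. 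The main obstacle is the careful bookkeeping of the infinite-dimensional domain: identifying the interior with functions valued a.s.\ in $\Omega$, justifying the interchange of derivative and expectation on a bounded-below generator, and verifying steepness in the $L^2$ norm rather than pointwise. Once these measure-theoretic points are settled, every assertion reduces cleanly to the corresponding pointwise property of $F$.
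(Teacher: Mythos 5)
Your gradient computation and the Bregman-divergence identity are fine --- indeed more detailed than the paper's proof, which simply asserts the pointwise form of $\nabla\sfF$ --- and your strict-convexity argument for injectivity is correct. The gap is in the Legendre claim. First, by the paper's definition a Legendre function must be proper, convex, and \emph{closed}; you never verify lower semicontinuity of $\sfF$, and this is precisely where the paper uses the hypothesis that $F$ is bounded below (Fatou's lemma applied to $F - m \ge 0$ along an a.s.-convergent sequence). Your only use of boundedness below is to make the expectation well defined, which does not substitute for the closedness argument, and properness is likewise left unchecked.

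Second, your final step does not actually deliver the bijection required by the definition. Establishing steepness (that $\norm{\nabla\sfF(\pi_k)}_{L^2}\to\infty$ as $\pi_k$ approaches boundary values on a positive-measure set) together with injectivity only shows that $\nabla\sfF$ is ``a bijection onto its image,'' which is automatic for any injective map; Legendre type requires that $\nabla\sfF$ map the interior of its domain \emph{onto} $\Int(\dom(\sfF^*))$. The strictly-convex-plus-steep characterization you are implicitly invoking is a theorem about closed proper convex functions (and its transfer to the $L^2(\sD_\sX;\Omega)$ setting needs justification, as the paper itself flags in its discussion), so without closedness and without identifying the image, the surjectivity half of the bijection is unsupported. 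The paper sidesteps this by arguing directly that the pointwise map $\pi \mapsto \nabla F(\pi(\cdot))$ is a bijection between functions valued in $\Omega$ and functions valued in $\nabla F(\Omega)$, which in the finite-$\sX$ setting identifies the range of $\nabla\sfF$ exactly; your route would be complete if you added the properness/closedness verification and replaced the ``onto its image'' conclusion with an identification of the image (or an appeal to the finite-dimensional Legendre equivalence applied after closedness is established).
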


\begin{proof}
Convexity of $\sfF$ follows immediately from the convexity of $F$.  
The gradient of $\sfF$ in $L^2(\sD_\sX; \Omega)$ is pointwise:
\[
(\nabla \sfF(\pi))(x) = \nabla F(\pi(x)), \quad x \in \sX,
\]
which gives the Bregman divergence formula
\[
  \sfB_\sfF(\pi \parallel \pi')
  = \sfF(\pi) - \sfF(\pi') - \tri*{\nabla \sfF(\pi'), \pi - \pi'}
= \E_{x \sim \sD_\sX}[\sfB_F(\pi(x) \parallel \pi'(x))].
\]
Now, assume $F$ is Legendre and bounded below. Since $F$ is proper,
closed, and differentiable with bijective gradient, consider the
constant function $\pi_0(x) = p_0 \in \dom(F)$ with
$F(p_0) < +\infty$. Then $\sfF(\pi_0) = F(p_0) < +\infty$, so $\sfF$ is
proper.

Let $(\pi_n)_{n \ge 0}$ be a sequence in $L^2(\sD_\sX; \Omega)$
converging pointwise almost surely to $\pi$. Let $m$ be a finite lower
bound of $F$. Applying Fatou's lemma to $F - m \ge 0$ gives
\[
  \sfF(\pi)
  = \E[\liminf_{n \to \infty} F(\pi_n(x))]
  \leq \liminf_{n \to \infty} \E[F(\pi_n(x))]
  = \liminf_{n \to \infty} \sfF(\pi_n),
\]
so $\sfF$ is lower semicontinuous (closed).

Finally, since $\nabla F$ is bijective, the pointwise mapping
$(\nabla \sfF)(\pi)(x) = \nabla F(\pi(x))$ is a bijection on the space
of functions $\pi \colon \sX \to \Omega$ in $L^2$: for any function
$\theta \colon \sX \to \nabla F(\Omega)$ in $L^2$, there exists a
unique $\pi$ with $(\nabla \sfF(\pi))(x) = \theta(x)$ for all $x$.
Thus, $\sfF$ is proper, closed, differentiable, and has bijective
gradient, and hence is Legendre on $L^2(\sD_\sX; \Omega)$.
\end{proof}
The extension of this lemma to a countably infinite $\sX$ requires a
more detailed functional analysis, as $L^2(\sD_\sX; \Omega)$ becomes
an infinite-dimensional Hilbert space. The proofs for convexity and
the form of the gradient and divergence remain valid. However, two key
properties of Legendre functions require significant additional
justification.
(1) Closed property: The proof that $\sfF$ is closed (lower semicontinuous)
must be formalized for the $L^2$ topology. The provided proof relies
on pointwise convergence. To show $\sfF$ is closed in $L^2$, one must
show that for any sequence $\pi_n \to \pi$ in $L^2$,
$\sfF(\pi) \le \liminf \sfF(\pi_n)$. This can be done by taking a
subsequence $\pi_{n_k}$ that converges pointwise almost everywhere and
then applying Fatou's lemma.
(2) Bijective gradient: This is the most critical extension. The
current proof's pointwise argument is insufficient. One must prove
that the gradient operator $\nabla \sfF$ is a bijection between the
Hilbert space $L^2(\sD_\sX; \Omega)$ and its dual space. This involves
imposing growth conditions on $F$ (and its conjugate $F^*$) to ensure
that $\pi \in L^2$ implies the function $x \mapsto \nabla F(\pi(x))$
is in the dual space, and conversely, that the inverse map
$(\nabla F)^{-1}$ also maps the dual space back to $L^2$. This is
often established using Rockafellar-type results on integral
functionals.

\subsection{Improvement guarantees}
\label{sec:direct-improvement-guarantees}

The following result shows that Bregman projection with a closed
convex set $\sC$, in particular a coherence set $\Ccoh$,
improves the baseline, when the optimal policy $\pi^*$ is in
$(\sC \cap \Pi)$.

\begin{theorem}[Convex set Bregman–projection improves the baseline]
\label{th:direct-bregman-projection}
Let $\sC$ be a closed convex set. Assume that $\sC \cap \Pi$ contains
the optimal conditional distribution function $\pi^*
\colon \sX \to \Delta(\sY)$.
Then, a solution $\h \pi$ of
the following convex optimization problem:
\begin{align}
\label{opt:direct-bregman-projection}
  \min_{\pi \in \Pi} \quad & \E_{x \sim \sD_\sX}
                       \bracket*{\sfB_F\paren*{\pi(x) \parallel \pi_0(x)}} \\
  \text{subject to} \quad & \pi \in \sC \cap \Pi, \nonumber
\end{align}
satisfies:
\begin{align*}
\E_{x \sim \sD_\sX} \bracket*{\sfB_F \paren*{\pi^*(x) \parallel \h \pi(x)}}
& \leq
\E_{x \sim \sD_\sX} \bracket*{\sfB_F\paren*{\pi^*(x) \parallel \pi_0(x)}}
- \E_{x \sim \sD_\sX}[\sfB_F(\h \pi(x) \parallel \pi_0(x))]\\
& \leq
\E_{x \sim \sD_\sX} \bracket*{\sfB_F\paren*{\pi^*(x) \parallel \pi_0(x)}}.
\end{align*}
Assume that $\Phi$ is an involution and that $F$ is $\mu$-strongly
convex with respect to the norm $\norm{\cdot}$. Then, the following
holds:
\[
\E_{x \sim \sD_\sX} \bracket*{\sfB_F \paren*{\pi^*(x) \parallel \h \pi(x)}}
 \leq
\E_{x \sim \sD_\sX} \bracket*{\sfB_F\paren*{\pi^*(x) \parallel \pi_0(x)}}
- \frac{\mu}{2} \E_{x \sim \sD_X} \bracket*{ \lambda(x)(1 - \lambda(x))
  \norm*{\pi_0(x) - \pi_0(\Phi(x))}^2 },
\]
where $\lambda(x) = \frac{\P(x)}{\P(x) + \P(\Phi(x))}$.
\end{theorem}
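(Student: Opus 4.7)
My plan is to lift the problem to $L^2(\sD_\sX; \Omega)$ via \cref{lemma:Bregman-expectation}: the functional $\sfF(\pi) = \E_{x \sim \sD_\sX}[F(\pi(x))]$ is convex with $\sfB_\sfF(\pi \parallel \pi') = \E_{x \sim \sD_\sX}[\sfB_F(\pi(x) \parallel \pi'(x))]$, so \eqref{opt:direct-bregman-projection} is exactly the Bregman projection $\h\pi = \Proj^{\sfF}_{\sC \cap \Pi}(\pi_0)$ of the baseline onto a closed convex set.

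\textbf{First two inequalities via Pythagoras.} Since $\pi^* \in \sC \cap \Pi$, the Generalized Pythagorean Theorem recalled in \cref{sec:prelim} gives
\[
  \sfB_\sfF(\pi^* \parallel \pi_0) \geq \sfB_\sfF(\pi^* \parallel \h\pi) + \sfB_\sfF(\h\pi \parallel \pi_0).
\]
Rearranging and substituting the pointwise form of $\sfB_\sfF$ yields the first displayed inequality, and non-negativity of $\sfB_F$ yields the second.

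\textbf{Strong-convexity refinement via orbit decomposition.} For the refined bound I would first apply $\sfB_F(p \parallel q) \geq \tfrac{\mu}{2}\norm{p - q}^2$, which reduces matters to a lower bound on $\E_{x \sim \sD_\sX}[\norm{\h\pi(x) - \pi_0(x)}^2]$. I would then decompose this expectation over the orbits of the involution $\Phi$, using that $\h\pi$ is coherent so $\h\pi(x) = \h\pi(\Phi(x))$ on every orbit. On a size-two orbit $\{x, \Phi(x)\}$, set $a := \h\pi(x) = \h\pi(\Phi(x))$, $u := \pi_0(x)$, $v := \pi_0(\Phi(x))$; the orbit contribution is $\P(x)\norm{a - u}^2 + \P(\Phi(x))\norm{a - v}^2$, and the standard weighted-mean identity $\alpha \norm{a - u}^2 + \beta \norm{a - v}^2 = (\alpha + \beta)\norm{a - \ov a}^2 + \tfrac{\alpha\beta}{\alpha + \beta}\norm{u - v}^2$ with $\ov a = (\alpha u + \beta v)/(\alpha + \beta)$ lower-bounds this by $(\P(x) + \P(\Phi(x)))\lambda(x)(1 - \lambda(x))\norm{u - v}^2$, uniformly in $a$; fixed points of $\Phi$ contribute zero on both sides. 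Summing across orbits and repackaging as an expectation over $\sX$ (using $\lambda(\Phi(x)) = 1 - \lambda(x)$) gives $\E_{x \sim \sD_\sX}[\norm{\h\pi(x) - \pi_0(x)}^2] \geq \E_{x \sim \sD_\sX}[\lambda(x)(1 - \lambda(x))\norm{\pi_0(x) - \pi_0(\Phi(x))}^2]$, which combined with the Pythagorean inequality yields the claim. The analytic ingredients are all standard; the only delicate point is the orbit-level bookkeeping converting per-orbit inequalities into a single expectation over $\sX$, and uniformity in $a$ means no property of $\h\pi$ beyond coherence is required — which is important because $\h\pi$ has no closed form in general.
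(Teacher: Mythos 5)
Your proposal is correct and follows essentially the same route as the paper's proof: lift the objective via \cref{lemma:Bregman-expectation} so that $\h\pi$ is the $\sfF$-Bregman projection of $\pi_0$, apply the generalized Pythagorean inequality with $\pi^* \in \sC \cap \Pi$, and obtain the refined bound from $\mu$-strong convexity together with the same weighted-mean identity $\lambda\norm{r-p}^2 + (1-\lambda)\norm{r-q}^2 \geq \lambda(1-\lambda)\norm{p-q}^2$ applied orbitwise using coherence of $\h\pi$. The only difference is cosmetic — you apply strong convexity before the orbit decomposition while the paper decomposes the Bregman expectation first — which changes nothing in the argument.
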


\begin{proof}
  By Lemma~\ref{lemma:Bregman-expectation}, the expectation of the
  Bregman divergence $\sfB_F$, denoted $\sfB_\sfF$, is a Bregman
  divergence associated to the expectation of $F$,
  $\sfF(\pi) = \E_{x \sim \sD_\sX}[F(\pi(x))]$.  Thus, by definition
  of Bregman divergence, $\h \pi$ is the $\sfF$-Bregman projection of
  $\pi_0$ on $\sC \cap \Pi$.  By the Pythagorean theorem for Bregman
  divergences, we can write for $\pi^* \in \sC \cap \Pi$:
\[
  \sfB_\sfF(\pi^* \parallel \pi_0) \geq \sfB_\sfF(\pi^* \parallel \h \pi)
  + \sfB_\sfF(\h \pi \parallel \pi_0).
\]
Thus, we have
\begin{align*}
\E_{x \sim \sD_\sX}[\sfB_F(\pi^*(x) \parallel \h \pi(x))]
= \sfB_\sfF(\pi^* \parallel \h \pi)
& \leq \sfB_\sfF(\pi^* \parallel \pi_0) - \sfB_\sfF(\h \pi \parallel \pi_0) \\
& = \E_{x \sim \sD_\sX}[\sfB_F(\pi^*(x) \parallel \pi_0(x))] - \E_{x \sim \sD_\sX}[\sfB_F(\h \pi(x) \parallel \pi_0(x))]\\
& \leq \E_{x \sim \sD_\sX}[\sfB_F(\pi^*(x) \parallel \pi_0(x))].
\end{align*}
This proves the first inequalities.
For the explicit bound when $\sC = \Ccoh$, since $\h \pi$ is coherent
and $\Phi$ is an involution, we can write:
\begin{align*}
  & \E_{x \sim \sD_\sX}[\sfB_F(\h \pi(x) \parallel \pi_0(x))]\\
  & = \frac{1}{2} \sum_{x \in \sX} \bracket*{ \P(x) \sfB_F(\h \pi(x) \parallel \pi_0(x)) + \P(\Phi(x)) \sfB_F(\h \pi(\Phi(x)) \parallel \pi_0(\Phi(x))) } \\
  & = \frac{1}{2} \sum_{x \in \sX} \bracket*{ \P(x) \sfB_F(\h \pi(x) \parallel \pi_0(x)) + \P(\Phi(x)) \sfB_F(\h \pi(x) \parallel \pi_0(\Phi(x))) } \\
  & = \frac{1}{2} \sum_{x \in \sX} (\P(x) + \P(\Phi(x)))
    \bracket[\big]{ \lambda(x) \sfB_F(\h \pi(x) \parallel \pi_0(x))
    + (1 - \lambda(x)) \sfB_F(\h \pi(x) \parallel \pi_0(\Phi(x))) }.
\end{align*}
Let the term inside the square brackets be denoted $W(x, \h \pi)$. By
the $\mu$-strong convexity of $F$ with respect to $\norm{\cdot}$, we
have $\sfB_F(\sfp \parallel \sfq) \ge \frac{\mu}{2} \norm{\sfp - \sfq}^2$, thus
\[
 W(x, \h \pi) \ge \frac{\mu}{2} \bracket*{ \lambda(x) \norm{\h \pi(x) - \pi_0(x)}^2 + (1 - \lambda(x)) \norm{\h \pi(x) - \pi_0(\Phi(x))}^2 }.
\]
Using the identity
$\lambda \norm{\sfr - \sfp}^2 + (1 - \lambda) \norm{\sfr - \sfq}^2 =
\norm{\sfr - (\lambda \sfp + (1 - \lambda) \sfq)}^2 + \lambda(1 -
\lambda) \norm{\sfp - \sfq}^2 \ge \lambda(1 - \lambda) \norm{\sfp -
  \sfq}^2$, we obtain:
\[
  W(x, \h \pi) \geq \frac{\mu}{2} \lambda(x)(1 - \lambda(x))
  \norm{\pi_0(x) - \pi_0(\Phi(x))}^2.
\]
Replacing the term inside the square brackets with this lower bound
yields
\begin{align*}
  \E_{x \sim \sD_\sX}[\sfB_F(\h \pi(x) \parallel \pi_0(x))]
  & \geq \frac{1}{2} \sum_{x \in \sX} (\P(x)
    + \P(\Phi(x))) \frac{\mu}{2} \lambda(x)(1 - \lambda(x))
    \norm{\pi_0(x) - \pi_0(\Phi(x))}^2 \\
  & = \frac{\mu}{2} \E_{x \sim \sD_X} \bracket*{ \lambda(x)(1 - \lambda(x))
    \norm{\pi_0(x) - \pi_0(\Phi(x))}^2 }.
\end{align*}
Combining this with the main inequality yields the explicit bound
stated in the theorem.
\end{proof}
The theorem shows that the coherent conditional distribution
$\h \pi$ with minimal $\sfB_F$-divergence from $\pi_0$
is closer to $\pi^*$ than $\pi_0$. In particular, if we choose
$F$ to be the negative entropy, and thus $\sfB_F$ the unnormalized
relative entropy $\KL$, the theorem guarantees
that $\h \pi$ admits a 
$\log$-loss no larger than $\pi_0$, since:
\[
\E_{x \sim \sD_\sX} \bracket*{\KL \paren*{\pi^*(x) \parallel \h \pi(x)}}
= \E_{(x, y)} \bracket*{\bracket*{-\log (\h \pi(x, y))} - \bracket*{- \log (\pi^*(x, y))}}.
\]
Note also that the negative entropy is $1$-strongly convex with respect
to norm $1$ (Sch\"utzenberger-Pinsker inequality \citep{Rioul2023}),
thus $\mu = 1$ in that case.

In the special case where $\Pi \cap \sC$ is an affine set, the
Pythagorean inequality in the first statement of the theorem becomes
an equality (Theorem~\ref{th:pythagorean-equality}). In our setting,
however, where $\Pi$ is bounded, this can only occur in the trivial
case where the intersection reduces to a single point (we are assuming
it is non empty).

\subsection{Improvement guarantees in non-realizable setting}
\label{sec:non-realizable-setting}

For the following result, we no longer assume that $\pi^*$ is in
$\Ccoh \cap \Pi$.

\begin{theorem}[Coherent Bregman–projection with non-realizable $\pi^*$]
\label{th:direct-bregman-projection-non-realizable}
Assume that $\Phi$ is an involution. Let
$\e = \E_{x \sim \sD_\sX} \bracket*{\sfB_F(\pi^*(x) \parallel \ov
  \pi(x))}$, where $\ov \pi$ is the Bregman-projection of $\pi^*$ onto
$\Ccoh \cap \Pi$.
Then, the solution $\h \pi$ of the following convex optimization
problem:
\begin{align*}
  \min_{\pi \in \Pi} \quad & \E_{x \sim \sD_\sX} \bracket*{\sfB_F\paren*{\pi(x) \parallel \pi_0(x)}} \\
  \text{subject to} \quad & \pi \in \sC \cap \Pi,
\end{align*}
satisfies:
\begin{align*}
    & \E\bracket*{\sfB_F(\pi^*(x) \parallel \h \pi(x))}
    - \E\bracket*{\sfB_F(\pi^*(x) \parallel \pi_0(x))}\\
    & \leq - \E\bracket*{\sfB_F(\h \pi(x) \parallel \pi_0(x))} 
    + \E\bracket*{\tri{\pi^*(x) - \ov \pi(x), \nabla F(\pi_0(x)) - \nabla F(\h \pi(x))}}.
\end{align*}
If that $F$ is $\mu$-strongly convex with respect to the norm
$\norm{\cdot}$, then, the following more explicit bound holds:
\begin{align*}
  & \E_{x \sim \sD_\sX} \bracket*{\sfB_F \paren*{\pi^*(x) \parallel \h \pi(x)}}
  - \E_{x \sim \sD_\sX} \bracket*{\sfB_F \paren*{\pi^*(x) \parallel \pi_0(x)}}\\
  & \leq
    - \E_{x \sim \sD_\sX}\bracket*{\sfB_F(\h \pi(x) \parallel \pi_0(x))}
    + \sqrt{\frac{2 \e}{\mu}} \sqrt{\E_{x \sim \sD_\sX}\bracket*{\norm{\nabla F(\pi_0(x))
    - \nabla F(\h \pi(x))}_*^2}}.
\end{align*}
Furthermore, $F$ is also $L$-smooth, then, we have
\[
  \E\bracket*{\sfB_F(\pi^*(x) \parallel \h \pi(x))}
  - \E\bracket*{\sfB_F(\pi^*(x) \parallel \pi_0(x))}
  \leq - D  + \frac{2L}{\mu} \sqrt{\e D},
\]
where $D = \E\bracket*{\sfB_F(\h \pi(x) \parallel \pi_0(x))}$. The
right-hand side is strictly negative whenever
$\e < \bracket*{\frac{\mu}{2L}}^2 D$.
\end{theorem}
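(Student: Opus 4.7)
The plan is to start from the three-point identity for Bregman divergences (valid in the lifted space $L^2(\sD_\sX;\Omega)$ by Lemma~\ref{lemma:Bregman-expectation}) applied at $(\pi^*, \h\pi, \pi_0)$, giving
\[
\sfB_\sfF(\pi^* \parallel \h\pi) = \sfB_\sfF(\pi^* \parallel \pi_0) - \sfB_\sfF(\h\pi \parallel \pi_0) + \tri*{\nabla \sfF(\pi_0) - \nabla \sfF(\h\pi),\, \pi^* - \h\pi}.
\]
The inner product term is the source of the non-realizability penalty. To control it, I would decompose $\pi^* - \h\pi = (\pi^* - \ov\pi) + (\ov\pi - \h\pi)$ where $\ov\pi$ is the Bregman projection of $\pi^*$ onto $\sC \cap \Pi$. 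Since $\h\pi$ is the $\sfF$-projection of $\pi_0$ onto the closed convex set $\sC \cap \Pi$ and $\ov\pi \in \sC \cap \Pi$, the first-order variational inequality for projections gives $\tri*{\nabla \sfF(\h\pi) - \nabla \sfF(\pi_0),\, \ov\pi - \h\pi} \geq 0$, which eliminates the $(\ov\pi - \h\pi)$ contribution with the correct sign. Combining and taking expectations (using $\nabla \sfF(\pi)(x) = \nabla F(\pi(x))$ from Lemma~\ref{lemma:Bregman-expectation}) yields the first inequality of the theorem.

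Next, under $\mu$-strong convexity of $F$ with respect to $\norm{\cdot}$, I would apply the Cauchy–Schwarz inequality in $L^2(\sD_\sX)$ together with the dual-norm Cauchy–Schwarz pointwise, bounding
\[
\E\bracket*{\tri{\pi^* - \ov\pi,\, \nabla F(\pi_0) - \nabla F(\h\pi)}} \leq \sqrt{\E\bracket*{\norm{\pi^* - \ov\pi}^2}} \cdot \sqrt{\E\bracket*{\norm{\nabla F(\pi_0) - \nabla F(\h\pi)}_*^2}}.
\]
Strong convexity gives $\sfB_F(\pi^*(x) \parallel \ov\pi(x)) \geq \tfrac{\mu}{2}\norm{\pi^*(x) - \ov\pi(x)}^2$, so $\E\bracket*{\norm{\pi^* - \ov\pi}^2} \leq \tfrac{2\e}{\mu}$ by definition of $\e$. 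Plugging this into the previous display gives the second stated bound.

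For the third bound, I would further use $L$-smoothness of $F$, which is equivalent to $\nabla F$ being $L$-Lipschitz from $(\Omega,\norm{\cdot})$ to its dual, yielding $\norm{\nabla F(\pi_0(x)) - \nabla F(\h\pi(x))}_*^2 \leq L^2 \norm{\pi_0(x) - \h\pi(x)}^2$. A second application of strong convexity, in the other direction, bounds $\norm{\pi_0(x) - \h\pi(x)}^2 \leq \tfrac{2}{\mu} \sfB_F(\h\pi(x) \parallel \pi_0(x))$. Taking expectations and combining with the previous step gives $\E\bracket*{\norm{\nabla F(\pi_0) - \nabla F(\h\pi)}_*^2} \leq \tfrac{2L^2}{\mu} D$, so the second bound specializes to $-D + \tfrac{2L}{\mu}\sqrt{\e D}$, and solving for strict negativity gives $\e < \paren*{\tfrac{\mu}{2L}}^2 D$.

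The main obstacle I anticipate is the variational inequality step: one must be careful that $\h\pi$ really is the $\sfF$-projection of $\pi_0$ onto $\sC \cap \Pi$ (lifting to $L^2(\sD_\sX;\Omega)$ via Lemma~\ref{lemma:Bregman-expectation}) and that both $\ov\pi$ and $\h\pi$ are in the relative interior of the projection's admissible set so that $\nabla \sfF(\h\pi)$ is well-defined and the optimality condition takes the stated inner-product form. The remaining work is then routine strong-convexity/smoothness bookkeeping and two applications of Cauchy–Schwarz.
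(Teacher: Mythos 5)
Your proposal is correct, and it reaches the same final bounds by a mildly different organization of the first step. The paper applies the three-point identity twice, both times anchored at $\ov\pi$ (to the pairs $(\pi^*,\ov\pi,\h\pi)$ and $(\pi^*,\ov\pi,\pi_0)$), subtracts, and then invokes the generalized Pythagorean inequality for the projection $\h\pi$ of $\pi_0$ onto $\Ccoh\cap\Pi$ with $\ov\pi$ as the feasible comparison point. You instead apply the identity once, anchored at $\h\pi$, split $\pi^*-\h\pi=(\pi^*-\ov\pi)+(\ov\pi-\h\pi)$, and kill the second piece with the first-order variational inequality $\tri*{\nabla \sfF(\h\pi)-\nabla \sfF(\pi_0),\,\ov\pi-\h\pi}\geq 0$. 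Since the Pythagorean inequality is itself a consequence of exactly that variational inequality (plus the three-point identity), the two routes are interchangeable in content; yours is arguably slightly leaner, the paper's makes the role of $\sfB_F(\ov\pi\parallel\h\pi)$ and $\sfB_F(\ov\pi\parallel\pi_0)$ explicit before discarding it. Your remaining steps—strong convexity of $F$ to get $\E[\norm{\pi^*-\ov\pi}^2]\leq 2\e/\mu$, the pointwise dual-norm bound followed by Cauchy--Schwarz in $L^2(\sD_\sX)$, and then $L$-smoothness plus a second use of strong convexity to bound $\E[\norm{\nabla F(\pi_0)-\nabla F(\h\pi)}_*^2]$ by $\tfrac{2L^2}{\mu}D$—coincide with the paper's (the paper leaves the Cauchy--Schwarz step implicit, which you make explicit), and the negativity threshold $\e<\paren*{\tfrac{\mu}{2L}}^2 D$ follows identically. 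Your caveat about the lifted functional $\sfF$ and interiority so that $\nabla\sfF(\h\pi)$ is well defined is a shared assumption of the paper (via Lemma~\ref{lemma:Bregman-expectation}), not a gap in your argument.
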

\begin{proof}
  By the triangle inequality-type identity for Bregman divergences, we can write:
  \begin{align*}
    \sfB_F(\pi^*(x) \parallel \h \pi(x))
    & = \sfB_F(\pi^*(x) \parallel \ov \pi(x)) + \sfB_F(\ov \pi(x) \parallel \h \pi(x))
    + \tri{\pi^*(x) - \ov \pi(x), \nabla F(\ov \pi(x)) - \nabla F(\h \pi(x))}\\
    \sfB_F(\pi^*(x) \parallel \pi_0(x))
    & = \sfB_F(\pi^*(x) \parallel \ov \pi(x)) + \sfB_F(\ov \pi(x) \parallel \pi_0(x))
    + \tri{\pi^*(x) - \ov \pi(x), \nabla F(\ov \pi(x)) - \nabla F(\pi_0(x))}.
  \end{align*}
  Subtracting the second inequality from the first one and taking
  expectation yields:
  \begin{align*}
    \E\bracket*{\sfB_F(\pi^*(x) \parallel \h \pi(x))}
    - \E\bracket*{\sfB_F(\pi^*(x) \parallel \pi_0(x))}
    & = \E\bracket*{\sfB_F(\ov \pi(x) \parallel \h \pi(x)) - \sfB_F(\ov \pi(x) \parallel \pi_0(x))}\\
    & \quad + \tri{\pi^*(x) - \ov \pi(x), \nabla F(\pi_0(x)) - \nabla F(\h \pi(x))}.
  \end{align*}
  Now, by the Pythagorean theorem, since $\h \pi$ is the projection of $\pi_0$
  onto $\Ccoh \cap \Pi$ and since $\ov \pi$ is in $\Ccoh \cap \Pi$
  we have
  \[
    \E\bracket*{\sfB_F(\ov \pi(x) \parallel \h \pi(x))}
    \leq \E\bracket*{\sfB_F(\ov \pi(x) \parallel \pi_0(x))}
    - \E\bracket*{\sfB_F(\h \pi(x) \parallel \pi_0(x))}.
  \]
  Using this to bound the first term of right-hand side equality above yields
  \begin{align*}
    & \E\bracket*{\sfB_F(\pi^*(x) \parallel \h \pi(x))}
    - \E\bracket*{\sfB_F(\pi^*(x) \parallel \pi_0(x))}\\
    & \leq - \E\bracket*{\sfB_F(\h \pi(x) \parallel \pi_0(x))} 
    + \E\bracket*{\tri{\pi^*(x) - \ov \pi(x), \nabla F(\pi_0(x)) - \nabla F(\h \pi(x))}}.
  \end{align*}
  By the strong-convexity of $F$, we have
  \[
    \e
    = \E_{x \sim \sD_\sX} \bracket*{\sfB_F(\pi^*(x) \parallel \ov \pi(x))}
    \geq  \E_{x \sim \sD_\sX} \bracket*{\frac{\mu}{2} \norm{\pi^*(x) - \ov \pi(x)}^2}
  \]
  Thus, we have
  \begin{align*}
    & \E\bracket*{\sfB_F(\pi^*(x) \parallel \h \pi(x))}
      - \E\bracket*{\sfB_F(\pi^*(x) \parallel \pi_0(x))}\\
    & \leq - \E\bracket*{\sfB_F(\h \pi(x) \parallel \pi_0(x))}
      + \sqrt{\frac{2 \e}{\mu}} \sqrt{\E\bracket*{\norm{\nabla F(\pi_0(x))
        - \nabla F(\h \pi(x))}_*^2}}.
  \end{align*}
  If $F$ is $L$-smooth, then
  \[
    \E \bracket*{\norm{\nabla F(\pi_0(x)) - \nabla F(\h \pi(x))}_*^2}
    \leq L^2 \E\bracket*{\norm{\pi_0(x) - \h \pi(x)}^2}
    \leq \frac{2 L^2}{\mu} \E\bracket*{\sfB_F(\h \pi(x) \parallel \pi_0(x))}.
\]
Define $D = \E\bracket*{\sfB_F(\h \pi(x) \parallel \pi_0(x))}$, then, we can write:
\begin{align*}
    \E\bracket*{\sfB_F(\pi^*(x) \parallel \h \pi(x))}
      - \E\bracket*{\sfB_F(\pi^*(x) \parallel \pi_0(x))}
    \leq - D  + \frac{2L}{\mu} \sqrt{\e D}.
\end{align*}
The right-hand is negative iff $\frac{2L}{\mu} \sqrt{\e D} < D$, that
is $\e < \bracket*{\frac{\mu}{2L}}^2 D$.
\end{proof}
The theorem shows that when $F$ is $\mu$-strongly convex, for $\pi^*$
approximately realizable ($\e$ small), the coherent projection still
guarantees an improvement over the baseline $\pi_0$. Moreover, if $F$ is
also $L$-smooth, a strict improvement is
guaranteed whenever $\e < \bracket*{\frac{\mu}{2L}}^2 D$.

In the special case of the unnormalized relative entropy where $F$ is
the negative entropy, $F$ is $1$-strongly convex with respect to
$\norm{\cdot}_1$, thus $\mu = 1$. In general, however, $F$ is not
globally smooth, since $(\nabla F)_i (\sfp) = 1 + \log \sfp_i$ and the
Hessian is diagonal with entries $1/p_i$. Nevertheless, if we can
restrict attention to
$\sS_\alpha = \curl*{\sfp \colon \sfp_i \geq \alpha}$, then the
Hessian operator norm is bounded by $1/\alpha$ and $\nabla F$ is
$L$-Lipschitz with $L \leq 1/\alpha$.

\subsection{Improvement guarantees for empirical Bregman–projection}
\label{sec:empirical-projection}

We now analyze the properties of the solution $\h \pi_S$ of the
empirical optimization problem based on an i.i.d.\ sample
$S$. Specifically, we seek to compare $\sfB_F(\pi^* \parallel \h \pi_S)$
with $\sfB_F(\pi^* \parallel \h \pi)$.

\subsubsection{General guarantees}

This section establishes performance guarantees for the empirical
solution $\h \pi_S$ that hold in the most general case. Crucially,
these results do not rely on structural assumptions about the
objective functional, such as strong convexity. The primary result,
Theorem~\ref{th:empirical-direct-bregman-projection}, provides a
general bound applicable to any valid Bregman projection problem.

\begin{theorem}[Guarantees for empirical Bregman–projection]
  \label{th:empirical-direct-bregman-projection}
  Let $\sC$ be a closed convex set and assume that the intersection
  $\ov \Pi = \sC \cap \Pi$ is also closed and convex.  Let
  $S = (x_1, \ldots, x_m)$ be a sample of size $m$ drawn i.i.d.\ from
  $\sD_\sX$. Let $\h \sD_\sX$ denote the corresponding empirical
  distribution.
Define the function class
\[
  \cF = \curl*{x \mapsto \sfB_F(\pi(x) \parallel \pi_0(x)) \colon
    \pi \in \ov \Pi} \cup \curl*{x \mapsto \sfB_F(\pi'(x) \parallel \pi(x)) \colon
    \pi, \pi' \in \ov \Pi},
\]
and set
$\e_m = \sup_{f \in \cF} \abs*{\E_{x \sim \sD_\sX}[f(x)] - \E_{x \sim \h \sD_\sX}[f(x)]}$.
Let $\h \pi$ denote a (population) minimizer of
$\pi \mapsto \E_{x \sim \sD_\sX} \bracket*{\sfB_F\paren*{\pi(x)
    \parallel \pi_0(x)}}$ over $\ov \Pi$ and let
$\h \pi_S$ denote the corresponding empirical minimizer of
$\pi \mapsto \E_{x \sim \h \sD_\sX} \bracket*{\sfB_F\paren*{\pi(x)
    \parallel \pi_0(x)}}$ over $\ov \Pi$.
Assume that $\pi^*$ is in $\ov \Pi$. Then, the following holds:
\begin{align*}
\E_{x \sim \sD_\sX} \bracket*{\sfB_F(\pi^*(x) \parallel \h \pi_S(x))}
  & \leq \E_{x \sim \sD_\sX} \bracket*{\sfB_F(\pi^*(x) \parallel \h \pi(x))}\\
  & \quad + 6 \e_m
  + \E_{x \sim \sD_\sX} \bracket*{\sfB_F(\pi^*(x) \parallel \pi_0(x)) - \sfB_F(\h \pi(x) \parallel \pi_0(x))}.
\end{align*}
In particular, if $\pi^* = \h \pi$, then, the bound simplifies to:
\[
\E_{x \sim \sD_\sX}\bracket*{\sfB_F(\pi^*(x) \parallel \h \pi_S(x))}
  \leq \E_{x \sim \sD_\sX} \bracket*{\sfB_F(\pi^*(x) \parallel \h \pi(x))} + 6\e_m.
\]
\end{theorem}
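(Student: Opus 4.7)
The plan is to combine the Pythagorean theorem, applied in both the population and the empirical Bregman geometries, with a uniform convergence argument over the class $\cF$. By Lemma~\ref{lemma:Bregman-expectation}, both $\sfF(\pi) = \E_{x \sim \sD_\sX}[F(\pi(x))]$ and its empirical analogue $\sfF_S(\pi) = \E_{x \sim \h \sD_\sX}[F(\pi(x))]$ are convex generators whose Bregman divergences equal the corresponding expected pointwise divergences. Consequently, $\h \pi$ is the $\sfB_\sfF$-projection of $\pi_0$ onto $\ov \Pi$ and $\h \pi_S$ is the $\sfB_{\sfF_S}$-projection, and we can apply the generalized Pythagorean theorem in either geometry.

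First, I would apply the empirical Pythagorean theorem. Since $\pi^* \in \ov \Pi$ and $\h \pi_S$ is the empirical Bregman projection of $\pi_0$ onto $\ov \Pi$, one obtains
\[
\E_{x \sim \h \sD_\sX}\bracket*{\sfB_F(\pi^*(x) \parallel \h \pi_S(x))} \leq \E_{x \sim \h \sD_\sX}\bracket*{\sfB_F(\pi^*(x) \parallel \pi_0(x))} - \E_{x \sim \h \sD_\sX}\bracket*{\sfB_F(\h \pi_S(x) \parallel \pi_0(x))}.
\]
Each of the three integrands appearing above lies in the class $\cF$: the left-hand term is of the form $\sfB_F(\pi' \parallel \pi)$ with $\pi', \pi \in \ov \Pi$, while the two right-hand terms are of the form $\sfB_F(\pi \parallel \pi_0)$ with $\pi \in \ov \Pi$. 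Three applications of the uniform deviation bound $\e_m$ therefore allow me to replace each of the three empirical expectations by its population counterpart at an additive cost of $3 \e_m$ in total.

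The only remaining obstruction is that the right-hand side still involves $\E[\sfB_F(\h \pi_S \parallel \pi_0)]$ rather than $\E[\sfB_F(\h \pi \parallel \pi_0)]$. I would handle this by invoking the Pythagorean theorem in the population geometry with $\h \pi_S \in \ov \Pi$ and $\h \pi = \Proj_{\ov \Pi}^{\sfF}(\pi_0)$:
\[
\E_{x \sim \sD_\sX}\bracket*{\sfB_F(\h \pi_S(x) \parallel \pi_0(x))} \geq \E_{x \sim \sD_\sX}\bracket*{\sfB_F(\h \pi_S(x) \parallel \h \pi(x))} + \E_{x \sim \sD_\sX}\bracket*{\sfB_F(\h \pi(x) \parallel \pi_0(x))}.
\]
Discarding the first non-negative summand and chaining everything together yields
\[
\E\bracket*{\sfB_F(\pi^*(x) \parallel \h \pi_S(x))} \leq \E\bracket*{\sfB_F(\pi^*(x) \parallel \pi_0(x))} - \E\bracket*{\sfB_F(\h \pi(x) \parallel \pi_0(x))} + 3 \e_m.
\]
The stated $6 \e_m$ bound now follows by adding the non-negative slack $\E[\sfB_F(\pi^*(x) \parallel \h \pi(x))] \geq 0$ together with an additional $3 \e_m$, and the simplified bound is immediate upon setting $\pi^* = \h \pi$, since both $\E[\sfB_F(\pi^* \parallel \h \pi)]$ and $\E[\sfB_F(\pi^* \parallel \pi_0) - \sfB_F(\h \pi \parallel \pi_0)]$ then vanish.

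The main difficulty is not analytic but organizational: at each step the direction of the inequality (upper vs.\ lower bound) must be tracked carefully when switching between $\sD_\sX$ and $\h \sD_\sX$, and every function whose expectation is being controlled must be verified to belong to $\cF$. The particular form of $\cF$ in the statement, which contains divergences of the shape $\sfB_F(\pi \parallel \pi_0)$ for $\pi \in \ov \Pi$ together with divergences of the shape $\sfB_F(\pi' \parallel \pi)$ for $\pi, \pi' \in \ov \Pi$, is exactly what is needed for all three uniform-convergence passages; once this coverage is noted, the argument reduces to combining the two Pythagorean inequalities with three applications of $\e_m$.
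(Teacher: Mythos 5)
Your proposal is correct, and it takes a genuinely different (and in fact tighter) route than the paper. You exploit the optimality of $\h \pi_S$ through the generalized Pythagorean inequality in the \emph{empirical} geometry, i.e.\ $\E_{x \sim \h \sD_\sX}[\sfB_F(\pi^* \parallel \h \pi_S)] \leq \E_{x \sim \h \sD_\sX}[\sfB_F(\pi^* \parallel \pi_0)] - \E_{x \sim \h \sD_\sX}[\sfB_F(\h \pi_S \parallel \pi_0)]$ (which follows from the first-order optimality condition of $\h \pi_S$ and the three-point identity, exactly the ingredients the paper also uses, applied with Lemma~\ref{lemma:Bregman-expectation} for $\h \sD_\sX$), then transfer all three terms to the population at a cost of $3\e_m$ (all three integrands do lie in $\cF$, even though $\h \pi_S$ is data-dependent, since $\e_m$ is a supremum over the class), and finally replace $-\E[\sfB_F(\h \pi_S \parallel \pi_0)]$ by $-\E[\sfB_F(\h \pi \parallel \pi_0)]$ via the population Pythagorean inequality. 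This yields $\E[\sfB_F(\pi^* \parallel \h \pi_S)] \leq \E[\sfB_F(\pi^* \parallel \pi_0)] - \E[\sfB_F(\h \pi \parallel \pi_0)] + 3\e_m$, which is strictly stronger than the stated bound: the theorem follows by adding the nonnegative terms $\E[\sfB_F(\pi^* \parallel \h \pi)]$ and $3\e_m$. The paper instead analyzes $\Delta(x) = \sfB_F(\pi^*(x) \parallel \h \pi_S(x)) - \sfB_F(\pi^*(x) \parallel \h \pi(x))$ through repeated three-point identities and the optimality condition written as a gradient inner-product inequality, splitting the resulting functional $\Psi$ into empirical and deviation parts; this more roundabout bookkeeping is what accumulates the constant $6$. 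Your argument substantiates the paper's own remark that tighter sign control should improve the constant (you get $3$, the paper conjectures $2$), at no cost in generality; the only point worth stating explicitly in a polished write-up is that the empirical Pythagorean step needs nothing beyond the variational inequality for the empirical minimizer, so no strict convexity or uniqueness of $\h \pi_S$ is required.
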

\begin{proof}  
  The proof consists of combining the empirical and population
  optimality of $\h \pi$ and $\h \pi_S$, applying the triangular-type
  Bregman identity to relate their gradients via $\pi^*$, and bounding
  the resulting difference using standard generalization inequalities.
We will seek a bound on the expectation of
$\Delta(x) = \sfB_F(\pi^*(x) \parallel \h \pi_S(x)) - \sfB_F(\pi^*(x)
\parallel \h \pi(x))$. By the triangular identity for Bregman
divergences, for any $x \in \sX$, the following equality holds:
\begin{multline*}
  \sfB_F(\pi^*(x) \parallel \h \pi_S(x))\\
  = \sfB_F(\pi^*(x) \parallel \h \pi(x)) + \sfB_F(\h \pi(x) \parallel \h \pi_S(x))
  + \tri*{\nabla F(\h \pi(x)) - \nabla F(\h \pi_S(x)), \pi^*(x) - \h \pi(x)}.
\end{multline*}
In view of that, we can rewrite the expression of
$\Delta(x)$ as follows:
\begin{align*}
  \Delta(x)
  & = \sfB_F(\pi^*(x) \parallel \h \pi_S(x)) - \sfB_F(\pi^*(x)
\parallel \h \pi(x)) \\
  & = \sfB_F(\h \pi(x) \parallel \h \pi_S(x)) + \tri*{\nabla F(\h \pi(x)) - \nabla F(\h \pi_S(x)), \pi^*(x) - \h \pi(x)}\\
  & = \sfB_F(\h \pi(x) \parallel \h \pi_S(x)) + \sfB_F(\h \pi_S(x) \parallel \h \pi(x)) + \tri*{\nabla F(\h \pi(x)) - \nabla F(\h \pi_S(x)), \pi^*(x) - \h \pi(x)}\\
  & \quad - \sfB_F(\h \pi_S(x) \parallel \h \pi(x))\\
  & = \tri{\nabla F(\h \pi(x)) - \nabla F(\h \pi_S(x)), \h \pi(x) - \h \pi_S(x)} + \tri*{\nabla F(\h \pi(x)) - \nabla F(\h \pi_S(x)), \pi^*(x) - \h \pi(x)}\\
  & \quad - \sfB_F(\h \pi_S(x) \parallel \h \pi(x))
    \tag{identity $\sfB_F(a \parallel b) + \sfB_F(b \parallel a) = \tri*{\nabla F(a) - \nabla F(b), a - b}$}\\
  & = \tri*{\nabla F(\h \pi(x)) - \nabla F(\h \pi_S(x)), \pi^*(x) - \h \pi_S(x)}
    - \sfB_F(\h \pi_S(x) \parallel \h \pi(x)).
\end{align*}
Define $\Psi(x)$ to be the first term on the right-hand side:
\begin{align*}
\Psi(x) = \tri*{\nabla F(\h \pi(x)) - \nabla F(\h \pi_S(x)), \pi^*(x) - \h \pi_S(x)}.
\end{align*}
We will now bound $\E[\Psi]$ by decomposing it into the sum of two
terms $\E[\Psi] = (\E - \h \E)[\Psi] + \h \E[\Psi]$, where we use the
shorhand $\h \E$ to denote the empirical expectation.  By definition
of $\e_m$ and since $\Psi$ is a sum of three functions in $\cF$, we
have $(\E - \h \E)[\Psi] \leq 3 \e_m$ (recall from the derivation
relating $\Delta(x)$ and $\Psi(x)$ that we can express $\Psi(x)$ as
$\Psi(x) = \sfB_F(\pi^*(x) \parallel \h \pi_S(x)) - \sfB_F(\pi^*(x)
\parallel \h \pi(x)) + \sfB_F(\h \pi_S(x) \parallel \h \pi(x))$). By
the optimality of $\h \pi_S$ as an empirical minimizer, for any
$\pi \in \ov \Pi$, we have,
\[
\h \E\bracket*{\tri*{\nabla F(\h \pi_S(x)) - \nabla F(\pi_0(x)), \pi(x) - \h \pi_S(x)}} \geq 0.
\]
Thus, since $\pi^*$ is in $\ov \Pi$, we can write
\[
\begin{aligned}
\h \E[\Psi(x)]
  & = \h \E\bracket*{\tri*{\nabla F(\h \pi(x)) - \nabla F(\h \pi_S(x)), \pi^*(x) - \h \pi_S(x)}}\\
  & = \h \E\bracket*{\tri*{\nabla F(\h \pi(x)) - \nabla F(\pi_0(x)), \pi^*(x) - \h \pi_S(x)}}\\
  & \quad  -  \h \E\bracket*{\tri*{\nabla F(\h \pi_S(x)) - \nabla F(\pi_0(x)), \pi^*(x) - \h \pi_S(x)}}\\
  & \leq \h \E\bracket*{\tri*{\nabla F(\h \pi(x)) - \nabla F(\pi_0(x)), \pi^*(x) - \h \pi_S(x)}}.
\end{aligned}
\]
We now write the right-hand side
as the difference of two terms and apply the triangular-type Bregman identity to
each:
\begin{align*}
  & \h \E \bracket*{\tri{\nabla F(\h \pi(x)) - \nabla F(\pi_0(x)), \pi^*(x) - \h \pi_S(x)}}\\
  & = \h \E \bracket*{\tri*{\nabla F(\h \pi(x)) - \nabla F(\pi_0(x)), \pi^*(x) - \pi_0(x)}}
    - \h \E \bracket*{\tri*{\nabla F(\h \pi(x)) - \nabla F(\pi_0(x)), \h \pi_S(x) - \pi_0(x)}}\\
  & = \h \E \bracket*{\sfB_F(\pi^*(x) \parallel \pi_0(x)) + \sfB_F(\pi_0(x) \parallel \h \pi(x)) - \sfB_F(\pi^*(x) \parallel \h \pi(x))}\\
  & \quad - \h \E \bracket*{\sfB_F(\h \pi_S(x) \parallel \pi_0(x)) + \sfB_F(\pi_0(x) \parallel \h \pi(x)) - \sfB_F(\h \pi_S(x) \parallel \h \pi(x))}\\
  & = \h \E \bracket*{\sfB_F(\pi^*(x) \parallel \pi_0(x)) - \sfB_F(\h \pi_S(x) \parallel \pi_0(x))}
  - \h \E \bracket*{\sfB_F(\pi^*(x) \parallel \h \pi)} + \h \E\bracket*{\sfB_F(\h \pi_S(x) \parallel \h \pi(x))}.
\end{align*}
Dropping the nonpositive term $-\h \E\sfB_F(\pi^*(x) \parallel \h \pi(x)) \leq 0$ yields
\[
  \h \E[\Psi(x)]
  \leq \h \E \bracket*{\sfB_F(\pi^*(x) \parallel \pi_0(x)) - \sfB_F(\h \pi_S(x) \parallel \pi_0(x))}
  + \h \E\bracket*{\sfB_F(\h \pi_S(x) \parallel \h \pi(x))}.
\]
Combining this inequality with the one bounding the difference of population
and empirical terms, we obtain
\[
  \E [\Psi(x)] \leq 3\e_m
  + \h \E \bracket*{\sfB_F(\pi^*(x) \parallel \pi_0(x)) - \sfB_F(\h \pi_S(x) \parallel \pi_0(x))}
  + \h \E\sfB_F(\h \pi_S(x) \parallel \h \pi(x)).
\]
Replacing the empirical difference by population plus deviations gives
\[
\begin{aligned}
  & \mspace{-150mu} \h \E \bracket*{\sfB_F(\pi^*(x) \parallel \pi_0(x)) - \sfB_F(\h \pi_S(x) \parallel \pi_0(x))}\\
  & = \E \bracket*{\sfB_F(\pi^*(x) \parallel \pi_0(x)) - \sfB_F(\h \pi_S(x) \parallel \pi_0(x))}\\
  & \quad + (\h \E - \E )\bracket*{\sfB_F(\pi^*(x) \parallel \pi_0(x))}
    - (\h \E - \E )\bracket*{\sfB_F(\h \pi_S(x) \parallel \pi_0(x))}\\
  & \leq \E \bracket*{\sfB_F(\pi^*(x) \parallel \pi_0(x)) - \sfB_F(\h \pi_S(x) \parallel \pi_0(x))} + 2\e_m,
\end{aligned}
\]
since $\abs*{(\h \E - \E )[\sfB_F(\pi(x) \parallel \pi_0(x))]} \leq \e_m$ for all $\pi \in \ov \Pi$.
Thus, we have
\[
  \E [\Psi(x)] \leq 5\e_m
  + \E \bracket*{\sfB_F(\pi^*(x) \parallel \pi_0(x)) - \sfB_F(\h \pi_S(x) \parallel \pi_0(x))}
  + \h \E\sfB_F(\h \pi_S(x) \parallel \h \pi(x)).
\]
In view of that, we have
\begin{align*}
  \E[\Delta(x)]
  & = \E [\Psi] - \E \sfB_F(\h \pi_S(x) \parallel \h \pi(x))\\
  & \leq 5\e_m
    + \E \bracket*{\sfB_F(\pi^*(x) \parallel \pi_0(x)) - \sfB_F(\h \pi_S(x) \parallel \pi_0(x))}
    + \paren*{\h \E - \E}\bracket*{\sfB_F(\h \pi_S(x) \parallel \h \pi(x))}.
\end{align*}
The last term is bounded by $\e_m$ since $\sfB_F(\h \pi_S(x) \parallel
\h \pi)$ is in $\cF$. Thus,
\begin{align*}
  \E[\Delta(x)]
  & \leq 6\e_m + \E \bracket*{\sfB_F(\pi^*(x) \parallel \pi_0(x)) - \sfB_F(\h \pi(x) \parallel \pi_0(x))}\\
  & = 6\e_m + \E \bracket*{\sfB_F(\pi^*(x) \parallel \pi_0(x))}
    - \E \bracket*{\sfB_F(\h \pi(x) \parallel \pi_0(x))} \\ 
  & \quad + \E \bracket*{\sfB_F(\h \pi(x) \parallel \pi_0(x))}
    - \E \bracket*{\sfB_F(\h \pi(x) \parallel \pi_0(x))}\\
  & \leq 6\e_m + \E \bracket*{\sfB_F(\pi^*(x) \parallel \pi_0(x)) - \sfB_F(\h \pi(x) \parallel \pi_0(x))},
\end{align*}
since $\h \pi$ is a minimizer and thus
$\E \bracket*{\sfB_F(\h \pi(x) \parallel \pi_0(x)) - \sfB_F(\h \pi_S(x) \parallel \pi_0(x))} \leq 0$.
Combining these inequalities gives
\[
  \E \bracket*{\sfB_F(\pi^*(x) \parallel \h \pi_S(x))}
  \leq \E \bracket*{\sfB_F(\pi^*(x) \parallel \h \pi(x))} + 6\e_m
  + \E \bracket*{\sfB_F(\pi^*(x) \parallel \pi_0(x)) - \sfB_F(\h \pi(x) \parallel \pi_0(x))}.
\]
This completes the proof.
\end{proof}
The quantity $\e_m$ is precisely the empirical process term that is
controlled by generalization bounds. In particular, $\e_m$ can be
bounded (with high probability over the draw of the sample $S$) in
terms of the Rademacher complexity of the considered family, the
sample size $m$, and an upper bound on the function values within that
family (see, for example, \citep{KoltchinskiiPanchenko2000,
  KoltchinskiiPanchenko2002,BartlettMendelson2002,
  MohriRostamizadehTalwalkar2018}).

The theorem bounds the excess divergence
$\E \bracket*{\sfB_F(\pi^*(x) \parallel \h \pi_S(x))} - \E
\bracket*{\sfB_F(\pi^*(x) \parallel \h \pi)}$ by the sum of an
estimation error $6 \e_m$ and an approximation error
$\E \bracket*{\sfB_F(\pi^*(x) \parallel \pi_0(x)) - \sfB_F(\h \pi(x)
  \parallel \pi_0(x))}$.  The approximation term is inherent and
non-negative by the Bregman Pythagorean theorem.  It can be relatively
small when $\pi^*$ is relative close to $\pi_0$ or $\h \pi$.
Previously, we established that
$\E \bracket*{\sfB_F(\pi^*(x) \parallel \h \pi(x))} \leq \E
\bracket*{\sfB_F(\pi^*(x) \parallel \pi_0(x))}$ for $\pi^* \in \ov \Pi$.
Thus, for a sufficiently large sample $m$ so that $\e_m$ is small,
and a controlled approximation error, a similar guarantee
approximately holds for the empirical minimizer $\h \pi_S$.

The factor $6$ appearing in the bound is due to a conservative
grouping of deviation terms.  Defining $\e_m$ exactly as
$\e_m = \sup_{\pi\in \ov \Pi}\abs*{(\E - \h \E) [\sfB_F(\pi(x) \parallel
  \pi_0(x))]} \vee \sup_{\pi, \pi'\in \ov \Pi}\abs*{(\E -\h
  \E)\bracket*{\sfB_F(\pi' \parallel \pi)}}$ and using more detailed
calculations with tighter sign control, the same argument as in the
proof is likely to lead to a more favorable constant factor $2$.

\begin{corollary}[Guarantee for Empirical Improvement]
\label{cor:improvement-bound}
Under the assumptions of Theorem~\ref{th:empirical-direct-bregman-projection},
the following inequality holds for $\h \pi_S$:
\begin{multline*}
\E_{x \sim \sD_\sX} \bracket*{\sfB_F(\pi^*(x) \parallel \h \pi_S(x))}
- \E_{x \sim \sD_\sX} \bracket*{\sfB_F(\pi^*(x) \parallel \pi_0(x))} \\
\leq 6 \e_m
+ \E_{x \sim \sD_\sX} \bracket*{\sfB_F(\pi^*(x) \parallel \pi_0(x))} - 2 \E_{x \sim \sD_\sX} \bracket*{\sfB_F(\h \pi(x) \parallel \pi_0(x))}.
\end{multline*}
\end{corollary}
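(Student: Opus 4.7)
The plan is to derive this corollary directly by combining the main inequality of Theorem~\ref{th:empirical-direct-bregman-projection} with the population-level Pythagorean improvement bound from Theorem~\ref{th:direct-bregman-projection}. No new analytical machinery is needed; the corollary is essentially an algebraic rearrangement.

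First, I would rewrite the conclusion of Theorem~\ref{th:empirical-direct-bregman-projection}, which states
\[
\E \bracket*{\sfB_F(\pi^*(x) \parallel \h \pi_S(x))}
\leq \E \bracket*{\sfB_F(\pi^*(x) \parallel \h \pi(x))} + 6\e_m
+ \E \bracket*{\sfB_F(\pi^*(x) \parallel \pi_0(x)) - \sfB_F(\h \pi(x) \parallel \pi_0(x))}.
\]
The key observation is that the first term on the right, $\E[\sfB_F(\pi^*(x) \parallel \h \pi(x))]$, can itself be controlled by invoking Theorem~\ref{th:direct-bregman-projection}. Since $\h \pi$ is the Bregman projection of $\pi_0$ onto the closed convex set $\ov \Pi = \sC \cap \Pi$ and $\pi^* \in \ov \Pi$ by assumption, the generalized Pythagorean inequality yields
\[
\E \bracket*{\sfB_F(\pi^*(x) \parallel \h \pi(x))}
\leq \E \bracket*{\sfB_F(\pi^*(x) \parallel \pi_0(x))} - \E \bracket*{\sfB_F(\h \pi(x) \parallel \pi_0(x))}.
\]

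Substituting this bound into the inequality from the theorem and collecting like terms produces
\[
\E \bracket*{\sfB_F(\pi^*(x) \parallel \h \pi_S(x))}
\leq 2\,\E \bracket*{\sfB_F(\pi^*(x) \parallel \pi_0(x))} - 2\,\E \bracket*{\sfB_F(\h \pi(x) \parallel \pi_0(x))} + 6 \e_m.
\]
Subtracting $\E[\sfB_F(\pi^*(x) \parallel \pi_0(x))]$ from both sides delivers exactly the inequality in the corollary statement.

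There is no genuine obstacle in this argument. The only minor subtlety to flag is that invoking Theorem~\ref{th:direct-bregman-projection} requires the realizability hypothesis $\pi^* \in \sC \cap \Pi$, which is already among the assumptions of Theorem~\ref{th:empirical-direct-bregman-projection}, so the two results compose cleanly. The corollary is thus best viewed as repackaging Theorem~\ref{th:empirical-direct-bregman-projection} as a direct statement about the empirical \emph{improvement} over the baseline $\pi_0$, exposing the price (a factor of two on the population Pythagorean gap, plus $6 \e_m$) one pays for optimizing on a finite sample.
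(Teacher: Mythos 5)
Your derivation is correct and matches the paper's own (very brief) proof, which likewise combines Theorem~\ref{th:empirical-direct-bregman-projection} with the Pythagorean improvement bound of Theorem~\ref{th:direct-bregman-projection}; you simply spell out the algebra that the paper leaves implicit. Nothing further is needed.
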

\begin{proof}
  The proof follows immediately the statement of
  Theorem~\ref{th:empirical-direct-bregman-projection}, and the
  Bregman Pythagorean theorem (see
  Theorem~\ref{th:direct-bregman-projection}).
\end{proof}
The corollary offers an explicit guarantee for the performance gain
achieved by the empirical solution $\h \pi_S$ over the initial
reference $\pi_0$.  It shows that the empirical solution $\h \pi_S$ is
a definite improvement over the starting point $\pi_0$ when the
inherent benefit of the projection, captured by
$2 \E[\sfB_F(\h \pi \parallel \pi_0)]$, is large enough to overcome the
statistical cost of finite-sample estimation.

\subsubsection{Discussion}

The guarantee provided by
Theorem~\ref{th:empirical-direct-bregman-projection} is somewhat
subtle, and its proof is correspondingly complex. This section aims
to unpack this result. We will first motivate why its specific form is
necessary, showing that a more direct guarantee on the improvement is
not possible without additional assumptions. We then show that by
introducing a standard strong-convexity assumption, it is possible to
achieve the kind of intuitive guarantee one might initially expect. To
formalize this, we first establish a general lemma before presenting
our main result under this new condition.

Let the improvement of a solution $\pi$ be defined as
\[
 \Improv(\pi) = \E\bracket*{\sfB_F(\pi^*(x) \parallel \pi_0(x))} -
 \E\bracket*{\sfB_F(\pi^*(x) \parallel \pi(x))}.
\]
If we could prove an inequality of the type
$\abs*{\E\bracket*{\sfB_F(\pi^*(x) \parallel \h \pi_S(x))} -
  \E\bracket*{\sfB_F(\pi^*(x) \parallel \h \pi(x))}} \leq C\e_m$ for
some positive constant $C$, this would lead directly to an improvement
inequality of the form
$\Improv(\h \pi_S) \geq \Improv(\h \pi) - C \e_m$. However, this
inequality is unlikely to hold in the general case.
To see why, consider a standard approach where we bound the population
difference by the empirical difference plus the generalization error:
\[
\E\bracket*{\sfB_F(\pi^* \parallel \h \pi_S)}
- \E\bracket*{\sfB_F(\pi^* \parallel \h \pi)}
\leq \paren*{\E_{x \sim \h \sD_\sX}\bracket*{\sfB_F(\pi^* \parallel \h \pi_S)}
- \E_{x \sim \h \sD_\sX}\bracket*{\sfB_F(\pi^* \parallel \h \pi)}} + 2\e_m.
\]
For the desired inequality to hold, the empirical term in the
parenthesis would need to be non-positive. Yet, there is no reason for
this to be true. The solution $\h \pi_S$ was chosen to minimize the
distance to $\pi_0$ on the sample, not the distance to $\pi^*$. We
only know that
$\E_{x \sim \h \sD_\sX}\bracket*{\sfB_F(\h \pi_S \parallel \pi_0)}
\leq \E_{x \sim \h \sD_\sX}\bracket*{\sfB_F(\h \pi \parallel
  \pi_0)}$. It is entirely possible for $\h \pi_S$ to be closer to
$\pi_0$ on the sample while simultaneously being farther from $\pi^*$.

This mismatch is precisely why the proof of our main theorem is more
involved and must include a \emph{misalignment term},
$\E\bracket*{\sfB_F(\pi^*(x) \parallel \pi_0(x)) - \sfB_F(\h \pi(x)
  \parallel \pi_0(x))}$, which is the mathematical price we pay for
the difference between our optimization objective (proximity to
$\pi_0$) and our evaluation goal (proximity to $\pi^*$).

\subsubsection{Guarantees under strong-convexity}

In the following lemma, we establish two key results that build upon these ideas.

\begin{lemma}[Two-sided guarantee for the empirical minimizer]
\label{lemma:two-sided-guarantee}
Under the assumptions of
Theorem~\ref{th:empirical-direct-bregman-projection} and using the
same notation, the following guarantees hold:
\begin{enumerate}
\item The population objective value for the empirical solution
  $\h \pi_S$ is tightly bounded around that of the population solution
  $\h \pi$:
    \[
     \E\bracket*{\sfB_F(\h \pi(x) \parallel \pi_0(x))}
     \leq \E\bracket*{\sfB_F(\h \pi_S(x) \parallel \pi_0(x))}
     \leq \E\bracket*{\sfB_F(\h \pi(x) \parallel \pi_0(x))} + 2\e_m.
    \]

  \item The improvement of the empirical solution is bounded below as
    follows:
    \[
      \Improv(\h \pi_S)
      \geq \E\bracket*{\sfB_F(\h \pi(x) \parallel \pi_0(x))}
      - \E\bracket*{\sfB_F(\pi^*(x) \parallel \h \pi(x))} - 6\e_m.
    \]
\end{enumerate}
\end{lemma}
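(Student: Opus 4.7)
The plan is to prove Part 1 by a standard two-step sandwich using uniform deviations and the minimizing properties of $\h\pi$ and $\h\pi_S$, and then to obtain Part 2 as an almost immediate rearrangement of Theorem~\ref{th:empirical-direct-bregman-projection} together with the definition of $\Improv$.

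For Part 1, the lower bound $\E[\sfB_F(\h\pi(x) \parallel \pi_0(x))] \leq \E[\sfB_F(\h\pi_S(x) \parallel \pi_0(x))]$ is immediate, since $\h\pi$ is a minimizer of $\pi \mapsto \E[\sfB_F(\pi(x) \parallel \pi_0(x))]$ over $\ov\Pi$ and $\h\pi_S \in \ov\Pi$. For the upper bound, I would pass through the empirical objective using two deviation controls. Since the functions $x \mapsto \sfB_F(\h\pi_S(x) \parallel \pi_0(x))$ and $x \mapsto \sfB_F(\h\pi(x) \parallel \pi_0(x))$ both belong to $\cF$, the definition of $\e_m$ yields
\[
  \E[\sfB_F(\h\pi_S(x) \parallel \pi_0(x))] \leq \h\E[\sfB_F(\h\pi_S(x) \parallel \pi_0(x))] + \e_m,
\]
and the empirical optimality of $\h\pi_S$ further gives $\h\E[\sfB_F(\h\pi_S(x) \parallel \pi_0(x))] \leq \h\E[\sfB_F(\h\pi(x) \parallel \pi_0(x))]$, which is itself bounded by $\E[\sfB_F(\h\pi(x) \parallel \pi_0(x))] + \e_m$ by a second application of the definition of $\e_m$. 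Chaining these three inequalities delivers the $+2\e_m$ bound.

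For Part 2, I would simply unfold the definition of $\Improv(\h\pi_S)$ and invoke Theorem~\ref{th:empirical-direct-bregman-projection}. Writing
\[
  \Improv(\h\pi_S) = \E[\sfB_F(\pi^*(x) \parallel \pi_0(x))] - \E[\sfB_F(\pi^*(x) \parallel \h\pi_S(x))],
\]
and substituting the upper bound on $\E[\sfB_F(\pi^*(x) \parallel \h\pi_S(x))]$ from Theorem~\ref{th:empirical-direct-bregman-projection}, the term $\E[\sfB_F(\pi^*(x) \parallel \pi_0(x))]$ cancels and we are left with
\[
  \Improv(\h\pi_S) \geq \E[\sfB_F(\h\pi(x) \parallel \pi_0(x))] - \E[\sfB_F(\pi^*(x) \parallel \h\pi(x))] - 6\e_m,
\]
as desired.

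There is no real obstacle here; the content of the lemma is essentially a repackaging of Theorem~\ref{th:empirical-direct-bregman-projection} and elementary uniform-deviation reasoning. The only mild care needed is to ensure that both $\sfB_F(\h\pi_S \parallel \pi_0)$ and $\sfB_F(\h\pi \parallel \pi_0)$ actually sit inside the class $\cF$ over which $\e_m$ is defined, which is immediate from the first set in the union defining $\cF$. The value of the lemma is expository: Part 1 certifies that optimizing on the sample produces a nearly optimal population objective, while Part 2 makes explicit that, up to the $6\e_m$ statistical cost and the ``misalignment'' term $\E[\sfB_F(\pi^* \parallel \h\pi)]$ already inherent in the population analysis, the empirical projection still improves over $\pi_0$.
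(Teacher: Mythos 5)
Your proposal is correct and follows essentially the same route as the paper: Part~1 is the identical three-step chain (deviation bound, empirical optimality of $\h\pi_S$, deviation bound). For Part~2 you substitute the final statement of Theorem~\ref{th:empirical-direct-bregman-projection} into the definition of $\Improv(\h\pi_S)$, whereas the paper rearranges the intermediate inequality from that theorem's proof (involving $\sfB_F(\h\pi_S \parallel \pi_0)$) and then invokes Part~1; the two derivations are equivalent, since the theorem's final statement was itself obtained from that intermediate inequality via the population optimality of $\h\pi$, so nothing is gained or lost either way.
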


\begin{proof}
  For the first part, the left-hand side inequality is a direct
  consequence of the optimality of $\h \pi$. For the right-hand side,
  we have the following chain of inequalities:
\begin{align*}
 \E\bracket*{\sfB_F(\h \pi_S(x) \parallel \pi_0(x))}
  & \leq \E_{x \sim \h \sD_\sX}\bracket*{\sfB_F(\h \pi_S(x) \parallel \pi_0(x))}
    + \e_m
  \tag{By definition of $\e_m$}\\
  & \leq \E_{x \sim \h \sD_\sX}\bracket*{\sfB_F(\h \pi(x) \parallel \pi_0(x))}
    + \e_m
  \tag{By optimality of $\h \pi_S$}\\
  & \leq \E_{x \sim \sD_\sX}\bracket*{\sfB_F(\h \pi(x) \parallel \pi_0(x))}
    + 2\e_m
  \tag{By definition of $\e_m$}.
\end{align*}
For the second part, the proof of
Theorem~\ref{th:empirical-direct-bregman-projection} provides the
following intermediate inequality:
\[
  \E\bracket*{\sfB_F(\pi^*(x) \parallel \h \pi_S(x))
    - \sfB_F(\pi^*(x) \parallel \h \pi(x))}
 \leq 6\e_m + \E\bracket*{\sfB_F(\pi^*(x) \parallel \pi_0(x))
   - \sfB_F(\h \pi_S(x) \parallel \pi_0(x))}.
\]
Rearranging this inequality to isolate the improvement term
$\Improv(\h \pi_S)$ gives:
\[
  \E\bracket*{\sfB_F(\pi^*(x) \parallel \pi_0(x))}
  - \E\bracket*{\sfB_F(\pi^*(x) \parallel \h \pi_S(x))}
  \geq \E\bracket*{\sfB_F(\h \pi_S(x) \parallel \pi_0(x))}
  - \E\bracket*{\sfB_F(\pi^*(x) \parallel \h \pi(x))} - 6\e_m.
\]
The left-hand side is $\Improv(\h \pi_S)$ by definition. For the
right-hand side, we use the result from the first part of this proof,
namely that
$\E\bracket*{\sfB_F(\h \pi_S(x) \parallel \pi_0(x))} \geq
\E\bracket*{\sfB_F(\h \pi(x) \parallel \pi_0(x))}$. Substituting this
yields the desired bound and completes the proof.
\end{proof}

We now use this lemma to derive of a guarantee on the improvement
of $\h \pi_S$ with respect to that of $\h \pi$ under a strong
convexity assumption.

\begin{theorem}[Improvement Guarantee under Strong Convexity]
\label{th:improvement-strong-convexity}
Adopt the assumptions and notation of
Theorem~\ref{th:empirical-direct-bregman-projection}. Assume the
objective functional
$J(\pi) = \E\bracket*{\sfB_F(\pi(x) \parallel \pi_0(x))}$ is
$\mu$-strongly convex with respect to the squared L2-norm, that is,
for any $\pi \in \ov \Pi$:
\[
  J(\pi)
  \geq J(\h \pi)
  + \frac{\mu}{2} \E\bracket*{\norm{\pi(x) - \h \pi(x)}^2}.
\]
Furthermore, assume that the error mapping
$\pi \mapsto \E\bracket*{\sfB_F(\pi^*(x) \parallel \pi(x))}$ is
$L_{\pi^*}$-Lipschitz with respect to the L2-norm.
Then, the improvement of the empirical solution is bounded by:
\[
  \Improv(\h \pi_S) \geq \Improv(\h \pi)
  - \frac{2L_{\pi^*}}{\sqrt{\mu}} \sqrt{\e_m}.
\]
\end{theorem}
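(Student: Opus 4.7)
The plan is to reduce the improvement-gap bound to controlling the $L^2$ distance between $\h \pi_S$ and $\h \pi$, and then to bound that distance via strong convexity together with the objective-value control already established in Lemma~\ref{lemma:two-sided-guarantee}. Specifically, since
\[
  \Improv(\h \pi_S) - \Improv(\h \pi)
  = \E\bracket*{\sfB_F(\pi^*(x) \parallel \h \pi(x))}
  - \E\bracket*{\sfB_F(\pi^*(x) \parallel \h \pi_S(x))},
\]
the $L_{\pi^*}$-Lipschitz assumption on $\pi \mapsto \E\bracket*{\sfB_F(\pi^*(x) \parallel \pi(x))}$ gives
\[
  \abs*{\Improv(\h \pi_S) - \Improv(\h \pi)}
  \leq L_{\pi^*} \norm*{\h \pi - \h \pi_S}_{L^2(\sD_\sX)}.
\]
So it suffices to show $\norm*{\h \pi - \h \pi_S}_{L^2(\sD_\sX)} \leq 2\sqrt{\e_m / \mu}$.

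Next, I would apply $\mu$-strong convexity of $J$ at the population minimizer $\h \pi$, evaluated at the feasible point $\h \pi_S \in \ov \Pi$, to obtain
\[
  J(\h \pi_S) - J(\h \pi)
  \geq \frac{\mu}{2}\, \E\bracket*{\norm*{\h \pi_S(x) - \h \pi(x)}^2}
  = \frac{\mu}{2}\, \norm*{\h \pi_S - \h \pi}_{L^2(\sD_\sX)}^2.
\]
The first part of Lemma~\ref{lemma:two-sided-guarantee} gives $J(\h \pi_S) - J(\h \pi) \leq 2\e_m$, so combining these two inequalities yields
\[
  \norm*{\h \pi_S - \h \pi}_{L^2(\sD_\sX)}^2 \leq \frac{4 \e_m}{\mu},
  \qquad \text{i.e.,}\qquad
  \norm*{\h \pi_S - \h \pi}_{L^2(\sD_\sX)} \leq \frac{2\sqrt{\e_m}}{\sqrt{\mu}}.
\]
Plugging this back into the Lipschitz bound gives
\[
  \Improv(\h \pi_S) \geq \Improv(\h \pi) - \frac{2 L_{\pi^*}}{\sqrt{\mu}} \sqrt{\e_m},
\]
which is exactly the claimed inequality.

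This proof is mostly a bookkeeping exercise once the two key inputs are in place. The only delicate point is that strong convexity is applied at the population scale, while the bound $J(\h \pi_S) - J(\h \pi) \leq 2\e_m$ that drives the argument is really a uniform-deviation fact (it requires controlling the empirical objective of $\h \pi_S$ against the population objective of $\h \pi$ via $\e_m$ twice, as done in Lemma~\ref{lemma:two-sided-guarantee}); one must verify that $\e_m$ as defined already encompasses the suprema needed to make this step valid, which it does since both $\sfB_F(\h \pi(\cdot) \parallel \pi_0(\cdot))$ and $\sfB_F(\h \pi_S(\cdot) \parallel \pi_0(\cdot))$ belong to $\cF$. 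The other stylistic check is that the Lipschitz assumption is stated with respect to the $L^2$ norm $\norm{\cdot}_{L^2(\sD_\sX)}$ (rather than a pointwise norm), so that Jensen's inequality is not even required: the two pieces combine cleanly to produce the $\sqrt{\e_m}$ rate, which is the expected dependence when strong convexity is leveraged to pass from $O(\e_m)$ suboptimality in objective value to $O(\sqrt{\e_m})$ distance between solutions.
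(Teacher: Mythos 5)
Your proposal is correct and follows essentially the same route as the paper's proof: apply the first part of Lemma~\ref{lemma:two-sided-guarantee} to bound the excess population objective by $2\e_m$, use $\mu$-strong convexity to convert this into an $L^2$ distance bound of $2\sqrt{\e_m/\mu}$ between $\h \pi_S$ and $\h \pi$, and then transfer via the $L_{\pi^*}$-Lipschitz error mapping. No gaps; the argument matches the paper step for step.
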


\begin{proof}
  By Lemma~\ref{lemma:two-sided-guarantee}, the excess population risk
  of the empirical solution is bounded by $2\e_m$. The assumption of
  $\mu$-strong convexity gives us a lower bound on this same quantity:
\[
  \E\bracket*{\sfB_F(\h \pi_S(x) \parallel \pi_0(x))}
  - \E\bracket*{\sfB_F(\h \pi(x) \parallel \pi_0(x))}
  \geq \frac{\mu}{2} \E\bracket*{\norm{\h \pi_S(x) - \h \pi(x)}^2}.
\]
Thus, we obtain:
$\frac{\mu}{2} \E\bracket*{\norm{\h \pi_S(x) - \h \pi(x)}^2} \leq
2\e_m$. We are interested in the difference
$\E\bracket*{\sfB_F(\pi^* \parallel \h \pi_S(x))} -
\E\bracket*{\sfB_F(\pi^* \parallel \h \pi(x))}$. The assumption of
$L_{\pi^*}$-Lipschitz continuity of the error mapping allows us to
bound the absolute value of this difference using this inequality:
\begin{align*}
  \abs*{\E\bracket*{\sfB_F(\pi^* \parallel \h \pi_S(x))}
  - \E\bracket*{\sfB_F(\pi^* \parallel \h \pi(x))}}
  & \leq L_{\pi^*} \sqrt{\E\bracket*{\norm{\h \pi_S(x) - \h \pi(x)}^2}} \\
  & \leq L_{\pi^*} \sqrt{\frac{4\e_m}{\mu}}
  = \frac{2L_{\pi^*}}{\sqrt{\mu}} \sqrt{\e_m}.
\end{align*}
Thus, we have
\[
  \E\bracket*{\sfB_F(\pi^* \parallel \h \pi_S(x))}
  \leq \E\bracket*{\sfB_F(\pi^* \parallel \h \pi(x))}
  + \frac{2L_{\pi^*}}{\sqrt{\mu}} \sqrt{\e_m}.
\]
Subtracting $\E\bracket*{\sfB_F(\pi^* \parallel \pi_0(x))}$ from both
sides and reversing the inequality yields the desired result on the
improvement, which completes the proof.
\end{proof}

\subsection{Improvement guarantees for relaxed constraints}
\label{sec:relaxed-case}

In this section, we analyze the setting with \emph{relaxed}
constraints, where we no longer require strict equalities of the form
$\pi(x) = \pi(\Phi(x))$ for all $x \in \sX$.

Let $\sfD$ be a jointly convex divergence between distributions, such
as the relative entropy $\KL$, total variation (TV) or
$\ell_1$-distance, squared Euclidean distance, Jensen-Shannon
divergence $\JS$, or squared Hellinger distance $\Hell^2$, or the
symmetrized $\KL$.  For any
$\Lambda \geq 0$, define the set $\sC_\Lambda^\sfD$
\[
  \sC^\sfD_\Lambda = \curl*{\pi \in \Piall \colon \E_{x \sim \sD_\sX}
    \bracket*{\sfD(\pi(x), \pi(\Phi(x)))} \leq \Lambda},
\]
which is convex.
We then consider the following optimization problem:
\begin{align}
\label{opt:relaxed}
  \min_{\pi \in \Piall} \quad & \E_{x \sim \sD_\sX} \bracket*{\sfB_F(\pi \parallel \pi_0)}\\
  \text{subject to:}  \quad & \E_{x \sim \sD_\sX}
    \bracket*{\sfD(\pi(x), \pi(\Phi(x)))} \leq \Lambda.\nonumber
\end{align}
This is a convex optimization problem, since both the objective
$\pi \mapsto \E_{x \sim \sD_\sX} \bracket*{\sfB_F(\pi(x) \parallel
  \pi_0(x))}$ and the constraint
$\pi \mapsto \E_{x \sim \sD_\sX} \bracket*{\sfD(\pi(x),
  \pi(\Phi(x)))}$ are convex in $\pi$.
By standard Lagrangian duality (Slater’s condition holds since
$\Lambda > 0$ yields a strictly feasible interior), this is equivalent
to the unconstrained problem
\begin{align}
\label{opt:unconstrained}
  \min_{\pi \in \Piall} \quad & \E_{x \sim \sD_\sX} \bracket*{\sfB_F(\pi(x) \parallel \pi_0(x))}
                                + \lambda \E_{x \sim \sD_\sX}
    \bracket*{\sfD(\pi(x), \pi(\Phi(x)))},
\end{align}
for some $\lambda \geq 0$.

\begin{theorem}[Improvement guarantees for relaxed constraints]
\label{th:relaxed-constraints}
Assume that $\Phi$ is an involution.  Let $\|\cdot\|$ be a norm on
$\Rset^{|\mathcal{Y}|} \supseteq \Delta(\mathcal{Y})$. Assume that the
Bregman generator $F$ is $\mu_F$-strongly convex and the divergence
$\sfD$ is $\mu_\sfD$-strongly convex, both with respect to this norm
$\|\cdot\|$.  Define the $L_2$ norm on the function space using this
$\norm{\pi}_{L_2} = \bracket*{\E_{x \sim \sD_\sX}
  \bracket{\|\pi(x)\|^2}}^{\frac{1}{2}}$.
Define the incoherence gap of $\pi_0$ relative to this $L_2$ norm as
$\Delta_{\mathrm{coh}} = \inf_{\pi \in \Ccoh} \norm{\pi_0 -
  \pi}_{L_2}^2$.
Let $\h \pi$ be the $\sfB_F$ Bregman-projection of $\pi_0$ onto the
relaxed constraint set
$\sC^\sfD_\Lambda = \curl*{\pi \in \Piall \colon \E_{x \sim \sD_\sX}
  \bracket*{\sfD(\pi(x), \pi(\Phi(x)))} \leq \Lambda}$ (solution to
Problem~\ref{opt:relaxed}). Then, the improvement of $\h \pi$
satisfies
\[
 \Improv(\h \pi)
 \geq \frac{\mu_F}{2}\bracket*{\sqrt{\Delta_{\mathrm{coh}}}
   - \sqrt{\frac{\Lambda}{2 \mu_\sfD}}}_+^2.
\]
Furthermore, let $\gamma_0 = \norm{\pi_0 - \pi_0 \circ \Phi}_{L_2}^2$
and
$C_\Phi = \sup_{\pi \in \Piall} \frac{\norm{\pi \circ
    \Phi}_{L_2}}{\norm{\pi}_{L_2}}$ be the operator norm of the
composition operator induced by $\Phi$ with respect to the $L_2$
norm. The incoherence gap is bounded by the computable incoherence
$\gamma_0$:
\begin{align*}
  \frac{\gamma_0}{(1 + C_\Phi)^2}
  \leq \Delta_{\mathrm{coh}}
  \leq \frac{\gamma_0}{4}.
\end{align*}
\end{theorem}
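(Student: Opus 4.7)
The argument splits naturally into two essentially independent parts: a lower bound on the projection displacement that yields the improvement inequality, and a two-sided bracketing of the incoherence gap $\Delta_{\mathrm{coh}}$ by the computable quantity $\gamma_0$. Throughout, I will use the fact that any truly coherent reference $\pi^*$ satisfies $\sfD(\pi^*(x), \pi^*(\Phi(x))) = 0$ for all $x$ and hence belongs to the relaxed feasible set $\sC^\sfD_\Lambda$, so that the Bregman Pythagorean theorem applies with $\h\pi$ as the projection of $\pi_0$ onto this convex set.

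For the improvement bound, Pythagoras applied to $\pi^* \in \sC^\sfD_\Lambda$ and the projection $\h\pi$ yields $\Improv(\h\pi) \geq \E_{x \sim \sD_\sX}[\sfB_F(\h\pi(x) \parallel \pi_0(x))]$, and the $\mu_F$-strong convexity of $F$ then gives the further lower bound $\tfrac{\mu_F}{2}\norm{\h\pi - \pi_0}_{L_2}^2$. The core of the proof is a symmetrization trick: since $\Phi$ is an involution, the model $\pi_c(x) = \tfrac{1}{2}(\h\pi(x) + \h\pi(\Phi(x)))$ is coherent and in particular lies in $\Ccoh$. The $\mu_\sfD$-strong convexity of $\sfD$, together with $\sfD(q,q) = 0$, gives the Pinsker-type inequality $\tfrac{\mu_\sfD}{2}\norm{\h\pi(x) - \h\pi(\Phi(x))}^2 \leq \sfD(\h\pi(x), \h\pi(\Phi(x)))$; taking expectations and invoking feasibility $\E_{x \sim \sD_\sX}[\sfD(\h\pi(x), \h\pi(\Phi(x)))] \leq \Lambda$ yields $\norm{\h\pi - \pi_c}_{L_2}^2 \leq \Lambda/(2\mu_\sfD)$. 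The reverse triangle inequality
\[
\norm{\h\pi - \pi_0}_{L_2} \geq \norm{\pi_0 - \pi_c}_{L_2} - \norm{\h\pi - \pi_c}_{L_2} \geq \sqrt{\Delta_{\mathrm{coh}}} - \sqrt{\Lambda/(2\mu_\sfD)},
\]
followed by taking the positive part and squaring, then delivers the claimed lower bound on $\Improv(\h\pi)$.

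For the bracketing of $\Delta_{\mathrm{coh}}$, the upper bound is obtained by instantiating the specific coherent model $\pi_c = \tfrac{1}{2}(\pi_0 + \pi_0 \circ \Phi)$: a direct computation gives $\norm{\pi_0 - \pi_c}_{L_2}^2 = \tfrac{1}{4}\norm{\pi_0 - \pi_0 \circ \Phi}_{L_2}^2 = \gamma_0/4$, so $\Delta_{\mathrm{coh}} \leq \gamma_0/4$. For the lower bound, fix any $\pi \in \Ccoh$; since $\pi = \pi \circ \Phi$, one has the identity $\pi_0 - \pi_0 \circ \Phi = (\pi_0 - \pi) - (\pi_0 - \pi) \circ \Phi$, and the $L_2$ triangle inequality together with the definition of the operator norm $C_\Phi$ yields $\sqrt{\gamma_0} \leq (1 + C_\Phi)\norm{\pi_0 - \pi}_{L_2}$. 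Taking infimum over $\pi \in \Ccoh$ gives $\Delta_{\mathrm{coh}} \geq \gamma_0/(1 + C_\Phi)^2$.

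The main subtlety I anticipate is bookkeeping around the $\Improv$ inequality, specifically confirming that the implicit reference $\pi^*$ is coherent so that $\pi^* \in \sC^\sfD_\Lambda$ and Pythagoras is applicable, and verifying that the composition operator $\pi \mapsto \pi \circ \Phi$ is well-defined with finite $L_2 \to L_2$ operator norm $C_\Phi$ (which under involution of $\Phi$ amounts to a change-of-variables check on $\sD_\sX$). After that, all remaining moves—the symmetrization step, the Pinsker-type deduction from $\mu_\sfD$-strong convexity of $\sfD$, and two triangle inequalities—are short and essentially mechanical.
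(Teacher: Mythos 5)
Your proposal is correct and follows essentially the same route as the paper: the improvement bound via the Pythagorean/projection guarantee plus $\mu_F$-strong convexity, the $\Phi$-symmetrization combined with $\mu_\sfD$-strong convexity of $\sfD$ to control the distance to $\Ccoh$, a (reverse) triangle inequality against $\sqrt{\Delta_{\mathrm{coh}}}$, and the identical two-sided bracketing of $\Delta_{\mathrm{coh}}$ by $\gamma_0$. The only cosmetic difference is that you apply the symmetrization directly to the specific projection $\h\pi$, whereas the paper shows the containment $\sC^\sfD_\Lambda \subseteq \Ccoh^r$ and bounds the distance to the enlarged set; both yield the same inequality.
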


\begin{proof}
  By the joint convexity of $\sfD$,
  $\pi \mapsto \E[\sfD(\pi(x) \parallel \pi(\Phi(x)))]$ is convex
  and continuous, thus $\sC^\sfD_\Lambda$ is closed and convex and the
  $\sfB_F$-projection of $\pi_0$ onto $\sC^\sfD_\Lambda$, $\h \pi$, is
  well defined.

By Theorem~\ref{th:direct-bregman-projection}, the improvement of the
Bregman projection $\h \pi$ onto $\sC^\sfD_\Lambda$ over $\pi_0$ is
lower bounded as follows:
\[
  \Improv(\h \pi)
  = \E_{x \sim \sD_\sX} \bracket*{\sfB_F\paren*{\pi^*(x) \parallel \pi_0(x)}}
  - \E_{x \sim \sD_\sX} \bracket*{\sfB_F \paren*{\pi^*(x) \parallel \h \pi(x)}}
\geq \E_{x \sim \sD_\sX}[\sfB_F(\h \pi(x) \parallel \pi_0(x))].
\]
By the $\mu_F$-strong convexity of $F$, we have
\[
\Improv(\h \pi) \geq \inf_{\pi \in \sC^\sfD_\Lambda}
\E[\sfB_F(\pi(x) \parallel \pi_0(x))] \geq \inf_{\pi \in
  \sC^\sfD_\Lambda} \frac{\mu_F}{2} \norm*{\pi - \pi_0}_{L_2}^2 =
\frac{\mu_F}{2} \bracket*{\mathrm{dist}_{L_2}(\pi_0, \sC^\sfD_\Lambda)}^2.
\]
For any $\pi \in \sC^\sfD_\Lambda$, by strong-convexity of $\sfD$, we
can write:
\[
  \Lambda
  \geq \E\bracket*{\sfD(\pi(x) \parallel \pi(\Phi(x)))}
  \geq \frac{\mu_\sfD}{2} \norm*{\pi - \pi \circ \Phi}^2_{L_2}.
\]
Note that for any $\pi$,
$\pi_{\textrm{sym}} = \frac{1}{2} \bracket*{\pi + \pi \circ \Phi}$ is
in $\Ccoh$. Thus, for any $\pi \in \sC^\sfD_\Lambda$, we have
\[
  \mathrm{dist}_{L_2}(\pi, \Ccoh)
  \leq \norm*{\pi - \pi_{\textrm{sym}}}_{L_2}
  = \frac{1}{2} \norm*{\pi - \pi \circ \Phi}_{L_2}
  \leq \frac{1}{2} \sqrt{\frac{2 \Lambda}{\mu_\sfD}}
  = \sqrt{\frac{\Lambda}{2 \mu_\sfD}}.
\]
Thus, $\sC^\sfD_\Lambda$ lies in the ${L_2}$-ball of radius
$r = \sqrt{\frac{\Lambda}{2 \mu_\sfD}}$ around $\Ccoh$.

Let $\Ccoh^r$ be the $L_2$-ball of radius
$r = \sqrt{\frac{\Lambda}{2 \mu_\sfD}}$ around $\Ccoh$.  We have shown
that $\sC^\sfD_\Lambda \subseteq \Ccoh^r$, which implies
$\mathrm{dist}_{L_2}(\pi_0, \sC^\sfD_\Lambda) \geq
\mathrm{dist}_{L_2}(\pi_0, \Ccoh^r)$.
\begin{figure}[t]
  \centering
  \includegraphics[scale=.38]{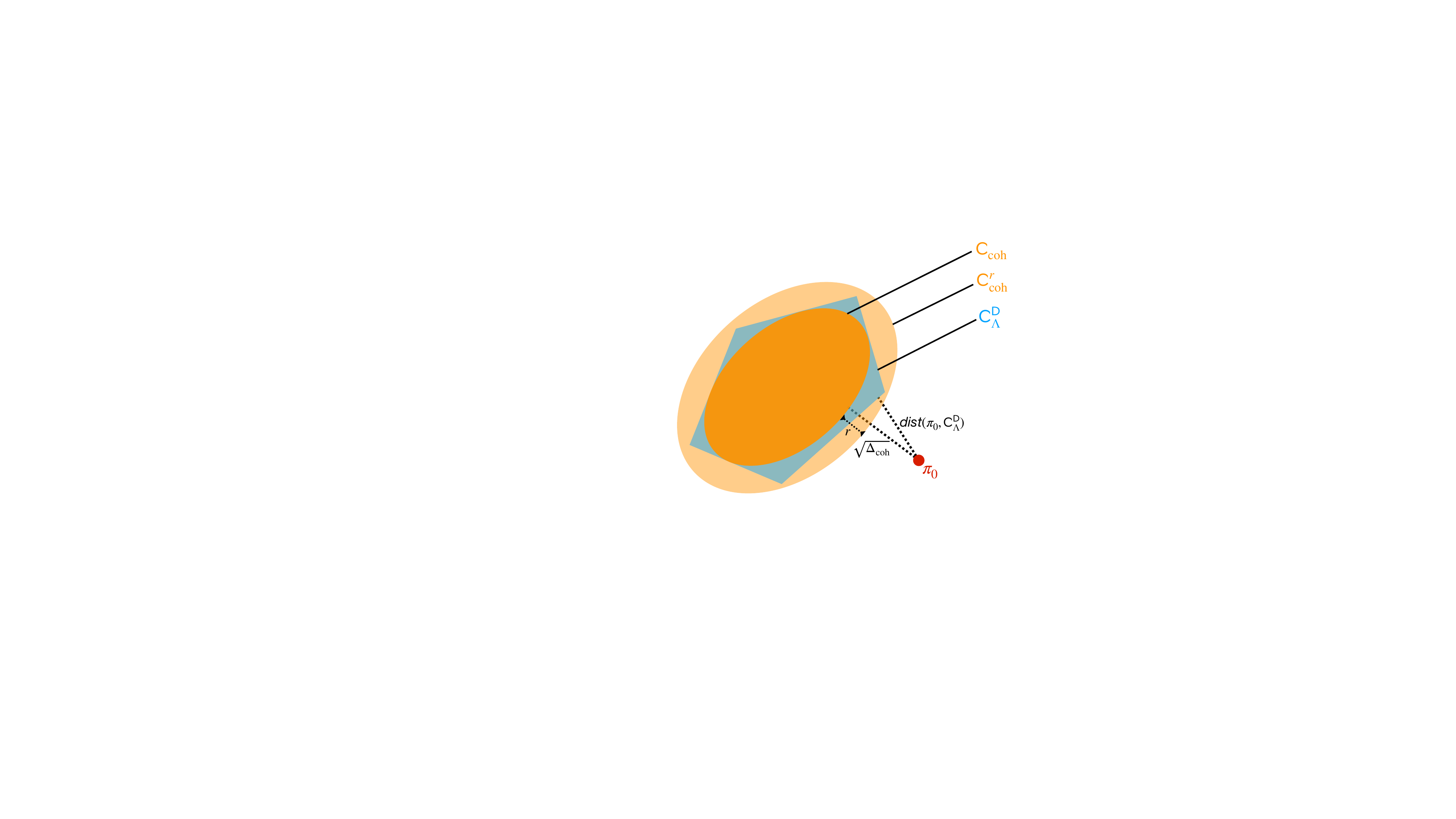}
  \caption{Illustration of the proof of Theorem~\ref{th:relaxed-constraints}.}
  \label{fig:topological}
\end{figure}
For any $c^r \in \Ccoh^r$, there exists $c \in \Ccoh$ with
$\norm{c - c^r}_{L_2} \leq r$. Thus, by the triangle inequality:
\[
\norm{\pi_0 - c^r}_{L_2}
\geq \norm{\pi_0 - c}_{L_2} - \norm{c^r - c}_{L_2}
\geq \norm{\pi_0 - c}_{L_2} - r.
\]
Taking the infimum over $c \in \Ccoh$ on the right-hand side gives
$\norm{\pi_0 - c^r}_{L_2} \geq \mathrm{dist}_{L_2}
(\pi_0, \Ccoh) - r$. Next, taking the infimum over $c^r \in \Ccoh^r$ on the
left-hand side yields:
\[
\mathrm{dist}_{L_2} (\pi_0, \Ccoh^r) \geq \mathrm{dist}_{L_2}
(\pi_0, \Ccoh) - r.
\]
Since the distance is non-negative, and $\sqrt{\Delta_{\mathrm{coh}}} = \mathrm{dist}_{L_2}(\pi_0, \Ccoh)$, we have:
\[
\mathrm{dist}_{L_2}(\pi_0, \sC^\sfD_\Lambda)
\geq \mathrm{dist}_{L_2}(\pi_0, \Ccoh^r)
\geq \bracket*{\sqrt{\Delta_{\mathrm{coh}}} - r}_+.
\]
This proves the main improvement bound.

We now prove the bounds on $\Delta_{\mathrm{coh}}$.
Note that 
$\pi_{\textrm{sym}} = \frac{1}{2}(\pi_0 + \pi_0 \circ \Phi)$ is always in
$\Ccoh$. Since the infimum is the distance to the closest point, we have
the upper bound:
\[
    \sqrt{\Delta_{\mathrm{coh}}} = \mathrm{dist}_{L_2}(\pi_0, \Ccoh)
    \leq \norm{\pi_0 - \pi_{\textrm{sym}}}_{L_2}
    = \norm*{\pi_0 - \frac{1}{2}(\pi_0 + \pi_0 \circ \Phi)}_{L_2}
    = \frac{1}{2} \norm{\pi_0 - \pi_0 \circ \Phi}_{L_2} = \frac{1}{2}\sqrt{\gamma_0}.
\]
For the lower bound, let $\pi$ be any policy in $\Ccoh$, so $\pi = \pi \circ \Phi$.
By the triangle inequality:
\[
    \sqrt{\gamma_0} = \norm{\pi_0 - \pi_0 \circ \Phi}_{L_2}
    = \norm{(\pi_0 - \pi) - (\pi_0 \circ \Phi - \pi \circ \Phi)}_{L_2}
    = \norm{(\pi_0 - \pi) - (\pi_0 - \pi) \circ \Phi}_{L_2}
    \leq \norm{\pi_0 - \pi}_{L_2} + \norm{(\pi_0 - \pi) \circ \Phi}_{L_2}.
\]
By definition of the operator norm $C_\Phi$, we have
$\norm{(\pi_0 - \pi) \circ \Phi}_{L_2} \leq C_\Phi \norm{\pi_0 - \pi}_{L_2}$.
Substituting this back gives
$\sqrt{\gamma_0} \leq \norm{\pi_0 - \pi}_{L_2} + C_\Phi \norm{\pi_0 - \pi}_{L_2}
= (1 + C_\Phi) \norm{\pi_0 - \pi}_{L_2}$.
Rearranging, we get $\norm{\pi_0 - \pi}_{L_2} \geq \frac{\sqrt{\gamma_0}}{1 + C_\Phi}$.
Since this holds for any $\pi \in \Ccoh$, it holds for the infimum:
\[
  \sqrt{\Delta_{\mathrm{coh}}} = \inf_{\pi \in \Ccoh} \norm{\pi_0 - \pi}_{L_2}
  \geq \frac{\sqrt{\gamma_0}}{1 + C_\Phi}.
\]
Squaring the upper and lower bounds gives the result.
\end{proof}

The theorem gives guarantees for both Problems~\ref{opt:relaxed} and
\ref{opt:unconstrained}. In the specific case of the $\JS$ divergence,
we assume the $\ell_1$-norm
$\norm{\, \cdot \,} = \norm{\, \cdot \,}_1$. By
Sch\"utzenberger-Pinsker's inequality, we have
$\JS(p,q) \geq \frac{1}{8} \norm{p-q}^2_1$.  Comparing this to the
strong convexity assumption
$\sfD(p, q) \geq \frac{\mu_\sfD}{2} \norm{p-q}^2$, we get
$\mu_{\JS} = 1/4$.
Substituting this into the theorem yields the improvement bound:
\[
  \Improv(\h \pi) \geq \frac{\mu_F}{2}
  \bracket*{\sqrt{\Delta_{\mathrm{coh}}} - \sqrt{2 \Lambda}}_+^2.
\]
We obtain an identical guarantee for the squared $\Hell^2$ divergence,
since $\Hell^2(p,q) \geq \frac{1}{8} \norm{p-q}^2_1$, and thus
$\mu_{\Hell^2} = 1/4$.
For the symmetrized $\KL$ divergence,
$\KL^{\textrm{sym}}(p,q) = \KL(p,q) + \KL(q,p)$,
Sch\"utzenberger-Pinsker's inequality
$\KL(p,q) \geq 2 D_{TV}(p,q)^2 = \frac{1}{2} \norm{p-q}_1^2$ implies
$\KL^{\textrm{sym}}(p,q) \geq \norm{p-q}_1^2$.  This gives
$\mu_{\KL^{\textrm{sym}}} = 2$, leading to the guarantee:
\[
  \Improv(\h \pi) \geq \frac{\mu_F}{2}
  \bracket*{\sqrt{\Delta_{\mathrm{coh}}} - \frac{\sqrt{\Lambda}}{2}}_+^2.
\]
These results show that the improvement remains strictly positive as
long as $\Lambda$ is not too large, or, equivalently, for sufficiently
small values of $\lambda$.

\textbf{Case of $\Phi$-Invariant Distributions.}
The bounds in Theorem~\ref{th:relaxed-constraints} simplify
substantially and become more practical if we assume the data
distribution $\sD_\sX$ is $\Phi$-invariant.  The $L_2$-norm is then
invariant under composition with $\Phi$:
$\norm{\pi \circ \Phi}_{L_2}^2 = \norm{\pi}_{L_2}^2$, which implies
that the operator norm $C_\Phi = 1$.  Substituting $C_\Phi = 1$ into
the bounds from Theorem~\ref{th:relaxed-constraints} gives an exact
identity:
$\Delta_{\mathrm{coh}} = \frac{1}{4} \gamma_0 = \frac{1}{4} \E_{x \sim
  \sD_\sX} \bracket*{\norm{\pi_0(x) - \pi_0(\Phi(x))}^2}$.

This makes the abstract incoherence gap $\Delta_{\mathrm{coh}}$
directly computable from the initial policy $\pi_0$.  The improvement
guarantee can then be expressed in this fully computable form:
\[
  \Improv(\h \pi)
  \geq \frac{\mu_F}{2}\bracket*{\frac{1}{2}\sqrt{\gamma_0}
    - \sqrt{\frac{\Lambda}{2 \mu_\sfD}}}_+^2.
\]
For example, using the $\ell_1$-norm and the $\JS$ divergence
($\mu_{\JS} = 1/4$), the bound becomes:
\[
  \Improv(\h \pi) \geq \frac{\mu_F}{2}
  \bracket*{\frac{1}{2}\sqrt{\gamma_{0,1}} - \sqrt{2 \Lambda}}_+^2,
\]
where
$\gamma_{0,1} = \E_{x \sim \sD_\sX} \bracket*{\norm{\pi_0(x) -
    \pi_0(\Phi(x))}_1^2}$.

\subsection{Failure of Pythagorean improvement for minimax projections}
\label{sec:failure-minimax}

In previous sections, we analyzed improvement for a single, fixed
Bregman divergence $\sfB_F$. We now consider a family of Bregman
divergences generated by a convex set of Legendre generators, $\cF$. A
natural robust solution in this setting is defined by the minimax
objective:
\[
\min_{\pi \in \Pi \cap \Ccoh} \max_{F \in \cF}\sfB_F(\pi \parallel \pi_0).
\]
This optimization problem is convex, since each Bregman divergence
$\sfB_F(\cdot \parallel \pi_0)$ is convex in its first argument and
the pointwise maximum of convex functions remains convex.  However, as
the following result demonstrates, this robust minimax formulation
does not, in general, guarantee a Pythagorean improvement over the
source model $\pi_0$.

\begin{theorem}
\label{th:minimax_failure_function}
There exist finite sets $\sX, \sY$, a closed convex model class
$\Pi \subseteq \Piall$, an involution
$\Phi$ on $\sX$ defining a nontrivial coherence set $\Ccoh$, a source
model $\pi_0 \colon \sX \to \Delta(\sY)$ with $\pi_0 \notin \Pi$, and
a convex family of Legendre generators $\cF$ such that the unique
minimax solution
\[
  \pi_{mm}^*
  = \argmin_{\pi \in \Pi \cap \Ccoh} \max_{F \in \cF}\sfB_F(\pi \parallel \pi_0)
\]
violates the Pythagorean improvement condition: there exists
$F \in \cF$ and a feasible reference model $\pi^* \in \Pi\cap\Ccoh$ with
\[
\sfB_F(\pi^* \parallel \pi_{mm}^*)>\sfB_F(\pi^* \parallel \pi_0).
\]
\end{theorem}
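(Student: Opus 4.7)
The plan is to exhibit a small explicit counterexample. Take $\sX = \{x_1, x_2\}$, $\sY = \{0, 1\}$, $\Phi$ the swap involution on $\sX$, and $\sD_\sX$ uniform. Parameterize each $\pi(x_i) \in \Delta(\sY)$ by its probability of outcome $0$, so $\Piall \cong [0,1]^2$ and $\Ccoh$ is identified with the diagonal. Take $\Pi = [0.1, 0.9]^2$, a closed convex box, so that $\Pi \cap \Ccoh$ is the nontrivial segment $\{(q,q) : q \in [0.1, 0.9]\}$. Take $\pi_0 = (0.05, 0.5)$, which lies just outside $\Pi$ (since $0.05 < 0.1$). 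For the convex family of Legendre generators, let $F_0$ be the squared norm and $F_1$ the negative entropy, both treated as Legendre throughout the paper, and set $\cF = \{\alpha F_1 + (1-\alpha) F_0 : \alpha \in [0, 1]\}$.

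By linearity of $\sfB_F$ in $F$, the inner maximum over $\cF$ reduces to the pointwise max $\max(E_{F_0}(q), E_{F_1}(q))$, where $E_F(q) = \E_{x \sim \sD_\sX}[\sfB_F((q, 1{-}q) \parallel \pi_0(x))]$. Pinsker's inequality gives $\KL(p \parallel p') \geq \tfrac{1}{2}\norm{p - p'}_1^2$; on the binary simplex the right-hand side equals $\norm{p - p'}_2^2 = \sfB_{F_0}(p \parallel p')$, strictly so whenever $p \neq p'$. Applied termwise to $E_{F_1}$ this yields $E_{F_1}(q) > E_{F_0}(q)$ for all $q \in [0.1, 0.9]$ (using that $\pi_0$ is non-coherent, so equality fails for at least one $x$). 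By strict convexity of $F_1$, the minimax therefore admits a unique solution $\pi_{mm}^* = \h\pi_{F_1}$, the $F_1$-projection of $\pi_0$ onto $\Pi \cap \Ccoh$, computed via the right-Bregman-centroid formula of Lemma~\ref{lemma:right-bregman-centroid} as $\pi_{mm}^* \approx (0.187, 0.187)$, interior to the segment. The squared-norm projection $\h\pi_{F_0}$ is the arithmetic-mean point $(0.275, 0.275)$, which differs from $\pi_{mm}^*$.

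To exhibit the Pythagorean failure, I would take $F = F_0$ and $\pi^* = (0.9, 0.9) \in \Pi \cap \Ccoh$, the far endpoint of the feasible segment. Direct computation gives $\sfB_{F_0}(\pi^* \parallel \pi_{mm}^*) = 2(0.713)^2 \approx 1.02$, while $\sfB_{F_0}(\pi^* \parallel \pi_0) = \tfrac{1}{2}[2(0.85)^2 + 2(0.4)^2] \approx 0.88$, yielding $1.02 > 0.88$. Conceptually: since $\h\pi_{F_0}$ lies strictly between $\pi_{mm}^*$ and $\pi^*$ on the diagonal, the point $\pi_{mm}^*$ lies ``beyond'' the Euclidean projection as seen from $\pi_0$; moving from $\pi_0$ to $\pi_{mm}^*$ overshoots $\pi^*$ in $F_0$-geometry and increases the Euclidean divergence to it.

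The main obstacle is to establish the strict pointwise dominance $E_{F_1} > E_{F_0}$ across all of $\Pi \cap \Ccoh$, which is what pins down $\pi_{mm}^*$ uniquely as the $F_1$-projection; this is handled cleanly by termwise Pinsker as above, exploiting that our chosen $\pi_0$ has one coordinate strictly below $\ell = 0.1$. All remaining steps are a direct application of the centroid formula together with elementary arithmetic on the explicit parameters.
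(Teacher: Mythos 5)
Your construction is correct, and it reaches the theorem by a genuinely different route than the paper. The paper builds a convex hull of two \emph{quadratic} generators with diagonal Hessians $\diag(M,1,\dots)$ and $\diag(\dots,M,1)$ on a three-outcome space, locates the minimax point by balancing the two quadratics at $q_1=\tfrac12$ (both divergences are active at the optimum), and then shows the violation $\sfB_{F_2}(\pi^*\parallel\pi_{mm}^*)-\sfB_{F_2}(\pi^*\parallel\pi_0)=\tfrac{M-5}{8}>0$ for $M>5$, so the gap can be made arbitrarily large. You instead take a heterogeneous family (negative entropy vs.\ squared norm) on a binary-outcome space and use Pinsker to get pointwise dominance $E_{F_1}\ge E_{F_0}$ on the feasible segment, so the inner maximum is always attained at the entropy generator and the minimax collapses to the unique $\KL$-projection (computed via Lemma~\ref{lemma:right-bregman-centroid} as the normalized geometric mean, $q^*=1/(1+\sqrt{19})\approx 0.187$, interior to $[0.1,0.9]$); the violation for $F_0$ at $\pi^*=(0.9,0.9)$ then follows from the arithmetic you give ($\approx 1.02$ vs.\ $\approx 0.88$), which I verified. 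What each approach buys: the paper's example exhibits a ``balanced'' minimax where robustness genuinely mixes the two geometries and the failure scales with $M$; yours is more minimal and uses the two most standard divergences, with the failure mechanism laid bare (a projection in $\KL$-geometry need not improve in Euclidean geometry). Two small points to tighten in a final write-up: strict Pinsker is not actually needed (weak dominance already forces $\max(E_{F_0},E_{F_1})=E_{F_1}$, so you can drop the strictness claim or prove it separately), and you should note that each mixture $\alpha F_1+(1-\alpha)F_0$ is Legendre on its effective domain (routine, since the entropy term supplies steepness for $\alpha>0$); also replace the decimal approximations by exact values or crude interval bounds, which is easy given the wide margin.
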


\begin{proof}
  We give an explicit construction modulo a parameter $M > 0$
  later selected to be large enough.

Let $\sX = \{x_1, x_2\}$ and $\sY = \{1, 2, 3\}$.  We identify any
model $\pi\colon \sX \to \Delta(\sY)$ with the concatenated vector
$\pi = (\pi(x_1)_1, \pi(x_1)_2, \pi(x_1)_3,\ \pi(x_2)_1, \pi(x_2)_2,
\pi(x_2)_3) \in \Rset^6$.  Define the product simplex 
$\wt \Pi = \Delta_3 \times \Delta_3 \subset \Rset^6$ and introduce the
involution $\Phi$ on $\sX$ that swaps the two inputs:
$\Phi(x_1) = x_2$, $\Phi(x_2) = x_1$. The corresponding coherence set
is $\Ccoh = \{\pi\colon \pi(x_1) = \pi(x_2)\}$.

Define $\Pi$ by
$\Pi = \curl*{(q, q)\colon q = (q_1, q_2, q_3) \in \Delta_3 \
  \text{and} \ q_1\leq \tfrac{1}{2}}$, which is closed and convex.
Thus, we have
$\Pi \cap \Ccoh = \curl*{(q, q) \colon q \in \Delta_3,\ q_1\leq
  \tfrac{1}{2}}$, which is also closed and convex.  
Define $\pi_0 \in \Piall$ by $\pi_0(x_1) = (1, 0, 0)$, and
$\pi_0(x_2) = (0, 1, 0)$.  Note that $\pi_0 \in \Piall$ lies outside
$\Pi$ and outside $\Ccoh$.
Since $\pi_0(x_1)_1 = 1 > \tfrac{1}{2}$, any coherent model
$\pi = (q, q)$ with $q_1 = 1$ would fail the constraint
$q_1 \leq \tfrac{1}{2}$.

Let $M > 0$ be a large scalar. Define two quadratic Legendre
generators with diagonal Hessians $A_1, A_2 \in \Rset^{6\times6}$:
$A_1 = \diag(M, 1, 1, \;1, 1, 1)$, and
$A_2 = \diag(1, 1, 1, \;1, M, 1)$.
For $k \in \{1, 2\}$ set
\[
F_k(\pi) = \tfrac{1}{2}\pi^\top A_k\pi \quad \text{and therefore} \quad
\sfB_{F_k}(p \parallel  q) = \tfrac{1}{2}(p-q)^\top A_k(p-q),
\]
and define $\cF = \conv(\{F_1, F_2\})$.

A feasible coherent model in $\Pi \cap \Ccoh$ must be of the form
$\pi = (q, q)$ with $q = (q_1, q_2, q_3) \in \Delta_3$ and
$q_1 \leq \tfrac{1}{2}$.  For fixed $q_1, q_2$ the coordinate $q_3$
contributes $\tfrac{1}{2}(q_3-0)^2$ to both
$\sfB_{F_1}(\pi \parallel \pi_0)$ and
$\sfB_{F_2}(\pi \parallel \pi_0)$ (since
$\pi_0(x_1)_3 = \pi_0(x_2)_3 = 0$), so any minimizer of the minimax
objective takes $q_3 = 0$.  With $q_3 = 0$ we have $q_1 + q_2 = 1$ and
$q_1, q_2 \in [0, 1]$, with the extra feasible restriction
$q_1 \leq \tfrac{1}{2}$.

Let $\pi = (q_1, 1 - q_1, 0, q_1, 1-q_1, 0)$ and $\pi_0 = (1, 0, 0, 0, 1, 0)$.
We compute the two costs:
\[
\begin{aligned}
\sfB_{F_1}(\pi \parallel \pi_0)
& =  \tfrac{1}{2}\Big(M(q_1-1)^2+((1 - q_1) - 0)^2\Big)
   +\tfrac{1}{2}\Big((q_1-0)^2+((1 - q_1) - 1)^2\Big)\\
& =  \tfrac{1}{2}\Big(M(q_1-1)^2 + (1 - q_1)^2 + q_1^2 + (1 - q_1 - 1)^2\Big)\\
& =  \tfrac{1}{2}\Big((M+1)(q_1 - 1)^2 + 2q_1^2\Big) \eqqcolon f_1(q_1),\\[4pt]
\sfB_{F_2}(\pi \parallel \pi_0)
& =  \tfrac{1}{2}\Big((q_1-1)^2+((1 - q_1)-0)^2\Big)
   +\tfrac{1}{2}\Big((q_1-0)^2+M((1 - q_1)-1)^2\Big)\\
& =  \tfrac{1}{2}\Big((q_1-1)^2 + (1 - q_1)^2 + q_1^2 + M(-q_1)^2\Big)\\
& =  \tfrac{1}{2}\Big(2(q_1-1)^2 + (M + 1)q_1^2\Big) \eqqcolon f_2(q_1).
\end{aligned}
\]

We seek $\argmin_{q_1 \in [0, 1/2]} \max(f_1(q_1), f_2(q_1))$.
Assuming $M>1$, the two functions are equal when
$(M-1)(q_1-1)^2 = (M-1)q_1^2$, which implies $(q_1-1)^2 = q_1^2$, so
$-2q_1+1 = 0$, i.e., $q_1 = \tfrac{1}{2}$.  This candidate is feasible
for $\Pi\cap\Ccoh$.  The derivatives at $q_1 = \tfrac{1}{2}$ are
$f_1'(\tfrac{1}{2}) = (M+1)(-\tfrac{1}{2}) + 2(\tfrac{1}{2}) =
\tfrac{1-M}{2} < 0$ and
$f_2'(\tfrac{1}{2}) = 2(-\tfrac{1}{2}) + (M+1)(\tfrac{1}{2}) =
\tfrac{M-1}{2} > 0$.  Since $f_1$ is decreasing and $f_2$ is
increasing at $q_1 = \tfrac{1}{2}$, this point is the unique minimizer
of $\max(f_1, f_2)$ (both $f_1$ and $f_2$ are strictly convex
quadratic functions) on $[0, \tfrac{1}{2}]$. Thus, we have
$\pi_{mm}^* = \paren*{\tfrac{1}{2}, \tfrac{1}{2}, 0, \ \tfrac{1}{2},
\tfrac{1}{2}, 0}$.

Choose the feasible reference
$\pi^* = (0, 1, 0, \ 0, 1, 0) \in \Pi\cap\Ccoh$.  We can compute the
difference
$d = \pi^* - \pi_{mm}^* = (-\tfrac{1}{2}, \tfrac{1}{2}, 0,
-\tfrac{1}{2}, \tfrac{1}{2}, 0)$.  Using
$A_2 = \diag(1, 1, 1, \;1, M, 1)$,
\begin{align*}
\sfB_{F_2}(\pi^* \parallel \pi_{mm}^*)
& =  \tfrac{1}{2}\, d^\top A_2 d 
=  \tfrac{1}{2}\Big( 1(-\tfrac{1}{2})^2+1(\tfrac{1}{2})^2 + 1(-\tfrac{1}{2})^2 + M(\tfrac{1}{2})^2 \Big)
=  \tfrac{1}{2}\Big(\tfrac{3+M}{4}\Big) = \tfrac{3+M}{8}.
\end{align*}
On the other hand, using $\pi_0 = (1, 0, 0, 0, 1, 0)$,
$\pi^*-\pi_0  =  (-1, 1, 0, 0, 0, 0)$ so
\[
\sfB_{F_2}(\pi^* \parallel \pi_0)
 =  \tfrac{1}{2}\Big( 1(-1)^2 + 1(1)^2 + 1(0)^2 + 1(0)^2 + M(0)^2 + 1(0)^2 \Big)
 =  \tfrac{1}{2}(1 + 1) = 1.
\]
Thus, we have
\[
\sfB_{F_2}(\pi^* \parallel \pi_{mm}^*) - \sfB_{F_2}(\pi^* \parallel \pi_0)
 =  \Big(\tfrac{3+M}{8}\Big) - 1
 =  \tfrac{M-5}{8}.
\]
Therefore, for any choice of $M > 5$ we
have $\tfrac{M - 5}{8} > 0$, that is,
\[
\sfB_{F_2}(\pi^* \parallel \pi_{mm}^*) > \sfB_{F_2}(\pi^* \parallel \pi_0).
\]
Thus the minimax solution $\pi_{mm}^*$ does not satisfy the
Pythagorean-improvement inequality for the divergence $F_2 \in \cF$,
which completes the proof.
\end{proof}

\subsection{Uniform improvement guarantees and impossibility results}
\label{sec:uniform-improvement-guarantee}

Given the negative result of the previous subsection, a natural
question is whether the single $F$ improvement guarantees can ever be
strengthened to hold \emph{uniformly} over an entire family of Bregman
divergences $\sfB_F$. That is, can we find a single, universal model
$\h \pi$ that is guaranteed to be an improvement over the baseline
$\pi_0$ for every divergence in a convex family $\cF$?

The following theorem shows that such a universal improvement
guarantee is indeed attainable under some strong assumptions: the
Bregman divergences generated by $\cF$ must all be jointly convex in
their arguments and a particular \emph{orbit model} must be in $\Pi$.

\begin{theorem}[Improvement via orbitwise averaging for jointly
  convex Bregman divergences]
\label{th:jointly_convex_orbit_average}
Let $\Pi$ be a closed and convex family of models
$\pi\colon \sX \to \Delta(\sY)$, and let $\Phi\colon \sX \to \sX$ be
an involution. Let $\pi^* \in \Pi \cap \Ccoh$ be a coherent reference
model. Let $\pi_0 \in \Pi$ be a baseline model.
Let $\cF$ be a convex family of Legendre generators such that, for
every $F \in \cF$, the Bregman divergence $\sfB_F$ is jointly convex.
Define the orbitwise averaged/coherent model
\[
  \h \pi_{\orbit}(x)
  = \frac{\Pr[x] \pi_0(x) + \Pr[\Phi(x)] \pi_0(\Phi(x))}{\Pr[x] + \Pr[\Phi(x)]}.
\]
Assume either that $\h \pi_{\orbit} \in \Pi$, or that $\Pi$ is
closed under the swap $\pi \mapsto \pi \circ \Phi$.  Then, for all
$F \in \cF$, the following improvement inequality holds:
\[
  \E_{x \sim \sD_\sX}\bracket*{\sfB_F(\pi^*(x) \parallel \h \pi_{\orbit}(x))}
    \leq \E_{x \sim \sD_\sX}\bracket*{\sfB_F(\pi^*(x) \parallel \pi_0(x))}.
\]
\end{theorem}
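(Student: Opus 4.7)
The plan is to apply the joint convexity of $\sfB_F$ pointwise, exploiting the fact that both $\pi^*$ and $\h \pi_{\orbit}$ are coherent so that the two arguments of $\sfB_F$ can be written as convex combinations with matching weights. The rest is bookkeeping: averaging the pointwise bound over $\sD_\sX$ and using the involution to collapse the resulting two sums back to a single copy of $\E_{x \sim \sD_\sX}[\sfB_F(\pi^*(x) \parallel \pi_0(x))]$.

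First I would set $\lambda(x) = \frac{\Pr[x]}{\Pr[x] + \Pr[\Phi(x)]}$ so that $\h \pi_{\orbit}(x) = \lambda(x) \pi_0(x) + (1 - \lambda(x)) \pi_0(\Phi(x))$. Since $\Phi(\Phi(x)) = x$, one checks that $\lambda(\Phi(x)) = 1 - \lambda(x)$, and hence $\h \pi_{\orbit}(\Phi(x)) = \h \pi_{\orbit}(x)$, confirming $\h \pi_{\orbit} \in \Ccoh$. Using $\pi^*(x) = \pi^*(\Phi(x))$ I trivially rewrite $\pi^*(x) = \lambda(x) \pi^*(x) + (1 - \lambda(x)) \pi^*(\Phi(x))$ and apply the joint convexity of $\sfB_F$ to obtain the pointwise inequality
\[
\sfB_F(\pi^*(x) \parallel \h \pi_{\orbit}(x)) \leq \lambda(x) \sfB_F(\pi^*(x) \parallel \pi_0(x)) + (1 - \lambda(x)) \sfB_F(\pi^*(\Phi(x)) \parallel \pi_0(\Phi(x))).
\]

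Next I would take the $\sD_\sX$-expectation and treat the two sums separately. The weight appearing in the second sum is $\Pr[x](1 - \lambda(x)) = \frac{\Pr[x]\Pr[\Phi(x)]}{\Pr[x] + \Pr[\Phi(x)]}$, which is symmetric in $x \leftrightarrow \Phi(x)$; substituting $y = \Phi(x)$ (valid as $\Phi$ is a bijection on $\sX$) recasts this sum as $\sum_y \Pr[\Phi(y)] \lambda(y) \sfB_F(\pi^*(y) \parallel \pi_0(y))$. Combining with the first sum yields the coefficient $(\Pr[x] + \Pr[\Phi(x)]) \lambda(x) = \Pr[x]$ at each $x$, so the right-hand side collapses to $\E_{x \sim \sD_\sX}[\sfB_F(\pi^*(x) \parallel \pi_0(x))]$, as claimed.

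The only real subtlety is feasibility of $\h \pi_{\orbit}$ in $\Pi$ under the alternative hypothesis that $\Pi$ is merely closed under $\pi \mapsto \pi \circ \Phi$: then $\pi_0$ and $\pi_0 \circ \Phi$ both lie in $\Pi$, but $\h \pi_{\orbit}$ is a pointwise (rather than scalar) convex combination of the two. I expect this to be the main obstacle requiring care; it is handled orbit-by-orbit when $\Pi$ has product-type structure (for example $\Piall$ or any pointwise-defined class), and reduces to the ordinary scalar combination $\tfrac{1}{2}(\pi_0 + \pi_0 \circ \Phi) \in \Pi$ in the $\Phi$-invariant data case where $\lambda \equiv \tfrac{1}{2}$. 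Beyond this feasibility check, the argument uses nothing other than joint convexity, which is precisely the structural hypothesis imposed uniformly on the family $\cF$, and therefore delivers the improvement guarantee for every $F \in \cF$ simultaneously.
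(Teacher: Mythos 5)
Your proposal is correct and follows essentially the same route as the paper's proof: apply joint convexity of $\sfB_F$ pointwise, using coherence of $\pi^*$ to match the convex-combination weights $\lambda(x)$, $1-\lambda(x)$ on both arguments, and then recombine via the involution (your change of variables $y = \Phi(x)$ is just the paper's orbitwise summation written differently). Your added remark on feasibility of $\h\pi_{\orbit}$ under the swap-closure hypothesis is a sensible observation, but it does not affect the improvement inequality itself, which is all the stated bound requires.
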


\begin{proof}
  Partition $\sX$ into orbits under $\Phi$, i.e., sets
  $\{x, \Phi(x)\}$. Fix a single orbit with weights $w_x = \Pr[x]$ and
  $w_{\Phi(x)} = \Pr[\Phi(x)]$. Since $\pi^*$ is coherent,
  $\pi^*(x) = \pi^*(\Phi(x))$ on the orbit.
By joint convexity of $\sfB_F$, we have
\begin{align*}
\sfB_F\paren*{\pi^*(x) \,\Big\|\, \h \pi_{\orbit}(x)}
& = \sfB_F\paren*{\frac{w_x + w_{\Phi(x)}}{w_x + w_{\Phi(x)}} \pi^*(x) \,\bigg\|\, \frac{w_x \pi_0(x) + w_{\Phi(x)} \pi_0(\Phi(x))}{w_x + w_{\Phi(x)}}}\\
& \leq \frac{w_x}{w_x + w_{\Phi(x)}} \sfB_F(\pi^*(x) \parallel \pi_0(x))
  + \frac{w_{\Phi(x)}}{w_x + w_{\Phi(x)}} \sfB_F(\pi^*(x) \parallel \pi_0(\Phi(x))).
\end{align*}
Multiply both sides by $(w_x + w_{\Phi(x)})$ and sum over all orbits:
\[
\sum_{\text{orbits}} \paren*{w_x + w_{\Phi(x)}} \, \sfB_F(\pi^*(x) \parallel \h \pi_{\orbit}(x))
\leq \sum_{\text{orbits}} w_x \sfB_F(\pi^*(x) \parallel \pi_0(x))
+ w_{\Phi(x)} \sfB_F(\pi^*(x) \parallel \pi_0(\Phi(x))).
\]
Finally, by assumption, $\h \pi_{\orbit} \in \Pi$ and is
feasible. Taking expectation over $x \sim \sD_\sX$ yields
\[
  \E_{x \sim \sD_\sX} \bracket*{\sfB_F(\pi^*(x)  \parallel  \h \pi_{\orbit}(x))}
  \leq \E_{x \sim \sD_\sX}\bracket*{\sfB_F(\pi^*(x)  \parallel  \pi_0(x))},
\]
and completes the proof.
\end{proof}

Since the Mahalanobis family is a family of jointly convex Bregman
divergences, the following is a direct consequence of the theorem.

\begin{corollary}[Improvement for the Mahalanobis family]
\label{cor:mahalanobis}
Let $\Pi$, $\Phi$, $\pi_0$, and $\pi^*$ be as in
Theorem~\ref{th:jointly_convex_orbit_average}.
Let $\cM$ be a convex set of symmetric positive semidefinite
matrices. For each $\sfM \in \cM$, define the quadratic (Mahalanobis)
Bregman divergence
\[
\sfB_\sfM(\sfp \parallel \sfq) = \tfrac{1}{2} (\sfp - \sfq)^\top \sfM (\sfp - \sfq).
\]
Then, the orbitwise averaged coherent model
\[
  \pi_\orbit(x)
  = \frac{\Pr[x] \pi_0(x) + \Pr[\Phi(x)] \pi_0(\Phi(x))}{\Pr[x] + \Pr[\Phi(x)]}
\]
satisfies
\[
\E_{x \sim \sD_\sX}\big[ \sfB_\sfM(\pi^*(x) \parallel \pi_\orbit(x)) \big] \;\le\; 
\E_{x \sim \sD_\sX}\big[ \sfB_\sfM(\pi^*(x) \parallel \pi_0(x)) \big],
\qquad \forall \sfM \in \cM,
\]
provided either $\pi_\orbit \in \Pi$ or $\Pi$ is convex
and closed under the swap $\pi \mapsto \pi \circ \Phi$.
\end{corollary}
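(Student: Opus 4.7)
The plan is to recognize the corollary as a direct instantiation of Theorem~\ref{th:jointly_convex_orbit_average}, so the work reduces to verifying that the Mahalanobis family satisfies the hypotheses of that theorem. First, I would identify each $\sfB_\sfM$ as a genuine Bregman divergence. Taking $F_\sfM(\sfp) = \tfrac{1}{2} \sfp^\top \sfM \sfp$, we have $\nabla F_\sfM(\sfp) = \sfM \sfp$, so a direct computation gives
\[
  \sfB_{F_\sfM}(\sfp \parallel \sfq)
  = F_\sfM(\sfp) - F_\sfM(\sfq) - \tri{\sfM \sfq, \sfp - \sfq}
  = \tfrac{1}{2}(\sfp - \sfq)^\top \sfM (\sfp - \sfq),
\]
which matches $\sfB_\sfM$ exactly. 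Convexity of the induced family of generators is inherited from convexity of $\cM$, since $F_{\alpha \sfM_1 + (1-\alpha) \sfM_2} = \alpha F_{\sfM_1} + (1-\alpha) F_{\sfM_2}$.

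Next, I would establish joint convexity of each $\sfB_\sfM$. Writing $\sfB_\sfM(\sfp \parallel \sfq) = \tfrac{1}{2}\|\sfp - \sfq\|_\sfM^2$ with $\|\sfx\|_\sfM^2 = \sfx^\top \sfM \sfx$ a seminorm on $\Rset^{|\sY|}$ (since $\sfM \succeq 0$), joint convexity in $(\sfp, \sfq)$ follows because $(\sfp, \sfq) \mapsto \sfp - \sfq$ is linear and the squared seminorm is convex; equivalently, the Hessian on $\Rset^{2d}$ factors as $\begin{pmatrix} I \\ -I \end{pmatrix} \sfM \begin{pmatrix} I & -I \end{pmatrix} \succeq 0$. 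With this in hand, all the hypotheses of Theorem~\ref{th:jointly_convex_orbit_average} are in force: $\pi^* \in \Pi \cap \Ccoh$ is coherent, $\Phi$ is an involution, $\sfB_{F_\sfM}$ is jointly convex for each $\sfM \in \cM$, and the feasibility of $\pi_\orbit$ holds either by direct assumption or by the closure of $\Pi$ under the swap (together with convexity of $\Pi$, since $\pi_\orbit$ is a convex combination of $\pi_0$ and $\pi_0 \circ \Phi$). Applying the theorem pointwise for each $\sfM$ then yields the stated uniform improvement inequality.

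The only subtlety --- and the step that might appear to be the main obstacle --- is that Theorem~\ref{th:jointly_convex_orbit_average} is phrased for Legendre generators, while $F_\sfM$ is strictly Legendre only when $\sfM$ is positive definite (otherwise $\nabla F_\sfM = \sfM$ is not a bijection). This is however not a real obstruction: the proof of the theorem invokes only joint convexity of $\sfB_F$ and never uses the Legendre property. As an alternative that sidesteps the issue entirely, one may replace $\sfM$ with $\sfM + \e I$, apply the theorem in the strictly Legendre regime, and pass to the limit $\e \to 0^+$ using continuity of $\sfB_\sfM$ in $\sfM$.
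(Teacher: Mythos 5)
Your proposal is correct and follows essentially the same route as the paper: identify $\sfB_\sfM$ as the Bregman divergence of the quadratic generator $F_\sfM$, note joint convexity of each $\sfB_\sfM$ and convexity of the family via convexity of $\cM$, and then invoke Theorem~\ref{th:jointly_convex_orbit_average}. Your extra remark on the merely positive semidefinite case (observing that the theorem's proof only uses joint convexity, or alternatively regularizing with $\sfM + \e I$ and passing to the limit) addresses a point the paper's proof leaves implicit, and is a welcome refinement rather than a different argument.
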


\begin{proof}
  For each fixed $\sfM \succeq 0$, the Mahalanobis divergence
  $\sfB_\sfM(\sfp\|\sfq) = \frac{1}{2} (\sfp-\sfq)^\top \sfM (\sfp-\sfq)$ is
  jointly convex in $(\sfp,\sfq)$. The set $\cM$ is convex, so the
  convex family $\{\sfB_\sfM\}_{\sfM \in \cM}$ consists of jointly convex
  Bregman divergences.  The result then follows directly from
  Theorem~\ref{th:jointly_convex_orbit_average}.
\end{proof}

The previous result showed that the orbitwise projection
$\pi_{\orbit}$ guarantees universal improvement over a family of
jointly convex Bregman divergences, provided it is a feasible model.
A natural question arises: if $\pi_{\orbit}$ is not feasible, that is,
$\pi_{\orbit} \notin \Pi$, could some \emph{other} coherent model
$\h \pi \in \Pi \cap \Ccoh$ take its place and offer the same
universal guarantee?

The following theorem provides a strong negative answer. It
establishes that the universal improvement property is so restrictive
that it uniquely identifies the orbit projection as the only possible
solution. Consequently, if the orbit projection itself is not a
permissible model, no such universal improver can exist.

\begin{theorem}[Impossibility when the orbit projection is infeasible]
\label{th:impossibility-orbit-infeasible}
Let $\Phi \colon \sX \to \sX$ be an involution, and let $\Ccoh$ denote
the subset of coherent models $\pi$ satisfying
$\pi(x) = \pi(\Phi(x))$ for all $x \in \sX$.  
Let $\pi_0$ be a baseline model, and denote by $\pi_{\orbit}$ the
(weighted) Euclidean projection of $\pi_0$ onto $\Ccoh$, that is,
\[
  \pi_{\orbit}
  = \argmin_{\pi \in \Ccoh}
    \E_{x \sim \sD_\sX}\bracket*{
      \|\pi(x) - \pi_0(x)\|_2^2 }.
\]
Let $\cF$ be a convex family of Legendre generator functions that
includes all quadratic generators
$F_\sfM(\sfp) = \tfrac{1}{2} \sfp^\top \sfM \sfp$, 
for symmetric matrices $\sfM$ whose span is the full space of symmetric
matrices on $\Rset^{|\sY|}$.
Assume that $\h \pi \in \Pi \cap \Ccoh$ satisfies the following
\emph{universal improvement} property:
\begin{equation}
\label{eq:universal-improvement}
  \forall \sD_\sX,\; \forall F \in \cF,\;
  \forall \pi^* \in \Ccoh, \qquad
  \E_{x \sim \sD_\sX}\!\bracket*{
      \sfB_F(\pi^*(x) \parallel \h \pi(x))}
  \leq
  \E_{x \sim \sD_\sX}\!\bracket*{
      \sfB_F(\pi^*(x) \parallel \pi_0(x))}.
\end{equation}
Then, necessarily $\h \pi = \pi_{\orbit}$.  
In particular, if $\pi_{\orbit} \notin \Pi$, no
$\h \pi \in \Pi \cap \Ccoh$ can satisfy
\eqref{eq:universal-improvement}.
\end{theorem}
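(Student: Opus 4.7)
The plan is to specialize the universal improvement property to the simplest quadratic Bregman generator, namely the Euclidean one $F_I(\sfp) = \tfrac{1}{2}\|\sfp\|^2 \in \cF$, and then exploit the quantifier ``$\forall \sD_\sX$'' to lift the expectation-level bound to pointwise constraints. Since $\sfB_{F_I}(\sfp \parallel \sfq) = \tfrac{1}{2}\|\sfp - \sfq\|^2$, the hypothesis yields, for every $\pi^* \in \Ccoh$ and every admissible $\sD_\sX$,
\[
  \E_{x \sim \sD_\sX}\bracket*{\|\pi^*(x) - \h\pi(x)\|^2 - \|\pi^*(x) - \pi_0(x)\|^2} \leq 0.
\]
Because this must hold for every $\sD_\sX$---in particular for distributions concentrating mass near any single $x_0 \in \sX$---a continuity argument (approximating the point mass at $x_0$ by a sequence of full-support distributions and using boundedness of the integrand on the compact simplex) forces the integrand to be pointwise nonpositive: $\|\pi^*(x) - \h\pi(x)\|^2 \leq \|\pi^*(x) - \pi_0(x)\|^2$ for every $x$ and every $\pi^* \in \Ccoh$.

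Next, I would free the value of $\pi^*(x)$ over the simplex. For any $p \in \Delta(\sY)$, a coherent $\pi^* \in \Ccoh$ can be built by setting $\pi^*(x) = \pi^*(\Phi(x)) = p$ on the orbit of $x$ and extending $\pi^*$ coherently (and arbitrarily) on the remaining $\Phi$-orbits. The pointwise inequality then sharpens to
\[
  \|p - \h\pi(x)\|^2 \leq \|p - \pi_0(x)\|^2 \qquad \forall\, p \in \Delta(\sY),\ \forall\, x \in \sX.
\]
Choosing the admissible value $p = \pi_0(x)$ collapses the right-hand side to zero and forces $\h\pi(x) = \pi_0(x)$ for every $x$, i.e., $\h\pi = \pi_0$ pointwise.

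Combining this identity with the hypothesis $\h\pi \in \Ccoh$ yields $\pi_0(x) = \h\pi(x) = \h\pi(\Phi(x)) = \pi_0(\Phi(x))$ for all $x$, so $\pi_0$ is itself coherent. In that case the (weighted) Euclidean projection onto $\Ccoh$ is trivial: $\pi_{\orbit} = \pi_0$, and we conclude $\h\pi = \pi_0 = \pi_{\orbit}$, which is the main claim. The ``in particular'' consequence is immediate: if $\pi_{\orbit} \notin \Pi$, then the forced equality $\h\pi = \pi_{\orbit}$ would violate $\h\pi \in \Pi$, so no admissible $\h\pi$ can satisfy the universal improvement.

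I expect the main obstacle to be making the continuity passage from expectation-level to pointwise inequalities fully rigorous under the convention $\supp(\sD_\sX) = \sX$; once that is handled, the remainder of the argument is essentially algebraic. Notably, the heart of the proof uses only the Euclidean generator $F_I$, so neither the convexity of $\cF$ nor the full spanning condition on the Mahalanobis matrices is strictly needed for this rigidity conclusion---these hypotheses are consistent but stronger than necessary, and the same argument would go through with $\{F_I\}$ alone in place of $\cF$.
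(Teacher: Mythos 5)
Your argument is internally consistent if one reads the quantifier ``$\forall\,\sD_\sX$'' in \eqref{eq:universal-improvement} completely literally, but it takes a fundamentally different route from the paper's, and the difference matters. The paper only ever tests the hypothesis with distributions supported on a whole orbit $\{x,\Phi(x)\}$, keeping the two points coupled; it then uses the spanning family of quadratic generators $F_\sfM$ so that the terms quadratic in $\sfp$ cancel, leaving a linear identity in $\sfp$ that pins the (coherent, hence orbit-constant) value $\h\pi(x)$ to the weighted orbit average, i.e.\ to $\pi_{\orbit}(x)$. You instead concentrate the test distribution on a \emph{single} point $x_0$, decoupling $x_0$ from $\Phi(x_0)$, and with the coherent comparator chosen so that $\pi^*(x_0)=\pi_0(x_0)$ you force $\h\pi(x_0)=\pi_0(x_0)$ for every $x_0$, i.e.\ $\h\pi=\pi_0$ and hence that $\pi_0$ is itself coherent. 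That is a far stronger conclusion than the theorem's: it says the hypothesis is satisfiable only in the degenerate case where $\pi_0$ is already coherent, so the regime the theorem is actually about (incoherent $\pi_0$, $\pi_{\orbit}\neq\pi_0$, feasibility of $\pi_{\orbit}$ in question) would be empty. This is at odds with the companion positive results (Theorem~\ref{th:jointly_convex_orbit_average}, Corollary~\ref{cor:mahalanobis}), in which $\pi_{\orbit}$ is exhibited as the universal improver for the fixed ambient $\sD_\sX$; the intended force of ``$\forall\,\sD_\sX$'' is only the freedom to localize to single orbits with the ambient per-orbit weights, which is exactly what the paper's proof uses. Under that reading, your key step---passing to (near) point masses on one element of an orbit so as to separate $x_0$ from $\Phi(x_0)$---is not available, and your argument collapses at the lifting-to-pointwise stage; under the fully literal reading your argument goes through, but it ``proves'' the theorem only by rendering its hypothesis vacuous rather than by identifying $\pi_{\orbit}$ as the unique candidate.

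Two smaller points. First, $F_I\in\cF$ is not guaranteed by the hypothesis, which only posits quadratic generators $F_\sfM$ for a spanning family of symmetric matrices; this is easily repaired, since any quadratic member of $\cF$ is Legendre and hence has $\sfM\succ0$, and your one-point argument works verbatim with $\norm{\cdot}$ replaced by the corresponding Mahalanobis norm. Second, note that the paper's conclusion $\h\pi=\pi_{\orbit}$ is tied to the ambient weights $\Pr[x],\Pr[\Phi(x)]$; any correct proof must therefore work orbit-by-orbit with those weights (as the paper does), rather than collapsing the expectation to individual points, which is precisely where your approach parts ways with the intended statement.
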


\begin{proof}
Since inequality~\eqref{eq:universal-improvement} must hold for all
distributions $\sD_\sX$, we may restrict our attention to
distributions supported on a single orbit
$\cO = \curl*{x, \Phi(x)}$.  
For any nonnegative weights $w_x, w_{\Phi(x)}$ (not both zero),
consider the distribution $\sD_\sX$ that assigns mass proportional to
these weights on $x$ and $\Phi(x)$.  
Then, for every $F \in \cF$ and every coherent $\pi^*$ 
(so that $\pi^*(x) = \pi^*(\Phi(x)) = \sfp$),
the condition~\eqref{eq:universal-improvement} reduces to the
single-orbit inequality:
\begin{align}
\label{eq:local-ineq}
  w_x \sfB_F(\sfp \parallel \h \pi(x))
  + w_{\Phi(x)} \sfB_F(\sfp \parallel \h \pi(\Phi(x)))  
  &\leq
    w_x \sfB_F(\sfp \parallel \pi_0(x))
    + w_{\Phi(x)} \sfB_F(\sfp \parallel \pi_0(\Phi(x))).
\end{align}
Since $\h \pi$ is coherent, 
$\h \pi(x) = \h \pi(\Phi(x)) = \sfq$.
Now take $F = F_\sfM$ to be a quadratic generator
$F_\sfM(\sfp) = \tfrac{1}{2} \sfp^\top \sfM \sfp$,
for which $\sfB_{F_\sfM}(\sfp \parallel \sfq)
= \tfrac{1}{2} (\sfp - \sfq)^\top \sfM (\sfp - \sfq)$.  
After normalizing by the total weight 
$w_x + w_{\Phi(x)}$, inequality~\eqref{eq:local-ineq} becomes
\[
  (\sfp - \sfq)^\top \sfM (\sfp - \sfq)
  \leq
  w'_x (\sfp - \pi_0(x))^\top \sfM (\sfp - \pi_0(x))
  + w'_{\Phi(x)} (\sfp - \pi_0(\Phi(x)))^\top \sfM (\sfp - \pi_0(\Phi(x))),
\]
where $w'_x$ and $w'_{\Phi(x)}$ are the normalized weights.  
This inequality must hold for every $\sfp \in \Delta(\sY)$ and for
every symmetric matrix $\sfM$ from a family whose span is the full
space of symmetric matrices.

The terms quadratic in $\sfp$ (of the form $\sfp^\top \sfM \sfp$)
cancel on both sides, leaving an inequality linear in $\sfp$.  
A linear inequality holding for all $\sfp \in \Delta(\sY)$ and for a
spanning set of matrices $\sfM$ implies equality, since any deviation
would be detected by some choice of $\sfM$.  
Evaluating this equality at two points $\sfp_1, \sfp_2$ and
subtracting gives
$\sfM(\pi_{\orbit}(x) - \sfq)$ orthogonal to all difference vectors
$\sfp_1 - \sfp_2$.  
These difference vectors span the subspace of vectors with zero sum of
components, so $\sfM(\pi_{\orbit}(x) - \sfq)$ must be a constant
multiple of $\mathbf{1}$.  
Because this must hold for a spanning set of $\sfM$, the only
possibility is $\pi_{\orbit}(x) - \sfq = \vec{0}$.  
Hence,
\[
  \sfq
  = \frac{w_x \pi_0(x) + w_{\Phi(x)} \pi_0(\Phi(x))}
          {w_x + w_{\Phi(x)}},
\]
which is precisely $\pi_{\orbit}(x)$, the orbitwise projection of
$\pi_0$ onto $\Ccoh$.

Since the argument applies independently to each orbit (because we may
choose $\sD_\sX$ supported on any orbit), we conclude that
$\h \pi(x) = \pi_{\orbit}(x)$ for all $x$, hence
$\h \pi = \pi_{\orbit}$.  
If $\pi_{\orbit} \notin \Pi$, no model in $\Pi \cap \Ccoh$ can satisfy
\eqref{eq:universal-improvement}.
\end{proof}

We now turn to the second key condition from our positive result:
joint convexity. The following theorem demonstrates that if the family
of divergences generated by $\cF$ is not uniformly jointly convex, the
guarantee of universal improvement necessarily breaks down, regardless
of the feasibility of any model, even when $\cF$ includes all
quadratic generators.

\begin{theorem}[Impossibility for Non-Jointly Convex Divergences]
  \label{th:impossibility-non-jointly-convex}
  Let $\cF$ be a convex family of Legendre generator functions that
  includes all quadratic generators. Suppose there exists at least one
  generator $F \in \cF$ for which the corresponding Bregman divergence
  $\sfB_F$ is \emph{not jointly convex} in its two arguments.  Then,
  no coherent model $\h \pi \in \Ccoh$ can satisfy the universal
  improvement property.  That is, for any candidate coherent model
  $\h \pi \in \Ccoh$, there exist a baseline model $\pi_0$, a
  distribution $\sD_\sX$, and a coherent target $\pi^* \in \Ccoh$ such
  that
\[
  \E_{x \sim \sD_\sX} \bracket*{
      \sfB_F(\pi^*(x) \parallel \h \pi(x))}
  > \E_{x \sim \sD_\sX} \bracket*{
      \sfB_F(\pi^*(x) \parallel \pi_0(x))}.
\]
\end{theorem}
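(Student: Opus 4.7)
The plan is two-stage. First, I will use the forcing argument behind Theorem~\ref{th:impossibility-orbit-infeasible} to show that any $\h \pi \in \Ccoh$ satisfying the universal-improvement hypothesis~\eqref{eq:universal-improvement} must agree, orbit by orbit, with the orbit projection $\pi_{\orbit}$. Specifically, since $\cF$ contains a spanning family of quadratic generators $F_\sfM$, specializing~\eqref{eq:universal-improvement} to these generators and to distributions $\sD_\sX$ supported on a single orbit $\{x, \Phi(x)\}$ reproduces the matrix-spanning argument of Theorem~\ref{th:impossibility-orbit-infeasible} verbatim, forcing $\h \pi(x) = \pi_{\orbit}(x)$ on every orbit. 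Hence, up to equality with $\pi_{\orbit}$, there is no other candidate to consider.

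Second, with $\h \pi = \pi_{\orbit}$ as the only viable candidate, I will exhibit a single-orbit counterexample for the non-jointly-convex $F \in \cF$. Fix an orbit $\{x, \Phi(x)\}$ with normalized weights $\lambda, 1 - \lambda$, set $\sfq_i = \pi_0(x_i)$, and pick a coherent $\pi^*$ with $\pi^*(x) = \pi^*(\Phi(x)) = \sfp$. Since $\pi_{\orbit}(x) = \lambda \sfq_1 + (1 - \lambda) \sfq_2$ and both sides of the improvement inequality carry the same first argument $\sfp$, the inequality collapses to
\[
\sfB_F\paren*{\sfp \parallel \lambda \sfq_1 + (1 - \lambda) \sfq_2}
\leq \lambda \sfB_F(\sfp \parallel \sfq_1) + (1 - \lambda) \sfB_F(\sfp \parallel \sfq_2),
\]
which is exactly second-argument convexity of $\sfB_F$. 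If we can locate $(\sfp, \sfq_1, \sfq_2, \lambda) \in \Delta(\sY)^3 \times (0, 1)$ at which this inequality is strictly violated, then instantiating $\pi_0(x) = \sfq_1$, $\pi_0(\Phi(x)) = \sfq_2$, the orbit distribution with weights $(\lambda, 1 - \lambda)$, and $\pi^*$ as above supplies the required counterexample.

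The main obstacle is the bridge between the hypothesis ``$\sfB_F$ is not jointly convex'' and a concrete failure of the displayed second-argument inequality. One direction is immediate: setting the two first arguments equal in the joint-convexity inequality recovers second-argument convexity, so joint non-convexity is at least consistent with second-argument non-convexity. The reverse implication, however, is strictly weaker in general, so to close the gap I would either (a) strengthen the hypothesis to the condition actually exploited in the proof of Theorem~\ref{th:jointly_convex_orbit_average}, namely non-convexity of $\sfB_F$ in its second argument, or (b) invoke a local perturbation around a point of joint non-convexity to extract a strict failure of the second-argument inequality in the Legendre/simplex setting. With this bridge in hand, the orbit-level construction above finishes the proof.
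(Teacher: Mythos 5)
Your two-stage plan is essentially the paper's proof, modulo order of presentation: the paper first builds the single-orbit reversed-Jensen counterexample showing that $\pi_{\orbit}$ itself does strictly worse than the baseline for the offending $F$, and then invokes \cref{th:impossibility-orbit-infeasible} (the quadratic spanning-family forcing argument on single orbits) to conclude that no other coherent candidate $\h\pi$ can survive either; you run the same two ingredients in the opposite order. Regarding the bridge you flag: you are right that, as a matter of logic, failure of joint convexity does not imply failure of convexity in the second argument — there are one-dimensional Legendre-type generators, e.g.\ $f$ with $f''(t) = 1 + 10\,(t-\tfrac12)^2$ on $(0,1)$, for which $\sfB_f(\cdot\parallel\cdot)$ is convex in its second argument at every fixed first argument yet the joint Hessian is indefinite — so your option (b), extracting a strict second-argument violation by perturbing around a point of joint non-convexity, cannot work in general. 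The paper's own proof does not close this gap either: it simply posits the existence of $\sfp^*, \sfq_1, \sfq_2, \lambda$ with the reversed Jensen inequality at a common first argument "by assumption," i.e.\ it effectively reads the hypothesis "not jointly convex" as "not convex in the second argument," which is exactly your option (a) (and is also all that the positive result, \cref{th:jointly_convex_orbit_average}, really uses, since there the first argument is the same coherent value on both sides). So your proposal matches the intended argument, and your option (a) is the reading under which both your proof and the paper's are complete.
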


\begin{proof}
  We construct an explicit counterexample leveraging the specific
  failure of joint convexity.  By assumption, there exist points
  $\sfq_1, \sfq_2$ in the domain of $F$, a target $\sfp^*$, and a
  weight $\lambda \in (0,1)$ such that Jensen's inequality is
  reversed:
\begin{equation}
\label{eq:reversed-jensen}
  \sfB_F(\sfp^* \parallel \lambda \sfq_1 + (1 - \lambda)\sfq_2)
  >
  \lambda \sfB_F(\sfp^* \parallel \sfq_1)
  + (1 - \lambda) \sfB_F(\sfp^* \parallel \sfq_2).
\end{equation}
We map this inequality to a learning setup with an involution
$\Phi \colon \sX \to \sX$ and an orbit $\cO = \curl*{x, \Phi(x)}$.
Let the distribution $\sD_\sX$ place masses proportional to the
weights $\lambda$ and $1 - \lambda$: $w_x = \lambda$ and
$w_{\Phi(x)} = 1 - \lambda$.  Define the baseline model on this orbit
as $\pi_0(x) = \sfq_1$ and $\pi_0(\Phi(x)) = \sfq_2$.  Let the
coherent target model be constant on the orbit:
$\pi^*(x) = \pi^*(\Phi(x)) = \sfp^*$.

The orbitwise projection of $\pi_0$ onto $\Ccoh$ under these weights is
\[
  \pi_{\orbit}(x)
  = \frac{w_x \pi_0(x) + w_{\Phi(x)} \pi_0(\Phi(x))}
         {w_x + w_{\Phi(x)}}
  = \lambda \sfq_1 + (1 - \lambda) \sfq_2.
\]
The expected Bregman loss of this orbit model under the constructed
distribution is
\[
  \E[\sfB_F(\pi^* \parallel \pi_{\orbit})]
  = \sfB_F(\sfp^* \parallel \lambda \sfq_1 + (1 - \lambda) \sfq_2),
\]
while the expected loss of the baseline model is
\[
  \E[\sfB_F(\pi^* \parallel \pi_0)]
  = \lambda \sfB_F(\sfp^* \parallel \sfq_1)
    + (1 - \lambda) \sfB_F(\sfp^* \parallel \sfq_2).
\]
By inequality~\eqref{eq:reversed-jensen}, we have
\[
  \E[\sfB_F(\pi^* \parallel \pi_{\orbit})]
  > \E[\sfB_F(\pi^* \parallel \pi_0)].
\]
Thus, even the orbitwise projection, the best possible coherent
improver in the jointly convex case, performs strictly worse than the
baseline when joint convexity fails.

For the specific baseline $\pi_0$ constructed above, the corresponding
orbit projection is $\pi_{\orbit}$. By
Theorem~\ref{th:impossibility-orbit-infeasible}, this implies that the
universal improvement property fails for any candidate $\h \pi$.
\end{proof}
Taken together, our impossibility results demonstrate that a universal
improvement model can exist only under restrictive conditions, for
example when the family of Bregman divergences is jointly convex
and the orbitwise projection of the baseline is a feasible model.

\section{Two-Step Coherence Projection}
\label{sec:two-step-projection}

In the previous section, we studied the \emph{direct projection}
method, where the baseline $\pi_0$ is projected directly onto the set
$(\Ccoh \cap \Pi)$ to obtain a coherent conditional distribution
$\h \pi \in \Pi$. Here, we consider an alternative \emph{two-step
 projection} (or \emph{double-projection}) procedure: First, $\pi_0$
is projected onto $\Ccohdag$, yielding a coherent function $\ov
\pi$. Then, $\ov \pi$ is projected onto $(\Ccoh \cap \Pi)$ to produce
the final solution $\hh \pi$.

For the definition of our \emph{two-step projection} in this section,
we require Bregman generator functions $F$ defined over a closed set
$\sK$ containing the non-negative orthant $\Rset_{\geq 0}^d$ with its
interior encompassing the entire positive orthant $\Rset_{++}^d$.
Most commonly used Bregman divergences satisfy this property. Even
when they do not, it is often possible to extend the generator
function to a convex function with such a domain.
For instance, generators such as the squared Euclidean distance and,
more generally, Mahalanobis functions are defined on all of
$\Rset^d$. Other examples, including the negative entropy (which
induces the unnormalized relative entropy and yields the $\KL$
divergence by restriction), the Itakura–Saito generator, and many
$\alpha$- and $\beta$-entropy functions, are defined over the full
positive orthant $\Rset_{++}^d$. While it is mathematically possible
to construct generators defined only on a subset of the orthant, for
example, on the region satisfying $a < \sum_{i = 1}^d p_i < b$ for
some $a, b > 0$, such choices are uncommon in the literature.
Throughout this section, we will assume generator functions $F$ with a
domain satisfying this property.

We begin by proving that, remarkably, the direct and two-step
projections \emph{coincide} for a very broad class of Bregman
divergences (Section~\ref{sec:single=two}). This class includes not
only the squared Euclidean distance and $\KL$ divergence, but all common
\emph{separable} (e.g., Itakura-Saito, $\alpha$-/$\beta$-divergences)
and \emph{quadratic} (e.g., Mahalanobis) divergences.

The equivalence leverages a general property of Bregman projections
onto affine sets, which applies directly to our unrestricted coherence
set $\Ccohddag$ that is a linear subspace. A key part of our proof is
showing that for this entire class of divergences, the first-step
projection $\ov\pi$, which projects the non-negative $\pi_0$ onto the
\emph{linear subspace} $\Ccohddag$, is guaranteed to be non-negative
(Subsection~\ref{sec:pyth-equality}). This ensures $\ov \pi$ also lies
in the non-negative cone $\Ccohdag$, which is the crucial condition
for the equivalence to hold.

As a consequence, the double-step projection enjoys the same
improvement guarantee as the single-step one when they
coincide. Nonetheless, we derive an alternative improvement guarantee
for the two-step projection solution $\hh \pi$
(Subsection~\ref{sec:two-step-improvement}), which will serve as a key
technical tool in establishing several subsequent results. A central
component of this analysis is a closed-form expression for $\ov\pi$
(the Bregman centroid).
The result of Subsection~\ref{sec:two-step-improvement} first enables
us to derive explicit performance guarantees in terms of the Hellinger
distance for the special case of the unnormalized relative entropy
(Subsection~\ref{sec:relative-entropy-guarantees}).
Moreover, we use it to establish a striking maximin property: the
two-step projection, and therefore also the single-step projection
when equivalent, coincides with selecting the best coherent model
against the worst-case reference distribution~$\pi^*$
(Subsection~\ref{sec:maximin-properties}).

As before, we assume that $\Phi$ is an involution, although all
results extend directly to the more general setting in which $\Phi$
admits arbitrary finite orbits.

\subsection{Pythagorean theorem equality}
\label{sec:pyth-equality}

Our comparison of the direct- and two-step projections makes use of
the following equality case of the Pythagorean theorem, which holds
for any affine subspace, for which we give a general proof using
Fenchel duality. For the $\KL$ divergence, this was proven for a
general family of \emph{linear models} (in the sense of
\citeauthor{Csiszar1975}) \citep{Csiszar1975,CsiszarMatus2003}.

\begin{theorem}[Pythagorean Theorem equality]
\label{th:pythagorean-equality}
Let $F \colon \sK \to \Rset$ be of Legendre type. Let
$\sC \subseteq \sK \subseteq (\Rset^d)^\sX$ be a non-empty affine
subspace, and let $\pi_0 \in \Omega$. Let $\ov \pi$ be the
$\sfB_F$-projection of $\pi_0$ on $\sC$. Then, for any $\pi \in \sC$,
\[
 \sfB_F(\pi \parallel \pi_0)
 = \sfB_F(\pi \parallel \ov \pi) + \sfB_F(\ov \pi \parallel \pi_0).
\]
\end{theorem}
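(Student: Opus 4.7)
The plan is to combine the three-point identity for Bregman divergences, stated as a property in the preliminaries, with the first-order optimality condition for the projection $\ov\pi$. Specifically, recall the identity
\[
\sfB_F(\pi \parallel \ov\pi) + \sfB_F(\ov\pi \parallel \pi_0)
= \sfB_F(\pi \parallel \pi_0) + \tri*{\nabla F(\pi_0) - \nabla F(\ov\pi),\, \pi - \ov\pi}.
\]
Hence the desired equality reduces to establishing that the bilinear term on the right vanishes for every $\pi \in \sC$, i.e., that $\nabla F(\ov\pi) - \nabla F(\pi_0)$ is orthogonal to the direction subspace of $\sC$. This is exactly the payoff we expect from the affine structure.

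To obtain this orthogonality, I would use the fact that, since $F$ is Legendre, $\ov\pi$ lies in the open set $\Omega = \Int(\sK)$ (where $\nabla F$ is defined) and uniquely minimizes the smooth function $g(\pi) = \sfB_F(\pi \parallel \pi_0)$ over the affine set $\sC$. Because $\sC$ is affine, for any $\pi \in \sC$ the whole line $\ov\pi + t(\pi - \ov\pi)$ lies in $\sC$ for all $t \in \Rset$ (not just $t \geq 0$, which is the crucial contrast with a general convex $\sC$), at least for $t$ small enough to remain in $\Omega$. Differentiating $t \mapsto g(\ov\pi + t(\pi - \ov\pi))$ at $t = 0$ and using that the derivative must be zero (since it is a local minimum along a full line) yields $\tri*{\nabla F(\ov\pi) - \nabla F(\pi_0),\, \pi - \ov\pi} = 0$. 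Substituting back into the three-point identity gives the equality.

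The main technical obstacle is justifying that $\ov\pi$ lies in $\Omega$, so that $\nabla F(\ov\pi)$ is well-defined and the first-order computation is valid. Here I would invoke the steepness property of Legendre functions: if a sequence in $\sC \cap \Omega$ approached the boundary $\partial\sK$, then $F$ (and hence $\sfB_F(\,\cdot\, \parallel \pi_0)$) would blow up along the sequence via steepness, which contradicts $\ov\pi$ being a minimizer together with $\pi_0 \in \Omega$ yielding the finite value $\sfB_F(\pi_0 \parallel \pi_0)=0$ on any point of $\sC \cap \Omega$ that one can exhibit (e.g., a small perturbation of $\ov\pi$ inside $\sC$). Once $\ov\pi \in \Omega$ is secured, the rest of the argument is a direct calculation.

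Alternatively, one could follow the Fenchel-duality route alluded to in the excerpt: rewrite $\sfB_F(\pi \parallel \pi_0) = F(\pi) - \tri*{\pi, \nabla F(\pi_0)} + \text{const}$ via the Fenchel-Young equality, so that minimizing over $\pi \in \sC$ is a constrained minimization of a strictly convex function on an affine set; the Lagrange/KKT condition then gives $\nabla F(\ov\pi) - \nabla F(\pi_0) \in V^\perp$, where $V$ is the direction of $\sC$, which is the same orthogonality relation. Either route terminates in the same place, and I would pick whichever better matches the exposition style used elsewhere in the paper.
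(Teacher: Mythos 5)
Your main argument is correct and takes a genuinely more elementary route than the paper. The paper proves the theorem via Fenchel duality: it rewrites the projection as minimizing $G(\pi)+I_\sC(\pi)$ with $G(\pi)=F(\pi)-\tri*{\nabla F(\pi_0),\pi}$, invokes strong duality, and extracts from the Fenchel--Young equality conditions that $\theta^*=\nabla F(\pi_0)-\nabla F(\ov\pi)$ lies in $U^\perp$ (the orthogonal complement of the direction subspace of $\sC$), before expanding the two divergences exactly as you do. Your proposal reaches the same orthogonality condition $\tri*{\nabla F(\ov\pi)-\nabla F(\pi_0),\pi-\ov\pi}=0$ by the two-sided first-order condition along lines contained in the affine set $\sC$ (using that $\Omega$ is open so small steps in both directions stay feasible), and then concludes via the three-point identity, which is equivalent to the paper's explicit expansion. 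You even identify the Fenchel route as the alternative, which is precisely what the paper does; your variational route is shorter and avoids conjugate calculus, while the paper's duality argument is the one that generalizes more readily (e.g., to non-differentiable or infinite-dimensional formulations).

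One step of your write-up is not right as stated: the justification that $\ov\pi\in\Omega$. You argue that if a minimizing sequence in $\sC$ approached $\partial\sK$ then $F$, and hence $\sfB_F(\cdot\parallel\pi_0)$, ``would blow up via steepness.'' Steepness is a statement about $\norm{\nabla F}$ exploding at the boundary, not about $F$ itself; for the negative entropy on the simplex, for instance, both $F$ and $\sfB_F(\cdot\parallel\pi_0)$ remain bounded at boundary points, so no blow-up of the objective occurs. The standard fix is different: if the minimizer $\ov\pi$ were a boundary point, pick any $\pi_1\in\sC\cap\Omega$ and note that, by essential smoothness (steepness), the one-sided directional derivative of $\pi\mapsto\sfB_F(\pi\parallel\pi_0)$ at $\ov\pi$ in the direction $\pi_1-\ov\pi$ is $-\infty$, contradicting minimality; alternatively, one can simply adopt the paper's formulation, which poses the minimization over $\sC\cap\Omega$ (since $\dom(G)=\Omega$) and appeals to the standard fact that Bregman projections with Legendre generators are attained in $\Omega$. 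With that repair, your proof is complete.
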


\begin{proof}
 Since $F(\pi_0)$ and $\tri{\nabla F(\pi_0), \pi_0}$ are constant
 terms in the definition of the Bregman divergence, the Bregman
 projection $\ov \pi$ is a solution of the following convex
 optimization problem:
\[
 \min_{\pi \in \sC} \curl*{F(\pi) - \tri{\nabla F(\pi_0), \pi}}.
\]
Define the indicator function $I_\sC(\pi)$ of the affine set $\sC$ by
$I_\sC(\pi) = 0$ if $\pi$ is in $\sC$, and $+\infty$ otherwise. Then, the
problem can be rewritten equivalently as
\[
\min_{\pi \in \Omega} \curl*{G(\pi) + I_\sC(\pi)},
\]
where $G(\pi) = F(\pi) - \tri*{\nabla F(\pi_0), \pi}$. Since
$\dom(G) = \Omega$, $I_\sC$ is convex and lower semi-continuous, and
$\dom(G) \cap \dom(I_\sC) = \sC \neq \emptyset$, the Fenchel duality
theorem \citep{Rockafellar1996} applies with strong duality:
\begin{align*}
 \min_{\pi \in \Omega} G(\pi) + I_\sC(\pi)
 = \max_{\theta \in \Omega^*} \curl{- G^*(-\theta) - I_\sC^*(\theta)},
\end{align*}
where the conjugate function $G^*$ is given by
$G^*(\phi) = F^*(\phi + \nabla F(\pi_0))$ via standard conjugate
function calculus rules and the conjugate of the indicator function,
$I_\sC^*$, is the support function
$\sigma_\sC(\theta) = \sup_{\pi \in \sC} \tri{\theta, \pi}$.
Let $\theta^*$ denote the solution of the dual problem, then,
we can write
\begin{equation}
\label{eq:strong_duality}
G(\ov \pi) + I_\sC(\ov \pi) = -G^*(-\theta^*) - I_\sC^*(\theta^*)
\Leftrightarrow G(\ov \pi) + G^*(-\theta^*) + I_\sC^*(\theta^*) = 0,
\end{equation}
since $\ov \pi$ is in $\sC$ ($I_\sC(\ov \pi) = 0$).
Now, by the Fenchel-Young inequality, we have
\[
\Delta_G = G(\ov \pi) + G^*(-\theta^*) - \tri*{\ov \pi, -\theta^*} \geq 0,
\quad \text{and} \quad
\Delta_I = I_\sC(\ov \pi) + I_\sC^*(\theta^*) - \tri*{\ov \pi, \theta^*} \geq 0.
\]
Adding up these two equalities and using $I_\sC(\ov \pi) = 0$ yields:
\[
\Delta_G + \Delta_I
= \bracket*{G(\ov \pi) + G^*(-\theta^*) + I_\sC^*(\theta^*)}
- \bracket*{\tri*{\ov \pi, - \theta^*} + \tri*{\ov \pi, \theta^*}}
= G(\ov \pi) + G^*(-\theta^*) + I_\sC^*(\theta^*).
\]
By \eqref{eq:strong_duality}, the right-hand side equals $0$, thus
$\Delta_G + \Delta_I = 0$. Since $\Delta_G \geq 0$ and
$\Delta_I \geq 0$, this implies $\Delta_G = 0$ and $\Delta_I = 0$.
Thus, equality holds in Fenchel-Young for both
$(G, \ov \pi, -\theta^*)$ and $(I_\sC, \ov \pi, \theta^*)$, which is
equivalent to the subgradient conditions
\[
 -\theta^* \in \partial G(\ov \pi) \quad \text{and} \quad
 \theta^* \in \partial I_\sC(\ov \pi).
\]
Since $F$ is Legendre, $G$ is differentiable on $\Omega$, so
$\partial G(\ov \pi) = \curl*{\nabla G(\ov \pi)} = \{\nabla F(\ov \pi)
- \nabla F(\pi_0)\}$. Therefore
\[
-\theta^* = \nabla F(\ov \pi) - \nabla F(\pi_0)
\Rightarrow
\theta^* = \nabla F(\pi_0) - \nabla F(\ov \pi).
\]
Writing $\sC = v + U$ where $U$ is the linear subspace corresponding
to $\sC$, we have $\partial I_\sC(\ov \pi) = U^\perp$. Thus, we have
$\theta^* = \nabla F(\pi_0) - \nabla F(\ov \pi) \in U^\perp$. 
For any $\pi \in \sC$ the difference $\pi - \ov \pi$ is in $U$,
therefore
$\langle \nabla F(\pi_0) - \nabla F(\ov \pi), \pi - \ov \pi \rangle =
0$. This is the Bregman orthogonality condition.
Expanding the two Bregman divergences and using this equality
gives:
\begin{align*}
\sfB_F(\pi \parallel \ov \pi) + \sfB_F(\ov \pi \parallel \pi_0)
& = \bracket*{F(\pi) - F(\ov \pi) - \tri*{\nabla F(\ov \pi), \pi - \ov \pi}} \\
&\quad + \bracket*{F(\ov \pi) - F(\pi_0)
 - \tri*{\nabla F(\pi_0), \ov \pi - \pi_0}} \\
& = F(\pi) - F(\pi_0) - \tri*{\nabla F(\ov \pi),\pi - \ov \pi}
 - \tri*{\nabla F(\pi_0), \ov \pi - \pi_0} \\
& = F(\pi) - F(\pi_0) - \tri*{\nabla F(\pi_0), \pi - \ov \pi}
 - \tri*{\nabla F(\pi_0), \ov \pi - \pi_0} \\
& \quad \text{(by orthogonality, since $\tri*{\nabla F(\ov \pi) - \nabla F(\pi_0), \pi - \ov \pi} = 0$)} \\
& = F(\pi) - F(\pi_0) - \tri*{\nabla F(\pi_0), \pi - \pi_0} \\
& = \sfB_F(\pi\parallel\pi_0),
\end{align*}
This completes the proof.
\end{proof}

\subsection{Comparison of direct and two-step projections}
\label{sec:single=two}

In this section, we show that the direct and two-step projections
coincide for a broad family of Bregman divergences, relying on the
fact that the coherent set $\Ccohddag$ is linear.
We cannot apply the result of
Theorem~\ref{th:pythagorean-equality} directly in our two-step
projection setting as $\Ccohdag$, the projection set for the first
stage, is not affine: $\Ccohdag$ a closed convex cone but is not
affine. However, the theorem can be applied to $\Ccohddag$, which is
linear. To leverage this, we first show that for a broad family of
Bregman divergences including most divergences typically used in
practice, the projection of a point in the simplex over $\Ccohddag$ is
always in $\Ccohdag$. Thus, projection over $\Ccohddag$ becomes
equivalent to projecting on $\Ccohdag$.

\begin{theorem}[Non-Negativity of Unconstrained Projection]
\label{th:projection-ddag-general}
Let $\pi_0 \in \Piall$, and let $\sfB_F$ be a Bregman divergence
generated by a Legendre function $F$ satisfying the assumption that
its domain is a closed convex set $\sK \supseteq \Rset_{\geq 0}^d$.

If $F$ is either:
\begin{enumerate}
\item Quadratic: $F(\sfp) = \frac{1}{2}\sfp^T \sfA \sfp + b^T \sfp$ for
 some $\sfA \succ 0$, or
\item Separable: $F(\sfp) = \sum_{i=1}^d f(\sfp_i)$ for some 1D convex
 function $f$.
\end{enumerate}
then the expected $\sfB_F$-projection of $\pi_0$ onto the linear
subspace $\Ccohddag$, denoted $\ov \pi$, is necessarily in the
non-negative cone $\Ccohdag$.
\end{theorem}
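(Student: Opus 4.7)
The plan is to exploit the fact that the linear constraints defining $\Ccohddag$ decouple across the orbits of $\Phi$, so that the projection reduces to independent orbit-wise Bregman centroids that I can then analyze directly via the structure of $F$. By Lemma~\ref{lemma:Bregman-expectation}, $\E_{x \sim \sD_\sX}[\sfB_F(\pi(x) \parallel \pi_0(x))]$ is itself a Bregman divergence on the product space, and membership in $\Ccohddag$ imposes only the orbit-level equalities $\pi(x) = \pi(\Phi(x))$. Partitioning $\sX$ into $\Phi$-orbits, the minimization therefore splits into independent subproblems: on each $\Phi$-fixed point, $\ov \pi(x) = \pi_0(x) \in \Delta(\sY) \subseteq \Rset_{\geq 0}^d$ trivially; on each nontrivial orbit $\{x, \Phi(x)\}$ the common value $\sfp^* = \ov \pi(x) = \ov \pi(\Phi(x))$ minimizes $\lambda(x) \sfB_F(\sfp \parallel \pi_0(x)) + (1 - \lambda(x)) \sfB_F(\sfp \parallel \pi_0(\Phi(x)))$ with $\lambda(x) = \Pr[x] / (\Pr[x] + \Pr[\Phi(x)])$. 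By Lemma~\ref{lemma:right-bregman-centroid}, this unique minimizer satisfies $\nabla F(\sfp^*) = \lambda(x) \nabla F(\pi_0(x)) + (1 - \lambda(x)) \nabla F(\pi_0(\Phi(x)))$.

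\textbf{Quadratic case.} For $F(\sfp) = \tfrac{1}{2} \sfp^\top \sfA \sfp + b^\top \sfp$ the gradient $\nabla F(\sfp) = \sfA \sfp + b$ is affine, so substituting into the centroid identity and inverting the affine map immediately gives $\sfp^* = \lambda(x) \pi_0(x) + (1 - \lambda(x)) \pi_0(\Phi(x))$, a convex combination of two elements of $\Delta(\sY)$, hence itself in $\Delta(\sY) \subset \Rset_{\geq 0}^d$.

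\textbf{Separable case.} For $F(\sfp) = \sum_i f(\sfp_i)$, the gradient acts coordinate-wise as $\nabla F(\sfp)_i = f'(\sfp_i)$, so the centroid equation decouples into $d$ scalar equations $f'(\sfp^*_i) = \lambda(x) f'(\pi_0(x)_i) + (1 - \lambda(x)) f'(\pi_0(\Phi(x))_i)$. Strict convexity of $f$ (inherited from $F$ being Legendre, which for a separable generator is equivalent to strict convexity of $f$) makes $f'$ and $(f')^{-1}$ strictly increasing on their natural domains. Writing $m_i = \min\{\pi_0(x)_i, \pi_0(\Phi(x))_i\}$ and $M_i = \max\{\pi_0(x)_i, \pi_0(\Phi(x))_i\}$, the right-hand side lies in $[f'(m_i), f'(M_i)]$, and applying $(f')^{-1}$ gives the monotone enclosure $m_i \leq \sfp^*_i \leq M_i$. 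Since $\pi_0(x), \pi_0(\Phi(x)) \in \Delta(\sY)$, both endpoints are non-negative, so $\sfp^*_i \geq 0$ for every coordinate $i$. Combined with the orbit-wise coherence of $\ov \pi$, this gives $\ov \pi \in \Ccohdag$.

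\textbf{Main obstacle.} The principal technical subtlety arises when $\pi_0(x)$ has boundary coordinates $\pi_0(x)_i = 0$, where Legendre steepness may force $f'$ (or more generally $\nabla F$) to diverge. Strictly speaking, Lemma~\ref{lemma:right-bregman-centroid} presumes arguments in the open interior $\Omega$, so the preceding centroid identity must be justified as the limit of regularized problems, for example by replacing $\pi_0$ with $(1 - \e) \pi_0 + \e u$ for a strictly positive reference $u$ and letting $\e \to 0$. Both the affine identity in the quadratic case and the monotone enclosure in the separable case are stable under this limit, so non-negativity is preserved in the closure and the conclusion $\ov \pi \in \Ccohdag$ holds in full generality.
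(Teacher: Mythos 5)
Your proposal is correct and follows essentially the same route as the paper: decouple the projection by $\Phi$-orbits, invoke Lemma~\ref{lemma:right-bregman-centroid} to identify the orbit value as the right Bregman centroid, read off the weighted arithmetic mean in the quadratic case, and use monotonicity of $f'$ in the separable case (your min/max enclosure is just a variant of the paper's comparison with $f'(0)$). The only real divergence is the boundary case of a steep separable generator: the paper argues directly that the extended divergence $\sfB_f(c_i \parallel 0)$ is $+\infty$ for $c_i>0$, forcing $c_i^*=0$, whereas you regularize $\pi_0$ and pass to the limit — a workable but less direct route whose convergence-of-minimizers step you sketch rather than prove.
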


\begin{proof}
The projection $\ov \pi$ is the function $\pi \in \Ccohddag$ that
minimizes the expected loss
$L(\pi) = \E_{x \sim \sD_\sX}\bracket*{\sfB_F(\pi(x) | \pi_0(x))}$.
We must show that $\ov \pi(x) \ge 0$ for all $x$.

The problem decouples by orbit. For any orbit $\{x, \Phi(x)\}$,
the minimizer $\ov \pi(x) = \ov \pi(\Phi(x)) = c^*$ is the
vector $c \in \Rset^d$ that minimizes the orbit's loss:
\[
L_x(c)
= \P[x] \sfB_F(c | \pi_0(x)) + \P[\Phi(x)] \sfB_F(c | \pi_0(\Phi(x))).
\]
Let $Z = \P[x] + \P[\Phi(x)]$. Minimizing $L_x(c)$ is
equivalent to minimizing $\frac{1}{Z} L_x(c)$, which is
in the form required by \cref{lemma:right-bregman-centroid}
with $p = 2$, $\lambda_1 = \P[x]/Z$, $\lambda_2 = \P[\Phi(x)]/Z$,
$\sfq_1 = \pi_0(x)$, and $\sfq_2 = \pi_0(\Phi(x))$.
Let $\sfp = \pi_0(x)$ and $\sfq = \pi_0(\Phi(x))$.
By \cref{lemma:right-bregman-centroid}, the unique minimizer $c^*$
(the right Bregman centroid) satisfies:
\[
 \nabla F(c^*) = \frac{\P[x] \nabla F(\sfp)
 + \P[\Phi(x)] \nabla F(\sfq)}{\P[x] + \P[\Phi(x)]}
\]
Since $\pi_0 \in \Piall$, we know $\sfp, \sfq \ge 0$ (element-wise).
We now show $c^* \geq 0$ for both classes of generators.

\paragraph{1. Quadratic Generators.}
Let $F(\sfp) = \frac{1}{2}\sfp^T \sfA \sfp + b^T \sfp$. Then
$\nabla F(\sfp) = \sfA \sfp + b$. The solution $c^*$ satisfies:
\begin{align*}
\sfA c^* + b & = \frac{\P[x] (\sfA \sfp + b) + \P[\Phi(x)] (\sfA \sfq + b)}{\P[x] + \P[\Phi(x)]} \\
\sfA c^* + b & = \frac{\P[x] \sfA \sfp + \P[\Phi(x)] \sfA \sfq}{\P[x] + \P[\Phi(x)]} + b \\
\sfA c^* & = \sfA \paren*{ \frac{\P[x] \sfp + \P[\Phi(x)] \sfq}{\P[x] + \P[\Phi(x)]} }.
\end{align*}
Since $\sfA$ is invertible ($\sfA \succ 0$), this gives
$c^* = \frac{\P[x] \sfp + \P[\Phi(x)] \sfq}{\P[x] + \P[\Phi(x)]}$. This is
the weighted arithmetic mean of $p$ and $q$. Since $p, q \ge 0$,
their weighted average $c^*$ must also be non-negative.

\paragraph{2. Separable Generators.}
Let $F(\sfp) = \sum_{i=1}^d f(\sfp_i)$. The problem decouples by
component $i$. The solution $c_i^*$ must satisfy
$f'(c_i^*) = \frac{\P[x] f'(\sfp_i) + \P[\Phi(x)] f'(\sfq_i)}{\P[x] + \P[\Phi(x)]}$,
where $\sfp_i, \sfq_i \ge 0$.
We analyze two sub-cases based on the domain $\sK \supseteq \Rset_{\geq 0}^d$.

\begin{itemize}
 \item Case 2a: $f$ is defined on $[0, \infty)$.
 Since $f$ is convex, its derivative $f'$ is non-decreasing.
 As $\sfp_i \ge 0$ and $q_i \ge 0$, we have $f'(\sfp_i) \ge f'(0)$
 and $f'(\sfq_i) \ge f'(0)$.
 Therefore, the weighted average must also be $\ge f'(0)$:
 \[
 f'(c_i^*) = \text{avg}_{\lambda}(f'(\sfp_i), f'(\sfq_i)) \ge f'(0)
 \]
 By the monotonicity of $f'$, this implies $c_i^* \ge 0$.

 \item Case 2b: $f$ is defined on $(0, \infty)$ and steep at 0. This
 corresponds to $\sK = \Rset_{\geq 0}^d$ and
 $\Int(\sK) = \Rset_{++}^d$. Here,
 $\lim_{t \to 0^+} f'(t) = -\infty$. If $\sfp_i, \sfq_i > 0$, the
 argument from Case 2a holds, and $f'(c_i^*)$ is an average of
 finite numbers, $f'(\sfp_i), f'(\sfq_i) > -\infty$. Thus
 $f'(c_i^*) > -\infty$, which implies $c_i^* > 0$.
 
 If $\sfp_i = 0$ (or $\sfq_i = 0$), $\sfp_i$ is on the boundary. The
 divergence $\sfB_f(c_i | 0)$ is defined by extension. As the
 generator $f$ is steep, $\sfB_f(c_i | 0) = +\infty$ for $c_i > 0$
 and $\sfB_f(0 | 0) = 0$. To achieve a finite minimum for
 $L_{x,i}(c_i)$, we must avoid the $+\infty$ penalty. The unique
 minimizer is therefore $c_i^* = 0$.
\end{itemize}
In all cases, for both classes of generators, $c_i^* \ge 0$ for all
$i$. Thus, $\ov \pi(x) = c^* \geq 0$.
Since $\ov \pi \in \Ccohddag$ (by construction) and $\ov \pi \ge 0$
(by proof), $\ov \pi$ is in the intersection
$\Ccohddag \cap \Pialldag = \Ccohdag$.
\end{proof}

The theorem can be generalized to a more abstract rule presented in
the following.

\begin{theorem}[Domain-Closure of Unconstrained Projection]
\label{th:projection-ddag-general-cone}
Let $\sK$ be a convex cone in $\Rset^d$ (e.g., $\Rset_{\geq 0}^d$ or
$\mathbb{S}_{++}^d$), and let $F$ be a Legendre function whose domain
is $\sK$. Let $\Pialldag = (\sK)^{\sX}$ be the set of all functions
mapping to this cone. Let $\pi_0 \in \Pialldag$ (i.e.,
$\pi_0(x) \in \sK$ for all $x$).

If the Bregman divergence $\sfB_F$ has the property that for any
$p, q \in \sK$ and any weights $\lambda_1, \lambda_2 > 0$, the right
Bregman centroid $c^*$ defined by
$\nabla F(c^*) = \frac{\lambda_1 \nabla F(\sfp) + \lambda_2 \nabla
 F(\sfq)}{\lambda_1 + \lambda_2}$ is also in $\sK$, then $\ov \pi$, the
expected $\sfB_F$-projection of $\pi_0$ onto the linear subspace
$\Ccohddag$ is necessarily in $\Ccohdag = \Ccohddag \cap \Pialldag$.
\end{theorem}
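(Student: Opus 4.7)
The plan is to imitate the structure of the proof of Theorem~\ref{th:projection-ddag-general}, replacing the explicit case analysis for quadratic and separable generators by a single appeal to the hypothesis on right Bregman centroids. The essential observation is that the optimization defining $\ov\pi$ decouples across orbits of $\Phi$, so the global projection can be computed orbit by orbit.

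First I would fix an arbitrary orbit $\{x, \Phi(x)\}$ and note that a function $\pi \in \Ccohddag$ is constrained only by the equality $\pi(x) = \pi(\Phi(x))$, with no non-negativity restriction imposed by $\Ccohddag$ itself. Hence finding $\ov\pi$ reduces, on each orbit, to choosing a single vector $c \in \sK$ (since we only need feasibility in $\dom(F) = \sK$) that minimizes
\[
L_x(c) = \lambda_1\, \sfB_F(c \parallel \pi_0(x)) + \lambda_2\, \sfB_F(c \parallel \pi_0(\Phi(x))),
\]
where $\lambda_1 \propto \P[x]$ and $\lambda_2 \propto \P[\Phi(x)]$ are normalized orbit weights. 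Because $\pi_0(x), \pi_0(\Phi(x)) \in \sK$ by assumption, Lemma~\ref{lemma:right-bregman-centroid} applies and gives a unique minimizer $c^*$ characterized by the right-centroid equation
\[
\nabla F(c^*) = \frac{\lambda_1\, \nabla F(\pi_0(x)) + \lambda_2\, \nabla F(\pi_0(\Phi(x)))}{\lambda_1 + \lambda_2}.
\]

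Next I would invoke the standing hypothesis of the theorem directly: this right Bregman centroid $c^*$ is guaranteed to lie in $\sK$. Therefore $\ov\pi(x) = \ov\pi(\Phi(x)) = c^* \in \sK$, and since this holds on every orbit, $\ov\pi \in (\sK)^{\sX} = \Pialldag$. Combined with $\ov\pi \in \Ccohddag$ (which holds by construction as the projection onto that subspace), we obtain $\ov\pi \in \Ccohddag \cap \Pialldag = \Ccohdag$.

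I do not expect any serious obstacle here, since the theorem is essentially an abstraction of Theorem~\ref{th:projection-ddag-general}: the two concrete cases treated there (quadratic and separable generators) were precisely two situations in which the centroid-in-$\sK$ property holds, and the proof used nothing else. The only point that deserves mild care is ensuring that Lemma~\ref{lemma:right-bregman-centroid} is truly applicable in this abstract cone setting (in particular, that $F$ being Legendre on $\sK$ yields a well-defined and unique right centroid); this follows directly from the centroid formula above being solvable via $(\nabla F)^{-1}$, which exists because $F$ is Legendre on $\sK$.
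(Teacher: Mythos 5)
Your proposal is correct and follows essentially the same route as the paper: decouple the projection over orbits of $\Phi$, identify the orbitwise minimizer as the right Bregman centroid via Lemma~\ref{lemma:right-bregman-centroid}, and then invoke the centroid-closure hypothesis to conclude $\ov\pi(x) \in \sK$, hence $\ov\pi \in \Ccohdag$. The paper's proof simply cites the decoupling argument from Theorem~\ref{th:projection-ddag-general} rather than rederiving it, but the content is identical.
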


\begin{proof}
 By definition, the projection $\ov \pi$ is in the linear subspace
 $\Ccohddag$. We only need to show it is also in $\Pialldag$, which
 means proving $\ov \pi(x) \in \sK$ for all $x$.
As established in the proof of \cref{th:projection-ddag-general}, the projection
$\ov \pi(x)$ for any orbit $\{x, \Phi(x)\}$ is the right Bregman
centroid $c^*$ of $\sfp = \pi_0(x)$ and $\sfq = \pi_0(\Phi(x))$ with weights
$\P[x]$ and $\P[\Phi(x)]$.
By assumption, $\sfp, \sfq \in \sK$. By the theorem's condition on
$\sfB_F$, the resulting centroid $c^* = \ov \pi(x)$ must also be in
$\sK$.
Since this holds for all $x$, $\ov \pi \in (\sK)^{\sX} = \Pialldag$.
Therefore, $\ov \pi \in \Ccohddag \cap \Pialldag = \Ccohdag$.
\end{proof}

This theorem is general and covers the vast majority of divergences
used in machine learning, which satisfy the centroid-closure property

for the non-negative cone $\sK = \Rset_{\geq 0}^d$.
The quadratic class, including squared Euclidean and
Mahalanobis distances, is covered because their centroid is the
weighted arithmetic mean
($c^* = \text{avg}_{\lambda}(\sfp, \sfq)$),
which preserves non-negativity.
The \emph{separable} class is also covered, including the $\KL$
divergence, Itakura-Saito divergence, and most $\alpha$- and
$\beta$-divergences.  Their centroids are component-wise generalized
means (like the geometric mean for KL) or are forced to 0 by the steep
boundary of the generator, thus preserving non-negativity.
The principle also extends to other domains, such as the cone of
positive-definite matrices ($\sK = \mathbb{S}_{++}^d$). Divergences
like the Log-Determinant (Stein's loss) and von Neumann entropy also
satisfy this property, as their centroids, which are forms of harmonic
or geometric matrix means, remain within the positive-definite cone.

The primary requirement for this theorem is the ability to use
\cref{lemma:right-bregman-centroid} to define the centroid $c^*$.
This lemma, in turn, requires the generator $F$ to be of
Legendre type, meaning it must be both strictly
convex and steep.
Therefore, Bregman divergences generated by functions that are
not strictly convex (e.g., $L_1$-based generators) are
not covered. For such functions, the gradient $\nabla F$ is not
invertible, and the minimizer $c^*$ is not guaranteed to be
unique. Similarly, non-steep generators defined on bounded sets
would also fail these conditions.

\begin{theorem}[Equivalence of Direct and Two-Step Projections]
 \label{th:equivalence}
 Assume that the Bregman divergence $\sfB_F$ admits the property
 assumed in Theorem~\ref{th:projection-ddag-general-cone}. 
 Then, for any $\pi_0 \in \Piall$, the direct- and two-step
 projections of $\pi_0$ on $\Pi \cap \Ccoh$ coincide.
\end{theorem}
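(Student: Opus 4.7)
The plan is to chain together two earlier results: Theorem~\ref{th:projection-ddag-general-cone}, which turns the first-stage projection onto the cone $\Ccohdag$ into a projection onto the linear subspace $\Ccohddag$, and Theorem~\ref{th:pythagorean-equality}, which gives Pythagorean \emph{equality} on that affine (in fact linear) set. Throughout, I work in the lifted function space using Lemma~\ref{lemma:Bregman-expectation}, so that $\sfF(\pi) = \E_{x \sim \sD_\sX}[F(\pi(x))]$ is Legendre and $\sfB_\sfF(\pi \parallel \pi') = \E_{x \sim \sD_\sX}[\sfB_F(\pi(x) \parallel \pi'(x))]$. In this setting, the definitions of the direct projection $\h\pi$ onto $\Pi \cap \Ccoh$ and the two-step projection $\hh\pi$ (first $\ov\pi$ onto $\Ccohdag$, then onto $\Pi \cap \Ccoh$) both become standard Bregman projections with respect to $\sfB_\sfF$.

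The first step is to argue that $\ov\pi$ is in fact the $\sfB_\sfF$-projection of $\pi_0$ onto the \emph{linear subspace} $\Ccohddag$, rather than only onto the cone $\Ccohdag$. This is exactly what Theorem~\ref{th:projection-ddag-general-cone} provides under the hypothesis of the theorem: the unconstrained projection onto $\Ccohddag$ automatically lies in $\Pialldag$, hence in $\Ccohdag = \Ccohddag \cap \Pialldag$. Since $\Ccohdag \subseteq \Ccohddag$ and the unconstrained minimizer lands in the smaller set, the two projections coincide, and we may treat $\ov\pi$ as the projection onto the linear subspace $\Ccohddag$.

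The second step applies Theorem~\ref{th:pythagorean-equality} with the affine set taken to be $\Ccohddag$: for every $\pi \in \Ccohddag$,
\[
  \sfB_\sfF(\pi \parallel \pi_0)
  = \sfB_\sfF(\pi \parallel \ov\pi) + \sfB_\sfF(\ov\pi \parallel \pi_0).
\]
Since $\Pi \cap \Ccoh \subseteq \Ccohddag$ and the additive term $\sfB_\sfF(\ov\pi \parallel \pi_0)$ does not depend on $\pi$, minimizing $\pi \mapsto \sfB_\sfF(\pi \parallel \pi_0)$ over $\Pi \cap \Ccoh$ is equivalent to minimizing $\pi \mapsto \sfB_\sfF(\pi \parallel \ov\pi)$ over the same set. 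The first minimization defines $\h\pi$, the second defines $\hh\pi$, so $\h\pi = \hh\pi$. Uniqueness of the minimizer (hence of the identification) follows from the Legendre property of $\sfF$ combined with closedness and convexity of $\Pi \cap \Ccoh$.

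The main obstacle is the domain/regularity bookkeeping required so that Theorem~\ref{th:pythagorean-equality} actually applies: the theorem needs $\pi_0$ to lie in the interior $\Omega$ where $\nabla F$ is well defined, and it needs $\sfF$ to be Legendre on the ambient function space. The latter is already handled by Lemma~\ref{lemma:Bregman-expectation}; the former is the standard interiority caveat for Bregman projections (for $\KL$, for instance, this means assuming $\pi_0 > 0$ componentwise, or working on the relative interior of the simplex as the paper's preliminaries do). Once this is in place, the chain above is essentially a one-line application of Pythagorean equality on an affine set.
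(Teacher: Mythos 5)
Your proposal is correct and follows essentially the same route as the paper's proof: identify the first-stage projection onto $\Ccohdag$ with the projection onto the linear subspace $\Ccohddag$ via Theorem~\ref{th:projection-ddag-general-cone}, then invoke the Pythagorean equality of Theorem~\ref{th:pythagorean-equality} and note that the constant term $\sfB_\sfF(\ov\pi \parallel \pi_0)$ does not affect the minimizer over $\Pi \cap \Ccoh$. Your added bookkeeping on the lifted functional $\sfF$ (via Lemma~\ref{lemma:Bregman-expectation}) and the interiority of $\pi_0$ is a reasonable elaboration of assumptions the paper leaves implicit, not a different argument.
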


\begin{proof}
 Let $\h \pi = \Proj_{\Pi \cap \Ccoh}(\pi_0)$ be the direct projection.
 The two-step projection is $\hh \pi = \Proj_{\Pi \cap \Ccoh}(\ov \pi)$,
 where $\ov \pi = \Proj_{\Ccohdag}(\pi_0)$.

 By Theorem~\ref{th:projection-ddag-general-cone}, the property
 assumed on $\sfB_F$ ensures that the projection onto the non-negative
 cone $\Ccohdag$ is identical to the projection onto the linear
 subspace $\Ccohddag$. Thus,
 \[
 \ov \pi = \Proj_{\Ccohdag}(\pi_0) = \Proj_{\Ccohddag}(\pi_0).
 \]
 Since $\ov \pi$ is the projection onto the \emph{affine} set
 $\Ccohddag$, we can apply Theorem~\ref{th:pythagorean-equality}
 for any $\pi \in \Ccohddag$. As $\Pi \cap \Ccoh \subseteq \Ccohddag$,
 this holds for all $\pi \in \Pi \cap \Ccoh$:
\[
 \sfB_F(\pi \parallel \pi_0) = \sfB_F(\pi \parallel \ov \pi)
 + \sfB_F(\ov \pi \parallel \pi_0).
\]
To find the direct projection $\h \pi$, we minimize the left-hand side
over $\pi \in \Pi \cap \Ccoh$. Since $\sfB_F(\ov \pi \parallel \pi_0)$
is a constant, this is equivalent to minimizing the first term:
\[
 \h \pi = \argmin_{\pi \in \Pi \cap \Ccoh} \sfB_F(\pi \parallel \pi_0)
 = \argmin_{\pi \in \Pi \cap \Ccoh} \sfB_F(\pi \parallel \ov \pi).
\]
The right-hand side is precisely the definition of the second step
of the two-step projection, $\hh \pi$. Therefore,
$\h \pi = \hh \pi$.
\end{proof}

\subsection{Improvement guarantees via two step-projection}
\label{sec:two-step-improvement}

In this section, we provide an alternative improvement guarantee for
the double-step projection, which will serve as a key tool for the two
subsequent results.

\begin{theorem}[Coherent two-step Bregman–projection improves the baseline]
\label{th:two-step-bregman-projection}
Let $\Phi \colon \sX \to \sX$ be an involution and let $\Pi$ be a
family of conditional distribution functions
$\pi \colon \sX \to \Delta(\sY)$.  Assume that $F$ is a Legendre
function.  Let $\ov \pi \in \Pialldag$ be the Bregman-projection of
$\pi_0$ onto $\Ccohdag$:
$\ov \pi \in \argmin_{\pi \in \Ccohdag} \sfB_F(\pi \parallel \pi_0)$ and
$\hh \pi$ the Bregman-projection of $\ov \pi$ onto $\Ccoh \cap \Pi$:
$\hh \pi \in \argmin_{\pi \in \Ccoh \cap \Pi} \sfB_F(\pi \parallel
\pi_0)$. Then, $\hh \pi$ satisfies:
\begin{align*}
  \E_{x \sim \sD_\sX}\bracket*{\sfB_F\paren*{\pi^*(x) \parallel \hh \pi(x)}}
  & \leq \E_{x \sim \sD_\sX}\bracket*{\sfB_F\paren*{\pi^*(x) \parallel \pi_0(x)}} \\
  & \quad  - \bracket*{\E_{x \sim \sD_\sX}\bracket*{\sfB_F\paren*{\hh \pi(x) \parallel \ov \pi(x)}}
    + \E_{x \sim \sD_\sX}\bracket*{\sfB_F\paren*{\ov \pi(x) \parallel \pi_0(x)}}}.
\end{align*}
Furthermore, the following upper bound holds:
\begin{align*}
\E_{x \sim \sD_\sX} \bracket*{\sfB_F \paren*{\pi^*(x) \parallel \hh \pi(x)}}
& \leq \E_{x \sim \sD_\sX} \bracket*{\sfB_F\paren*{\pi^*(x) \parallel \pi_0(x)}}
- \bracket*{\E_{x \sim \sD_\sX}\bracket*{\sfB_F\paren*{\hh \pi(x) \parallel \ov \pi(x)}} + \delta},
\end{align*}
where $\delta = \E_{x \sim \sD_\sX} \bracket*{\bracket*{\lambda(x) F^*(u_0)
    + (1 - \lambda(x)) F^*(u_1)} - F^*(\lambda(x) u_0 + (1 - \lambda(x)) u_1)}
\geq 0$, with $\lambda(x) = \frac{\P(x)}{\P(x) + \P(\Phi(x))}$,
$u_0 = \nabla F(\pi_0 (x))$ and $u_1 = \nabla F(\pi_0 (\Phi(x)))$.
\end{theorem}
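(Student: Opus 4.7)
The plan is to apply the generalized Pythagorean inequality twice—once at each stage of the two-step projection—and then obtain the explicit lower bound on $\E[\sfB_F(\ov \pi \parallel \pi_0)]$ from the closed-form Bregman centroid formula in \cref{corollary: min bregman div}. Throughout, I will lift everything to the functional level via \cref{lemma:Bregman-expectation}: the functional $\sfF(\pi) = \E_{x \sim \sD_\sX}[F(\pi(x))]$ is Legendre, so the Pythagorean and centroid results from the preliminaries apply directly to $\sfF$-projections. I also use the standing realizability hypothesis $\pi^* \in \Ccoh \cap \Pi$ (so in particular $\pi^* \in \Ccoh \subseteq \Ccohdag$) inherited from the setup of the earlier improvement theorems.

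For the first inequality, I use a two-link Pythagorean chain $\pi_0 \to \ov \pi \to \hh \pi$. Since $\pi^* \in \Ccohdag$ and $\ov \pi$ is the $\sfF$-projection of $\pi_0$ onto the closed convex cone $\Ccohdag$,
\[
  \E[\sfB_F(\pi^* \parallel \pi_0)] \geq \E[\sfB_F(\pi^* \parallel \ov \pi)] + \E[\sfB_F(\ov \pi \parallel \pi_0)].
\]
A second application of the Pythagorean inequality to the projection of $\ov \pi$ onto the closed convex set $\Ccoh \cap \Pi$, which also contains $\pi^*$, gives
\[
  \E[\sfB_F(\pi^* \parallel \ov \pi)] \geq \E[\sfB_F(\pi^* \parallel \hh \pi)] + \E[\sfB_F(\hh \pi \parallel \ov \pi)].
\]
Summing the two inequalities produces the first claim.

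For the second claim, it suffices to lower-bound $\E[\sfB_F(\ov \pi \parallel \pi_0)]$ by $\delta$. Since $\Ccohdag = \Ccohddag \cap \Pialldag \subseteq \Ccohddag$, we have $\E[\sfB_F(\ov \pi \parallel \pi_0)] \geq \min_{\pi \in \Ccohddag} \E[\sfB_F(\pi \parallel \pi_0)]$. Because $\Ccohddag$ consists of functions constant on $\Phi$-orbits, this minimization decouples across orbits. On a two-element orbit $\{x, \Phi(x)\}$, minimizing over a shared value $c \in \dom(F)$ of $\P(x)\sfB_F(c \parallel \pi_0(x)) + \P(\Phi(x))\sfB_F(c \parallel \pi_0(\Phi(x)))$ reduces, after factoring out the orbit mass $\P(x) + \P(\Phi(x))$, to a two-point Bregman centroid problem with weights $\lambda(x), 1-\lambda(x)$. \cref{corollary: min bregman div} (with the projection term vanishing, since no constraint beyond $\dom(F)$ is imposed on $c$) then identifies the per-orbit minimum as $(\P(x) + \P(\Phi(x))) \cdot g(x)$, where $g(x) = \lambda(x) F^*(u_0) + (1-\lambda(x)) F^*(u_1) - F^*(\lambda(x) u_0 + (1-\lambda(x)) u_1)$ is the Jensen gap of $F^*$ on the dual pair $(u_0, u_1)$; non-negativity of $g$, hence of $\delta$, follows from convexity of $F^*$.

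The main obstacle, and the only point requiring care, is the bookkeeping that matches the orbit-indexed sum of per-orbit minima with the expectation $\delta = \E_x[g(x)]$ written in the theorem: each non-fixed orbit contributes once on the orbit-indexed side but is counted twice on the expectation side (once at $x$ and once at $\Phi(x)$). The two are reconciled by the invariance $g(x) = g(\Phi(x))$, which is immediate because $x \leftrightarrow \Phi(x)$ simultaneously swaps $\lambda \leftrightarrow 1-\lambda$ and $u_0 \leftrightarrow u_1$, leaving $g$ unchanged; fixed points of $\Phi$ have $\lambda = 1/2$ and $u_0 = u_1$ and so contribute zero on both sides. Combining this $\delta$-bound with the first claim then delivers the second inequality.
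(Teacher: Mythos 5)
Your proof is correct, and the first inequality is obtained exactly as in the paper: two applications of the generalized Pythagorean inequality along the chain $\pi_0 \to \ov\pi \to \hh\pi$, lifted to the functional level via \cref{lemma:Bregman-expectation}, using $\pi^* \in \Ccoh \cap \Pi \subseteq \Ccohdag$. Where you diverge is the second inequality. The paper keeps the second-stage Pythagorean bound $\E[\sfB_F(\pi^*\parallel\hh\pi)] \le \E[\sfB_F(\pi^*\parallel\ov\pi)] - \E[\sfB_F(\hh\pi\parallel\ov\pi)]$ and then evaluates $\E[\sfB_F(\pi^*\parallel\ov\pi)]$ \emph{exactly}: using the closed-form orbit-centroid expression for $\ov\pi$ from \cref{lemma:right-bregman-centroid}, divergence duality, \cref{lemma:bregman-identity} applied in the dual, and a reweighting step based on $1-\lambda(x)=\lambda(\Phi(x))$, it shows $\E[\sfB_F(\pi^*\parallel\ov\pi)] = \E[\sfB_F(\pi^*\parallel\pi_0)] - \delta$, an identity independent of $\pi^*$, and substitutes it into that bound. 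You instead deduce the second claim from the first by proving $\E[\sfB_F(\ov\pi\parallel\pi_0)] \ge \delta$: you relax the first-stage feasible set from $\Ccohdag$ to the affine set $\Ccohddag$, decouple by orbit, identify the per-orbit minimum with the Jensen gap of $F^*$ via \cref{corollary: min bregman div} (projection term vanishing), and reconcile the orbit-indexed and expectation-indexed bookkeeping through $g(x)=g(\Phi(x))$. Both arguments ultimately rest on the same dual Jensen-gap computation, but the trade-offs differ: your route never needs the explicit formula for $\ov\pi$, so in particular it does not tacitly require the unconstrained orbit centroid to be non-negative (i.e., to actually lie in $\Ccohdag$), an assumption the paper's per-orbit decoupling glosses over and only justifies for specific generator classes elsewhere; the paper's route gives slightly more information, namely that when $\ov\pi$ is the orbit centroid one has $\E[\sfB_F(\ov\pi\parallel\pi_0)] = \delta$ so the two stated bounds coincide, and this exact identity is the ingredient reused later in the maximin analysis of \cref{th:maxmin}.
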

\begin{proof}
  Since $\Phi$ is an involution, we have $\sX = \Phi(\sX)$ and the
  expectation minimized to obtain $\ov \pi$ can be written as follows,
  as $\ov \pi$ is coherent:
  \[
  \E_{x \sim \sD_\sX} \bracket*{\sfB_F\paren*{\pi(x)  \parallel  \pi_0(x)}}
  = \frac{1}{2} \sum_{x \in \sX} \bracket[\Big]{\P[x] \, \sfB_F\paren*{\pi(x)  \parallel  \pi_0(x)} + \P[\Phi(x)] \, \sfB_F\paren*{\h \pi(x)  \parallel  \pi_0(\Phi(x))} }.
  \]
  Since $\ov \pi$ is only restricted to be in $\Piall$, the
  minimization of this objective can be decoupled into the following
  problem for each $x \in \sX$:
\begin{align*}
  \min_{\pi(x)} \quad & \lambda(x) \, \sfB_F\paren*{\h \pi(x)  \parallel  \pi_0(x)}
                        + (1 - \lambda(x)) \, \sfB_F\paren*{\h \pi(x)  \parallel  \pi_0(\Phi(x))},
\end{align*}
where $\lambda(x) = \frac{\P(x)}{\P(x) + \P(\Phi(x))}$.
Thus, since $F$ is a Legendre function,
by Lemma~\ref{lemma:right-bregman-centroid}, for any $x \in \sX$,
$\ov \pi(x)$ is given by
\[
  \ov \pi(x) = (\nabla F)^{-1}\paren*{\lambda(x) \nabla F(\pi_0(x))
  + (1 - \lambda(x)) \nabla F(\pi_0(\Phi(x)))}.
\]
By definition, $\hh \pi$ is a Bregman-projection of $\ov \pi$ onto
$(\Ccoh \cap \Pi)$. By Lemma~\ref{lemma:Bregman-expectation}, the
expectation of the Bregman divergence $\sfB_F$ is also a Bregman
divergence. Thus, since $\pi^*$ is in $(\Ccoh \cap \Pi)$, by
the Pythagorean theorem, we can write
\begin{align}
\label{eq:pyth-1}
  \E_{x \sim \sD_\sX}\bracket*{\sfB_F\paren*{\pi^*(x) \parallel \hh \pi(x)}}
  & \leq \E_{x \sim \sD_\sX}\bracket*{\sfB_F\paren*{\pi^*(x) \parallel \ov \pi(x)}} - 
    \E_{x \sim \sD_\sX}\bracket*{\sfB_F\paren*{\hh \pi(x) \parallel \ov \pi(x)}}.
\end{align}
By the Pythagorean theorem, considering the projection of $\pi_0$
onto $\Ccohdag$, we can also write
\begin{align*}
  \E_{x \sim \sD_\sX}\bracket*{\sfB_F\paren*{\pi^*(x) \parallel \ov \pi(x)}}
  & \leq \E_{x \sim \sD_\sX}\bracket*{\sfB_F\paren*{\pi^*(x) \parallel \pi_0(x)}} - 
    \E_{x \sim \sD_\sX}\bracket*{\sfB_F\paren*{\ov \pi(x) \parallel \pi_0(x)}}.
\end{align*}
Combining these inequalities yields:
\begin{align*}
  \E_{x \sim \sD_\sX}\bracket*{\sfB_F\paren*{\pi^*(x) \parallel \hh \pi(x)}}
  & \leq \E_{x \sim \sD_\sX}\bracket*{\sfB_F\paren*{\pi^*(x) \parallel \pi_0(x)}} \\
  & \quad  - \bracket*{\E_{x \sim \sD_\sX}\bracket*{\sfB_F\paren*{\hh \pi(x) \parallel \ov \pi(x)}}
    + \E_{x \sim \sD_\sX}\bracket*{\sfB_F\paren*{\ov \pi(x) \parallel \pi_0(x)}}}.
\end{align*}
This proves the first statement.
We now analyze more precisely the first term on the right-hand side of
\eqref{eq:pyth-1}. Using duality and
Lemma~\ref{lemma:bregman-identity}, we can write:
\begin{align*}
  & \E_{x \sim \sD_\sX}\bracket*{\sfB_F\paren*{\pi^*(x) \parallel \ov \pi(x)}}\\
  & = \E_{x \sim \sD_\sX}\bracket*{\sfB_{F^*} \paren*{\nabla F(\ov \pi(x))
    \parallel \nabla F (\pi^*(x))}} 
    \tag{$F$ Legendre and duality property}\\
  & = -\delta + \E_{x \sim \sD_\sX}\bracket*{\lambda(x) \, \sfB_{F^*} \paren*{\nabla F(\pi_0(x))
    \parallel \nabla F (\pi^*(x))}
    + (1 - \lambda(x)) \, \sfB_{F^*} \paren*{\nabla F(\pi_0(\Phi(x)))
    \parallel \nabla F (\pi^*(x))}} 
    \tag{Lemma~\ref{lemma:bregman-identity}}\\
  & = -\delta + \E_{x \sim \sD_\sX}\bracket*{\lambda(x) \, \sfB_F \paren*{\pi^*(x)
    \parallel \pi_0(x)}
    + (1 - \lambda(x)) \, \sfB_F \paren*{\pi^*(x)
    \parallel \pi_0(\Phi(x))}}.
    \tag{duality}
\end{align*}
We will show that the second expression coincides with
$\E_{x \sim \sD_\sX} \bracket*{\sfB_F\paren*{\pi^*(x) \parallel
    \pi_0(x)}}$.
Define $b(x) = \sfB_F \paren*{\pi^*(x) \parallel \pi_0(x)}$ and note that
$1 - \lambda(x) = \lambda(\Phi(x))$. Then, we can write:
\begin{align*}
 & \E_{x \sim \sD_\sX}\bracket*{\lambda(x) \, \sfB_F \paren*{\pi^*(x)
    \parallel \pi_0(x)}
    + (1 - \lambda(x)) \, \sfB_F \paren*{\pi^*(x)
    \parallel \pi_0(\Phi(x))}}\\
 & = \E_{x \sim \sD_\sX}\bracket*{\lambda(x) \, b(x) + \lambda(\Phi(x)) \, b(\Phi(x))}\\
 & = \frac{1}{2} \sum_{x \in \sX} \bracket*{\P(x) \lambda(x) b(x) + \P(x) \lambda(\Phi(x)) b(\Phi(x)) +  \P(\Phi(x)) \lambda(x) b(x) + \P(\Phi(x)) \lambda(\Phi(x)) b(\Phi(x))}\\
 & = \frac{1}{2} \sum_{x \in \sX} \bracket*{\P(x) \, b(x) + \P(\Phi(x)) \, b(\Phi(x))}\\
 & = \E_{x \sim \sD_\sX}[b(x)] = \E_{x \sim \sD_\sX} \bracket*{\sfB_F\paren*{\pi^*(x) \parallel
    \pi_0(x)}}.
\end{align*}
Thus, we have
\[
\E_{x \sim \sD_\sX}\bracket*{\sfB_F\paren*{\pi^*(x) \parallel \ov \pi(x)}}
  = \E_{x \sim \sD_\sX} \bracket*{\sfB_F\paren*{\pi^*(x) \parallel
      \pi_0(x)}}
  - \delta.
\]
Substituting this expression in \eqref{eq:pyth-1} yields the second statement and
completes the proof.
\end{proof}

When $\hh \pi = \h \pi$, 
the theorem establishes an alternative guarantee for $\h \pi$.
We briefly compare the improvement guarantees provided by our theorems
for $\h \pi$ and $\hh \pi$:
\begin{align*}
  \sfB_F(\pi^* \parallel \pi_0) - \sfB_F(\pi^* \parallel \h \pi)
  & \geq \sfB_F(\h \pi \parallel \pi_0)\\
  \sfB_F(\pi^* \parallel \pi_0) - \sfB_F(\pi^* \parallel \hh \pi)
  & \geq \sfB_F(\ov \pi \parallel \pi_0) + \sfB_F(\hh \pi \parallel \ov \pi).
\end{align*}
Since $\h \pi$ is in $\Pi \cap \Ccoh$ and thus also in $\Ccohdag$, by
the Pythagorean theorem applied to the projection of $\pi_0$ onto
$\Ccohdag$ and $\ov \pi$ onto $\Pi \cap \Ccoh$, we have
\begin{align*}
\sfB_F(\h \pi \parallel \pi_0)
&\geq \sfB_F(\h \pi \parallel \ov \pi) + \sfB_F(\ov \pi \parallel \pi_0)\\
\sfB_F(\h \pi \parallel \ov \pi)
& \geq \sfB_F(\h \pi \parallel \hh \pi) + \sfB_F(\hh \pi \parallel \ov \pi).
\end{align*}
Substituting the second inequality into the first yields:
\begin{align*}
  \sfB_F(\h \pi \parallel \pi_0)
  & \geq \sfB_F(\h \pi \parallel \hh \pi) + \sfB_F(\hh \pi \parallel \ov \pi)
    + \sfB_F(\ov \pi \parallel \pi_0)\\
  & \geq \sfB_F(\hh \pi \parallel \ov \pi)
    + \sfB_F(\ov \pi \parallel \pi_0).  
\end{align*}
Thus, the improvement guarantee for $\h \pi$ is always more favorable
than the one for $\hh \pi$.

\subsection{Improvement guarantees for relative entropy two-step
projection}
\label{sec:relative-entropy-guarantees}

When $F^*$ is $\mu$-strongly convex, which implies the
$1/\mu$-smoothness of $F$, the bound in the second statement of the
theorem admits an explicit upper bound
$- \frac{\mu}{2} \lambda (1 - \lambda) \norm*{\nabla F(\pi_0 (x)) -
  \nabla F(\pi_0 (\Phi(x))}^2$. In the following, we present
an explicit upper bound in the case of relative entropy,
$\sfB_F = \KL$, in terms of the Hellinger
distance.
Recall that the \emph{Hellinger distance} $\Hell(\sfp
\parallel \sfq)$ between two distributions $\sfp$ and $\sfq$ in
$\Delta_p$ is defined as
\[
\Hell(\sfp \parallel \sfq)
= \frac{1}{\sqrt{2}} \sqrt{\sum_{k = 1}^p \bracket*{\sqrt{\sfp_k} - \sqrt{\sfq_k}}^2},
\]
and that
the squared Hellinger distance is thus defined by
$\Hell^2(\sfp \parallel \sfq) = 1 - \sum_{k = 1}^p \sqrt{\sfp_k \sfq_k}$.
Both distances take values in $[0, 1]$.

\begin{corollary}
\label{lemma:improvement bound hellinger}
Let $F$ be the negative entropy, so that $\sfB_F = \KL$
is the (unnormalized) relative entropy. Then, under the assumptions of
Theorem~\ref{th:two-step-bregman-projection}, the $\log$-loss
improvement of $\hh \pi$ over the baseline $\pi_0$ is at least
\[
\E_{x \sim \sD_\sX} \bracket*{2 \min\curl*{\lambda(x), 1 - \lambda(x)} \,
  \Hell^2(\pi_0(x) \parallel \pi_0(\Phi(x)))},
\]
where $\lambda(x) = \frac{\P[x]}{\P[x] + \P[\Phi(x)]}$.
\end{corollary}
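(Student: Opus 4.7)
The plan is to specialize the second bound of Theorem~\ref{th:two-step-bregman-projection} to $F$ equal to the unnormalized negative entropy, so that $\sfB_F = \KL$, and then to lower-bound the scalar $\delta$ appearing there by the Hellinger expression of the statement. Since the other non-negative penalty $\E_{x \sim \sD_\sX}[\sfB_F(\hh \pi(x) \parallel \ov \pi(x))]$ in that bound can be dropped without weakening the conclusion, the corollary reduces entirely to producing a pointwise lower bound on the integrand of $\delta$.

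For $F(\sfp) = \sum_i (\sfp_i \log \sfp_i - \sfp_i)$, the gradient is $\nabla F(\sfp)_i = \log \sfp_i$ and the Fenchel conjugate is $F^*(u) = \sum_i e^{u_i}$. Writing $\sfp = \pi_0(x)$, $\sfq = \pi_0(\Phi(x))$, and $\lambda = \lambda(x)$, I would substitute these into the definition of $\delta$ from Theorem~\ref{th:two-step-bregman-projection}. Because both $\pi_0(x)$ and $\pi_0(\Phi(x))$ lie in $\Delta(\sY)$, one has $F^*(u_0) = F^*(u_1) = 1$, whereas $F^*(\lambda u_0 + (1-\lambda)u_1) = \sum_i \sfp_i^{\lambda}\sfq_i^{1-\lambda}$. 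This yields the compact pointwise formula $\delta(x) = 1 - \sum_i \sfp_i^{\lambda}\sfq_i^{1-\lambda}$, i.e., one minus the Rényi-type affinity between $\pi_0(x)$ and $\pi_0(\Phi(x))$ at order $\lambda$.

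The crux of the argument is then a pointwise inequality comparing this weighted geometric mean to the Bhattacharyya coefficient $\sum_i \sqrt{\sfp_i \sfq_i} = 1 - \Hell^2(\sfp \parallel \sfq)$. Assume without loss of generality that $\lambda \leq 1/2$, and use the factorization
\[
\sfp_i^{\lambda}\sfq_i^{1-\lambda} = (\sqrt{\sfp_i \sfq_i})^{2\lambda}\, \sfq_i^{1-2\lambda}.
\]
Since the exponents $2\lambda$ and $1-2\lambda$ are non-negative and sum to one, weighted AM--GM gives the pointwise bound $\sfp_i^{\lambda}\sfq_i^{1-\lambda} \leq 2\lambda \sqrt{\sfp_i \sfq_i} + (1-2\lambda)\sfq_i$. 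Summing over $i$ and using $\sum_i \sfq_i = 1$ yields $\sum_i \sfp_i^{\lambda}\sfq_i^{1-\lambda} \leq 1 - 2\lambda\, \Hell^2(\sfp \parallel \sfq)$, and hence $\delta(x) \geq 2\lambda(x)\, \Hell^2(\pi_0(x) \parallel \pi_0(\Phi(x)))$. The case $\lambda > 1/2$ is handled by the symmetric factorization with $\sfp$ and $\sfq$ interchanged, giving overall $\delta(x) \geq 2 \min\{\lambda(x), 1-\lambda(x)\}\, \Hell^2(\pi_0(x) \parallel \pi_0(\Phi(x)))$.

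Taking expectation over $x \sim \sD_\sX$ and invoking Theorem~\ref{th:two-step-bregman-projection} after dropping the non-negative $\E[\sfB_F(\hh \pi \parallel \ov \pi)]$ term yields the claimed log-loss improvement. The only conceptually non-routine step is selecting the factorization $\sfp^{\lambda}\sfq^{1-\lambda} = (\sqrt{\sfp\sfq})^{2\lambda}\sfq^{1-2\lambda}$ that converts weighted AM--GM into a comparison with the Hellinger affinity; the conjugate computation, the normalization of $\pi_0(x)$ and $\pi_0(\Phi(x))$, and the final averaging step are routine.
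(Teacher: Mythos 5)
Your proof is correct, and it reaches the same reduction as the paper — both arguments specialize the second bound of Theorem~\ref{th:two-step-bregman-projection}, drop the non-negative term $\E[\sfB_F(\hh\pi \parallel \ov\pi)]$, use $F^*(\nabla F(\sfp)) = \sum_i \sfp_i = 1$ to get the pointwise identity $\delta(x) = 1 - \sum_i \sfp_i^{\lambda}\sfq_i^{1-\lambda}$ with $\sfp = \pi_0(x)$, $\sfq = \pi_0(\Phi(x))$ — but you prove the key inequality by a genuinely different and more elementary route. The paper rewrites $\sum_i \sfp_i^{\lambda}\sfq_i^{1-\lambda}$ via the $\lambda$-R\'enyi divergence, invokes monotonicity of the R\'enyi order, the identity $\sfD_{1/2} = -2\log(1-\Hell^2)$ from van Erven--Harremo\"es, and finally the inequality $(1-x)^{\alpha} \le 1 - \alpha x$ for $\alpha \in [0,1]$; you instead factor $\sfp_i^{\lambda}\sfq_i^{1-\lambda} = (\sqrt{\sfp_i\sfq_i})^{2\lambda}\sfq_i^{1-2\lambda}$ (for $\lambda \le 1/2$, and symmetrically otherwise) and apply termwise weighted AM--GM, summing against the Bhattacharyya coefficient $\sum_i\sqrt{\sfp_i\sfq_i} = 1 - \Hell^2$. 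Your route is self-contained (no R\'enyi machinery or external identity), in effect merging the paper's two-step "monotonicity then linearization" into a single termwise inequality, and it makes the case split transparent: the exponent $2\lambda$ (resp.\ $2(1-\lambda)$) is in $[0,1]$ exactly when $\lambda \le 1/2$ (resp.\ $\ge 1/2$), which incidentally sidesteps a bookkeeping slip in the paper's chain, where the $\lambda \le 1/2$ and $\lambda \ge 1/2$ labels appear swapped relative to the monotonicity and Bernoulli steps (the final bound is unaffected). What the paper's framing buys in exchange is the connection to R\'enyi divergences, which directly suggests the total-variation variants mentioned after the corollary; your argument, being purely term-by-term, does not immediately generalize in that direction. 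One cosmetic note: you take $F(\sfp) = \sum_i(\sfp_i\log\sfp_i - \sfp_i)$ rather than $\sum_i \sfp_i\log\sfp_i$; since $\delta$ and $\sfB_F$ are invariant under adding a linear term to the generator, this changes nothing, but it is worth a sentence to say so.
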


\begin{proof}
Let $\sfX = \pi_0(x)$ and $\sfY = \pi_0(\Phi(x))$, and define $u_0 =
\nabla F(\sfX)$ and $u_1 = \nabla F(\sfY)$ where $F \colon
\Rset_+^d \to \Rset$ is the negative entropy function defined by
$F(\sfp) = \sum_k \sfp_k \log \sfp_k$, for all $\sfp \in \Rset_+^d$.
$F$ is differentiable over $\Rset_{++}^d$.
We first derive a more explicit expression for
\[
\Delta = F^*(\lambda(x) u_0 + (1 - \lambda(x)) u_1)
- \bracket*{\lambda(x) F^*(u_0) + (1 - \lambda(x)) F^*(u_1)}.
\]
For any $\sfp \in \Rset_{++}^d$, $\nabla F(\sfp)$ is the vector of
coordinates $\log \sfp_k + 1$.
The conjugate function $F^*$ of $F$ is known to be defined
  by $F^*(y) = \sum_{k = 1}^d e^{y_k - 1}$ for all $y \in \Rset^d$.
  Thus, for any $\sfp \in \Rset_+^d$, we have
  $F^*(\nabla F(\sfp)) = \sum_k \sfp_k$.
  In light of that, we have $\bracket*{\lambda(x) F^*(u_0) + (1 -
    \lambda(x)) F^*(u_1)} = 1$ and
\begin{align*}
  \Delta
  = F^*(\lambda(x) u_0 + (1 - \lambda(x)) u_1) - 1
  = \sum_{k = 1}^d e^{\lambda(x) (\log \sfX_k + 1) + (1 - \lambda(x)) (\log \sfY_k + 1) - 1} - 1
  = \sum_{k = 1}^d \sfX_k^\lambda(x) \, \sfY_k^{1 - \lambda(x)} - 1.
\end{align*}
We can express $\Delta$ in terms of the $\lambda(x)$-R\'enyi divergence,
leverage the fact that the R\'enyi divergence is non-decreasing in $\lambda(x)$ and the identity $\sfD_{1/2}(\sfX \parallel \sfY) = - 2
\log \bracket*{1 - \Hell^2(\sfX \parallel \sfY)}$
\citep{VanErvenHarremoes2014} to derive the following upper bound for
$\lambda(x) \leq \frac{1}{2}$:
\begin{align*}
  \Delta
  & = \exp((\lambda(x) - 1) \sfD_{\lambda(x)}(\sfX \parallel \sfY)) - 1
  \tag{definition of $\lambda(x)$-R\'enyi divergence}\\
  & \leq \exp((\lambda(x) - 1) \sfD_{1/2}(\sfX \parallel \sfY)) - 1
  \tag{$\lambda(x)$-R\'enyi divergence non-decreasing function of $\lambda(x)$}\\
  & = \bracket*{1 - \Hell^2(\sfX \parallel \sfY)}^{2 (1 - \lambda(x))} - 1
  \tag{expression of $\sfD_{1/2}$ in terms of Hellinger distance}\\
  & \leq \bracket*{1 - 2 (1 - \lambda(x)) \Hell^2(\sfX \parallel \sfY)} - 1
  \tag{inequality $(1 - x)^{\alpha} \leq 1 - \alpha x$, valid for
    $\alpha \in [0, 1]$ and $x \in [0, 1]$}\\
  & = - 2 (1 - \lambda(x)) \, \Hell^2(\sfX \parallel \sfY).
\end{align*}
A symmetric argument for $\lambda(x) \geq \frac{1}{2}$ yields $\Delta
\leq - 2 \lambda(x) \, \Hell^2(\sfX \parallel \sfY)$ by first
expressing $\Delta$ as
\[
\exp(\lambda(x) \sfD_{1 - \lambda(x)}(\sfY \parallel
\sfX)) - 1.
\]
Hence, in all cases, we have
\[
\Delta \leq - 2 \min\curl*{\lambda(x), 1 - \lambda(x)} \,
\Hell^2(\sfX \parallel \sfY).
\]
The claim then follows by integrating this bound in
Theorem~\ref{th:two-step-bregman-projection}.
\end{proof}
Tighter inequalities with more favorable constants can be derived in
terms of the Hellinger distance at the expense of more complex proofs,
or by resorting to a refined Young's inequality such as that of
\cite{KittanehManasrah2010}. Alternative bounds can also be derived in
terms of the total variation, using the inequality
$\sfD_{\lambda(x)}(\sfX \parallel \sfY) \geq \frac{\alpha}{2} V^2(\sfX,
\sfY)$, valid for all $\lambda(x) \in (0, 1]$
  \citep{VanErvenHarremoes2014}, where $V(\sfX, \sfY)$ is the total
  variation of $\sfX$ and $\sfY$.

\subsection{Maximin properties}
\label{sec:maximin-properties}

In the previous sections, we considered a fixed reference (or ideal)
coherent conditional distribution function $\pi^*$ and showed that
both the direct and two-step Bregman projection solutions yield
improvements over the baseline $\pi_0$.  For a given conditional
distribution function $\pi$, the improvement for a reference $\pi^*$
is measured by
  \[
    \Improv_{\pi^*}(\pi)
    = \E_{x \sim \sD_\sX} \bracket*{\sfB_F\paren*{\pi^*(x) \parallel \pi_0(x)}}
    - \E_{x \sim \sD_\sX} \bracket*{\sfB_F \paren*{\pi^*(x) \parallel \pi(x)}}.
  \]
  We now turn to the problem of finding a solution that maximizes the
  worst-case improvement over the choice of the reference distribution
  $\pi^*$. This leads to the following max–min optimization problem:
  \begin{align}
  \label{problem:maxmin}
  \sup_{\pi \in \Ccoh \cap \Pi} \, \inf_{\pi^* \in \Ccoh \cap \Pi}
  \Improv_{\pi^*}(\pi).
\end{align}
When $\sfB_F$ is jointly convex, as in the squared distance and the
relative entropy, $\Improv_{\pi^*}(\pi)$ is a concave function
of $\pi$, since the infimum of a set of concave functions is also
concave. Since $\Pi$ is a (closed) convex set, \eqref{problem:maxmin}
is then a convex optimization in $\pi$. Our next result shows that the
two-step-Bregman projection $\hh \pi$ is always a solution to this
problem.

\begin{theorem}
\label{th:maxmin}
Let $\Phi \colon \sX \to \sX$ be an involution and let $\Pi$ be a
convex and closed family of conditional distribution functions
$\pi \colon \sX \to \Delta(\sY)$.  Assume that $F$ is a Legendre
function. Then, the two-step-Bregman projection is a solution of
Problem~\ref{problem:maxmin}.
\end{theorem}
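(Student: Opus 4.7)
The plan is to show directly that $\hh \pi$ attains the maximin value. First, I set up a convenient decomposition of the improvement. The key observation is that $\ov \pi$, as realized by the right-centroid formula in the proof of Theorem~\ref{th:two-step-bregman-projection}, is the Bregman projection of $\pi_0$ onto the \emph{linear subspace} $\Ccohddag$, not merely onto the cone $\Ccohdag$ (the per-orbit unconstrained minimization produced by Lemma~\ref{lemma:right-bregman-centroid} is exactly this affine projection). Applying the Pythagorean equality for affine sets, Theorem~\ref{th:pythagorean-equality}, together with Lemma~\ref{lemma:Bregman-expectation}, yields, for every $\pi \in \Ccoh \cap \Pi \subseteq \Ccohddag$,
\[
  \E\bracket*{\sfB_F(\pi(x) \parallel \pi_0(x))} = \E\bracket*{\sfB_F(\pi(x) \parallel \ov \pi(x))} + \E\bracket*{\sfB_F(\ov \pi(x) \parallel \pi_0(x))}.
\]
Since every $\pi^* \in \Ccoh \cap \Pi$ is likewise in $\Ccohddag$, the same equality applied to $\pi^*$ rewrites the improvement as
\[
  \Improv_{\pi^*}(\pi) = \E\bracket*{\sfB_F(\ov \pi(x) \parallel \pi_0(x))} + \E\bracket*{\sfB_F(\pi^*(x) \parallel \ov \pi(x))} - \E\bracket*{\sfB_F(\pi^*(x) \parallel \pi(x))}.
\]
Setting $V := \E\bracket*{\sfB_F(\hh \pi(x) \parallel \ov \pi(x))} + \E\bracket*{\sfB_F(\ov \pi(x) \parallel \pi_0(x))}$, the claim reduces to showing $\sup_\pi \inf_{\pi^*} \Improv_{\pi^*}(\pi) = V$, attained at $\pi = \hh \pi$.

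The lower bound $\inf_{\pi^*} \Improv_{\pi^*}(\hh \pi) \geq V$ follows from the generalized Pythagorean inequality applied to the second step of the projection: since $\hh \pi$ is the $\sfB_F$-projection of $\ov \pi$ onto the closed convex set $\Ccoh \cap \Pi$, for every $\pi^* \in \Ccoh \cap \Pi$,
\[
  \E\bracket*{\sfB_F(\pi^*(x) \parallel \ov \pi(x))} \geq \E\bracket*{\sfB_F(\pi^*(x) \parallel \hh \pi(x))} + \E\bracket*{\sfB_F(\hh \pi(x) \parallel \ov \pi(x))}.
\]
Substituting into the decomposition above, the $\pi^*$-dependent term $-\E\bracket*{\sfB_F(\pi^*(x) \parallel \hh \pi(x))}$ cancels with its partner, yielding $\Improv_{\pi^*}(\hh \pi) \geq V$ uniformly in $\pi^*$.

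For the matching upper bound, I probe each candidate $\pi \in \Ccoh \cap \Pi$ against the specific feasible reference $\pi^* = \hh \pi$. The decomposition evaluated at this choice simplifies to
\[
  \Improv_{\hh \pi}(\pi) = V - \E\bracket*{\sfB_F(\hh \pi(x) \parallel \pi(x))} \leq V,
\]
by non-negativity of the Bregman divergence. Hence $\inf_{\pi^*} \Improv_{\pi^*}(\pi) \leq V$ for every $\pi \in \Ccoh \cap \Pi$. Combined with the lower bound at $\pi = \hh \pi$, the supremum equals $V$ and is attained at $\hh \pi$.

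The main technical delicacy I expect is confirming that $\ov \pi$ is genuinely the Bregman projection onto the affine subspace $\Ccohddag$ rather than only onto the cone $\Ccohdag$, so that Theorem~\ref{th:pythagorean-equality} furnishes an \emph{equality} and not merely an inequality; it is precisely this equality that forces the outer and inner bounds to collapse to the same value $V$. This identification is supplied by the explicit centroid formula from Lemma~\ref{lemma:right-bregman-centroid} already invoked in the proof of Theorem~\ref{th:two-step-bregman-projection}. Once this is in place, the argument is a clean saddle-point computation: one Pythagorean equality for the first projection, one Pythagorean inequality for the second, and the canonical adversarial probe $\pi^* = \hh \pi$.
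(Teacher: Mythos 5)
Your proposal is correct, and its skeleton matches the paper's: both proofs lower-bound the worst-case improvement of $\hh\pi$ and upper-bound every competitor by probing with the feasible reference $\pi^* = \hh\pi$, so the saddle value is pinned down at the same quantity (your $V$ equals the paper's $\E[\sfB_F(\hh\pi \parallel \ov\pi)] + \delta$, since the first-step decomposition forces $\delta = \E[\sfB_F(\ov\pi \parallel \pi_0)]$). Where you genuinely diverge is in how the key first-step identity is obtained. The paper stays at the per-orbit level: it rewrites $\Improv_{\pi^*}(\pi)$ with the weights $\lambda(x)$, then uses conjugate duality together with Lemma~\ref{lemma:bregman-identity} to produce $\sfB_F(\pi^*(x)\parallel\ov\pi(x)) + \delta(x)$ with an explicit Jensen-gap term $\delta(x) \ge 0$ in $F^*$; this keeps the argument self-contained within the tools of Theorem~\ref{th:two-step-bregman-projection} and yields the explicit $\delta$ that is reused for the Hellinger bounds. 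You instead invoke the affine Pythagorean equality (Theorem~\ref{th:pythagorean-equality}) through the linear subspace $\Ccohddag$, exactly as in Section~\ref{sec:single=two}, to get $\E[\sfB_F(\pi^*\parallel\pi_0)] = \E[\sfB_F(\pi^*\parallel\ov\pi)] + \E[\sfB_F(\ov\pi\parallel\pi_0)]$; this is cleaner and makes the maximin value transparently the two-step "gain" $V$, but it quietly imports the Section~5 standing assumptions (generator domain containing the non-negative orthant, so the orbit centroid is the projection onto $\Ccohddag$ and the equality can legitimately be applied to comparison points in $\Ccoh\cap\Pi \subseteq \Ccohddag$, cf.\ Theorems~\ref{th:projection-ddag-general-cone} and~\ref{th:equivalence}), whereas the paper's duality computation does not need that identification. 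One small point to make explicit if you write this up: Theorem~\ref{th:pythagorean-equality} is stated for an affine set contained in $\sK$, and $\Ccohddag$ need not sit inside $\sK^\sX$ (e.g., for negative entropy); as in the paper's own use in Theorem~\ref{th:equivalence}, what you actually need and use is the equality only at points of $\Ccoh\cap\Pi$, which do lie in the domain, so the argument goes through, but the caveat deserves a sentence.
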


\begin{proof}
  Since both $\pi^*$ and $\pi$ are in $\Ccoh$, we can write:
  \begin{align*}
    \Improv_{\pi^*}(\pi)
    & = \E_{x \sim \sD_\sX} \bracket*{\sfB_F(\pi^*(x) \parallel \pi_0(x))
      - \sfB_F(\pi^*(x) \parallel \pi(x)) }\\
    & = \frac{1}{2} \sum_{x \in \sX} \bracket[\Big]{
      \P(x) \, \sfB_F(\pi^*(x) \parallel \pi_0(x))
      + \P(\Phi(x)) \, \sfB_F(\pi^*(x) \parallel \pi_0(\Phi(x)))\\
    & \quad  - (\P(x) + \P(\Phi(x))) \, \sfB_F(\pi^*(x) \parallel \pi(x)) }\\
    & = \frac{1}{2} \sum_{x \in \sX} (\P(x) + \P(\Phi(x))) \, \bracket[\Big]{
      \lambda(x) \, \sfB_F(\pi^*(x) \parallel \pi_0(x))
      + (1 - \lambda(x)) \, \sfB_F(\pi^*(x) \parallel \pi_0(\Phi(x)))\\
    & \quad  - \sfB_F(\pi^*(x) \parallel \pi(x)) }\\
    & = \E_{x \sim \sD_\sX} \bracket[\Big]{
      \lambda(x) \, \sfB_F(\pi^*(x) \parallel \pi_0(x))
      + (1 - \lambda(x)) \, \sfB_F(\pi^*(x) \parallel \pi_0(\Phi(x)))
      - \sfB_F(\pi^*(x) \parallel \pi(x)) },
  \end{align*}
  where we adopt the notation used in Theorem~\ref{th:two-step-bregman-projection}:
  $\lambda(x) = \frac{\P(x)}{\P(x) + \P(\Phi(x))}$. Using
  Lemma~\ref{lemma:bregman-identity}, we can rewrite the sum of the
  first two terms within the expectation as follows:
  \begin{align*}
    & \lambda(x) \, \sfB_F(\pi^*(x) \parallel \pi_0(x))
    + (1 - \lambda(x)) \, \sfB_F(\pi^*(x) \parallel \pi_0(\Phi(x)))\\
    & = \lambda(x) \, \sfB_{F^*}(\nabla F(\pi_0(x)) \parallel \nabla F(\pi^*(x)))
    + (1 - \lambda(x)) \,
      \sfB_{F^*}(\nabla F(\pi_0(\Phi(x))) \parallel \nabla F(\pi^*(x)))\\
    & = \sfB_F(\pi^*(x) \parallel \ov \pi(x)) + \delta(x),
  \end{align*}
  where
  $\delta(x) = \bracket*{\lambda(x)
      F^*(u_0) + (1 - \lambda(x)) F^*(u_1)} - F^*(\lambda(x) u_0 + (1
    - \lambda(x)) u_1) \geq 0$, with
  $u_0 = \nabla F(\pi_0 (x))$ and $u_1 = \nabla F(\pi_0 (\Phi(x)))$,
  and
  \[
    \ov \pi(x) = (\nabla F)^{-1}\paren*{\lambda(x) \nabla F(\pi_0(x))
      + (1 - \lambda(x)) \nabla F(\pi_0(\Phi(x)))}.
  \]
  It is straightforward to check that $\ov \pi$ is coherent.
  Thus, we have
  \begin{align*}
    \sup_{\pi \in \Ccoh \cap \Pi} \inf_{\pi^* \in \Ccoh \cap \Pi} \Improv_{\pi^*}(\pi)
    & = \sup_{\pi \in \Ccoh \cap \Pi}  \inf_{\pi^* \in \Ccoh \cap \Pi}
      \E_{x \sim \sD_\sX} \bracket[\big]{\sfB_F(\pi^*(x) \parallel \ov \pi(x))
      + \delta(x) - \sfB_F(\pi^*(x) \parallel \pi(x))}\\
    & \leq \sup_{\pi \in \Ccoh \cap \Pi} \E_{x \sim \sD_\sX} \bracket[\big]{
      \sfB_F(\hh \pi(x) \parallel \ov \pi(x)))
      + \delta(x) - \sfB_F(\hh \pi(x) \parallel \pi(x))}
    \tag{$\pi^*$ can be chosen to be $\hh \pi$ since $\hh \pi \in \Ccoh \cap \Pi$}\\
    & = \E_{x \sim \sD_\sX} \bracket*{
      \sfB_F(\hh \pi(x) \parallel \ov \pi(x)))
      + \delta(x)}\\
    & \leq \Improv_{\pi^*}(\hh \pi),
      \tag{Theorem~\ref{th:two-step-bregman-projection}}
  \end{align*}
  for any $\pi^* \in \Ccoh \cap \Pi$. This shows that
  $\Improv_{\pi^*}(\hh \pi)$ attains the value of
  Problem~\ref{problem:maxmin} and that $\hh \pi$ is a solution.
\end{proof}
This establishes an additional favorable property for the two-step
projection solution $\hh \pi$ and thus also for the single-step
projection $\h \pi$ when the two coincide, for Bregman divergences
whose generator is defined over the full positive orthant.

\section{Characterization of Improvement Mechanisms}
\label{sec:characterization}

In this section, we study mechanisms that improve a baseline function
$\pi_0 \in \Pi$ under Bregman divergences. We begin with the case of a
fixed divergence, showing that any mechanism guaranteeing improvement
must coincide with a Bregman projection onto a block-constant set
(Subsection~\ref{sec:characterization-single-F}). We then turn to the
stronger requirement of improvement across \emph{all} Legendre Bregman
divergences. This leads to a rigidity result: the outputs of any such
mechanism are forced to lie in a \emph{universal block-constant
  structure}, independent of the particular divergence
(Subsection~\ref{sec:characterization-rigidity}).

\subsection{Characterization for a single divergence}
\label{sec:characterization-single-F}

We begin by analyzing mechanisms that guarantee improvement with
respect to a fixed Legendre Bregman divergence. This case serves as a
building block for understanding the universal rigidity phenomenon
we analyze later.

\begin{theorem}[General improvement implies Bregman projection]
\label{th:single_F}
Let $F$ be a Legendre function and let $\pi_0 \in \Piall$. Suppose a
mechanism produces $\h \pi_F \in \Pi$ such that for every
$\pi^* \in \Pi$,
\[
  \E\bracket*{\sfB_F(\pi^*(x) \parallel \h \pi_F(x))}
  \leq
  \E\bracket*{\sfB_F(\pi^*(x) \parallel \pi_0(x))}
  - \E\bracket*{\sfB_F(\h \pi_F(x) \parallel \pi_0(x))}.
\]
Assume $\Pi$ is closed and convex and that a.e.\ the relevant values
lie in the interior of the domain of $F$.  Define the partition
$\curl*{\sX_i}_{i \in I}$ by the equivalence relation
$x \sim x' \Leftrightarrow \h \pi_F(x) = \h \pi_F(x')$ and set
\[
  \sC_F = \curl*{ \pi \in \Pi \colon \pi(x) = \pi(x'),
  \forall x, x' \in \sX_i,\ \forall i \in I}.
\]
Then
\[
  \h \pi_F
  = \argmin_{\pi \in \Pi \cap \sC_F}
  \E\bracket*{\sfB_F(\pi(x) \parallel \pi_0(x))}.
\]
That is, $\h \pi_F$ is the (unique) Bregman projection of $\pi_0$ onto
$\Pi \cap \sC_F$.
\end{theorem}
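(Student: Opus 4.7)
The plan is to show that the hypothesized improvement inequality is, after an application of the three-point (triangular) identity for Bregman divergences, nothing more than the first-order variational inequality characterizing the Bregman projection of $\pi_0$ onto $\Pi$, and then to observe that $\h \pi_F$ automatically lies in $\Pi \cap \sC_F$, so it must also be the projection onto this smaller convex set.

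Concretely, I would first apply the three-point identity pointwise with $\sfp = \pi^*(x)$, $\sfr = \h \pi_F(x)$, and $\sfq = \pi_0(x)$, which (after rearrangement) gives
\[
\sfB_F(\pi^*(x) \parallel \pi_0(x))
= \sfB_F(\pi^*(x) \parallel \h \pi_F(x)) + \sfB_F(\h \pi_F(x) \parallel \pi_0(x))
+ \tri*{\nabla F(\h \pi_F(x)) - \nabla F(\pi_0(x)),\, \pi^*(x) - \h \pi_F(x)}.
\]
The interior assumption on the range of $\h \pi_F$ and $\pi_0$ makes the gradients well-defined a.e. Taking expectation over $\sD_\sX$ and substituting into the hypothesized improvement inequality, the two $\E[\sfB_F(\pi^* \parallel \h \pi_F)]$ terms cancel and the $\E[\sfB_F(\h \pi_F \parallel \pi_0)]$ terms cancel, leaving the variational inequality
\[
\E_{x \sim \sD_\sX}\bracket*{\tri*{\nabla F(\h \pi_F(x)) - \nabla F(\pi_0(x)),\, \pi^*(x) - \h \pi_F(x)}} \geq 0
\quad \forall\, \pi^* \in \Pi.
\]

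Next I would invoke Lemma~\ref{lemma:Bregman-expectation} to interpret $\sfF(\pi) = \E[F(\pi(x))]$ as a Legendre function on $L^2(\sD_\sX;\Omega)$ with pointwise gradient; the displayed inequality is then exactly the first-order optimality condition for $\h \pi_F$ to minimize $\pi \mapsto \sfB_\sfF(\pi \parallel \pi_0) = \E[\sfB_F(\pi \parallel \pi_0)]$ over the closed convex set $\Pi$. Because $\Pi \cap \sC_F \subseteq \Pi$, the same variational inequality holds a fortiori for all $\pi^* \in \Pi \cap \sC_F$, and by construction of the partition via the equivalence relation $x \sim x' \Leftrightarrow \h \pi_F(x) = \h \pi_F(x')$, the mechanism output $\h \pi_F$ is block-constant and thus lies in $\sC_F$, hence in $\Pi \cap \sC_F$. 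This set is convex (intersection of the convex $\Pi$ with the linear subspace $\sC_F$) and closed, so Legendre-type uniqueness gives $\h \pi_F = \argmin_{\pi \in \Pi \cap \sC_F} \E[\sfB_F(\pi \parallel \pi_0)]$.

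The only real subtlety I anticipate is bookkeeping around the domain and measurability assumptions: one must check that $\Pi \cap \sC_F$ is nonempty, closed, and convex (immediate), and that the variational characterization of Bregman projections transfers to the functional setting, which is precisely what the ``a.e.\ in the interior of $\dom(F)$'' hypothesis combined with Lemma~\ref{lemma:Bregman-expectation} is set up to guarantee. Apart from this, the argument is essentially a single algebraic manipulation: the three-point identity converts the improvement-over-baseline condition into optimality, and the definition of $\sC_F$ is tailored so that $\h \pi_F$ is trivially feasible in the smaller set.
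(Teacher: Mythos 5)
Your proof is correct, and it reaches the paper's conclusion by a slightly different and in fact more direct route. The paper fixes a comparator $\pi \in \Pi \cap \sC_F$, forms the segment $\pi_t = (1-t)\h\pi_F + t\pi$, observes that the test function $g(t)$ built from the improvement inequality is affine with $g(t) \le 0$ and $g(0) = 0$, and extracts the variational inequality $\E\bracket*{\tri*{\nabla F(\h\pi_F(x)) - \nabla F(\pi_0(x)),\,\pi(x) - \h\pi_F(x)}} \ge 0$ from $g'(0^+) \le 0$, but only for comparators in $\Pi \cap \sC_F$. You instead apply the three-point identity pointwise and cancel terms against the hypothesis, which yields the same variational inequality in one algebraic step and for \emph{every} $\pi^* \in \Pi$; this is precisely the equivalence the paper itself records later, in the remark preceding Theorem~\ref{th:rigidity_affine_C2}. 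Your version buys a marginally stronger intermediate fact, namely that $\h\pi_F$ is the Bregman projection of $\pi_0$ onto all of $\Pi$, from which the stated claim follows since $\h\pi_F$ is block-constant on its own level sets and hence feasible in the closed convex set $\Pi \cap \sC_F$; the paper's path argument avoids invoking the identity explicitly and verifies the first-order condition only where it is needed. Both arguments finish identically: the variational inequality is the first-order optimality condition for the convex functional $\pi \mapsto \E\bracket*{\sfB_F(\pi(x) \parallel \pi_0(x))}$ over $\Pi \cap \sC_F$ (via Lemma~\ref{lemma:Bregman-expectation}), and uniqueness follows from $F$ being Legendre.
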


\begin{proof}
  Fix any comparator $\pi \in \Pi \cap \sC_F$. Consider the global
  convex combination (valid because $\Pi$ is convex)
  $\pi_t = (1 - t) \h \pi_F + t \pi$, $t \in [0, 1]$.  Define the
  affine test function $g$ as follows:
\[
  g(t) = \E\bracket*{\sfB_F(\pi_t(x) \parallel \h \pi_F(x))}
         - \E\bracket*{\sfB_F(\pi_t(x) \parallel \pi_0(x))}
         + \E\bracket*{\sfB_F(\h \pi_F(x) \parallel \pi_0(x))}.
\]
In the definition of $g$ due to the difference of the first
Bregman divergences, the terms in $F(\pi_t(x))$ vanish. As
a result, $g$ is an affine function of $t$.
By the assumed strong improvement property we have $g(t) \leq 0$ for
every $t \in [0, 1]$. Since $\pi_0$ and $\h \pi_F$ are
fixed, we can write
\[
  g(0) = \E\bracket*{\sfB_F(\h \pi_F \parallel \h \pi_F)}
         - \E\bracket*{\sfB_F(\h \pi_F \parallel \pi_0)}
         + \E\bracket*{\sfB_F(\h \pi_F \parallel \pi_0)} = 0.
\]
Thus, $g$ is affine in $t$ and satisfies $g(t) \leq 0$ for all
$t \in [0, 1]$ with $g(0) = 0$. Its right-hand derivative at $0$
therefore satisfies $g'(0^+) \leq 0$. For any $t \in (0, 1)$,
we have
\begin{align*}
g'(t)
  & = \frac{d}{dt} \bracket[\Big]{\E\bracket*{\sfB_F(\pi_t(x) \parallel \h \pi_F(x))}
    - \E\bracket*{\sfB_F(\pi_t(x) \parallel \pi_0(x))}} \\
  & = \frac{d}{dt} \bracket[\Big]{\E\big[F(\pi_0(x)) - F(\h \pi_F(x))\big] 
    + \E\bracket*{\tri*{\nabla F(\pi_0(x)), \pi_t(x) - \pi_0(x)}} \\
  & \quad - \E\bracket*{\tri*{\nabla F(\h \pi_F(x)), \pi_t(x) - \h \pi_F(x)}}}\\
  & = \frac{d}{dt} \bracket[\Big]{
    \E\bracket*{\tri*{\nabla F(\pi_0(x)), \pi_t(x) - \pi_0(x)}} 
    - \E\bracket*{\tri*{\nabla F(\h \pi_F(x)), \pi_t(x) - \h \pi_F(x)}}}\\
  & = -\E\bracket[\Big]{\tri*{\nabla F(\h \pi_F(x)) - \nabla F(\pi_0(x)),\ \pi(x)-\h \pi_F(x)}}.
\end{align*}
Thus $g'(0^+) \leq 0$ implies
\[
  \E\bracket*{\tri*{\nabla F(\h \pi_F(x)) - \nabla F(\pi_0(x)),\ \pi(x)-\h \pi_F(x)}}
  \ge 0.
\]
Since this holds for every $\pi\in\Pi\cap\sC_F$, it is exactly the
first-order optimality (variational) condition for minimizing the
strictly convex functional
$\pi \mapsto \E\bracket*{\sfB_F(\pi \parallel \pi_0)}$ over the closed
convex set $\Pi\cap\sC_F$. Uniqueness of the minimizer follows the
fact that $F$ is Legendre. Thus, $\h \pi_F$ equals that minimizer, that
is, it is the Bregman projection of $\pi_0$ onto $\Pi \cap \sC_F$.
\end{proof}
If the equivalence classes $\sX_i$ are singletons or arise as orbits
of a function $\Phi$ (e.g., an involution) with
$\pi(x) = \pi(\Phi(x))$, then $\sC_F$ coincides with the generalized
coherence set $\Ccoh$. In this case, the mechanism output $\h \pi_F$
is exactly the Bregman projection onto $\Ccoh$.

The theorem's requirement may appear too stringent: not only does it
require the mechanism $\cM$ to improve upon the baseline $\pi_0$ for
any reference distribution $\pi^*$ but it also requires that
improvement to be at least
$\E\bracket*{\sfB_F(\h \pi_F(x) \parallel \pi_0(x))}$. However, as
shown by Theorem~\ref{th:direct-bregman-projection}, that improvement
is already guaranteed by a Bregman divergence projection over a convex
set. In light of that result, the requirement is very natural.

Although the proof of Theorem~\ref{th:single_F} relies on standard
variational arguments, the theorem itself is far from trivial. The key
insight is that the strong universal improvement condition, requiring
the mechanism to reduce the Bregman divergence relative to \emph{all}
comparators $\pi^* \in \Pi$, forces the output $\h \pi_F$ to lie in a
highly structured subset of $\Pi$. Specifically, the equivalence-class
construction defining $\sC_F$ is not imposed a priori but emerges
naturally from the mechanism’s output: any two inputs that the
mechanism maps to the same value define a class, and the mechanism’s
output must be constant across each class. This structure is
remarkable because it reveals a rigidity phenomenon: a mechanism
cannot achieve universal improvement without collapsing its output
onto a convex set $\Pi \cap \sC_F$ that respects these equivalence
classes. In other words, the theorem shows that universal improvement
does not merely constrain the value of $\h \pi_F$, but also forces a
specific piecewise-constant geometry on its output, which is a
nontrivial structural characterization.

\subsection{Rigidity under all divergences -- Topological proof}
\label{sec:characterization-rigidity}

In mathematics, rigidity theorems show that certain structures,
particularly in geometry and topology, are resisting to perturbation
\citep{Mostow1973,GromovPansu1991,Spatzier2004}. We find an analogous
principle in self-improvement, where a mechanism must improve an
initial policy $\pi_0$ simultaneously across all Legendre Bregman
divergences. While the optimal solution $\h \pi_F$ can vary with each
divergence $F$, this demand for universal improvement imposes a
powerful structural constraint. This is the rigidity we identify: all
possible outputs are confined to the same block-constant set, as with
the coherent set.

This result is stronger than the single-$F$ case.  For a fixed
Legendre function $F$, the mechanism output $\h \pi_F$ is a Bregman
projection onto a set $\sC_F$ that can depend on the output itself. In
contrast, the demand for universal improvement forces all outputs to
project onto a single, common set $\sC$. This shared constraint is
what forges the rigid, block-constant structure that any such
mechanism must invariably produce, regardless of the specific
divergence $F$.

The proof of rigidity for general families of divergences is far from
trivial. At first sight, one might expect the argument to proceed by
establishing a general variational inequality for the mechanism and
then intersecting the coherence sets $\sC_F$ across all generators
$F$. However, this approach quickly runs into trouble: even for two
functions, the intersection $\sC_{F_1} \cap \sC_{F_2}$ need not be
nontrivial. The real challenge is not to characterize the intersection
of partitions but to prove that the partitions themselves are
identical. Continuity of the map $F \mapsto \h \pi_F$ is an important
ingredient, yet it does not by itself resolve this issue. A critical
insight is that the rigidity proof may require fundamentally different
strategies depending on the geometry of the constraint set $\Pi$,
making a general theorem substantially more delicate.


In this section, we prove a rigidity theorem using a topological
argument.  We will assume stability, that is the continuity of the
output map $F \mapsto \h\pi_F$. We will further assume a strictness
assumption, which provides a safety margin in the optimality
condition. Together, surprisingly, these assumptions allow us to prove
that the solution's coherence set $\sC_F$ is \emph{sticky}, forcing
every generator in a connected family to share the same universal
coherence set.

\begin{theorem}[Topological rigidity]
\label{th:topological_rigidity}
Let $\cF$ be a family of Legendre functions whose domains contain a
fixed open convex neighborhood of $\Delta(\sY)$. Let
$\Pi \subset \Piall$ be closed and convex and let $\pi_0 \in \Piall$.
Let $\cM$ be a mechanism that for every $F \in \cF$ produces an output
$\h \pi_F \in \Pi$ satisfying the strong improvement inequality: for
all $\pi^* \in \Pi$,
\[
  \E\bracket*{\sfB_F(\pi^*(x) \parallel \h \pi_F(x))}
  \leq
  \E\bracket*{\sfB_F(\pi^*(x) \parallel \pi_0(x))}
  - \E\bracket*{\sfB_F(\h \pi_F(x) \parallel \pi_0(x))}.
\]
Assume that $F \mapsto \h \pi_F$ is continuous from the generator
topology (uniform convergence of $F$ and $\nabla F$ on $\Delta(\sY)$)
into $L^1(\sD_\sX)$ (stability).

Define the partition $\curl*{\sX_i}_{i \in I}$ by the equivalence
relation $x \sim x' \Leftrightarrow \h \pi_{F_0}(x) = \h \pi_{F_0}(x')$
(using the level sets of a chosen seed $F_0 \in \cF$) and set
  $\sC_{F_0} = \curl*{ \pi \in \Pi \colon \pi(x) = \pi(x'),
  \forall x, x' \in \sX_i,\ \forall i \in I }$.
Assume further \emph{strictness} on the path-connected family
$\wt \cF \subseteq \cF$ containing $F_0$: there exists
$\gamma > 0$ such that for every $F \in \wt \cF$,
\[
  \inf_{\pi \in \Pi \setminus \sC_{F_0}}
  \E\bracket*{\tri*{\nabla F(\h \pi_F(x)) - \nabla F(\pi_0(x)),\ 
    \pi(x) - \h \pi_F(x)}}
  \geq \gamma.
\]
Then, for every $F \in \wt \cF$ the mechanism output satisfies
$\h \pi_F \in \sC_{F_0}$ a.e.  In particular, the partition of $\sX$
induced by $\h \pi_F$ is identical for all $F \in \wt \cF$.
\end{theorem}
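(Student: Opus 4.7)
The plan is a clopen topological argument on the path-connected family $\wt\cF$. Define the target set
\[
\cF_0 \;=\; \{F \in \wt\cF : \h\pi_F \in \sC_{F_0} \text{ a.e.}\}.
\]
Since $\wt\cF$ is path-connected, it suffices to show $\cF_0$ is nonempty, closed, and open in the relative topology of $\wt\cF$, because then $\cF_0 = \wt\cF$ delivers the main conclusion $\h\pi_F \in \sC_{F_0}$ for every $F \in \wt\cF$. Nonemptiness is immediate: $\h\pi_{F_0} \in \sC_{F_0}$ holds by the very definition of $\sC_{F_0}$, which is built from the level sets of $\h\pi_{F_0}$.

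Closedness uses the stability assumption. If $F_n \to F$ in the generator topology with $\h\pi_{F_n} \in \sC_{F_0}$, then $\h\pi_{F_n} \to \h\pi_F$ in $L^1(\sD_\sX)$; passing to an a.e.\ convergent subsequence, the (linear, hence closed) constraints $\pi(x) = \pi(x')$ defining $\sC_{F_0}$ survive in the limit, so $\h\pi_F \in \sC_{F_0}$ and $F \in \cF_0$.

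Openness is where strictness does the decisive work. Applied to an arbitrary $F \in \wt\cF$, strictness asserts that the infimum of $\E\bracket*{\tri*{\nabla F(\h\pi_F(x)) - \nabla F(\pi_0(x)),\ \pi(x) - \h\pi_F(x)}}$ over $\pi \in \Pi \setminus \sC_{F_0}$ is at least $\gamma > 0$. If $\h\pi_F$ were itself in $\Pi \setminus \sC_{F_0}$, then the admissible test choice $\pi := \h\pi_F$ would make the expression equal $0$, contradicting the uniform positive margin. Hence every $F \in \wt\cF$ lies in $\cF_0$, which is therefore open (indeed, all of $\wt\cF$), and we obtain $\h\pi_F \in \sC_{F_0}$ throughout $\wt\cF$. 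The ``in particular'' statement on partition identity follows from the same continuity-plus-margin mechanism combined with Theorem~\ref{th:single_F}: for each block $\sX_i$ of $F_0$'s partition, $\h\pi_F$ is constant on $\sX_i$ with value $v_i(F) \in \Delta(\sY)$, varying continuously in $F$ by stability. Distinctness $v_i(F) \neq v_j(F)$ is an open condition and holds at $F_0$; strictness provides the uniform quantitative buffer that prevents limiting coincidences along paths in $\wt\cF$, so the partition induced by the level sets of $\h\pi_F$ remains exactly $\{\sX_i\}_{i \in I}$.

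The main obstacle is the delicate interplay between strictness and stability underlying the rigidity conclusion. Although strictness can be viewed as merely a strengthened variational inequality on top of Theorem~\ref{th:single_F}, its role here is twofold: it produces the local contradiction that makes openness trivial by plugging $\pi = \h\pi_F$ back into the infimum, and it simultaneously supplies the quantitative margin that rules out collapse of blocks in the limit. Without a uniform $\gamma$ holding across all of $\wt\cF$—for instance, with strictness only at $F_0$ or on an open neighborhood—the topological argument would break down at the partition-identity step, since continuous block values could approach and merge in the limit. The subtle point is thus not the variational computation itself but the realization that path-connectedness plus uniform strictness are exactly the ingredients needed to upgrade pointwise coherence at $F_0$ to a universal, $F$-independent coherence structure.
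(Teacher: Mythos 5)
Your clopen skeleton coincides with the paper's proof in two of its three pieces: nonemptiness of $S=\curl*{F\in\wt\cF\colon \h\pi_F\in\sC_{F_0}}$ at the seed $F_0$, and closedness via stability together with $L^1$-closedness of $\sC_{F_0}$. Where you genuinely diverge is the openness step. You apply the strictness hypothesis at an \emph{arbitrary} $F\in\wt\cF$ and test with $\pi=\h\pi_F$, so that $\h\pi_F\notin\sC_{F_0}$ would force $0=\Psi_F(\h\pi_F)\geq\gamma$, where $\Psi_F(\pi)=\E\bracket*{\tri*{\nabla F(\h\pi_F(x))-\nabla F(\pi_0(x)),\,\pi(x)-\h\pi_F(x)}}$. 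Under the literal quantification of the hypothesis (``for every $F\in\wt\cF$'') this is valid and settles the conclusion in one line --- but it also makes the entire topological apparatus, the stability assumption, and path-connectedness inert; your ``openness'' is then vacuous because $S=\wt\cF$ outright. The paper's proof is structured differently precisely because it only ever invokes strictness at a generator $G$ already known to lie in $S$: it uses stability to show $(F,\pi)\mapsto\Psi_F(\pi)$ is continuous, transfers the $\gamma$ margin at $G$ to a $2\gamma/3$ margin on a neighborhood $\sU$ of $G$, and only then obtains the contradiction $0\geq 2\gamma/3$ for any $F\in\sU$ whose output escapes $\sC_{F_0}$. That version survives under the weaker (and evidently intended) reading in which the quantitative margin is only available at generators whose output already lies in $\sC_{F_0}$; your shortcut does not. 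So your argument is a correct verification of the statement as written, but it mainly exposes that the hypothesis, read literally, already contains the conclusion; to prove a theorem with content you should run the neighborhood-transfer step, which is where stability actually earns its keep.

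One caveat on your closing ``in particular'' argument: membership in $\sC_{F_0}$ only says that $\h\pi_F$ is constant on each block $\sX_i$, i.e., its level-set partition is a coarsening of $\curl*{\sX_i}_{i\in I}$; exact identity additionally requires distinct blocks to keep distinct values. Strictness is a statement about the functional $\Psi_F$ on $\Pi\setminus\sC_{F_0}$ and gives no separation between block values $v_i(F)$ and $v_j(F)$, and openness of distinctness at $F_0$ does not by itself propagate along an entire path, so the claim that ``strictness provides the uniform buffer preventing coincidences'' is unsupported. The paper's own proof stops at $\h\pi_F\in\sC_{F_0}$ and reads the partition claim loosely, so this does not undermine your main conclusion, but the extra argument as written should be dropped or replaced by an explicit no-merging hypothesis or proof.
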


Intuitively, the theorem's topological argument relies on the
\emph{strictness} assumption to create a "potential well" around the
universal coherence set $\sC_{F_0}$. The variational inequality, which
is derived from the improvement guarantee, can be thought of as
measuring the slope of a loss function ($\Psi_F$ in the proof) at the
solution $\h \pi_F$.
The standard optimality condition (with a $\geq 0$ bound) would just
mean that $\h \pi_F$ is at a minimal point within the set $\sC_{F_0}$
(the floor of the well). Any step from $\h \pi_F$ to another
$\pi \in \sC_{F_0}$ is flat or uphill.
The strictness condition,
$\inf_{\pi \in \Pi \setminus \sC_{F_0}} \Psi_F(\pi) \geq \gamma > 0$,
is much stronger. It means that for \emph{any} point $\pi$
\emph{outside} the set $\sC_{F_0}$ ("cliffs" of the well), the slope
in that direction is not just non-negative, but is strictly positive
and uniformly bounded by $\gamma$. This $\gamma$-gap ensures that the
set $\sC_{F_0}$ is robustly optimal, with no other competitors just
outside the set that are nearly as good.
The proof leverages this $\gamma$-gap. The stability assumption
guarantees that as we perturb $F$, the solution $\h \pi_F$ can only
move slightly. The $\gamma$-gap ensures this slight move is not
enough to jump out of the potential well. This makes the coherence
set $\sC_{F_0}$ "sticky", forcing all solutions along the continuous
path $\wt \cF$ to remain trapped inside it.

\begin{proof}
  Define
  $S = \curl*{F \in \wt \cF \, \colon \, \h \pi_F \in \sC_{F_0}}$.  We
  show that $S$ is nonempty, open and closed in $\wt \cF$, which by
  path-connectedness implies $S = \wt \cF$.
  $\sC_{F_0}$ is considered modulo null sets, thus, throughout the
  proof, the qualifier ‘a.e.’ will be understood implicitly in the
  relevant statements, and thus will not be repeated.
  
  By assumption, $F_0$ is in $\wt \cF$ and
  $\h \pi_{F_0} \in \sC_{F_0}$, so $S$ contains $F_0$ and is
  non-empty.  Let $(F_n)_{n\ge 1} \subset S$ with $F_n\to
  F_\infty$. By stability, we also have
  $\h \pi_{F_n} \to \h \pi_{F_\infty}$ in $L^1(\sD_\sX)$. Each
  $\h \pi_{F_n}$ is in $\sC_{F_0}$ and thus blockwise constant on the
  fixed partition defining $\sC_{F_0}$. The set $\sC_{F_0}$ is closed
  in $L^1(\sD_\sX)$, so the limit $\h \pi_{F_\infty}\in
  \sC_{F_0}$. Hence $F_\infty\in S$, and $S$ is closed.

  Fix any $G \in S$. We will define a neighborhood $\sU$ of $G$
  contained in $S$. To do so, we show that the strictness property of
  $G$ extends to a neighborhood $\sU$. This disallows any $F \in \sU$
  from having an output $\h \pi_F$ outside $\sC_{F_0}$, as its
  functional $\Psi_F(\h\pi_F)$ must be zero by definition, yet the
  strictness property requires it to be strictly positive.
  By stability, the map $F \mapsto \h \pi_F$ is continuous from the
  generator topology (uniform convergence of $F$ and $\nabla F$ on
  $\Delta(\sY)$) to $L^1(\sD_\sX)$.  Define, for any $F \in \wt \cF$
  and any $\pi \in \Pi$,
\[
  \Psi_F(\pi) = \E\bracket*{\tri*{\nabla F(\h \pi_F(x)) - \nabla
      F(\pi_0(x)),\ \pi(x) - \h \pi_F(x)}}.
\]
The map $(F, \pi) \mapsto \Psi_F(\pi)$ is continuous since
$F \mapsto \nabla F$ is continuous in the topology and
$F \mapsto \h \pi_F$ is continuous by stability.  This implies that
the map $F \mapsto \sup_{\pi\in\Pi} |\Psi_F(\pi) - \Psi_G(\pi)|$ is
also continuous.
\begin{figure}[t]
  \centering
  \includegraphics[scale=.18]{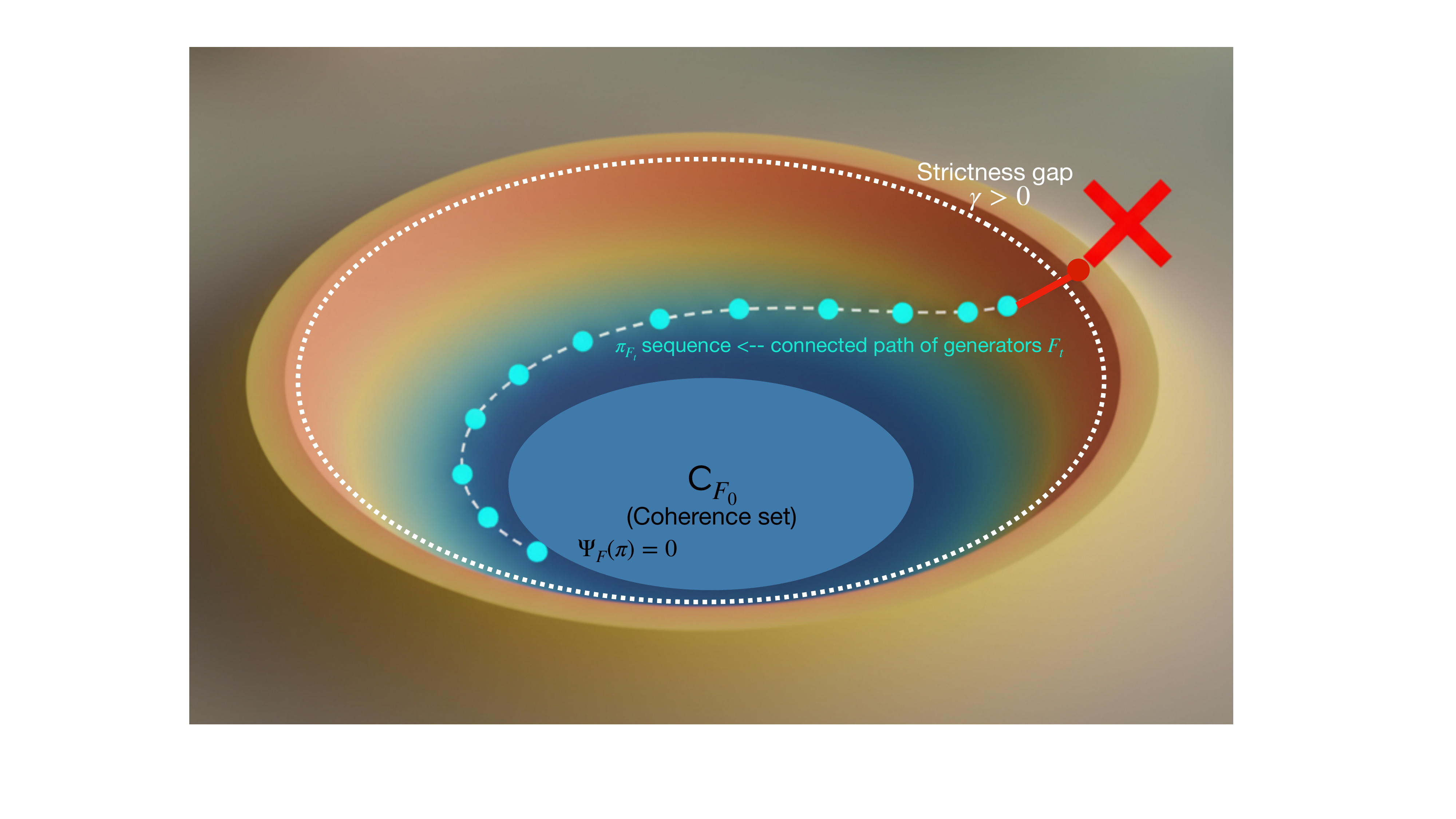}
  \caption{Illustration of the proof of Theorem~\ref{th:topological_rigidity}.}
  \label{fig:relaxed}
\end{figure}
Therefore, we can choose a neighborhood $\sU$ of $G$ small enough so
that for every $F \in \sU$, since $\pi$ and $\h\pi_F$ take values in
the simplex and therefore $\abs*{\pi - \h\pi_F}$ is uniformly bounded:
\[
  \sup_{\pi\in\Pi} \abs*{\Psi_F(\pi) - \Psi_G(\pi)} < \gamma/3.
\]
Since $G$ is in $S$, by the strictness hypothesis for all
$\pi \in \Pi \setminus \sC_{F_0}$, we have $\Psi_G(\pi) \ge \gamma$.
Combining these inequalities using the triangle inequality, for all
$F \in \sU$ and all $\pi \in \Pi \setminus \sC_{F_0}$, we have
\[
  \Psi_F(\pi) = \Psi_G(\pi) + (\Psi_F(\pi) - \Psi_G(\pi))
  \ge \gamma - |\Psi_F(\pi) - \Psi_G(\pi)|
  \ge \gamma - \gamma/3 = 2\gamma/3.
\]
This establishes that the strictness property holds (with a smaller
gap $2\gamma/3$) for the entire neighborhood $\sU$.
Now, suppose for contradiction that there exists some $F \in \sU$ such
that its output is not in the set: $\h \pi_F \not \in \sC_{F_0}$.
Since $\h \pi_F \in \Pi \setminus \sC_{F_0}$, we can use this
$\h \pi_F$ as the competitor $\pi$ in the strictness bound we just
derived: $\Psi_F(\h \pi_F) \ge 2\gamma/3 > 0$.  However, by definition
of $\Psi_F(\cdot)$, evaluating it at its own solution $\h \pi_F$
yields: $\Psi_F(\h \pi_F) = 0$.  This gives the final contradiction
$0 \geq 2\gamma/3 > 0$. Thus, no such $F$ exists, and for every
$F \in \sU$, we have $\h \pi_F \in \sC_{F_0}$.  This proves that
$\sU \subseteq S$, and that $S$ is open.

Since $S = \wt \cF$, for every $F \in \wt \cF$ we have
$\h \pi_F \in \sC_{F_0}$. This completes the proof.
\end{proof}
We now comment on the uniform strictness assumption adopted in
the theorem.
One might hope that a weaker, pointwise strictness assumption (i.e.,
$\Psi(\pi) > 0$ for all $\pi \in \Pi \setminus \sC_{F_0}$) would be
sufficient, especially if $\Pi$ is compact. This is not the case. The
set of competitors $\Pi \setminus \sC_{F_0}$ is not compact: it is a
compact set minus a closed subset, which is relatively open. A
continuous function on a non-compact set is not guaranteed to attain
its minimum, and its infimum can be zero even if the function is
strictly positive everywhere. For example, if $\Pi = [-1, 1]$ and
$\sC_{F_0} = \curl*{0}$, the functional $\Psi(\pi) = \pi^2$ is
strictly positive on $\Pi \setminus \sC_{F_0}$, yet
$\inf \Psi(\pi) = 0$. Thus, the uniform $\gamma$-gap is a necessary
and stronger condition.

This uniform $\gamma$-gap holds for some important classes of convex
sets but fails for others.  The assumption is likely to hold for
polyhedral sets, such as the probability simplex. In that case, $\Pi$
is composed of flat "faces." A competitor $\pi \notin \sC_{F_0}$ must
lie on a different face (or be non-aligned on the same face). This
discrete, piecewise-linear geometry can bound the functional
$\Psi(\pi)$ away from zero, creating the required $\gamma$-gap.
Conversely, the assumption often fails for smoothly curved sets.
In examples with the $L^2$-ball, the functional $\Psi(\pi)$ can
be strictly positive for all competitors, yet its infimum can still be
zero, as the value is approached on the boundary of $\sC_{F_0}$.

A promising alternative proof is a topological two-path
argument. First, we establish a baseline coherence set $\sC_0$
using a seed generator $F_0$. For any other generator $G$, we define a
continuous path of generators $F_t$ connecting $F_0$ to $G$. The proof
then compares two continuous solution paths: (1) the "true" path
$\h\pi_t = P_{\Pi}^{F_t}(\pi_0)$, which is continuous by the stability
assumption, and (2) the candidate path
$\pi_t^* = P_{\sC_0}^{F_t}(\pi_0)$, the projection onto the fixed
set $\sC_0$. Since both paths start at the same point
($\h\pi_0 = \pi_0^*$), the proof's objective is to show they are
identical ($\h\pi_t = \pi_t^*$ for all $t$), which would imply
$\h\pi_G \in \sC_0$. This requires showing that the candidate path
$\pi_t^*$ also satisfies the global variational inequality, which is
typically guaranteed by a strictness assumption that creates a
potential well, forcing the true path to \emph{stick} to the
set $\sC_0$. However, perhaps alternative techniques can
be used to enforce that with weaker assumptions.

We assume stability, that is, the continuity of the map $F \mapsto \h
\pi_F$ (from the $C^1$ generator topology to $L^1(\sD_\sX)$). This is
a standard regularity assumption in parametric optimization;
continuity of the unique minimizer $\h \pi_F$ is expected since it
minimizes a strictly convex functional, $\E\bracket*{\sfB_F(\cdot
\parallel \pi_0)}$, which varies continuously with $F$ in the $C^1$
topology over the compact set $\Pi$ (see for example Berge's Maximum
Theorem \citep{Berge1963,AliprantisBorder2006}).  Rigorously verifying
the conditions for continuity in this specific functional setting is
left for future work.

\subsubsection{Toy example}

This simple example is convenient to illustrate the rigidity
theorem. The example is first constructed so that $\Pi$ enforces a
block-equality constraint. The coherence set $\sC$ (derived from any
generator) then coincides with $\Pi$, and the strictness condition
holds \emph{vacuously}, as $\Pi \setminus \sC$ is empty. We then show
how strictness \emph{fails} in a non-trivial case.

Let $\sX = \curl*{1, 2, 3}$ and $\sY = \curl*{0, 1}$. We identify a
conditional model $\pi$ with the vector $p = (p(1), p(2), p(3))$ where
$p(x) = \Pr[y=1 \mid x]$. Let $\sD_\sX$ be uniform on $\sX$ (weight
$1/3$ each). Fix the baseline $\pi_0$ with
\[
p_0(1) = 0.10, \qquad p_0(2) = 0.80, \qquad p_0(3) = 0.40.
\]
Define the feasible set $\Pi$ by the single block constraint that
$p(1) = p(2)$, that is
\[
\Pi = \curl*{ \pi = (p(1), p(2), p(3)) \colon p(1) = p(2),\ p(i) \in[0, 1] }.
\]
The corresponding block partition is $\sX_1=\curl*{1, 2}$ and
$\sX_2 = \curl*{3}$. The block-constant coherence set defined relative
to $\Pi$ is
\[
\sC = \curl*{ \pi\in\Pi : \pi(1) = \pi(2)} = \Pi.
\]
In this case, $\Pi \setminus \sC$ is the empty set, so the infimum in
the strictness condition is taken over an empty set (and is
$+\infty$), meaning the condition is vacuously satisfied. As predicted
by Theorem~\ref{th:topological_rigidity}, the projection $\h \pi_F$
must lie in $\sC$ for any $F$. We verify this for several generators.

The projection $\h \pi_F$ of $\pi_0$ onto $\Pi$ minimizes the expected
Bregman divergence. Since the constraint only links $p(1)$ and $p(2)$,
the minimizer $p(3)$ is always $p_0(3) = 0.40$. The value
$q^\star = p(1) = p(2)$ minimizes
$\sfB_F(q \parallel p_0(1)) + \sfB_F(q \parallel p_0(2))$. By
Lemma~\ref{lemma:right-bregman-centroid}, $q^\star$ is the Bregman
centroid, satisfying
$\nabla F(q^\star) = \frac{1}{2} (\nabla F(p_0(1)) + \nabla
F(p_0(2)))$.

\textbf{Squared-Euclidean generator.}
Take $F_{\mathrm{sq}}(p) = \tfrac{1}{2} \norm*{p}^2$. Then
$\nabla F(p) = p$. The centroid is the arithmetic mean:
\[
q^\star = \frac{1}{2} (p_0(1) + p_0(2)) = \frac{0.10 + 0.80}{2} = 0.45.
\]
Hence
\[
\h \pi_{\mathrm{sq}} = (0.45, 0.45, 0.40) \in \sC.
\]

\textbf{Negative-entropy generator ($F(p) = p \log p$).}
Here $\nabla F(p) = \log p + 1$. The centroid $q^\star$ satisfies:
\begin{align*}
\log q^\star + 1 & = \frac{1}{2} [(\log p_0(1) + 1) + (\log p_0(2) + 1)] \\
\log q^\star & = \frac{1}{2} [\log(0.1) + \log(0.8)] = \log \sqrt{0.08} \\
q^\star & = \sqrt{0.08} \approx 0.2828.
\end{align*}
Thus
\[
\h \pi_{\mathrm{KL}} = (0.2828, 0.2828, 0.40) \in \sC.
\]

\textbf{Negative logarithm generator ($F(p) = -\log p$).}
Here $\nabla F(p) = -1/p$. The centroid $q^\star$ (the harmonic mean)
satisfies:
\begin{align*}
-1/q^\star & = \frac{1}{2} [-1/p_0(1) - 1/p_0(2)] \\
1/q^\star & = \frac{1}{2} [1/0.1 + 1/0.8] = \frac{1}{2}[10 + 1.25] = 5.625 \\
q^\star & = 1/5.625 = 8/45 \approx 0.1778.
\end{align*}
Thus
\[
\h \pi_{\mathrm{Itakura-Saito}} = (0.1778, 0.1778, 0.40) \in \sC.
\]
As expected, all solutions lie in $\sC = \Pi$, but the specific
projection depends on the generator $F$.

\textbf{Non-separable Bregman generator.}
Here, we consider the generator
\[
F(p) = \sum_{x = 1}^3 \paren*{\frac{1}{2} p(x)^2} + p(1) p(3).
\]
This generator is not separable, so we must minimize
$G(p) = \sfB_F(p \parallel p_0)$ subject to $p(1) = p(2) = q$ and
$p \in [0, 1]^3$. This is equivalent to minimizing
$G(q, p_3) = q^2 + \frac{1}{2}p_3^2 + q p_3 - 1.3 q - 0.5 p_3$ (see
text for derivation).  The Hessian is
\[
\nabla^2 F(p)
=
\begin{pmatrix}
1 & 0 & 1 \\
0 & 1 & 0 \\
1 & 0 & 1
\end{pmatrix},
\]
which is positive semi-definite (eigenvalues $0, 1, 2$). The
unconstrained minimizer is $(0.8, 0.8, -0.3)$, which is outside the
domain. A search of the boundaries of the feasible set $[0, 1]^2$ for
$(q, p_3)$ finds the minimum at $(q, p_3) = (0.65, 0)$.  The
projection is thus
\[
\h \pi_{\mathrm{Non-Separable-Gen}} = (0.65, 0.65, 0) \in \sC,
\]
which is distinct from previous projections.

\textbf{Testing strictness non-trivially, a failure case.}
The strictness assumption
($\inf_{\pi \in \Pi \setminus \sC_{F_0}} \Psi_F(\pi) \ge \gamma > 0$)
is a strong condition. A meaningful test requires a scenario where the
competitor set $\Pi \setminus \sC_{F_0}$ is non-empty. We can
construct a simple and sound case where this assumption fails.

Let the feasible set be the unconstrained cube, $\Pi = \Piall = [0, 1]^3$. 
Let the generator be the squared-Euclidean generator, $F_0 = F_{\mathrm{sq}}$, 
so $\nabla F_0(p) = p$.
Now, to induce a non-trivial coherence set, we choose a baseline
$\pi_0$ that already has ties: $\pi_0 = (0.5, 0.5, 0.2)$.  Since
$\pi_0$ is already inside the unconstrained set $\Pi$, the Bregman
projection is just $\pi_0$ itself:
$\h \pi_{F_0} = \pi_0 = (0.5, 0.5, 0.2)$.

The set $\sC_{F_0}$ is defined by the level-set partition of 
$\h \pi_{F_0}$. Since $\h \pi_{F_0}(1) = \h \pi_{F_0}(2)$, the 
induced partition is $\sX_1=\{1, 2\}, \sX_2=\{3\}$, and the 
coherence set is:
\[
 \sC_{F_0} = \curl*{\pi \in \Pi \colon \pi(1) = \pi(2)}.
\]
In this case, $\Pi$ is the full cube, while $\sC_{F_0}$ is just the 2D
slice of the cube where the first two coordinates are
equal. Therefore, the competitor set $\Pi \setminus \sC_{F_0}$ is
non-empty, and our test is non-trivial.
We now check the infimum of the functional $\Psi_{F_0}(\pi)$ over the
competitor set:
\[
 \Psi_{F_0}(\pi)
 = \E\bracket*{\tri*{\nabla F_0(\h \pi_{F_0})
     - \nabla F_0(\pi_0),\ \pi - \h \pi_{F_0}}}.
\]
Since $\h \pi_{F_0} = \pi_0$, the gradient term is zero:
$\nabla F_0(\h \pi_{F_0}) - \nabla F_0(\pi_0) = \h \pi_{F_0} - \pi_0 =
0$.  Therefore, the functional is identically zero for all $\pi$:
$\Psi_{F_0}(\pi) = \E\bracket*{\tri*{0,\ \pi - \h \pi_{F_0}}} = 0$.
The strictness condition requires
$\inf_{\pi \in \Pi \setminus \sC_{F_0}} \Psi_{F_0}(\pi) \geq \gamma >
0$.  However, our calculation shows:
\[
  \inf_{\pi \in \Pi \setminus \sC_{F_0}} \Psi_{F_0}(\pi)
  = \inf_{\pi(1) \neq \pi(2)} 0 = 0.
\]
The strictness condition would require this infimum to be strictly
positive. Thus it fails, even though the competitor set is non-empty.
This is a “zero-slope” failure mode illustrating that strictness is a
genuinely non-trivial requirement, not implied by coherence or
feasibility alone.

\subsubsection{Interpretation and significance of rigidity}

The rigidity theorems establish a surprising result: while the
mechanism's outputs $\h\pi_F$ vary with the choice of generator $F$,
their underlying level-set structure remains constant. In this
section, we interpret this universal coherence property and discuss why
it is a powerful and non-trivial consequence of the stability
assumption.

Interpretation of the Rigidity Result.
Our rigidity theorems reveal a powerful structural property of Bregman
projections under a stability assumption. For any given generator $F$,
its output $\h \pi_F$ induces a partition of the domain $\sX$ based on
its level sets. This partition, in turn, defines a \emph{coherence
  set} $\sC_F$, an affine subspace of functions in $\Pi$ that are
constant on those same level sets.
The crucial conclusion of the theorems is that this partition is, in
fact, \emph{universal}: it is independent of the specific generator
chosen. All generators in a connected family are forced to produce
outputs that are piecewise-constant on the \emph{exact same}
underlying partition, defining a single, universal coherence set $\sC$.

Nontriviality of the Result.
This result is far from trivial. A priori, there is no reason to
expect the level-set partitions to agree across different
generators. Each generator $F$ endows the space with a different
geometry, leading to a different Bregman divergence and, in general, a
different projection $\h \pi_F$. As we saw in our numerical examples,
the \emph{values} of the projections $\h \pi_F$ and $\h \pi_G$ can
differ significantly for generators $F \neq G$.
The theorem's surprising conclusion is that despite these differences
in values, the underlying \emph{topological structure} (the level-set
partition) of all solutions must collapse to a single, fundamental
one. The stability assumption can be viewed as acting as a powerful
regularizer, forcing this structure to be rigid and independent of the
specific geometric "metric" ($F$) used for the projection.

\subsection{Rigidity under all divergences -- Algebraic proof}

An alternative proof that does not require the strictness assumption
imposes instead a structural assumption on $\Pi$. Here, we present a
proof in the case where $\Pi$ is a closed affine subspace.
The argument first establishes the result for the subclass of twice
continuously differentiable Legendre functions and then extends it to
all Legendre functions by invoking their density within the entire
family (Lemma~\ref{lemma:local_c1_approx_corrected}).

\begin{theorem}[Rigidity of Improvement for Affine Constraints]
\label{th:rigidity_affine}
Let $\cF$ be the family of all Legendre functions.
Let
$\Pi \subset \Piall$ be a closed convex set that is also an affine
subspace, and let $\pi_0 \in \Piall$. Suppose a mechanism $\cM$
produces for any $F \in \cF$ an output $\h \pi_F \in \Pi$
satisfying the strong improvement inequality: for all $\pi^* \in \Pi$,
\[
  \E\bracket*{\sfB_F(\pi^*(x) \parallel \h \pi_F(x))}
  \leq
  \E\bracket*{\sfB_F(\pi^*(x) \parallel \pi_0(x))}
  - \E\bracket*{\sfB_F(\h \pi_F(x) \parallel \pi_0(x))}.
\]
Assume that the map $F \mapsto \h \pi_F$ is continuous from the $C^1$
generator topology to $L^1(\sD_\sX)$ (stability).
Then, there exists a measurable partition $\curl*{\sX_i}_{i \in I}$ of
$\sX$ such that, defining the universal coherence set
\[
  \sC
  = \curl*{ \pi \in \Pi \colon \pi(x) = \pi(x')\ \text{for a.e.\ }
  x, x' \in \sX_i,\ \forall i \in I},
\]
we have $\h \pi_F \in \sC$ for all $F \in \cF$.
\end{theorem}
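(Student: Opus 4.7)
The plan is to prove the theorem in two stages, matching the strategy outlined in the preamble: first for the subclass $\cF_{C^2}$ of twice continuously differentiable Legendre generators, and then extend to all Legendre generators via the density lemma (Lemma~\ref{lemma:local_c1_approx_corrected}).

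First, I would invoke Theorem~\ref{th:single_F} for each $F \in \cF$, which gives that $\h \pi_F$ is the $\sfB_F$-projection of $\pi_0$ onto $\Pi \cap \sC_F$, where $\sC_F$ is the linear coherence set associated with the level-set partition of $\h \pi_F$. Because $\Pi$ is affine and $\sC_F$ is a linear subspace, the intersection $\Pi \cap \sC_F$ is affine, and Theorem~\ref{th:pythagorean-equality} sharpens the variational optimality into the equality
\[
  \E\bracket*{\tri*{\nabla F(\h \pi_F(x)) - \nabla F(\pi_0(x)),\ v(x)}} = 0,
  \qquad \forall v \in V,
\]
where $V$ denotes the linear direction of $\Pi$. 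Equivalently, the dual residual $\nabla F(\h \pi_F) - \nabla F(\pi_0)$ lies in the $L^2(\sD_\sX)$-orthogonal complement $V^\perp$. This is the algebraic analogue of the variational inequality exploited in the topological proof, but now promoted to an equality by the affine structure.

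Next, restricted to $\cF_{C^2}$, I would construct a single partition $\curl*{\sX_i}_{i \in I}$ and its universal coherence set $\sC$ such that $\h \pi_F \in \sC$ for every $F \in \cF_{C^2}$. The construction selects a canonical seed $F_0 \in \cF_{C^2}$ (for instance, the squared-Euclidean generator), reads off the level-set partition of $\h \pi_{F_0}$, and uses the affine rigidity of $\Pi$ to transfer this partition to every other $F$. Along any $C^1$-smooth path $F_t$ inside $\cF_{C^2}$, stability forces $t \mapsto \h \pi_{F_t}$ to be continuous in $L^1(\sD_\sX)$; differentiating the orthogonality equality together with the $C^2$ regularity of $F_t$ then shows that blocks of the seed partition cannot split, since a split would demand a new admissible direction in $V^\perp$ that is not available within the fixed linear decomposition induced by $\Pi$. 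Path-connectedness of $\cF_{C^2}$ propagates the locked partition to every $F$ in that subclass.

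Finally, I would extend to arbitrary $F \in \cF$ by density: Lemma~\ref{lemma:local_c1_approx_corrected} supplies a sequence $F_n \in \cF_{C^2}$ with $F_n \to F$ in the $C^1$ topology, and the stability hypothesis gives $\h \pi_{F_n} \to \h \pi_F$ in $L^1(\sD_\sX)$. Since $\sC$ is cut out by finitely many linear equality constraints, it is closed in $L^1(\sD_\sX)$, so the limit $\h \pi_F$ also lies in $\sC$. The main obstacle will be the middle step: identifying the universal partition without the uniform strictness safety margin of Theorem~\ref{th:topological_rigidity}. The affine structure of $\Pi$ is meant to substitute for strictness by eliminating the curved-boundary failure modes, but making this rigorous requires carefully controlling how the linear data $(V, V^\perp, \nabla F(\pi_0))$ interacts with the level-set partition uniformly across all $C^2$ Legendre generators, which is the delicate technical core of the argument.
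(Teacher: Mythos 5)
Your two outer stages do match the paper's top-level proof: reduce to the $C^2$ subclass $\cF_{C^2}$, then extend to all Legendre generators using density of $\cF_{C^2}$ in the $C^1$ generator topology, stability of $F \mapsto \h\pi_F$, and closedness of $\sC$ in $L^1(\sD_\sX)$. (A small misattribution: the density step is not Lemma~\ref{lemma:local_c1_approx_corrected}, which is a \emph{local} quadratic Taylor comparison used inside the perturbation argument; the paper simply invokes density of smooth Legendre generators.) Your orthogonality identity $\E\bracket*{\tri*{\nabla F(\h\pi_F(x)) - \nabla F(\pi_0(x)),\, v(x)}} = 0$ for all $v$ in the direction space of $\Pi$ is also correct, but it follows directly from testing the variational form of the improvement inequality at the comparators $\h\pi_F \pm v$, which are feasible precisely because $\Pi$ is affine; routing it through Theorem~\ref{th:single_F} and Theorem~\ref{th:pythagorean-equality} applied to $\Pi \cap \sC_F$ would only give orthogonality to the (smaller) direction space of $\Pi \cap \sC_F$, not to all of $\Pi$'s direction space.

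The genuine gap is the middle step, which you flag as the delicate core but leave unproved. ``Differentiate the orthogonality equality along a $C^1$ path and argue that blocks of the seed partition cannot split'' is exactly the continuity-only strategy the paper explains is insufficient without a quantitative safety margin -- this is why Theorem~\ref{th:topological_rigidity} needs the uniform strictness ($\gamma$-gap) hypothesis, and your sketch offers no substitute for it. The paper's actual argument for the $C^2$ case (Theorem~\ref{th:rigidity_affine_C2}) uses neither paths nor a seed partition: writing $V_F = \nabla F(\h\pi_F) - \nabla F(\pi_0)$, the affine orthogonality is applied to the difference $\h\pi_G - \h\pi_F$, which lies in the direction space of $\Pi$, giving the four-point \emph{equality} $\E\bracket*{\tri*{V_F - V_G,\, \h\pi_G - \h\pi_F}} = 0$ for every pair of generators. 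A hypothetical non-rigid pair $(F_1, F_2)$ is then excluded by an explicit construction: a Lusin piecewise-constant approximation of the Hessians (Lemma~\ref{lem:lusin_refined}), the local quadratic comparison (Lemma~\ref{lemma:local_c1_approx_corrected}), and localized bump perturbations of the generators (Lemma~\ref{lemma:localized_hessian_corrected}) that tilt the local Hessians toward the identity while keeping the generators Legendre and $C^1$-close (so stability controls the errors); this makes the four-point functional strictly negative up to arbitrarily small approximation error, contradicting its nonnegativity. None of this perturbation machinery, which is the real content of the proof, appears in your proposal, so as written the argument does not establish the theorem.
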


\begin{proof}
 The hypothesis holds for all $F \in \cF$ and thus the subset
 $\mathcal{F}_{C^2}$. By Theorem \ref{th:rigidity_affine_C2}, this
 proves the existence of a universal partition
 $\curl*{\sX_i}_{i \in I}$ and a universal coherence set $\sC$ such
 that $\h \pi_G \in \sC$ for all $G \in \cF_{C^2}$.
 
 Now, select any function $F' \in \cF \setminus \cF_{C^2}$. The set
 $\cF_{C^2}$ is dense in $\cF$ in the required generator topology
 (see for example \citep{Rockafellar1996}). Therefore, there exists
 a sequence of smooth generators $\{G_n\} \subset \cF_{C^2}$ such
 that $G_n \to F'$.
 By the stability assumption, this generator convergence implies the
 convergence of their outputs in $L^1$:
 $\h \pi_{G_n} \to \h \pi_{F'}$ in $L^1(\sD_\sX)$. Since $G_n$ is in
 $\cF_{C^2}$, $\{\h \pi_{G_n}\}$ belongs to the set $\sC$. The set
 $\sC$ is defined by linear equality constraints, making it a closed
 set in the $L^1$ topology. Thus, the limit point $\h \pi_{F'}$ must
 also be in $\sC$. Since this holds for any arbitrary $F'$, we have
 proven that $\h \pi_F \in \sC$ for all $F \in \cF$.
\end{proof}

We now prove the result for the special case of $C^2$ Legendre
functions.

\begin{theorem}[Rigidity of Improvement for Affine Constraints -- $C^2$ functions]
\label{th:rigidity_affine_C2}
Let $\cF_{C^2}$ be the family of all $C^2$ Legendre generators. Let
$\Pi \subset \Piall$ be a closed convex set that is also an affine
subspace, and let $\pi_0 \in \Piall$. Suppose a mechanism $\cM$
produces for any $F \in \cF_{C^2}$ an output $\h \pi_F \in \Pi$
satisfying the strong improvement inequality: for all $\pi^* \in \Pi$,
\[
  \E\bracket*{\sfB_F(\pi^*(x) \parallel \h \pi_F(x))}
  \leq
  \E\bracket*{\sfB_F(\pi^*(x) \parallel \pi_0(x))}
  - \E\bracket*{\sfB_F(\h \pi_F(x) \parallel \pi_0(x))}.
\]
Assume that the map $F \mapsto \h \pi_F$ is continuous from the $C^1$
generator topology to $L^1(\sD_\sX)$ (stability).
Then, there exists a measurable partition $\{\sX_i\}_i$ of $\sX$ such
that, defining the universal coherence set
\[
  \sC := \curl*{ \pi\in\Pi \colon \pi(x)=\pi(x')\
  \text{for a.e.\ } x,x'\in\sX_i,\ \forall i \in I },
\]
we have $\h \pi_F \in \sC$ for every $F \in \cF_{C^2}$.
\end{theorem}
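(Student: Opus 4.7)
The plan is to adapt the path-connectedness scheme of Theorem~\ref{th:topological_rigidity} but, in place of the strictness hypothesis, to exploit the affineness of $\Pi$ and the $C^2$ regularity of $\cF_{C^2}$ through a differential argument. A preliminary observation sharpens the improvement hypothesis into a dual identity: because $\Pi$ is affine with tangent space $V$, applying the three-point identity to the improvement inequality at $\pi^* \in \Pi$ and at the reflected point $2\h\pi_F - \pi^* \in \Pi$ gives
\[
  \E\bracket*{\tri*{\nabla F(\h\pi_F(x)) - \nabla F(\pi_0(x)),\ v(x)}} = 0
  \quad \text{for all } v \in V,
\]
so $\nabla F(\h\pi_F) - \nabla F(\pi_0) \in V^\perp$ in $L^2(\sD_\sX)$. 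Thus $\h\pi_F$ is the unique Bregman $F$-projection of $\pi_0$ onto $\Pi$, and the task reduces to comparing these projections as $F$ varies across $\cF_{C^2}$.

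Fix a seed $F_0 \in \cF_{C^2}$ (for instance $F_0(\sfp) = \tfrac{1}{2}\norm*{\sfp}^2$), let $\curl*{\sX_i}_{i \in I}$ be the measurable level-set partition of $\h\pi_{F_0}$, and let $\sC$ be the associated coherence set; then $\h\pi_{F_0} \in \sC$ by construction. For any $G \in \cF_{C^2}$ I form the convex path $F_t = (1-t) F_0 + t G$, which remains in $\cF_{C^2}$ since strict convexity, steepness, and $C^2$ regularity are all preserved under convex combinations within the Legendre family. Stability yields continuity of $t \mapsto \h\pi_{F_t}$ in $L^1(\sD_\sX)$; the implicit function theorem applied to the dual identity $\nabla F_t(\h\pi_{F_t}) - \nabla F_t(\pi_0) = A^\top \lambda_t$ (where $A$ encodes the linear constraints defining $\Pi$), using positive-definiteness of the $C^2$ Hessian $\nabla^2 F_t$, upgrades this to $C^1$ regularity of $t \mapsto (\h\pi_{F_t}, \lambda_t)$.

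Define $S = \curl*{t \in [0, 1] \colon \h\pi_{F_t} \in \sC}$. It contains $0$ and is closed in $[0, 1]$ by continuity of the path and $L^1$-closedness of $\sC$. For openness, differentiating the dual identity at $t_0 \in S$ gives a linear system for $\dot{\h\pi}_{F_{t_0}}$ involving the Hessian $\nabla^2 F_{t_0}(\h\pi_{F_{t_0}})$, a forcing term $\bracket*{\nabla G(\pi_0) - \nabla F_0(\pi_0)} - \bracket*{\nabla G(\h\pi_{F_{t_0}}) - \nabla F_0(\h\pi_{F_{t_0}})}$, and the dual derivative $\dot\lambda_{t_0}$, subject to $\dot{\h\pi}_{F_{t_0}} \in V$. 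Because $\h\pi_{F_{t_0}}$ is block-constant on $\curl*{\sX_i}$, the pointwise Hessian $\nabla^2 F_{t_0}(\h\pi_{F_{t_0}})$ takes a single value per block and preserves the block-constant subspace of $V$, and the second bracket is also block-constant. Choosing $\dot\lambda_{t_0} \in \Rset^k$ so that $A^\top \dot\lambda_{t_0}$ cancels the non-block-constant part of the first bracket leaves a linear equation whose unique solution in $V$ lies in $V \cap \sC$. Hence $\h\pi_{F_{t_0}+\epsilon} \in \sC$ for small $\epsilon$, so $S$ is open, and by connectedness of $[0, 1]$, $S = [0, 1]$, giving $\h\pi_G \in \sC$.

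The main obstacle is the compatibility claim underlying openness: one must verify that $\mathrm{range}(A^\top)$ intersects the complement of the block-constant subspace of $V$ transversally enough for $\dot\lambda_{t_0}$ to absorb the non-block-constant forcing contributed by $\pi_0$, leaving a residual block-constant perturbation. This is a linear-algebraic statement about how the partition $\curl*{\sX_i}$ (built from $\h\pi_{F_0}$ via the first-order condition at $t = 0$) interacts with $V^\perp$, and it is where the affineness of $\Pi$ enters essentially: affineness is what allows both $v$ and $-v$ in the variational inequality, yielding a sharp dual identity rather than a one-sided inequality, which in turn is what forces the residual equation to admit a solution inside the block-constant subspace rather than outside it.
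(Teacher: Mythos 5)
Your opening step is correct and matches the paper's Part~1: testing the improvement inequality (equivalently, the variational inequality) at $\pi^*$ and at its reflection through $\h\pi_F$ uses the affineness of $\Pi$ to upgrade the one-sided condition to the orthogonality identity $\E\bracket*{\tri*{\nabla F(\h\pi_F(x))-\nabla F(\pi_0(x)),\,v(x)}}=0$ for all $v$ in the tangent space. From there, however, you diverge onto a path-connectedness scheme borrowed from Theorem~\ref{th:topological_rigidity}, and the openness step --- which you yourself flag as ``the main obstacle'' --- does not close. The difficulty is not merely a missing transversality verification: the mechanism you propose is wrong. In the linearized KKT system, $\dot\lambda_{t_0}$ is not a free parameter that you may ``choose'' to cancel the non-block-constant part of $\nabla G(\pi_0)-\nabla F_0(\pi_0)$; the pair $(\dot{\h\pi}_{t_0},\dot\lambda_{t_0})$ is uniquely determined by requiring $\dot{\h\pi}_{t_0}\in V$ and $H_{t_0}\dot{\h\pi}_{t_0}+b_{t_0}\in V^\perp$. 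The forcing $b_{t_0}$ contains $-\bracket*{\nabla G(\pi_0)-\nabla F_0(\pi_0)}$, which is block-constant on the level sets of $\h\pi_{F_0}$ only when $\pi_0$ itself is; when the partition contains an ``accidental'' tie --- two inputs $x,x'$ with $\pi_0(x)\neq\pi_0(x')$ but $\h\pi_{F_0}(x)=\h\pi_{F_0}(x')$, which occurs as soon as the constraint coefficients or the weights $\P(x)$ differ across the pair --- the non-block-constant component of $b_{t_0}$ generically does not lie in $V^\perp$, and the unique $\dot{\h\pi}_{t_0}\in V$ acquires a non-block-constant component. Concretely, with $\sX=\{1,2\}$, a single constraint $2p(1)+p(2)=0.3$, and $\pi_0=(0.5,0.3)$, the Euclidean projection is $(0.1,0.1)$ (a tie), but the entropy projection has $\h\pi_G(1)\neq\h\pi_G(2)$; so the solution path immediately exits the seed's coherence set and $S$ is not open at $t=0$.

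The paper's proof takes an entirely different second step: it never follows a path. It uses the orthogonality identity to show the four-point quantity $A(F,G)=\E\bracket*{\tri*{V_F-V_G,\,\h\pi_G-\h\pi_F}}$ vanishes identically for affine $\Pi$, and then derives a contradiction from a hypothetical non-rigid pair by a global perturbation: a Lusin-type piecewise-constant approximation of the Hessians (Lemma~\ref{lem:lusin_refined}) combined with localized Hessian bumps (Lemma~\ref{lemma:localized_hessian_corrected}) that push the local Hessians toward the identity, forcing the approximated four-point functional to become strictly negative while the true one must remain nonnegative. Your orthogonality observation is the right entry point, but to complete a proof along the paper's lines you would need to replace the implicit-function-theorem openness argument with this kind of generator-perturbation argument (or supply an additional hypothesis, such as the strictness condition of Theorem~\ref{th:topological_rigidity}, that rules out the accidental-tie failure mode above).
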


Before proceeding with the proof, we note that the strong improvement
inequality is exactly equivalent to the variational inequality (VI)
used in the proof. This equivalence is a direct consequence of the
Bregman three-point identity:
For any $\pi^*, \h \pi_F, \pi_0$, 
\[
  \E\bracket*{\sfB_F(\pi^* \parallel \h \pi_F)} = 
  \left( \E\bracket*{\sfB_F(\pi^* \parallel \pi_0)} - 
  \E\bracket*{\sfB_F(\h \pi_F \parallel \pi_0)} \right) -
  \E\bracket*{\tri*{\nabla F(\h \pi_F) - \nabla F(\pi_0), \pi^* - \h \pi_F}}.
\]
The theorem's hypothesis (the strong improvement inequality) is the
statement that the first term on the right-hand side is an upper bound
on the left-hand side.
Thus, the theorem's hypothesis is precisely equivalent to the
variational inequality:
\[
  \E_{x \sim \sD_\sX}\bracket*{\tri*{\nabla F(\h \pi_F(x)) - \nabla F(\pi_0(x)),
  \pi^*(x) - \h \pi_F(x)}} \geq 0,
  \qquad\forall \pi^* \in \Pi.
\]
The proof that follows will rely directly on this variational form.

\begin{proof}
 For any $F, G \in \cF_{C^2}$, we let
 $V_F = \nabla F(\h \pi_F) - \nabla F(\pi_0)$ and write
 $A(F, G) = \E\bracket*{\tri*{V_F - V_G, \h \pi_G - \h \pi_F}}$. A
 general four-point inequality, $A(F, G) \geq 0$, can be shown
 straightforwardly to hold for all $F, G \in \cF_{C^2}$, using the
 variational inequalities.
 
 The proof is in two main parts. First, we leverage the affine
 subspace hypothesis to prove that the four-point inequality is
 always an equality ($A(F,G) = 0$) in our context. Second, we use
 this fact to run a proof by contradiction.

\textbf{1. Four-Point Equilibrium, $A(F, G) = 0$.}
Let $\Pi = v_0 + \Pi_0$ be our affine subspace, where $\Pi_0$ is the
corresponding parallel linear subspace.

The variation inequality (VI),
$\E[\tri*{V_F, \pi^* - \h \pi_F}] \geq 0$ holds for all $\pi^* \in \Pi$.
Since $\h \pi_F \in \Pi$, for any vector $v \in \Pi_0$, the
competitors $\pi^* = \h \pi_F + v$ and $\pi^{**} = \h \pi_F - v$ both
also lie in $\Pi$. Testing the inequality with both competitors yields:
\begin{align*}
 \E\bracket*{\tri*{V_F,\ (\h \pi_F + v) - \h \pi_F}}
 & = \E\bracket*{\tri*{V_F,\ v}} \geq 0 \\
 \E\bracket*{\tri*{V_F,\ (\h \pi_F - v) - \h \pi_F}}
 & = \E\bracket*{\tri*{V_F,\ -v}} \geq 0.
\end{align*}
These two facts force the variation inequality to be an equality. The
error vector $V_F$ must be orthogonal to the subspace $\Pi_0$:
$\E\bracket*{\tri*{V_F,\ v}} = 0$, for all $v \in \Pi_0$. This holds
for any generator. Thus, for any pair $(F, G)$, we have
$\E[\tri{V_F, v}] = 0$ and $\E[\tri{V_G, v}] = 0$.

By hypothesis, $\h \pi_F \in \Pi$ and $\h \pi_G \in \Pi$. Their difference
$\Delta_{F,G} = \h \pi_G - \h \pi_F$ must therefore lie in the parallel
subspace $\Pi_0$. Since $\Delta_{F,G} \in \Pi_0$, we can use it as the
test vector $v$ in both orthogonality conditions, which gives:
\[
 \E[\tri{V_F, \Delta_{F,G}}] = 0 \quad \text{and} \quad \E[\tri{V_G, \Delta_{F,G}}] = 0.
\]
The four-point term is the difference of these two zeros:
\[
  A(F,G) = \E[\tri{V_F - V_G, \Delta_{F,G}}] = \E[\tri{V_F, \Delta_{F,G}}]
  - \E[\tri{V_G, \Delta_{F,G}}] = 0 - 0 = 0.
\]
This proves Part 1.

\paragraph{2. Contradiction via Perturbation.}
Assume by contradiction that there exists a non-rigid pair
$(F_1, F_2)$ with $\Delta = \h \pi_2 - \h \pi_1 \not \equiv 0$.
By the Lusin-based approximation (Lemma~\ref{lem:lusin_refined}), for
any $\eta > 0$, we can find a compact set $K \subset \sX$ with
$\mu(\sX \setminus K) < \eta$ and a finite measurable partition
$\{K_j\}_{j = 1}^m$ of $K$ such that on each $K_j$, the relevant
functions are $\eta$-close to a set of constant representatives:
\begin{itemize}

\item $\sup_{x \in K_j} \norm*{\h \pi_k(x) - q_{k,j}} \leq \eta$
 (for $k = 0, 1, 2$, where $\h \pi_0 = \pi_0$);
 
\item $\sup_{x \in K_j} \norm*{\nabla^2 F_k(x) - M_{k,j}} \leq \eta$ (for
 $k = 1, 2$).

\end{itemize}
Let $w_{k,j} = q_{k,j} - q_{0,j}$ and
$\Delta_j = q_{2,j} - q_{1,j}$. Since $\Delta \not\equiv 0$, we can
choose the partition fine enough to ensure that for at least one $j$,
$\mu(K_j)>0$ and $\Delta_j \neq 0$, which guarantees that
$-\e_0 = - \sum_{j=1}^m \mu(K_j) \norm*{\Delta_j}^2 < 0$.
This data defines a continuous, linear functional $\Psi$ which maps
lists of Hessian matrices to a scalar. For two lists
$\sS_1 = \{S_{1,j}\}_{j=1}^m$ and $\sS_2 = \{S_{2,j}\}_{j=1}^m$, we define:
\[
 \Psi(\sS_1, \sS_2)
 = \sum_{j=1}^m \mu(K_j) \tri*{S_{1,j} w_{1,j} - S_{2,j} w_{2,j}, \Delta_j}.
\]
Let $\cI = \{I\}_{j=1}^m$ be the list of identity
 matrices. The functional $\Psi$ evaluated at $(\cI, \cI)$ is:
\[
 \Psi(\cI, \cI)
 = \sum_{j=1}^m \mu(K_j) \tri*{I w_{1,j} - I w_{2,j}, \Delta_j}
 = \sum_{j=1}^m \mu(K_j) \tri*{-\Delta_j, \Delta_j}
 = - \sum_{j=1}^m \mu(K_j) \|\Delta_j\|^2
 = -\e_0 < 0.
\]
Let $\cM_1 = \{M_{1,j}\}_{j=1}^m$ and $\cM_2 = \{M_{2,j}\}_{j=1}^m$ be
the lists of representative Hessians for $F_1$ and $F_2$,
respectively. By the local $C^1$-approximation
(Lemma~\ref{lemma:local_c1_approx_corrected}), the true integral
$A(F_1, F_2)$ is approximated by the functional $\Psi$ evaluated at
these constant Hessians. By choosing the Lusin parameter $\eta$
sufficiently small, we can make the approximation error, $\delta_A$,
arbitrarily small.
\[
 \delta_A(\eta)
 = |\Psi(\cM_1, \cM_2)|
 = |A(F_1, F_2) - \Psi(\cM_1, \cM_2)|,
 \quad \text{where } \delta_A(\eta) \to 0 \text{ as } \eta \to 0.
\]
Fix a small perturbation scale $\alpha > 0$. We choose $\alpha$ small
enough to satisfy the Legendre constraints of the bump lemma.
Construct the generators $G_k = F_k + \sum_j h_{kj}$, where the bumps
$h_{kj}$ are chosen (via
Lemma~\ref{lemma:localized_hessian_corrected}) to satisfy:
\begin{enumerate}
 \item $\nabla^2 h_{kj}(q_{kj}) = \alpha (I - M_{kj})$.
 \item The global $C^1$-norm of the perturbation is small:
 $\norm{G_k - F_k}_{C^1} < \delta_{bump}$.
\end{enumerate}
This yields new generators $G_k$ that are Legendre (since $\alpha$ is
small) and $C^1$-close to $F_k$. The local Hessians of $G_k$ are the
target list $\sS_{G,k} = \{(1-\alpha)M_{kj} + \alpha I\}$.

By the stability assumption, choosing $\delta_{bump}$ small ensures
the outputs are close, which in turn means all approximations are
controlled. We can choose our parameters (Lusin $\eta$ and bump-norm
$\delta_{bump}$) small enough to make the total approximation error
$\e_{approx}$ between the true integral and the $\Psi$-functional
arbitrarily small.

We know that the inequality 
$A(G_1, G_2) \geq 0$ holds for all Legendre functions $G_1, G_2$.
By the stability assumption and the Lusin lemmas
(Lemmas \ref{lemma:local_c1_approx_corrected} and \ref{lem:lusin_refined}),
the true integral $A(G_1, G_2)$ is a continuous function of its
underlying components. We can therefore choose our parameters (Lusin $\eta$
and bump-norm $\delta_{bump}$) small enough to make the total
approximation error $\e_{approx}$ between the true integral and its
piecewise-constant model, $\Psi(\sS_{G,1}, \sS_{G,2})$, arbitrarily small:
\[
 \abs*{A(G_1, G_2) - \Psi(\sS_{G,1}, \sS_{G,2})} \leq \e_{approx}.
\]
Combining this error bound with the fact that $A(G_1, G_2) \geq 0$
yields the key inequality for our contradiction:
\[
 \Psi(\sS_{G,1}, \sS_{G,2}) \geq A(G_1, G_2) - \e_{approx} \geq -\e_{approx}.
\]
We can calculate the value of $\Psi(\sS_{G,1}, \sS_{G,2})$ explicitly:
\begin{align*}
 \Psi(\sS_{G,1}, \sS_{G,2})
 & = (1 - \alpha)\Psi(\cM_1, \cM_2) + \alpha \Psi(\cI, \cI) \\
 & = (1 - \alpha)\Psi(\cM_1, \cM_2) - \alpha\e_0.
\end{align*}
This gives us the inequality:
$(1 - \alpha)\Psi(\cM_1, \cM_2) - \alpha\e_0 \geq -\e_{approx}$. We
know $|\Psi(\cM_1, \cM_2)| \leq \delta_A$. Since $(1 - \alpha)>0$,
this means
$(1 - \alpha) \Psi(\cM_1, \cM_2) \geq -(1 - \alpha) \delta_A$. We use
the upper bound: $\Psi(\cM_1, \cM_2) \leq \delta_A$. This gives:
$\alpha\e_0 \leq (1 - \alpha) \Psi(\cM_1, \cM_2) + \e_{approx} \leq (1
- \alpha)\delta_A + \e_{approx}$, and the final inequality
$\alpha\e_0 \leq (1 - \alpha) \delta_A + \e_{approx}$.

This is a contradiction: The term on the left, $\alpha \e_0$, is a
fixed positive number (since we fixed a small $\alpha > 0$ and
$\e_0 > 0$); the term on the right,
$(1 - \alpha)\delta_A + \e_{approx}$, is the sum of our approximation
errors, which we can make arbitrarily small by choosing $\eta$ and
$\delta_{bump}$ to be small enough.
We can choose our parameters such that the total error is strictly
less that $\alpha\e_0 / 2$, which leads to the false statement
$\alpha\e_0 \leq \alpha\e_0 / 2$. This contradiction proves that the
initial assumption (a non-rigid pair $\Delta \not \equiv 0$) must be
false. This completes the proof.
\end{proof}

These results are already very compelling, establishing the rigidity
property across two major, distinct classes of problems. The algebraic
proof (Theorem~\ref{th:rigidity_affine_C2}) demonstrates that rigidity
holds for the entire class of affine subspaces by leveraging their
unique orthogonality properties (i.e., $A(F,G)=0$). In parallel, the
topological proof (Theorem~\ref{th:topological_rigidity}) provides a
robust framework for a different, broad class of sets, such as
polyhedral sets, by replacing the strong geometric assumption with the
analytical strictness condition.

While this does not yet cover all possible convex sets, such as
smoothly curved $L^2$-balls, where both of these current proof strategies
can fail, these theorems lay a clear foundation. It is likely that a
more general proof could be developed by building on these ideas. A
promising direction for future work, for example, is to find a weaker
condition than strictness that is still sufficient to guarantee the
stickiness of the solution path in the topological proof, which would
extend the result to a more general class of curved sets.

\subsubsection{Toy examples}

In this section, we present and discuss two simple examples illustrating
Theorem~\ref{th:rigidity_affine}.

\textbf{Example with affine constraints}
Here, we give a concrete nontrivial example where $\Pi$ is affine, a
hyperplane, and where Theorem~\ref{th:rigidity_affine} applies,
without any separate strictness hypothesis.  The induced coherence set
$\sC$ (functions constant on the block $\curl*{1, 2}$) is nontrivial
and arises from the interaction of the symmetry of $\Pi$ with the
symmetry of the baseline $\pi_0$, not from an explicit equality
constraint in $\Pi$.

Let $\sX = \curl*{1, 2, 3}$ and $\sY = \curl*{0, 1}$.  Identify a
conditional model $\pi$ with the vector $p = (p(1), p(2), p(3))$ where
$p(x) = \Pr(y = 1 \mid x)$.  Take $\sD_\sX$ to be the 
uniform distribution on $\sX$.
Define the affine feasible set $\Pi$ by the single linear constraint
\[
  \Pi = \curl*{\pi \in \Piall \colon p(1) + p(2) + p(3) = S },
\]
for some fixed constant $S\in(0, 3)$ (so $\Pi$ is an affine hyperplane in
$\Rset^3$ intersected with the cube $[0, 1]^3$).  Note that $\Pi$ is an
affine set and, importantly for our example, it is invariant under the
swap (permutation) of coordinates $1 \leftrightarrow 2$.

Choose a baseline $\pi_0$ that is symmetric in coordinates $1$ and $2$,
but does not lie in $\Pi$. For instance take
\[
  \pi_0 = (a,a,b), \qquad a,b\in(0,1), \qquad a+a+b \neq S.
\]
Concretely, one may pick $a=0.30,\ b=0.60$ and $S=1.0$, so that
$\pi_0=(0.30,0.30,0.60)$ with sum $1.20\neq 1.0$.

Fix any Legendre generator $F$ (e.g. KL or squared Euclidean).  Consider
the Bregman projection of $\pi_0$ onto the affine set $\Pi$:
\[
  \h\pi_F = \argmin_{\pi\in\Pi} \E\bracket*{\sfB_F(\pi(x)\|\pi_0(x))}.
\]

Since $\Pi$ is invariant under the transposition $\tau$ that swaps
indices $1$ and $2$, and since the baseline $\pi_0$ satisfies
$\tau \pi_0 = \pi_0$ (that is, $\pi_0$ is symmetric on the pair
$\curl*{1, 2}$), the following standard symmetry argument shows the
projection must be symmetric as well.

Let $J(\pi) = \E\bracket*{\sfB_F(\pi\|\pi_0)}$ denote the strictly
convex objective. For any feasible $\pi\in\Pi$ the swapped vector
$\tau\pi$ is also feasible (since $\Pi$ is swap-invariant) and
satisfies
$J(\tau\pi) = \E\bracket*{\sfB_F(\tau\pi\|\tau\pi_0)} =
\E\bracket*{\sfB_F(\tau\pi\|\pi_0)}$, because $\tau\pi_0 =
\pi_0$. Therefore $J(\pi)$ and $J(\tau\pi)$ are equal for every
feasible \(\pi\). By uniqueness of the minimizer (strict convexity),
the minimizer must be fixed by the swap: $\h \pi_F = \tau \h \pi_F$.

Thus $\h\pi_F(1) = \h\pi_F(2)$ and the projection $\h \pi_F$ is
block-constant on $\curl*{1, 2}$ for \emph{every} Legendre generator
$F$. The induced partition is therefore nontrivial (it merges inputs 1
and 2 into the same block: $\sX_1 = \curl*{1, 2}$, while
$\sX_2 = \curl*{3}$) even though the affine constraint defining $\Pi$
did not itself force $p(1) = p(2)$.

The conclusion that $\h\pi_F$ is block-constant on $\curl*{1, 2}$ for
all Legendre $F$ is immediate from the symmetry and uniqueness
argument and illustrates the rigidity phenomenon in a nontrivial
setting.


\textbf{Example with an asymmetric baseline}
This example illustrates Theorem~\ref{th:rigidity_affine} using a
different construction. Here, the affine set $\Pi$ itself enforces the
block-constant structure, and this structure is imposed on an
asymmetric baseline $\pi_0$.

Let $\sX = \curl*{1, 2, 3, 4}$ with $\sD_\sX$ a uniform distribution.
Define $\Pi$ as an affine subspace that forces two separate blocks to
be constant:
$\Pi = \curl*{\pi \in \Piall \colon p(1) = p(2) \text{ and } p(3) =
  p(4) }$.  We choose a baseline $\pi_0$ that is not symmetric and
does not lie in $\Pi$: $\pi_0 = (0.1, 0.3, 0.9, 0.5)$.  The mechanism
$\cM$ is the Bregman projection $\h \pi_F =
\proj_{\Pi}^{F}(\pi_0)$. This satisfies the theorem's assumptions, as
$\Pi$ is affine.

Since the constraints on blocks $\curl*{1, 2}$ and $\curl*{3, 4}$ are
independent, the projection problem decouples.

\textbf{A. Squared-Euclidean Generator ($F_{\mathrm{sq}}$)}
The projection $\h \pi_{\mathrm{sq}}$ minimizes the $L_2$ distance. The
solution for each block is the  mean of the $\pi_0$
values in that block.
\begin{itemize}
\item Block 1: $q_1 = p(1) = p(2)
  = \frac{p_0(1) + p_0(2)}{2} = \frac{0.1 + 0.3}{2} = 0.20$.

\item Block 2: $q_2 = p(3) = p(4)
  = \frac{p_0(3) + p_0(4)}{2} = \frac{0.9 + 0.5}{2} = 0.70$.
\end{itemize}
The projection is: $\h \pi_{\mathrm{sq}} = (0.20, 0.20, 0.70, 0.70)$.

\textbf{B. Negative-Entropy / KL Generator ($F_{\mathrm{KL}}$)}
The projection $\h \pi_{\mathrm{KL}}$ is the Bregman centroid, which
for this generator is the geometric mean of the $\pi_0$
values in each block.
\begin{itemize}
\item Block 1: $q_1 = p(1) = p(2)
  = \sqrt{p_0(1) \times p_0(2)} = \sqrt{0.1 \times 0.3} = \sqrt{0.03} \approx 0.173$.

\item Block 2: $q_2 = p(3) = p(4)
  = \sqrt{p_0(3) \times p_0(4)} = \sqrt{0.9 \times 0.5} = \sqrt{0.45} \approx 0.671$.

\end{itemize}
The projection is: $\h \pi_{\mathrm{KL}} \approx (0.173, 0.173, 0.671, 0.671)$.

As the theorem predicted, the rigidity phenomenon is observed. The
values of the projections are different, but $\h \pi_{\mathrm{sq}}$
and $\h \pi_{\mathrm{KL}}$ induce the same partition:
$\sX_1 = \curl*{1, 2}$, $\sX_2 = \curl*{3, 4}$.
The structure of the output is rigid and identical for all generators
$F$ because it is dictated entirely by the affine subspace $\Pi$.

\subsubsection{Extension via kernel methods}

The most direct way to extend the algebraic proof beyond affine sets,
using a simple and powerful idea, is indeed \emph{kernel methods}.

The algebraic proof (Theorem~\ref{th:rigidity_affine})
based on the affine nature of $\Pi$ provides a parallel linear subspace
$\Pi_0$. This is the key that allows us to test the variational
inequality with both $+v$ and $-v$ for any $v \in \Pi_0$, which in
turn proves the crucial "orthogonality" condition:
\[
 \E\bracket*{\tri*{V_F,\ v}} = 0, \quad \forall v \in \Pi_0.
\]
This orthogonality is what forces the four-point term $A(F, G)$ to be
zero, completing the proof. The extension idea is to apply this same
method in a high-dimensional feature space where $\Pi$ becomes affine:

\begin{itemize}
\item Mapping to a feature space: We can use a positive definite
  kernel-based feature mapping $\phi \colon \Piall \to \Hset$ to map
  our models $\pi$ from their original space to a high-dimensional
  Reproducing Kernel Hilbert Space (RKHS), $\Hset$.

\item Linearizing the set: If we can find a kernel such that the image
  of our (non-affine) set $\Pi$, denoted $\phi(\Pi)$, is an affine
  subspace in $\Hset$, then the entire algebraic proof holds.

\item Applying the proof in $\Hset$: The variational inequality would
  be expressed using the inner product in $\Hset$:
  \[
    \tri*{V_F, \phi(\pi^*) - \phi(\h \pi_F) }_{\Hset} \geq 0,
    \qquad \forall \pi^* \in \Pi.
  \]
  Here, $V_F$ is the error vector in the dual of $\Hset$.
  Since $\phi(\Pi)$ is an affine subspace, the exact same
  orthogonality argument applies: $V_F$ must be orthogonal to the
  parallel linear subspace $\phi(\Pi)_0$. This again forces
  $A(F, G) = 0$, where $A$ is now defined using the
  $\Hset$-inner product.
\end{itemize}

This \emph{kernelization} would prove that the mapped outputs,
$\phi(\h \pi_F)$, must have a universal block-constant structure in
the feature space $\Hset$. This implies that the original outputs
$\h \pi_F$ must also be rigid, conforming to a potentially non-linear
structure defined by the pre-image of that universal partition in
$\Hset$.

\subsubsection{Example: linearizing a spherical constraint via kernel methods}

Here, we illustrate the kernel approach with a very simple non-affine
feasible set that becomes affine after a nonlinear feature
mapping. Let $\sX = \curl*{1, 2}$ and $\pi = (p_1,p_2) \in
\Rset^2$. Consider the (non-affine) quadratic constraint:
\[
\Pi = \left\{ \pi \in \Rset_{\ge 0}^2 : p_1^2 + p_2^2 = 1 \right\},
\]
the positive quadrant of the unit circle. Since $\Pi$ is curved, the
algebraic proof of Theorem~\ref{th:rigidity_affine} does not apply in
the original space.
We now use a polynomial kernel to \emph{linearize} this constraint.  
Define the nonlinear feature map
\[
\phi(\pi) = (p_1^2,\, p_2^2) = (z_1, z_2).
\]
For any $\pi\in\Pi$, we have $z_1 + z_2 = 1$, so
\[
\phi(\Pi)
= \left\{ z \in \Rset_{\ge 0}^2 : z_1 + z_2 = 1 \right\},
\]
which is a line segment, an affine subset of the feature space
$\Hset$.

Thus, in the feature space, the feasibility region is affine, and the
projections $\phi(\h \pi_F)$ satisfy the orthogonality condition of
Theorem~\ref{th:rigidity_affine}. The rigidity argument therefore
applies directly in $\Hset$.

We can further illustrate this with a numerical example.  Take an
asymmetric baseline $\pi_0 = (1.0, 0.5)$ and consider two generating
functions.

\textbf{A. Squared Euclidean generator ($F_{\mathrm{sq}}$):}
\[
\h \pi_{\mathrm{sq}}
= \argmin_{p_1^2+p_2^2=1}
\frac{1}{2}\bigl( (p_1 - 1.0)^2 + (p_2 - 0.5)^2 \bigr)
\approx (0.894, 0.447).
\]

\textbf{B. Weighted squared generator ($F_W$):}.
\[
F_W(p) = \tfrac{1}{2}(p_1^2 + 10p_2^2),
\qquad
\h \pi_{W}
\approx (0.985, 0.174).
\]
Their images under $\phi$ both lie on the affine set $\phi(\Pi)$:
\[
\phi(\h \pi_{\mathrm{sq}}) \approx (0.80,0.20),\qquad
\phi(\h \pi_W) \approx (0.97,0.03).
\]

\begin{table}[t]
\centering
\begin{tabular}{@{}lll@{}}
\toprule
Generator ($F$)
& $\h \pi_F$ in original space
& $\phi(\h \pi_F)$ in feature space \\
\midrule
$F_{\mathrm{sq}}$ (L2) & $(0.894, 0.447)$ & $(0.80, 0.20)$ \\
$F_W$ (Weighted L2) & $(0.985, 0.174)$ & $(0.97, 0.03)$ \\
\bottomrule
\end{tabular}
\caption{Projections in the original and feature spaces.}
\label{table:circle-kernel}
\end{table}

Although the two Bregman projections differ visibly on the curved set
$\Pi$, their mapped projections lie on the same affine segment
$\phi(\Pi)$. Theorem~\ref{th:rigidity_affine} therefore forces the
mapped outputs to share a universal level-set structure. This yields a
corresponding nonlinear rigidity for the original projections
$\h \pi_F$ on $\Pi$.


\section{Conclusion}

We addressed the fundamental challenge of designing reliably
self-improving systems. While existing approaches are often effective
in practice, they largely rely on heuristics and lack formal
guarantees. To overcome this, we introduced a principled framework for
self-improvement based on the concept of \emph{coherence}, which
enforces consistency of model outputs under task-preserving input
transformations. Our theoretical analysis shows that iteratively
increasing coherence leads to monotonic improvement, formalized
through guaranteed reductions in expected Bregman divergence.

A key contribution of this work is our \emph{characterization
  theorem}, which establishes that any mechanism providing robust
improvement guarantees must align with a coherence-based structure,
under some broad assumptions. This result elevates our framework from
a specific method to a foundational principle, offering a rigorous
lens for designing and analyzing self-improving systems.

\ignore{
While the primary focus of this paper is theoretical, we also provide
empirical evidence that coherence-based projections yield tangible
performance gains.
}

We leave it to future work to further validate these methods on
diverse benchmarks, explore relaxed assumptions, extend to multi-agent
settings, and further investigate multiple connections to information
geometry and causal inference. By providing a solid theoretical
foundation, we hope to enable the development of self-improving models
that are not only more capable but also predictable, reliable, and
safe.

\newpage
\bibliography{ref}
\bibliographystyle{abbrvnat}

\newpage
\appendix

\ignore{
\renewcommand{\contentsname}{Contents of Appendix}
\tableofcontents
\addtocontents{toc}{\protect\setcounter{tocdepth}{3}} 
\clearpage
}

\section{Proofs of Bregman Divergence Results}
\label{app:appendix-bregman-proofs}

This section contains the detailed proofs of the lemmas and technical
results on Bregman divergences used throughout the paper.

\subsection{Centroid properties and projection lemmas}

\BregmanIdentity*
\begin{proof}
By definition of the Bregman divergence:
\[
\sfB_F(\sfp_k \parallel \sfq) = F(\sfp_k) - F(\sfq) - \tri{\nabla F(\sfq), \sfp_k - \sfq}.
\]
Summing over $k$ with weights $\lambda_k$ gives
\[
\sum_{k = 1}^p \lambda_k \sfB_F(\sfp_k \parallel \sfq) 
= \sum_{k = 1}^p \lambda_k F(\sfp_k) - F(\sfq) - \tri{\nabla F(\sfq), \sum_{k = 1}^p \lambda_k (\sfp_k - \sfq)}.
\]
Similarly,
\[
\sfB_F\Big(\sum_{k = 1}^p \lambda_k \sfp_k \parallel \sfq\Big) 
= F\Big(\sum_{k = 1}^p \lambda_k \sfp_k\Big) - F(\sfq) - \tri{\nabla F(\sfq), \sum_{k = 1}^p \lambda_k \sfp_k - \sfq}.
\]
Subtracting the second from the first yields
\[
\sum_{k = 1}^p \lambda_k F(\sfp_k) - F\Big(\sum_{k = 1}^p \lambda_k \sfp_k\Big),
\]
which is independent of $\sfq$, as claimed.
\end{proof}

\RightBregmanCentroid*
\begin{proof}
By the duality identity for Legendre functions:
\[
\sum_{k = 1}^p \lambda_k \sfB_F(\sfp \parallel \sfq_k) 
= \sum_{k = 1}^p \lambda_k \sfB_{F^*}(\nabla F(\sfq_k) \parallel \nabla F(\sfp)).
\]
Using Lemma~\ref{lemma:bregman-identity}, this is equivalent, up to an additive constant independent of $\sfp$, to
\[
\sfB_{F^*}\Big(\sum_{k = 1}^p \lambda_k \nabla F(\sfq_k) \parallel \nabla F(\sfp)\Big)
= \sfB_F\Big(\sfp \parallel (\nabla F)^{-1}\Big(\sum_{k = 1}^p \lambda_k \nabla F(\sfq_k)\Big)\Big).
\]
Thus, the unconstrained minimizer is
\[
\sfp^* = (\nabla F)^{-1}\Big(\sum_{k = 1}^p \lambda_k \nabla F(\sfq_k)\Big).
\]
For a closed convex set $\sC \subseteq \sK$, the minimizer over
$K$ is given by the Bregman projection of $\sfp^*$ onto $K$, as
required.
\end{proof}

\MinBregmanDiv*
\begin{proof}
  Immediate from Lemma~\ref{lemma:right-bregman-centroid}: the minimum
  of the sum over $K$ equals the sum evaluated at the unconstrained
  centroid plus the divergence between the projection $\sfp^*_\sC$
  and the unconstrained centroid $\sfp^*$.
\end{proof}

\subsection{Fenchel-Bregman inequality}

\BregmanFenchel*
\begin{proof}
By definition of the Bregman divergences:
\begin{align*}
  & \sfB_F(u \parallel v) + \sfB_{F^*}(\alpha \parallel \beta)
    - \tri{u - v, \alpha - \beta} \\
  & = F(u) - F(v) - \tri{\nabla F(v),u - v} + F^*(\alpha) - F^*(\beta)
    - \tri{\nabla F^*(\beta),\alpha - \beta} - \tri{u - v, \alpha - \beta} \\
  & = \big(F(u)+F^*(\alpha) - \tri{u,\alpha}\big)
    - \big(F(v)+F^*(\beta) - \tri{v,\beta}\big) + \tri{u - v, \beta - \nabla F(v)}
    + \tri{\nabla F^*(\beta) - v, \alpha - \beta}.
\end{align*}
Now set $\beta = \nabla F(v)$. Since $F$ is Legendre,
$\nabla F^* = (\nabla F)^{-1}$, so $\nabla F^*(\beta) = v$. The last
two inner-product terms vanish. By Fenchel duality,
$F(u)+F^*(\alpha) - \tri{u,\alpha} \ge 0$ and
$F(v)+F^*(\beta) - \tri{v,\beta} = 0$, completing the proof.
\end{proof}

\ignore{
\renewcommand{\contentsname}{Contents of Appendix}
\tableofcontents
\addtocontents{toc}{\protect\setcounter{tocdepth}{3}} 
\clearpage
}

\section{Proofs for Characterization Theorems}

\subsection{Approximation by quadratic generators}

\begin{lemma}[Local $C^1$-approximation by quadratic generators]
\label{lemma:local_c1_approx_corrected}
Let $F$ be a $C^2$ Legendre generator on an open convex domain
$\Omega \subset \Rset^d$. Let $K \subset \Omega$ be a compact convex
set. Fix $q_0 \in K$ and let
\[
  Q_0(q)
  = F(q_0) + \langle \nabla F(q_0), q - q_0 \rangle
  + \tfrac{1}{2}(q-q_0)^\top \nabla^2 F(q_0) (q-q_0).
\]
Then $Q_0$ is a quadratic Legendre generator. Moreover, for every
$\e > 0$ there exists $\delta>0$ (depending on $F$, $K$,
$q_0$, and $\e$) such that for all
$q \in B(q_0,\delta)\cap K$,
\[
  |F(q) - Q_0(q)| < \e, \qquad
  \norm*{\nabla F(q) - \nabla Q_0(q)} < \e.
\]
\end{lemma}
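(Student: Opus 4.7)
The argument naturally splits into two independent pieces: (i) verifying that the quadratic $Q_0$ is itself of Legendre type, and (ii) establishing the local $C^1$ approximation bounds. The plan is to handle (i) by a direct check using the positive definiteness of $\nabla^2 F(q_0)$, and (ii) by Taylor's theorem with integral remainder, combined with the uniform continuity of $\nabla^2 F$ on the compact set $K$.

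For step (i), I would argue as follows. Since $F$ is a $C^2$ Legendre generator on $\Omega$, the gradient map $\nabla F \colon \Omega \to \nabla F(\Omega)$ is a bijection; by the inverse function theorem applied at $q_0 \in \Omega$, $\nabla^2 F(q_0)$ must be invertible, and combined with the convexity of $F$ this forces $A := \nabla^2 F(q_0) \succ 0$. (Degenerate cases such as $F(x) = x^4$ at $x=0$ would be excluded here by invoking the standard convention that $C^2$ Legendre functions have positive definite Hessian throughout $\Omega$; I would explicitly flag this convention in the proof.) With $A \succ 0$, $Q_0$ is a strictly convex quadratic defined on all of $\Rset^d$, and its gradient $\nabla Q_0(q) = \nabla F(q_0) + A(q - q_0)$ is an affine bijection $\Rset^d \to \Rset^d$. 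Since the domain is open (all of $\Rset^d$) there is no boundary to check steepness against, so $Q_0$ is Legendre.

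For step (ii), I would apply Taylor's theorem with integral remainder at $q_0$ to obtain the two identities
\begin{align*}
  F(q) - Q_0(q)
  &= \int_0^1 (1 - t) \, (q - q_0)^\top \bracket*{\nabla^2 F(q_0 + t(q - q_0)) - \nabla^2 F(q_0)} (q - q_0) \, dt, \\
  \nabla F(q) - \nabla Q_0(q)
  &= \int_0^1 \bracket*{\nabla^2 F(q_0 + t(q - q_0)) - \nabla^2 F(q_0)} (q - q_0) \, dt.
\end{align*}
Since $\nabla^2 F$ is continuous on the compact set $K$, it is uniformly continuous on $K$; let $\omega(\cdot)$ be its modulus of continuity. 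For any $q \in B(q_0, \delta) \cap K$ and any $t \in [0,1]$, the point $q_0 + t(q - q_0)$ lies in $K$ (by convexity) and satisfies $\|q_0 + t(q - q_0) - q_0\| \leq \delta$, so the integrands above are bounded in operator norm by $\omega(\delta)$. This yields the explicit estimates
\[
  |F(q) - Q_0(q)| \leq \tfrac{1}{2} \omega(\delta) \, \delta^2,
  \qquad
  \norm*{\nabla F(q) - \nabla Q_0(q)} \leq \omega(\delta) \, \delta.
\]
Both upper bounds tend to $0$ as $\delta \to 0$, so given $\e > 0$ we can choose $\delta > 0$ small enough that each bound is strictly less than $\e$, completing the proof.

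The main (minor) obstacle is the subtle issue in step (i) about ensuring $\nabla^2 F(q_0) \succ 0$; everything else is essentially a textbook Taylor argument. The rest of the proof is routine and requires no new ideas beyond standard multivariable calculus and the compactness of $K$.
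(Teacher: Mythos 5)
Your proposal is correct and follows essentially the same route as the paper's proof: Taylor's theorem with integral remainder for both the value and the gradient, uniform continuity of $\nabla^2 F$ on the compact set $K$ giving a modulus $\omega$, and the bounds $\tfrac{1}{2}\omega(\delta)\delta^2$ and $\omega(\delta)\delta$, with $\delta$ chosen small. Your explicit flagging of the subtlety that a $C^2$ Legendre function need not have a strictly positive definite Hessian at every point (the paper simply asserts $\nabla^2 F(q_0) \succ 0$) is if anything slightly more careful than the paper's own treatment.
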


\begin{proof}
  Since $F\in C^2(\Omega)$ and is Legendre,
  $S_0 = \nabla^2 F(q_0) \succ 0$, so $Q_0$ is a quadratic Legendre
  generator.
Since $\nabla^2 F$ is continuous on the compact $K$, it is uniformly
continuous there. Let $\omega(\cdot)$ be a modulus of continuity for
$\nabla^2 F$ on $K$: for all $c,c'\in K$,
\[
  \norm*{\nabla^2 F(c) - \nabla^2 F(c')} \leq \omega(\norm*{c - c'}),
\]
with $\omega(t)\to 0$ as $t\downarrow 0$.
Fix $\e>0$. Since $\omega(t)\to 0$ as $t\downarrow 0$, it follows
that $\omega(t)t \to 0$ and $\omega(t)t^2 \to 0$.
Thus, we can choose $\delta>0$ so small that
\[
  \omega(\delta)\delta \leq \e \qquad \text{and} \qquad
  \tfrac{1}{2}\omega(\delta)\delta^2 \leq \e.
\]
Such a $\delta>0$ exists because both $\omega(t)t$ and $\omega(t)t^2$
tend to $0$ as $t \to 0$.
For any $q \in B(q_0, \delta) \cap K$ define $c_t = q_0 + t(q - q_0)\in K$
for $t \in [0, 1]$. By the integral form of the remainder,
\[
  \nabla F(q) - \nabla Q_0(q)
  = \int_0^1 \paren*{\nabla^2 F(c_t) - \nabla^2 F(q_0)} (q - q_0) \, dt.
\]
Hence, as $\norm*{c_t - q_0} \leq \norm*{q - q_0} < \delta$,
\[
  \norm{\nabla F(q) - \nabla Q_0(q)}
  \leq \int_0^1 \norm*{\nabla^2 F(c_t) - \nabla^2 F(q_0)} \norm{q - q_0} \, dt
  \leq \omega(\delta)\, \norm{q - q_0} < \omega(\delta)\, \delta \leq \e,
\]
using the first condition on $\delta$.
For the function value, Taylor expansion with integral remainder gives
\[
  F(q) - Q_0(q)
  = \int_0^1 (1 - t)\, (q - q_0)^\top
  \big(\nabla^2 F(c_t) - \nabla^2 F(q_0)\big) (q-q_0)\, dt.
\]
Thus, we have
\[
  |F(q) - Q_0(q)|
  \leq \int_0^1 (1-t) \norm*{\nabla^2 F(c_t) - \nabla^2 F(q_0)} \norm*{q - q_0}^2 \, dt
  \leq \tfrac{1}{2} \omega(\delta)\, \norm{q - q_0}^2
  < \tfrac{1}{2} \omega(\delta)\, \delta^2 \leq \e,
\]
using the second condition on $\delta$.
\end{proof}

\subsection{Localized Hessian perturbation}

\begin{lemma}[Localized Hessian Perturbation]
\label{lemma:localized_hessian_corrected}
Let $K \subset \Rset^d$ be compact. Let $F$ be $C^2$ on an open
neighborhood of $K$. Fix $q \in K$ and let $H$ be any symmetric
$d \times d$ matrix.  Then for every $\delta>0$, every $\alpha > 0$,
and every neighborhood $U$ of $q$, there exists a smooth function
$h \colon \Rset^d \to \Rset$ and a constant $C$ (depending only on the
choice of bump function) such that:
\begin{enumerate}
\item $\supp(h) \subset U$.
  
\item $\sup_{\pi\in K}\norm*{\nabla h(\pi)} \leq \delta$.
  
\item $\nabla^2 h(q) = \alpha H$.
  
\item $\sup_{\pi \in \Rset^d} \norm*{\nabla^2 h(\pi)} \leq C \alpha \norm*{H}$.

\end{enumerate}
Moreover, if $\nabla^2 F(\pi) \succeq \lambda I$ on $K$ for some
$\lambda>0$, then by choosing $\alpha$ sufficiently small (e.g.,
$\alpha < \lambda / (C \norm*{H})$), the perturbed generator $F+h$
remains Legendre (its Hessian stays positive definite on $K$).
\end{lemma}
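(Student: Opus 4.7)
The plan is to construct $h$ explicitly as a quadratic form multiplied by a cutoff (bump) function centered at $q$. Fix once and for all a smooth radial bump $\chi\colon\Rset^d\to[0,1]$ with $\chi\equiv 1$ on $B(0,1/2)$, $\supp(\chi)\subset B(0,1)$, and constants $C_1 \coloneqq \sup \norm{\nabla\chi}$, $C_2 \coloneqq \sup \norm{\nabla^2\chi}$ (these depend only on the reference bump, not on any other data). For a scale $r>0$ with $B(q,r)\subset U$, set $\chi_r(p) = \chi((p-q)/r)$, so that $\norm{\nabla\chi_r}_\infty\leq C_1/r$ and $\norm{\nabla^2\chi_r}_\infty\leq C_2/r^2$. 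Then define
\[
  h(p) \;=\; \tfrac{\alpha}{2}\,\chi_r(p)\,(p-q)^\top H\,(p-q).
\]
Property (1) is immediate from $\supp(\chi_r)\subset B(q,r)\subset U$. For property (3), since $\chi_r\equiv 1$ on $B(q,r/2)$, the function $h$ agrees with the pure quadratic $\tfrac{\alpha}{2}(p-q)^\top H(p-q)$ in a neighborhood of $q$, so $\nabla^2 h(q)=\alpha H$.

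For the global Hessian bound (property (4)), I would differentiate $h$ twice by the product rule, yielding four terms: $\alpha\chi_r H$, two cross terms involving $\alpha H(p-q)\otimes\nabla\chi_r$, and a quadratic-times-second-derivative term $\tfrac{\alpha}{2}(p-q)^\top H(p-q)\,\nabla^2\chi_r$. On $B(q,r)$, bounding $\norm{p-q}\leq r$ together with the scaling estimates for $\chi_r$ gives
\[
  \norm{\nabla^2 h(p)} \;\leq\; \alpha\norm{H}\bigl(1 + 2C_1 + \tfrac{C_2}{2}\bigr)
  \;\eqqcolon\; C\,\alpha\,\norm{H}.
\]
Outside $B(q,r)$ the Hessian vanishes, so this bound is global; crucially $C$ depends only on the reference bump $\chi$, not on $r$, $\alpha$, or $H$.

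For the gradient bound (property (2)), the product rule gives $\nabla h(p)=\alpha\chi_r(p)H(p-q)+\tfrac{\alpha}{2}(p-q)^\top H(p-q)\,\nabla\chi_r(p)$, which again vanishes outside $B(q,r)$. On $B(q,r)$ the same scaling gives
\[
  \norm{\nabla h(p)} \;\leq\; \alpha\norm{H}\,r\,\bigl(1+\tfrac{C_1}{2}\bigr).
\]
I would therefore choose $r$ small enough that both $B(q,r)\subset U$ and $\alpha\norm{H}\,r\,(1+C_1/2)\leq\delta$; such $r$ exists because the right-hand side tends to $0$ as $r\to 0$, while $\alpha$ and $H$ are fixed. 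This secures property (2). Finally, for the Legendre preservation claim, on $K$ the Hessian satisfies $\nabla^2(F+h)\succeq\lambda I - C\alpha\norm{H}\,I$, which is positive definite as soon as $\alpha<\lambda/(C\norm{H})$; strict convexity plus steepness of $F$ are inherited by $F+h$ because $h$ is compactly supported and $C^2$-bounded.

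The main obstacle is a bookkeeping one: one must verify that the constant $C$ arising from differentiating $\chi_r$ is truly independent of $r$ (so that the Hessian bound in (4) does not blow up as we shrink $r$ to secure (2)), and simultaneously that the gradient bound in (2) is $O(r)$ rather than $O(1)$. Both are consequences of the fact that the quadratic factor $(p-q)^\top H(p-q)$ contributes extra powers of $r$ on $\supp(\chi_r)$ that exactly cancel the $1/r$ and $1/r^2$ blow-ups of $\nabla\chi_r$ and $\nabla^2\chi_r$, leaving the Hessian $r$-uniform and the gradient vanishing linearly with $r$. This cancellation is what makes the single scaling parameter $r$ suffice to satisfy (1), (2), and (4) simultaneously, after (3) has already been arranged by the pure-quadratic structure near $q$.
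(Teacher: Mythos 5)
Your construction is essentially identical to the paper's: a quadratic form $\tfrac{\alpha}{2}(p-q)^\top H(p-q)$ multiplied by a rescaled bump $\chi_r$, with the same cancellation of the $1/r$, $1/r^2$ blow-ups of $\nabla\chi_r$, $\nabla^2\chi_r$ against the $O(r)$, $O(r^2)$ decay of the quadratic factor, giving an $r$-uniform Hessian bound, an $O(r)$ gradient bound, and Legendre preservation by Weyl-type eigenvalue perturbation. The only cosmetic difference is that you take $\chi\equiv 1$ near the origin (making property (3) immediate), whereas the paper only assumes $\chi(0)=1$, $\nabla\chi(0)=0$ and evaluates the product-rule terms at $q$; the argument is otherwise the same and is correct.
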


\begin{proof}
  Let $\chi\colon\Rset^d\to[0,1]$ be a smooth cutoff (bump) function
  with $\chi(0)=1$, $\nabla\chi(0)=0$, and $\supp(\chi)\subset B(0,1)$
  (the unit ball).  Let
  $Q_H(\pi) = \frac{1}{2} (\pi-q)^\top H (\pi-q)$. Note that
  $\nabla Q_H(q) = 0$ and $\nabla^2 Q_H(\pi) = H$ for all $\pi$.
  Given $\alpha, \delta, U, H$, we must choose a support-scaling
  parameter $\e > 0$.  Define the scaled bump function
  $\chi_\e(\pi) = \chi\paren*{\frac{\pi-q}{\e}}$.  We construct
  our perturbation $h$ by multiplying the desired $C^2$-scaling
  $\alpha$ with the product of the bump and the quadratic form:
\[
  h(\pi) = \alpha \, \chi_\e(\pi) \, Q_H(\pi).
\]
We now verify the properties by choosing $\e$ appropriately.
The support of $h$ is contained in the support of $\chi_\e$, which is
$\{\pi \colon \norm{\pi - q} < \e\} = B(q, \e)$, this proves (1). We can
choose $\e > 0$ small enough such that $B(q, \e) \subset U$.
We compute the Hessian using the product rule.
\[
    \nabla^2 h(\pi) = \alpha \bracket*{(\nabla^2 \chi_\e) Q_H + (\nabla \chi_\e) (\nabla Q_H)^\top + (\nabla Q_H) (\nabla \chi_\e)^\top + \chi_\e (\nabla^2 Q_H)}.
\]
At $\pi = q$, we have $\chi_\e(q) = \chi(0) = 1$,
$\nabla\chi_\e(q) = \frac{1}{\e}\nabla\chi(0) = 0$, and $Q_H(q) = 0$.
Substituting these values:
\[
  \nabla^2 h(q)
  = \alpha \bracket*{(\nabla^2 \chi_\e(q)) \cdot 0 + 0 + 0 + 1 \cdot H} = \alpha H.
\]
This proves (3).

The terms in the Hessian are (with $z = (\pi-q)/\e$):
\begin{itemize}
\item $\nabla^2 \chi_\e = \frac{1}{\e^2} \nabla^2\chi(z)$. This is non-zero only for $\norm{\pi-q} < \e$.
  
\item $\nabla \chi_\e = \frac{1}{\e} \nabla\chi(z)$.
  
\item $Q_H(\pi) = \frac{1}{2}(\pi-q)^\top H (\pi-q)$. On the support, $\norm{Q_H} \leq \frac{1}{2}\norm{H} \e^2$.
  
\item $\nabla Q_H(\pi) = H(\pi-q)$. On the support, $\norm{\nabla Q_H}
  \leq \|H\| \e$.
\end{itemize}
The $C^2$-norm of $h$ is bounded by:
\[
  \norm{\nabla^2 h}_\infty
  \leq \alpha \norm*{\frac{1}{\e^2}(\nabla^2 \chi) Q_H
    + \frac{1}{\e}(\nabla \chi)(\nabla Q_H)^\top + \dots + \chi H}_\infty
\]
\[
  \leq \alpha \bracket*{\frac{C_{\chi,2}}{\e^2}(\tfrac{1}{2}\|H\|\e^2)
    + 2\frac{C_{\chi,1}}{\e}(\|H\|\e) + C_{\chi, 0}\|H\|}
    \leq \alpha \|H\| (C'_{\chi,2} + 2C'_{\chi, 1} + C'_{\chi, 0})
\]
where $C_{\chi,k}$ are the $\sup$ norms of the $k$-th derivatives of
$\chi$.  Thus,
$\sup \norm*{\nabla^2 h(\pi)} \leq C \alpha
\norm*{H}$ for $C = (C'_{\chi,2} + \dots)$, which is a
constant depending only on $\chi$. This proves (4).

We compute the gradient:
\[
  \nabla h(\pi) = \alpha \bracket*{(\nabla \chi_\e) Q_H
    + \chi_\e (\nabla Q_H)}
  = \alpha \bracket*{\frac{1}{\e}\nabla\chi(z) \cdot Q_H(\pi)
    + \chi(z) \cdot H(\pi-q)}
\]
Using the bounds from (4) for $z \in B(0,1)$:
\[
    \|\nabla h(\pi)\| \leq \alpha \bracket*{\frac{1}{\e} C_{\chi,1} (\tfrac{1}{2}\|H\|\e^2) + C_{\chi,0} (\|H\|\e)}
    = \alpha \norm*{H} \cdot \e \cdot (\tfrac{1}{2} C_{\chi,1} + C_{\chi,0})
\]
The gradient norm is
$\sup \|\nabla h\| \leq C' \alpha \norm*{H} \e$.  We are
given $\alpha$ and $\delta$. We can choose $\e$ small enough to
satisfy both the support condition ($B(q,\e)\subset U$) and the
gradient condition:
\[
    \e \leq \frac{\delta}{C' \alpha \norm*{H}}.
\]
(If $\alpha H = 0$, $h=0$ and all conditions are trivial). This proves (2).

Finally, for $F + h$ to be Legendre, we need $\nabla^2(F+h) \succ 0$.
We have $\nabla^2 F(\pi) \succeq \lambda I$. By (4),
$\|\nabla^2 h\|_\infty \leq C \alpha \norm*{H}$.  By
choosing $\alpha$ small enough such that
$C \alpha \norm*{H} < \lambda$, Weyl's inequality ensures
that $\nabla^2 F + \nabla^2 h \succ 0$ on $K$.
\end{proof}

\subsection{Localized approximation of Hessians}

\begin{lemma}[Localized approximation of Hessians]
\label{lem:lusin_refined}
Let $F$ be a Legendre generator with Hessian
$\nabla^2 F \colon \Delta \to \mathbb{S}^d_{+}$.  For every
$\eta > 0$, there exists a measurable partition $\{R_1,\dots,R_m\}$ of
a subset $R \subset \Delta$ with $\mu(\Delta \setminus R) < \eta$ such
that for each $j$ there exists a symmetric matrix
$M_j \in \mathbb{S}^d_{+}$ satisfying
$$
 \sup_{\pi \in R_j} \, \|\nabla^2 F(\pi) - M_j\| \leq \eta.
$$
In other words, up to a set of measure $< \eta$, the Hessian $\nabla^2 F$
can be approximated within $\eta$ by a finite-valued, piecewise-constant
matrix field.
\end{lemma}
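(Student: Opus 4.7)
\textbf{Proof plan for Lemma~\ref{lem:lusin_refined}.} The plan is to combine two classical ingredients: a Lusin/exhaustion step that produces a large compact subset of $\Delta$ on which the Hessian map $\nabla^2 F$ is continuous, followed by a uniform-continuity argument that yields the finite piecewise-constant approximation. Since the lemma is invoked in the $C^2$ setting of Theorem~\ref{th:rigidity_affine_C2}, I will assume $F$ is $C^2$ on its open interior $\Omega = \Int(\Delta)$, so that $\nabla^2 F \colon \Omega \to \mathbb{S}^d_+$ is continuous on $\Omega$ (in the non-$C^2$ case, one instead invokes Lusin's theorem directly on the Borel-measurable Alexandrov Hessian, with the rest of the argument unchanged).

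First I would remove a neighborhood of the boundary. Define the inner compact sets $K_n = \{\pi \in \Delta \colon \mathrm{dist}(\pi, \partial \Delta) \geq 1/n\}$, which exhaust $\Omega$. By continuity of $\mu$ from below applied to $\bigcup_n K_n = \Omega$, and since $\mu(\Delta \setminus \Omega) = 0$ when the distribution is continuous (or is handled directly by a finite set argument in the discrete case), I can choose $n$ large enough that $\mu(\Delta \setminus K_n) < \eta$. Set $R = K_n$. On this compact set, $\nabla^2 F$ is continuous, hence uniformly continuous: there exists $\delta > 0$ such that $\|\pi - \pi'\| < \delta$ and $\pi, \pi' \in R$ imply $\|\nabla^2 F(\pi) - \nabla^2 F(\pi')\| \leq \eta$.

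Next I would build the partition. Cover $R$ by finitely many open balls $B(\pi_j, \delta/2)$ for $j = 1, \dots, m$ (possible since $R$ is compact), and convert this cover into a measurable partition by setting $R_1 = R \cap B(\pi_1, \delta/2)$ and inductively $R_j = R \cap B(\pi_j, \delta/2) \setminus \bigcup_{k < j} R_k$. Each $R_j$ is Borel, has diameter at most $\delta$, and the $R_j$ partition $R$. For each $j$ with $R_j \neq \emptyset$, pick any representative $\tilde{\pi}_j \in R_j$ and set $M_j = \nabla^2 F(\tilde{\pi}_j) \in \mathbb{S}^d_+$. By uniform continuity, for every $\pi \in R_j$, $\|\pi - \tilde{\pi}_j\| < \delta$ and hence $\|\nabla^2 F(\pi) - M_j\| \leq \eta$, giving the required bound.

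The only genuine subtlety, and the main obstacle, is the behavior of $\nabla^2 F$ near $\partial \Delta$: for steep Legendre generators (e.g., negative entropy), $\nabla^2 F$ diverges at the boundary, so neither continuity nor uniform approximation can be expected on all of $\Delta$. The exhaustion by $K_n$ sidesteps this by absorbing a small $\mu$-measure into the exceptional set $\Delta \setminus R$, which is precisely what the lemma allows. If instead $F$ is merely Legendre (not $C^2$), one replaces the uniform-continuity step with Lusin's theorem applied to the a.e.-defined Alexandrov Hessian to obtain a compact $R$ on which the restriction is continuous, after which the same cover-and-partition construction goes through verbatim.
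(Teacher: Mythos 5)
Your proposal is correct and follows essentially the same route as the paper: obtain a compact set of nearly full $\mu$-measure on which $\nabla^2 F$ is continuous, use uniform continuity to fix $\delta$, cover by finitely many Borel pieces of diameter at most $\delta$, and take $M_j = \nabla^2 F(\pi_j)$ at a representative of each piece. The only difference is how the compact continuity set is produced — the paper invokes Lusin's theorem directly on the measurable Hessian field, whereas you exhaust the interior by inner compact sets under a $C^2$ assumption (with Lusin as the fallback in the nonsmooth case) — after which the two arguments coincide.
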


\begin{proof}
By Lusin's theorem, for the given $\eta > 0$ there exists a compact set
$K \subset \Delta$ with $\mu(\Delta \setminus K) < \eta/2$ on which
$\nabla^2 F$ is continuous.
Since $K$ is compact and $\nabla^2 F$ is continuous on $K$, it is
uniformly continuous there. Thus, there exists $\delta > 0$ such that
for all $\pi,\pi' \in K$ with $\norm*{\pi - \pi'} \leq \delta$, we have
$\|\nabla^2 F(\pi) - \nabla^2 F(\pi')\| \leq \eta$.

Cover $K$ by finitely many Borel sets $R_1,\dots,R_m$ of diameter at
most $\delta$. For each $R_j$, fix a representative point
$\pi_j \in R_j$ and define $M_j = \nabla^2 F(\pi_j)$. By construction,
for all $\pi \in R_j$ we have
$\|\nabla^2 F(\pi) - M_j\| \leq \eta$. Setting
$R = \bigcup_{j=1}^m R_j$, we have
$\mu(\Delta \setminus R) \leq \mu(\Delta \setminus K) < \eta/2 < \eta$.

This yields the desired partition with piecewise-constant approximation.
\end{proof}

\end{document}